\pdfoutput=1
\documentclass[10pt,a4paper]{article}
\usepackage[utf8]{inputenc}
\usepackage{amsfonts, amsmath, amssymb, color, a4wide}
\usepackage[utf8]{inputenc} 
\usepackage[T1]{fontenc}    
\usepackage{url}            
\usepackage{booktabs}       
\usepackage{amsfonts}       
\usepackage{nicefrac}       
\usepackage{microtype}      
\usepackage{fancybox,framed}
\usepackage{lmodern}
\usepackage{amsfonts}
\usepackage{amsmath}
\usepackage{amssymb}
\usepackage{bm}
\usepackage{algorithm}
\usepackage{algorithmic}
\usepackage{mathtools}
\usepackage{xspace}
\usepackage{xcolor}
\usepackage{amsthm}
\usepackage[hidelinks]{hyperref}
\usepackage{cases}
\usepackage{natbib}
\usepackage{dsfont}
\usepackage{bbm}
\RequirePackage[shortlabels]{enumitem}
\usepackage{ stmaryrd }
\usepackage{ulem}
\usepackage{authblk}

\newtheorem{theorem}{Theorem}[section]

\newtheorem{lemma}[theorem]{Lemma}
\newtheorem{corollary}[theorem]{Corollary}

\newtheorem*{remark}{Remarks}

\DeclarePairedDelimiter{\ceil}{\lceil}{\rceil}
\DeclarePairedDelimiter\abs{\lvert}{\rvert}

\DeclareMathOperator*{\argmax}{arg\,max}

\DeclareMathOperator{\kl}{kl}

\newtheoremstyle{proofsketchstyle}%
{}{}              
{\itshape}        
{}                
{\bfseries}       
{.}               
{ }               
{\thmname{#1}\thmnote{ (#3)}} 

\theoremstyle{proofsketchstyle}

\newcommand{\vertiii}[1]{{\left\vert\kern-0.25ex\left\vert\kern-0.25ex\left\vert #1 
		\right\vert\kern-0.25ex\right\vert\kern-0.25ex\right\vert}}

\DeclarePairedDelimiter\floor{\lfloor}{\rfloor}
\newcommand{\KL}{\textrm{KL}}

\newcommand{\bDelta}{\bm{\Delta}}
\newcommand{\bSigma}{\boldsymbol{\Sigma}}
\newtheorem*{theorem*}{Theorem}
\newtheorem*{OP}{Open Problem}

\newenvironment{keywords}{\paragraph{Keywords.}}{}

\title{\vspace{-2.2cm}The Sampling Complexity of Condorcet Winner Identification in Dueling Bandits}

\author{%
	\begin{minipage}[t]{0.46\textwidth}\centering
		El Mehdi Saad\footnote{Equal contribution}\\\vspace{-0.35cm}
		UM6P College of Computing\\
		Rabat, Morocco\\
		\texttt{elmehdi.saad@um6p.ma}
	\end{minipage}\hfill
	\begin{minipage}[t]{0.50\textwidth}\centering
		Victor Thuot$^{*}$\\\vspace{-0.35cm}
		INRAE, MISTEA, Institut Agro, Univ. Montpellier\\
		Montpellier, France\\
		\texttt{victor.thuot@inrae.fr}
	\end{minipage}
	\vspace{1.2em}\\
	\centering
	Nicolas Verzelen\\
	INRAE, MISTEA, Institut Agro, Univ Montpellier\\
	Montpellier, France\\
	\texttt{nicolas.verzelen@inrae.fr}%
}
\date{}

\begin{document}
	\maketitle
	\begin{abstract}%
		We study best-arm identification in stochastic dueling bandits under the sole assumption that a Condorcet winner exists, i.e., an arm that wins each noisy pairwise comparison with probability at least $1/2$.
		We introduce a new identification procedure that exploits the full gap matrix $\Delta_{i,j}=q_{i,j}-\tfrac12$ (where $q_{i,j}$ is the probability that arm $i$ beats arm $j$), rather than only the gaps between the Condorcet winner and the other arms.
		We derive high-probability, instance-dependent sample-complexity guarantees that (up to logarithmic factors) improve the best known ones by leveraging informative comparisons beyond those involving the winner.
		We complement these results with new lower bounds which, to our knowledge, are the first for Condorcet-winner identification in stochastic dueling bandits.
		Our lower-bound analysis isolates the intrinsic cost of locating informative entries in the gap matrix and estimating them to the required confidence, establishing the optimality of our non-asymptotic bounds.
		Overall, our results reveal new regimes and trade-offs in the sample complexity that are not captured by asymptotic analyses based only on the expected budget.

	\end{abstract}
	
	{\small \begin{keywords}%
			Best arm identification, Dueling bandits, Query complexity, Condorcet winner.%
	\end{keywords}}

	\section{Motivation and High-Level Overview}\label{sec:intro}
	In many modern machine learning applications, obtaining trustworthy absolute feedback can be difficult, expensive, or systematically biased. By contrast, relative judgments are often easier to elicit and can be highly informative. This is especially apparent in information retrieval and recommendation systems, where users more naturally compare two alternatives such as rankings, models, or interfaces than provide calibrated relevance scores \citep{joachims2007evaluating,Hofmann2016OnlineEF}.
	
	The dueling bandits framework formalizes this paradigm by allowing a learner to adaptively query pairs of arms and observe only a noisy binary outcome indicating which arm is preferred. At each round, the learner selects a pair of two arms $(i,j)\in[K]\times [K]$ and observes the outcome of their duel: the feedback is $1$ if arm $i$ is preferred to arm $j$, and $0$ otherwise. This observation is modeled as a Bernoulli random variable with unknown parameter $q_{i,j}\in[0,1]$. The collection of pairwise preference probabilities is represented by the matrix $\bm{Q}=(q_{i,j})_{i,j\in[K]}$. Since self-comparisons are uninformative and  preferences are anti-symmetric, namely $q_{i,j}=1-q_{j,i}$ for all $i, j \in [K]$, the unknown matrix $\bm{Q}$ satisfies a skew-symmetry condition. Equivalently, we define  the \emph{gap matrix} $\bm{\Delta}=(\Delta_{i,j})_{i,j\in[K]}$ by  $\Delta_{i,j}\coloneqq q_{i,j}-1/2$, which satisfies $\Delta_{i,j}=-\Delta_{j,i}$ and $\Delta_{i,i}=0$.
	
	Stochastic dueling bandits have been studied extensively under a variety of structural assumptions on $\bm{Q}$ and with multiple notions of optimality \citep{bengs2021preference,komiyama2015regret,falahatgar2017maximum,falahatgar2018limits,ren2020sample, jamieson2015sparse,zoghi2015copeland,haddenhorst2021identification}. Unlike in the classical multi-armed bandit setting, defining an ``optimal arm'' is not immediate, which has led to several competing winner definitions; see the survey of \citet{bengs2021preference}. In this work, we focus on instances where a distinguished arm $i^*\in[K]$ defeats, in expectation, every other arm, i.e., $q_{i^*,j}>1/2$ for all $j\in[K]\setminus\{i^*\}$.
	Such an arm is called a \textit{Condorcet winner} (CW) and is unique; we also refer to it as the optimal arm. Most existing work on dueling bandits assumes the existence of a CW \citep{zoghi2014relative,zoghi2015mergerucb,li2020mergedts,komiyama2015regret,chen2017dueling,saha2022versatile,saha2022optimal}, or even imposes the stronger requirement that the arms admit a total order \citep{yue2012k,yue2009interactively,chen2017dueling}. Alternative notions of optimality are discussed in Section~\ref{sec:related_w}.
	\paragraph{Objective: Condorcet winner identification.}
	Given $\delta\in(0,1)$, the learner must output the CW $i^*$ with probability at least $1-\delta$ by adaptively and sequentially choosing pairs $(i,j)$ to be compared and choosing a stopping time. We evaluate an algorithm by its (random) number of duels $N_\delta$, called the \emph{budget}, and we seek instance-dependent guarantees in terms of the centered gaps $\Delta_{i,j}=q_{i,j}-\tfrac12$, which encode both preference direction (e.g., $\Delta_{i,j}>0$ means $i$ beats $j$) and statistical difficulty.
	
	\paragraph{State-of-the-art.}
	Although CW identification has attracted quite a lot of attention \citep{komiyama2015regret,ailon2014reducing,chen2017dueling,saha2022versatile,pekoz2022dueling}, the optimal budget for this task remains poorly understood. \citet{karnin2016verification} introduced a verification-based approach. For a fixed gap matrix ${\bm \Delta}$, the expected budget of this procedure asymptotically satisfies
	\begin{equation}\label{eq:bound1}
		\lim_{\delta\to 0}\frac{\mathbb{E}[N_\delta]}{\log(1/\delta)}\le c\sum_{i\neq i^*}\min_{j: \Delta_{i,j}<0}\frac{1}{\Delta_{i,j}^2}~,
	\end{equation}
	where $c$ is a positive numerical constant. The bound~\eqref{eq:bound1} interprets as the sum, over all non-CW arms $i$, of $\log(1/\delta)/ [\min_{j}\Delta_{i,j}]^{2}$ which is the minimal budget required to check whether the row $\Delta_{i,\cdot}$ is non-negative if an oracle provides to the learner the  information on the best opponent of $i$. In that respect,~\eqref{eq:bound1} seems the best we can hope for. However, this bound~\eqref{eq:bound1} possibly hides important characteristics of the budget: (i) The bound~\eqref{eq:bound1} is purely asymptotic and hides sizable additive $\mathcal{O}(K^2)$ terms whose optimality is questionable. (ii) Furthermore, in pure-exploration bandit problems, the expected budget $\mathbb{E}[N_\delta]$ is possibly \underline{much smaller} than the $(1-\delta$)-quantile on $N_\delta$ and thereby provides a too optimistic view on the sample complexity of the problem; see e.g.~\cite{mannor2004sample}. 
	Analysis of high-probability bounds on the budget is also of paramount importance when one wants to move to fixed budget problems, as we do here.
	Recently, \citet{maiti2024near} developed quite a different algorithm that exploits that the CW row is the unique one with only positive gaps. They obtained a high-probability guarantee of the budget of the order of $H_{\mathrm{cw}}(\delta)$ where
	\begin{equation}\label{eq:maiti}
		H_{\mathrm{cw}}(\delta):= \log(1/\delta)\sum_{i\neq i^*}\frac{1}{\Delta_{i^*,i}^2}\enspace .    
	\end{equation}
	In the specific scenario where the \emph{CW is the strongest opponent of every suboptimal arm}, that is,
	\[
	\tag{CW-SO}\label{eq:CW_is_the_strongest}
	i^* = \mathrm{argmin}_{j : \Delta_{i,j} < 0} \Delta_{i,j}\enspace , \forall i\neq i^* \enspace , 
	\]
	the conditions~\eqref{eq:bound1} and~\eqref{eq:maiti} are matching. However, the bound~\eqref{eq:maiti} can be overly conservative when some arm $i$ is nearly tied with $i^*$ as it largely ignores potentially informative comparisons among suboptimal arms. 
	Very few lower bounds have been developed for CW identification.
	The closest to our setting are due to \citet{haddenhorst2021testification}, who study the `testification' problem of (i) testing whether a CW exists and (ii) identifying it when it does. They derive a lower bound on the expected budget of order~\eqref{eq:maiti}. Since our objective is identification alone under the standing assumption that a CW exists, their results unfortunately do not imply lower bounds for CW identification. More precisely, the construction in \citet{haddenhorst2021testification} fundamentally leverages the testing component, and thus cannot be adapted verbatim to an identification-only proof.
	
	Altogether, these results suggest that~\eqref{eq:maiti} (or equivalently~\eqref{eq:bound1}) is the optimal sampling complexity under the restrictive condition~\eqref{eq:CW_is_the_strongest}, although we are not aware of a matching lower bound. However, beyond this scenario, the instance-dependent query complexity of CW identification is far from being understood.
	This naturally raises the following open problem.
	\begin{OP}
		What is the sampling complexity of CW identification and how does it depend on gap matrix $\bm{\Delta}$?
	\end{OP}
	
	This question falls within \textit{structured} pure exploration, where the feedback is noisy but constrained by an underlying latent object (here, a skew-symmetric matrix possessing a positive row), so the goal is to exploit structure rather than estimate all entries. Related challenges arise in noisy payoff matrix games \cite{maiti2025open}, e.g., in pure Nash equilibrium identification.
	\paragraph{Contributions.} As a starting point, we confirm that, in the \textit{asymptotic} regime $\delta\to 0$ and for the \textit{expected} budget, the scaling in \eqref{eq:bound1} is essentially optimal by developing a matching instance-dependent lower bound --see Theorem~\ref{thm:LB_exp_instance_first_regime}. This also confirms that, under Condition~\eqref{eq:CW_is_the_strongest}, the bound~\eqref{eq:maiti} is optimal.  However, beyond this specific scenario, the sampling complexity of CW identification is much more subtle when we aim for non-asymptotic and high-probability guarantees on the budget. Our main contributions are threefold: (i) we introduce new elimination-based algorithms for both the fixed budget and the fixed confidence settings and provide non-asymptotic guarantees, (ii) we establish matching lower bounds. (iii) Overall, this allows to highlight the \emph{trade-offs and the multiple strategies that underlie CW identification}. For the sake of simplicity, we mainly discuss our results in the $\delta$-PAC setting, although analogous results are proved for fixed budget problems.

	At a high level, our elimination-based procedure (FC-CWI), described in Algorithms~\eqref{algo:3} and~\eqref{algo:fc},  
	iteratively scores the current candidates for CW using subroutines that (i) \textit{search} in the gap matrix $\bm{\Delta}$ for informative comparisons and exploit its skew-symmetric structure, and (ii) \textit{estimate} the signs of the discovered entries with sufficient accuracy. Candidates are then ranked by these scores and a constant fraction of arms is eliminated at each round. Our analysis reveals a delicate dependence on the full gap matrix $\bDelta$. Indeed, providing evidence that $i^*$ is the CW either amounts to showing that all the CW gaps $\{\Delta_{i^*,i}\}_{i\neq i^*}$ are positive or amounts to showing that all arms $i\neq i^*$ are not CW. The evidence of sub-optimality for a given  arm $i\neq i^*$ is governed both by the number of negative entries in its row, $K_{i;<0}:=\abs{\{j:\Delta_{i,j}<0\}}$, and  by the magnitudes of these gaps, denoted by the ordered values $\Delta_{i,(1)}\le \cdots \le \Delta_{i,(K_{i;<0})}<0$. For each $i\neq i^*$, fix an integer $s_i\le K_{i;<0}$ and write $\bm{s}=(s_1,\dots,s_K)$. The following results will involve a trade-off in $\bm{s}$.
	Our analysis decomposes the complexity into $H_{\text{cw}}(\delta)$ --see~\eqref{eq:maiti}--, which corresponds to the cost of separating $i^*$ from  every competitor only relying  on duels with $i^*$, as well as two new components:
	\begin{itemize}
		\item \textit{Exploration/Selection cost.} This term quantifies the effort required to select a negative entry whose absolute value is at least $|\Delta_{i,(s_i)}|$ in
		each suboptimal row.
		\begin{equation}\label{eq:definition:Hexplore}
			H_{\text{explore}}(\bm{s},\delta) :=\max_{i \neq i^*} \frac{K\log(1/\delta)}{s_i \Delta_{i,(s_i)}^2}+\sum_{i \neq i^*}\frac{K}{s_i \Delta_{i,(s_i)}^2}~,
		\end{equation}
		Note that the right-hand-side expression is independent of $\delta$ and accounts for the fact that looking for an entry at least $|\Delta_{i,(s_i)}|$ out of $K$ depends on both the number $s_i$ of such entries and the magnitude $|\Delta_{i,(s_i)}|$. The $\log(1/\delta)$-dependency only arises for a single arm $i\neq i^*$.

		\item \textit{Certification cost.} This term corresponds to the number of samples required to estimate the signs of the selected gaps (at the exploration step) at confidence level $1-\delta$
		\[
		H_{\text{certify}}(\bm{s},\delta) := \sum_{i \neq i^*}\frac{\log(1/\delta)}{\Delta_{i,(s_i)}^2} ~.
		\]
	\end{itemize}
	Our main upper bound shows that, with probability at least $1-\delta$, the budget $N_{\delta}$ of FC-CWI  satisfies
	\begin{equation}\label{eq:sample_comp}
		N_{\delta} \lesssim H_{\text{cw}}(\delta) \wedge \min_{\substack{(s_i)_{i \neq i^*} \\ \forall i,~s_i \le K_{i;<0}\wedge K/8}}
		\left\{ H_{\text{certify}}(\bm{s},\delta)+ H_{\text{explore}}(\bm{s},\delta)\right\}~,
	\end{equation}
	where the notation $\lesssim$ hides logarithmic factors in $K, (\Delta_{i,(1)})_{i\neq i^*}$ and a $\log\log(1/\delta)$ factor. Under the scenario \eqref{eq:CW_is_the_strongest}, our procedure still achieves budget smaller than $H_{\mathrm{cw}}(\delta)$ as in~\cite{maiti2024near} but also achieves better guarantees for other gap matrices $\bDelta$, where the budget is driven by the right-hand side in~\eqref{eq:sample_comp}. In the above infimum in~\eqref{eq:sample_comp}, the smaller the $s_i$'s are, the smaller $H_{\text{certify}}(\bm{s},\delta)$ is, but the exploration cost $H_{\text{explore}}(\bm{s},\delta)$ for localizing a good candidate can increase for small $s_i$'s. In the following, we denote $\bm s^*_{\bDelta}$ as the vector $(s^*_i)_{i\ne i^*}$ achieving the best trade-off in Equation~\eqref{eq:sample_comp}. We interpret $s^*_{\bDelta}$ as an effective sparsity of $\bDelta$, although it also depends on $\delta$. Importantly, our algorithm does not take  $\bm s^*_{\bDelta}$ as input and therefore \textit{automatically} achieves the best balance captured by \eqref{eq:sample_comp}.

	To characterize the optimality of~\eqref{eq:sample_comp}, we establish lower bounds on the $\delta$-quantile of any algorithm. Although we state distribution-dependent-like results in Section~\ref{sec:lb_fc}, we discuss here its Corollary~\ref{coro:Lower_bound_minimax} which has a  local minimax flavor. The budget condition~\eqref{eq:sample_comp} only depends on the gap matrix $\bDelta$ through three vectors: (i) the row $i^*$ of the CW $\Delta_{i^*,\cdot}$, (ii) the effective sparsity $\bm{s}^*_{\boldsymbol{\Delta}}$, and (iii) the gaps at the sparsity level $\bm{s}^*$: $(\Delta_{i,(s^*_i)})_{i\neq i^*}$.  Given any gap matrix $\bDelta$, we define the collection $\mathbb{D}(\boldsymbol{\Delta})$ of gap matrices $\tilde{\bDelta}$ that leave the Condorcet winner $i^*_{\boldsymbol{\Delta}}$, the effective sparsity $\bm{s}^*_{\boldsymbol{\Delta}}$, and the gaps $(\Delta_{i,(s^*_i)})_{i\neq i^*}$ unchanged 
	\begin{equation}\label{def:minimax_class}
		\mathbb{D}(\boldsymbol{\Delta}):=\{\tilde{\boldsymbol{\Delta}} \text{  s.t. } \;i^*_{\tilde{\boldsymbol{\Delta}}}=i^*_{\boldsymbol{\Delta}},\ \bm{s}^*_{\tilde{\boldsymbol{\Delta}}}=\bm{s}^*_{\boldsymbol{\Delta}},\ (\tilde{\Delta}_{i,(s^*_i)})_{i\neq i^*}=(\Delta_{i,(s^*_i)})_{i\neq i^*}\}\enspace .
	\end{equation}
	Corollary~\ref{coro:Lower_bound_minimax} then states the following minimax lower bound on the $(1-\delta)$-quantile of the budget: 
	\begin{equation}\label{eq:LB_hp_introduction}
		\inf_{A}\sup_{\tilde{\boldsymbol{\Delta}}\in\mathbb{D}(\boldsymbol{\Delta})}\inf\left\{\chi>0\text{ s.t.: }\mathbb{P}_{\tilde{\boldsymbol{\Delta}},A}(N_{\delta}\leqslant\chi)\leqslant\delta\right\}\gtrsim H_{\text{certify}}(\bm{s}^*_{\bDelta},\delta)+H_{\text{explore}}(\bm{s}^*_{\bDelta},\delta)\enspace ,
	\end{equation}
	where the infimum is taken over any $\delta$-correct algorithm $A$. Importantly, this showcases that the exploration/certification trade-off unveiled in~\eqref{eq:sample_comp} is  unavoidable and intrinsic to the sample complexity of CW-identification.

	\paragraph{Emblematic regimes.} As the sample complexity is quite intricate in the general case, we discuss some specific regimes to emphasize key phenomena. 
	\begin{itemize}
		\item \emph{Fixed probability regime}. In~\eqref{eq:sample_comp}, when $H_{\text{cw}}(\delta)$ is not the minimum, then the sample complexity $H_{\text{certify}}(\bm{s}^*_{\bDelta},\delta)+H_{\text{explore}}(\bm{s}^*_{\bDelta})$ in both upper and lower bounds exhibit two additive terms, one of them being $\delta$-independent. When $\delta$ is considered as a fixed quantity (fixed probability), the corresponding  term $\sum_{i\neq i^*} \frac{K}{s^*_i\Delta^2_{i,(s^*_i)}}\asymp \sum_{i\neq i^*} \frac{K}{\|\Delta^{-}_{i}\|^2_2}$ becomes the dominant term, where $\Delta^-_{i,j}:=\min(\Delta_{i,j},0)$. 
		In particular, this term can scale like $K^2$ when all $s^*_i$s are small. 
		\item \emph{Small probability regime}. Similarly to~\cite{karnin2016verification}, consider the asymptotic regime where $\log(1/\delta)$ goes to infinity, while $K$ and $\bDelta$ are fixed. 
		Then, our lower and upper bounds on the $(1-\delta)$-quantile of the budget are of the form
		{\small\[
			\log\left(\frac{1}{\delta}\right)\inf_{\bm s}\left[ \sum_{i\neq i^*}  \frac{1}{\Delta_{i,(s_i)}^2}+  \max_{i\neq i^*}\frac{K}{s_i\Delta_{i,(s_i)}^2}\right]
			\enspace ,
			\]}
		whereas the bound in~\cite{karnin2016verification} on the expected budget only involves the smaller quantity $\sum_{i\neq i^*}  \frac{\log(1/\delta)}{\Delta_{i,(1)}^2}$. 
		This emphasizes that there is a significant gap between guarantees in expectation or in quantile of the budget, especially when there is heterogeneity in the $\Delta_{i,(1)}$s. This phenomenon is also central for the analysis of fixed-budget algorithms in Section~\ref{sec:upper_bounds_fc} and~\ref{sec:lb_fb}.
	\end{itemize}
	
	All the way through these two extreme regimes,  both~\eqref{eq:sample_comp} and ~\eqref{eq:LB_hp_introduction} illustrate a trade-off between exploration and certification. Intuitively, when $\log(1/\delta)$ increases, the effective sparsity $s^*_{\bDelta}$ tends to decrease so the algorithm explores other arms more thoroughly to identify stronger opponents.

	\medskip 
	
	\paragraph{Technical Innovations.} Our Algorithms~\ref{algo:3} and~\ref{algo:fc} are based on a new iterative scoring strategy that builds on the selection, for each 'active'  arm $i$, of a strong opponent as well  as the estimation of some quantile of the estimation $\Delta_{i,.}$. For that purpose, we need to introduce a new active quantile estimation algorithm achieving optimal $\epsilon$-error simultaneously for all $\epsilon$ --see Appendix~\ref{sec:intermediary}. Apart from Theorem~\ref{thm:LB_exp_instance_first_regime} which builds upon fairly standard arguments, our main lower bounds use novel approaches and techniques as our aim is to lower bound the $(1-\delta)$ quantile of the budget. For that purpose, we reduce the problem to an active multiple testing problem of the existence of negative entries within a vector of size $K-1$.

	\paragraph{Organization.}
	In Section~\ref{sec:upper_bounds_fc}, we present our algorithms and prove the instance-dependent upper bounds in both the fixed-budget and fixed-confidence settings. Section~\ref{sec:lb_fc} establishes instance-dependent lower bounds for the fixed-confidence setting.
	We conclude in Section~\ref{sec:discussion} with a discussion of implications, limitations, and directions for future work. Section~\ref{sec:related_w} discusses related work and further positions our contributions within the literature. The fixed-budged lower bounds along with all the proofs are also postponed to the appendix.

	\section{Upper Bounds: Algorithm and Guarantees}\label{sec:upper_bounds_fc}
	
	This section presents our main identification procedure and the upper bounds announced in Section~\ref{sec:intro}. Our starting point is \textsc{FB-CWI} (Fixed Budget CW Identification, Algorithm~\ref{algo:3}), a fixed-budget routine that serves as the main building block. We then obtain a fixed-confidence ($\delta$-correct) algorithm by equipping \textsc{FB-CWI} with verification steps and running it under a standard doubling schedule over the budget. We first describe \textsc{FB-CWI} and state its guarantees in Theorem~\ref{thm:main_fb}, and then explain the fixed-confidence extension; the resulting procedure is given in Algorithm~\ref{algo:fc} and analyzed in Theorem~\ref{thm:fc2}.

	\textsc{FB-CWI} is an elimination procedure initialized with $A_1=[K]$: at each round $k$, it assigns a score $S_k(\alpha)$ to every active arm $\alpha\in A_k$, ranks the arms accordingly, and discards the bottom $1/8$ fraction (so $\abs{A_{k+1}}=\floor{7\abs{A_k}/8}$). Therefore, the number of rounds is $O(\log K)$.
	
	\noindent The core of \textsc{FB-CWI} is the score computation, whose purpose is to keep the CW ranked above the elimination threshold.
	At round $k$, we split a budget of order $T/\log(K)$ across the active set $A_k$. For each $\alpha\in A_k$, we devote one quarter of its share to search for a strong opponent by running Sequential Halving (SH) \citep{karnin2013almost} on the instance of the duels $\{(\beta,\alpha):\beta\in[K]\setminus\{\alpha\}\}$, yielding an opponent $\alpha^{(s)}$ that is likely to beat $\alpha$, and another quarter to estimate the gap $\Delta_{\alpha,\alpha^{(s)}}$ via an empirical mean; this estimate defines the strong-opponent component of the score. We call $\alpha^{(s)}$ `strong' because it is selected from all $K$ arms (not only from $A_k$): this tends to penalize sub-optimal arms more sharply, at the price of higher uncertainty due to the larger search space.
	\noindent Relying only on the strong-opponent term can be brittle: if the CW is nearly tied with some arm, the selected opponent may yield a gap estimate close to zero and provide little separation with the elimination threshold. We therefore add a `weak-opponent' term that yields adaptivity to larger gaps with the CW. More specifically, writing $\bm{\Delta}^{(k)}:=\bm{\Delta}_{A_k\times A_k}$, skew-symmetry implies that at least half of the entries of $\bm{\Delta}^{(k)}$ are non-positive, and a simple pigeon-hole type argument implies that at least $\abs{A_k}/4$ rows contain at least $\abs{A_k}/4$ non-positive entries (Lemma~\ref{lem:pure_tech3} in the appendix). Accordingly, for each $\alpha\in A_k$ we estimate a point  whose value lies between the $1/8$- and $1/4$-quantiles of the row $(\Delta_{\alpha,\beta})_{\beta\in A_k}$ (via \textsc{Range-Quantile}). This lower-tail statistic is typically negative for many sub-optimal arms pushing them into the bottom-$1/8$ region, while for the CW it remains positive and leverages the fact that most of its gaps can still be large.
	
	\paragraph{Subroutines: \textsc{Sequential Halving} and \textsc{Range-Quantile}.}
	Our score construction relies on two subroutines. For the strong-opponent search we use Sequential Halving (SH) \citep{karnin2013almost}, chosen for its adaptive guarantees on simple regret, which translate in our context into gap-dependent guarantees --see \cite{zhao2023revisiting} and Section~\ref{sec:intermediary} of the appendix. For the weak-opponent choice, we introduce \hyperref[algo:inter]{Range-Quantile} (Algorithm~\ref{algo:inter}), a general fixed-budget procedure revealing a point in a prescribed quantile range:
	given $N$ arms with means $(\mu_i)_{i\in [N]}$ ordered as $\mu_{(1)}\le\cdots\le\mu_{(N)}$ and indices $d<u$, it returns an estimate $\hat t$ that falls between the $d$-th and $u$-th means (up to an additive error $\varepsilon$) with error probability decaying as $\exp(-\widetilde{\Theta}(\tfrac{(u-d)^2}{N^2}T\varepsilon^2))$ --see Theorem~\ref{thm:conj}). Importantly, \textsc{Range-Quantile} does not require $\varepsilon$ as input and is therefore simultaneously valid for any $\epsilon$; in \textsc{FB-CWI} we instantiate it with $N=\abs{A_k}$, $\varepsilon = \tfrac12\,\Delta_{i^*,(\lceil \abs{A_k}/8\rceil)}$, $d=\floor{\abs{A_k}/8}$ and $u=\ceil{\abs{A_k}/4}$ to obtain a value between the $1/8$- and $1/4$-quantiles of $(\Delta_{\alpha,\beta})_{\beta\in A_k}$. Note that \citet{maiti2024near} gives a fixed-confidence routine that, given $(\delta,\varepsilon)$, outputs a value in
	$[\mu_{(N/2)}-\varepsilon,\;\mu_{(N/4+1)}+\varepsilon]$ with probability at least $1-\delta$.
	Here, $\varepsilon$ is instance-dependent and unknown, which motivates our adaptive \textsc{Range-Quantile} subroutine that does not take $\varepsilon$ as input.

	\paragraph{Guarantees: intuition.}
	A failure can only occur if the CW is pushed into the bottom-$1/8$ region in some round, so the analysis boils down to controlling the separation between the CW score and the elimination cutoff across the $O(\log K)$ rounds. The weak-opponent term already provides a baseline margin: at round $k$, the CW benefits from a positive lower-tail gap of size on the order of $\Delta_{i^*,(\lceil \abs{A_k}/8\rceil)}$, estimated with $B_k=\Theta\big(T/(\abs{A_k}\log(K))\big)$ samples. Concentration bounds (combined with the \textsc{Range-Quantile} guarantee) then give an error of the form $\exp(-\widetilde{\Theta}(B_k\,\Delta_{i^*,(\lceil \abs{A_k}/8\rceil)}^2))$, and the worst round is controlled via
	\[
	\max_k \frac{\abs{A_k}/8}{\Delta^2_{i^*, (\ceil{\abs{A_k}/8})}} \le  \max_{i\in [K-1]} \frac{i}{\Delta_{i^*,(i)}^2} \le \sum_{i \neq i^*} \frac{1}{\Delta_{i,i^*}^2}=: H_{\mathrm{cw}}~,
	\]
	yielding a coarse rate $\exp\big(-\widetilde{\Theta}(T/H_{\mathrm{cw}})\big)$ (up to logarithmic factors).
	
	\noindent The strong-opponent term sharpens this bound by actively finding and certifying \textit{negative entries} for sub-optimal arms. Fix $s_i\le K_{i,<0}$. For each $i\neq i^*$, SH requires a budget scaling with $K/(s_i\Delta_{i,(s_i)}^2)$ to find an entry smaller than $\Delta_{i,(s_i)}$ for arm $i$, while
	verifying the sign of $\Delta_{i,(s_i)}<0$ requires a budget that scales with $1/\Delta_{i,(s_i)}^2$. Since each round removes a constant fraction of arms, we only need a constant fraction of these searches to succeed. Provided that $T$ exceeds the aggregate exploration overhead $H_{\mathrm{explore}}^{(0)}(\bm{s}):=\sum_{i\neq i^*}K/(s_i\Delta_{i,(s_i)}^2)$, this happens with high probability. Then, the remaining exponent in the probability is governed by $H_{\mathrm{certify}}(\bm{s}):= \sum_{i\neq i^*}1/\Delta_{i,(s_i)}^2$ and the hardest-arm exploration term $H_{\mathrm{explore}}^{(1)}(\bm{s}):= \max_{i\neq i^*} K/(s_i \Delta_{i,(s_i)}^2)$, leading to Theorem~\ref{thm:main_fb}.

	\begin{algorithm}
		\caption{ FB-CWI + Certification \label{algo:3} }
		\begin{algorithmic}
			\STATE \textbf{Input}: Fixed budget ($T$), Certification($\delta, T, c$).
			\STATE $k \gets 1$, $A_1 \gets [K]$, $n \gets \log_2\left(\frac{T}{2K\log_{8/7}(K)}\right)$.
			\STATE $\phi_1, \phi_2 \gets \texttt{True}$
			\WHILE{$\abs{A_k}>1$}
			\STATE Let $B_k \gets \floor*{\frac{T}{\abs{A_k} \log_{8/7}(K)}}$.
			\FOR{$\alpha \in A_k$}
			\STATE \texttt{/* Finding a strong opponent */}
			\STATE $\bullet$ Run Algorithm SH with a budget $\ceil{B_k/4}$ and where the candidate arms are $\{ ( \beta, \alpha) \text{ for } \beta \in [K]\setminus \{\alpha\}\}$. Let $(  \alpha^{(s)}, \alpha)$ denote the output.
			\STATE $\bullet$ Query $\ceil{B_k/4}$ samples of $(\alpha; \alpha^{(s)})$ and compute the quantity $Z_k^{(s)}(\alpha)$ (corresponding to the empirical mean of the gaps). 
			\STATE  \texttt{/* Computing a weak score */}
			\STATE $\bullet$ Run  \hyperref[algo:inter]{Range-Quantile} on duels between $\alpha$ and arms in $A_k\setminus \{\alpha\}$, with a budget $\ceil{B_k/2}$ and quantiles $(\ceil{\abs{A_k}/8}, \ceil{\abs{A_k}/4})$ let $Z_k^{(w)}(\alpha)$ denote the output. 
			\ENDFOR
			\STATE Compute the scores $S_k(\alpha) = \min\{Z_k^{(s)}(\alpha), 0 \}+Z_k^{(w)}(\alpha)$ for each $\alpha \in A_k$.
			\STATE Rank the arms in $A_k$ following the scores $S_k(\cdot)$ and put in $A_{k+1}$ the top $\abs{A_k}-\ceil{\abs{A_k}/8}$ arms.
			\STATE \texttt{/* Check fixed confidence */}
			\STATE Let $\bar{\alpha}$ denote the arm ranked $\abs{A_k}-\ceil{\abs{A_k}/8}+1$ according to the scores $S_k(\cdot)$.
			\STATE $\phi_1 \gets \phi_1 \cdot \mathds{1}\left(S_k(\bar{\alpha}) < -\sqrt{\frac{2c\log(T)}{\ceil{B_k/4}}\log\left(8K^2\log_{8/7}(K)\log(T) \cdot \frac{n(n+1)}{\delta}\right)} \right)$ 
			\STATE  $k \gets k+1$. 
			\ENDWHILE
			\STATE {\small \texttt{/* Use $T$ queries to test the sign of gaps of $I$ (unique arm in $A_k$) at confidence $\delta$ */}}
			\STATE Run \hyperref[algo:4]{Test-CW} with inputs $(\delta,T )$ to check the sign of gaps of arm $I$, let $\phi_2$ denote its output.
			\STATE Return $\phi_1 \lor \phi_2$ and $I$. 
		\end{algorithmic}
	\end{algorithm}
	
	\begin{theorem}\label{thm:main_fb}
		Let $\bm{s} = (s_1, \dots, s_K)$ such that $s_i \le K_{i,<0}\,$ for each $i \in [K]$. 
		The output of Algorithm~\ref{algo:3} with input $T$, denoted $\psi_T$, satisfies 
		\[
		\mathbb{P}\left( \psi_T \neq i^*\right) \le 27K\log(K)\log(T) \cdot\exp\left(-c_1\cdot\frac{T}{\log(T)\log(K)H_{\text{cw}}}  \right),
		\]
		where $c_1$ is a numerical constant. Moreover, for any $\bm{s}$,  we also have
		{\small\[
			\mathbb{P}\left( \psi_T \neq i^*\right) \le 47K\log(K)\log(T) \cdot\exp\left(-\frac{c_2}{\log^3(K)\log(T)}\cdot \frac{T-c_3\cdot H_{\text{explore}}^{(0)}(\bm{s})\log^{5}(H_{\text{explore}}^{(0)}(\bm{s}))}{H_{\text{certify}}(\bm{s})+H_{\text{explore}}^{(1)}(\bm{s})}  \right)~,
			\]}
		where $c_2$ and $c_3$ are numerical constants.
	\end{theorem}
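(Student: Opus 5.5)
The plan is a union bound over the $O(\log_{8/7} K)$ elimination rounds. The only failure event is that at some round $k$ the CW $i^*$ is ranked among the bottom $\ceil{\abs{A_k}/8}$ arms. We may assume $i^*\in A_k$. Then $i^*$ is eliminated only if at least $\abs{A_k}-\ceil{\abs{A_k}/8}$ arms $\alpha\in A_k\setminus\{i^*\}$ satisfy $S_k(\alpha)\ge S_k(i^*)$. Each of the two bounds therefore reduces to a per-round statement: with high probability, $S_k(i^*)$ exceeds the scores of more than $\ceil{\abs{A_k}/8}$ suboptimal active arms. The final factor $K\log(K)\log(T)$ collects three sources: the union over rounds, the union over arms, and the $\log T$ overhead of the doubling and peeling inside the subroutines. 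Since $i^*$ is the CW, the final \textsc{Test-CW} step only helps and can be ignored for the identification error.

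\textbf{First bound ($H_{\mathrm{cw}}$).}
\begin{enumerate}
\item Since $\min\{Z^{(s)},0\}\le 0$, the strong-opponent term can only lower scores. For $i^*$, however, every true gap is positive, so $Z_k^{(s)}(i^*)$ is an empirical mean of a positive gap. By Hoeffding with $\ceil{B_k/4}$ samples, its negative part is at most $\varepsilon_k:=\tfrac14\Delta_{i^*,(\ceil{\abs{A_k}/8})}$ except with probability $\exp(-cB_k\varepsilon_k^2)$.
\item By Theorem~\ref{thm:conj} (\textsc{Range-Quantile}) with $d,u$ of order $\abs{A_k}/8$ and $\abs{A_k}/4$, the output satisfies $Z_k^{(w)}(i^*)\ge \Delta_{i^*,(\ceil{\abs{A_k}/8})}-\varepsilon_k$.
\item For a suboptimal $\alpha$ whose row restricted to $A_k$ has at least $\abs{A_k}/4$ non-positive entries, its true $1/4$-quantile is $\le 0$, so $Z_k^{(w)}(\alpha)\le \varepsilon_k$ with the same type of error probability. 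Lemma~\ref{lem:pure_tech3} provides at least $\abs{A_k}/4>\ceil{\abs{A_k}/8}$ such rows.
\item Combining steps 1--3, $S_k(i^*)>S_k(\alpha)$ for all these arms. The error per round is $\exp(-\widetilde\Theta(B_k\Delta_{i^*,(\ceil{\abs{A_k}/8})}^2))$. Plugging $B_k\asymp T/(\abs{A_k}\log K)$ and the displayed inequality $\max_k \abs{A_k}/\Delta^2_{i^*,(\ceil{\abs{A_k}/8})}\lesssim H_{\mathrm{cw}}$ gives the first bound.
\end{enumerate}

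\textbf{Second bound ($\bm s$).} Here we use the strong-opponent term to push suboptimal arms below $S_k(i^*)$. For $i^*$ we only need $S_k(i^*)\ge -\eta_k$ for a small margin $\eta_k$. This holds without any gap knowledge: $Z^{(w)}_k(i^*)$ lies above $-\eta_k$ because its quantile is positive, and $\min\{Z^{(s)}_k(i^*),0\}\ge-\eta_k$ by Hoeffding.

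For a suboptimal $\alpha$, call it \emph{good} if SH returns an opponent $\alpha^{(s)}$ with $\Delta_{\alpha,\alpha^{(s)}}\le \Delta_{\alpha,(s_\alpha)}/2$, and the subsequent empirical mean is below $\Delta_{\alpha,(s_\alpha)}/4$. We also need $Z^{(w)}_k(\alpha)$ to be bounded above by its true quantile plus a small error, which is at most $\Delta_{i^*,\alpha}$-type quantities. To avoid this I would simply bound $Z^{(w)}_k(\alpha)\le \max$ with \textsc{Range-Quantile}'s upper guarantee. A cleaner route is to choose $\eta_k$ so that good arms have $S_k(\alpha)<-\eta_k$ whenever $\abs{\Delta_{\alpha,(s_\alpha)}}\gtrsim \eta_k$.

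The SH guarantee from Section~\ref{sec:intermediary} (simple-regret, gap-dependent, as in \cite{zhao2023revisiting}) says that with budget $B_k/4$ among $K-1$ candidates with $s_\alpha$ entries below $\Delta_{\alpha,(s_\alpha)}$, the failure probability is $\exp\bigl(-\widetilde\Theta\bigl(B_k s_\alpha\Delta_{\alpha,(s_\alpha)}^2/K\bigr)\bigr)$. We then need more than $\ceil{\abs{A_k}/8}$ good arms among the $\abs{A_k}-1$ suboptimal active ones, i.e. at most roughly $7\abs{A_k}/8$ bad ones.

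\textbf{Main obstacle.} The crux is a large-deviation bound for the count of bad arms. The events are independent across $\alpha$, since the runs use disjoint samples. Hence the probability that a fixed set $S$ of size $m\asymp\abs{A_k}$ is entirely bad is $\prod_{\alpha\in S}p_\alpha$ with
\[
\log(1/p_\alpha)\gtrsim \frac{B_k}{\mathrm{polylog}}\Bigl(\frac{s_\alpha\Delta^2_{\alpha,(s_\alpha)}}{K}\wedge \Delta^2_{\alpha,(s_\alpha)}\Bigr).
\]
Summing over the set and using harmonic--arithmetic type inequalities yields an exponent $\gtrsim B_k\abs{A_k}/\bigl(H^{(1)}_{\mathrm{explore}}+H_{\mathrm{certify}}\bigr)$, which gives the claimed scaling. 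The subtraction of $H^{(0)}_{\mathrm{explore}}$ in the numerator comes from arms whose individual SH budget is below their required threshold: these are allowed to fail with constant probability. A Chernoff bound over them is harmless once $T\gtrsim H^{(0)}_{\mathrm{explore}}\log^5$, and the $\log^5$ factor is the polylog in SH's budget requirement. I expect the bookkeeping of which arms are ``funded'' and the $\binom{\abs{A_k}}{m}$ entropy term (absorbed into the $47K$ prefactor and the $\log^3 K$ loss) to be the hardest part.
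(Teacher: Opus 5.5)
Your first bound follows the paper's proof. You condition on $i^*\in A_k$, use the suboptimal rows whose $\ceil{\abs{A_k}/4}$-th gap inside $A_k$ is non-positive, and split at $\tfrac12\Delta^{(k)}_{i^*,(\ceil{\abs{A_k}/8})}$. You then combine Hoeffding with Corollary~\ref{cor:quant} and Lemma~\ref{lem:T}. The count of such rows should come from Lemma~\ref{lem:39}, which counts off-diagonal entries and gives at least $\ceil{\abs{A_k}/8}+1$ rows; Lemma~\ref{lem:pure_tech3} counts the diagonal and is off by one for small $\abs{A_k}$.

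The second bound has a genuine gap. Your notion of a ``good'' arm only involves the strong-opponent branch, but $S_k(\alpha)=\min\{Z^{(s)}_k(\alpha),0\}+Z^{(w)}_k(\alpha)$. Take a suboptimal arm whose $\ceil{\abs{A_k}/8}$-th smallest gap inside $A_k$ exceeds $\abs{\Delta_{\alpha,(1)}}$. Theorem~\ref{thm:conj} keeps $Z^{(w)}_k(\alpha)$ above that positive quantile (minus a small error), while $\min\{Z^{(s)}_k(\alpha),0\}\ge\Delta_{\alpha,(1)}-$error. So $S_k(\alpha)>0$ with high probability whatever SH returns. Meanwhile $S_k(i^*)$ can be arbitrarily close to $0$ when the CW gaps are small, which is exactly the regime where the second bound should beat $H_{\text{cw}}$. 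Hence your per-arm bound on $\log(1/p_\alpha)$ is false for such arms, and ``at most roughly $7\abs{A_k}/8$ bad arms among the $\abs{A_k}-1$ suboptimal ones'' cannot be controlled. Your suggested fixes (bounding $Z^{(w)}_k(\alpha)$ by ``$\Delta_{i^*,\alpha}$-type quantities'' or by a $\max$) do not address this.

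The missing step is one you already used in the first bound: restrict the counting to $E_k=\{\alpha\in A_k:\Delta^{(k)}_{\alpha,(\ceil{\abs{A_k}/4})}\le 0\}$. There the upper guarantee of \textsc{Range-Quantile} does make $Z^{(w)}_k(\alpha)$ small. Since $\abs{E_k}\ge\ceil{\abs{A_k}/4}$ while only $\ceil{\abs{A_k}/8}$ arms are eliminated, eliminating $i^*$ forces a constant fraction of a large subset of $E_k$ to score at least $S_k(i^*)$ (Lemma~\ref{lem:Fk}).

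The other missing piece is the margin $\eta_k$, which is exactly where $H_{\text{certify}}(\bm s)$ comes from. The threshold is common to all arms, and the $i^*$-side failure probability $\exp(-cB_k\eta_k^2/\log T)$ is a single-arm term. So no harmonic--arithmetic summation over a bad set can produce $H_{\text{certify}}$ there. Also, an arm with $\abs{\Delta_{\alpha,(s_\alpha)}}\lesssim\eta_k$ can never be good, and the Hoeffding and \textsc{Range-Quantile} errors of good arms must be controlled at the uniform scale $\eta_k$, not at $\abs{\Delta_{\alpha,(s_\alpha)}}$.

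The paper takes $\eta_k=-\tfrac12\bar\Delta_k$, with $\bar\Delta_k$ the $\ceil{7\abs{E_k}/8}$-th smallest value of $\{\Delta_{\alpha,(s_\alpha)}\}_{\alpha\in E_k}$. Lemma~\ref{lem:T} then gives $\abs{A_k}/\bar\Delta_k^2\le 32H_{\text{certify}}(\bm s)$, and every arm of $F_k=\{\alpha\in E_k:\Delta_{\alpha,(s_\alpha)}\le\bar\Delta_k\}$ has the required margin.

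Once this is in place, your product-over-subsets count is a workable substitute for the paper's route. The paper instead takes a uniform bound $p_k$ on the three quarters of $F_k$ with largest $s_\alpha\Delta^2_{\alpha,(s_\alpha)}$, then applies the small-$p$ binomial tail of Lemma~\ref{thm:tech2} with $\phi(p_k)\lesssim 1/\log(1/p_k)$. In your version, the $\binom{\abs{A_k}}{m}\le 2^{\abs{A_k}}$ entropy term must be absorbed into the exponent, using $T\gtrsim H^{(0)}_{\text{explore}}(\bm s)\log^5 H^{(0)}_{\text{explore}}(\bm s)$, not into the $47K$ prefactor.
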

	\begin{proof}[Sketch]
		We control
		$\mathbb{P}(i^\star\notin A_{k+1}\mid i^\star\in A_k)$ uniformly over $k$ and then apply a union bound over all rounds.
		\paragraph{First bound.}
		Fix a round $k$ and condition on $i^*\in A_k$. Let $\bm{\Delta}^{(k)}=\bm{\Delta}_{A_k\times A_k}$ and define
		\[
		E_k:=\{\alpha\in A_k:\Delta^{(k)}_{\alpha,(\lceil \abs{A_k}/4\rceil)}\le 0\},\quad \Delta_k \coloneq \Delta^{(k)}_{i^*,(\lceil \abs{A_k}/8\rceil)}~.
		\] 
		By  Lemma~\ref{lem:pure_tech3}, we have $\abs{E_k}\ge \lceil \abs{A_k}/4\rceil$.  Hence, if $i^*$ is eliminated, then some
		$\alpha\in E_k$ must satisfy $S_k(\alpha)\ge S_k(i^*)$, and therefore
		{\small \begin{align*}
				\mathbb{P}(i^*\notin A_{k+1}\mid i^*\in A_k)
				&\le \mathbb{P}\left(\exists\alpha\in E_k:\ S_k(\alpha)\ge S_k(i^*)\right)\\
				&\le \mathbb{P}\left(S_k(i^*)\le \tfrac12\,\Delta_k\right)
				+ \mathbb{P}\left(\exists\alpha\in E_k:\ S_k(\alpha)\ge \tfrac12\,\Delta_k\right)\\
				&\le \mathbb{P}\left(S_k(i^*)\le \tfrac12\,\Delta_k\right)
				+ \sum_{\alpha\in E_k}\mathbb{P}\left(Z_k^{(w)}(\alpha)\ge \tfrac12\,\Delta_k\right)~, 
		\end{align*}}
		where we work conditionally on $A_k$ and where, in the last step, we used $S_k(\alpha)\le Z_k^{(w)}(\alpha)$ for $\alpha\in E_k$.
		Both terms are controlled by concentration inequalities together with the \textsc{Range-Quantile} guarantee with
		budget $B_k=\Theta(T/(\abs{A_k}\log(K)))$, (see Lemma~\ref{lem:main1} for details)  yielding
		\[
		\mathbb{P}(i^*\notin A_{k+1}\mid i^*\in A_k)
		\le cK\log(T)\cdot \exp\left(-c\,\Delta_k^2\,B_k\right)~.
		\]
		
		Finally, since
		$\frac{\lceil |A_k|/8\rceil}{\Delta_k^2}\le \sum_{i\neq i^*}1/\Delta_{i^*,i}^2=H_{\text{cw}}$, we obtain $\Delta_k^2B_k \gtrsim \frac{T}{(\log(K)\log(T)H_{\text{cw}})}$, and the first rate follows
		after union bounding over $k\le k_{\max}$.
		\smallskip\noindent\paragraph{Second bound.} Fix $\bm{s}$. At round $k$, rank $\{\Delta_{\alpha,(s_\alpha)}\}_{\alpha\in E_k}$ in increasing order; denote $\Delta_{E_k:1} \le \dots \le \Delta_{E_k:\abs{E_k}}$ the ranked quantities, and set
		$\bar\Delta_k:=\Delta_{E_k:\lceil (7/8)\abs{E_k}\rceil}\le 0$, and $F_k:=\{\alpha\in E_k:\ \Delta_{\alpha,(s_\alpha)}\le \bar\Delta_k\}.$
		Lemma~\ref{lem:Fk} ensures that if $i^*$ is eliminated, then at least $\ceil{\abs{F_k}/3}$ elements of $F_k$
		survive, so that their score is at least equal to $S_k(i^*)$. Therefore,
		\begin{equation}\label{eq:ub:proba:1}
			\mathbb{P}\left(i^*\notin A_{k+1}\mid i^*\in A_k\right)
			\le \mathbb{P}\left(S_k(i^*)\le \tfrac12\bar\Delta_k\right)
			+ \mathbb{P}\left(\bigl|\{\alpha\in F_k:\ S_k(\alpha)\ge \tfrac12\bar\Delta_k\}\bigr|\ge \ceil{\abs{F_k}/3}\right).
		\end{equation}
		The first term is bounded by $\exp(-\Omega(\bar\Delta_k^2B_k))$. For the second, for each $\alpha\in F_k$ the
		strong-opponent SH step finds a negative witness of magnitude $\Delta_{\alpha,(s_\alpha)}$ with high
		probability, yielding an uniform bound $p_k$ on $\mathbb{P}(S_k(\alpha)\ge \bar\Delta_k/2)$ of order
		$\exp(-\Omega(B_k\Gamma_\alpha/(K\log^3K)))$ (up to polylog factors), where
		$\Gamma_\alpha:=s_\alpha\Delta_{\alpha,(s_\alpha)}^2$.
		When $T\gtrsim H_{\text{explore}}^{(0)}(\bm{s})$, then we show that $p_k$ is small, so the count in the r.h.s. of~\eqref{eq:ub:proba:1}  admits an exponential tail of order
		$\exp(-\Omega(T/(\log^4(K)\log(T)H^{(1)}_{\text{explore}}(\bm{s}))))$.
		Finally, we have $\bar\Delta_k^2B_k\gtrsim T/(\log(K)H_{\text{certify}}(\bm{s}))$, giving
		{\small\[
			\mathbb{P}\left(i^\star\notin A_{k+1}\mid i^\star\in A_k\right)
			\le \exp\left(-\widetilde{\Omega}\left(\frac{T-H_{\text{explore}}^{(0)}(\bm{s})}{H_{\text{explore}}^{(1)}(\bm{s})+H_{\text{certify}}(\bm{s})}\right)\right),
			\]}
		and a union bound over $k\le k_{\max}$ completes the proof.
	\end{proof}

	\begin{algorithm}
		\caption{FC-CWI  \label{algo:fc} }
		\begin{algorithmic}
			\STATE \textbf{Input}: $c$, confidence input $\delta$.
			\STATE $\varphi \gets \texttt{False}$, $T \gets 8K\log(K)$.
			\WHILE{$\neg \varphi$}
			\STATE Run \textsc{FB-CWI} procedure (Algorithm~\ref{algo:3}) with inputs $\delta, T, c$. Let $\varphi, I$ be its output.
			\STATE $T \gets 2 T$.  
			\ENDWHILE
			\STATE Return $I$.
			
		\end{algorithmic}
	\end{algorithm}

	\noindent\textbf{From fixed budget to fixed confidence (two verifications, stop on the first).}
	By Theorem~\ref{thm:main_fb}, FB-CWI (Algorithm~\ref{algo:3}) achieves error at most $\delta$ once the budget $T$ is (up to polylogs) of order
	\begin{equation}\label{eq:fc_budget}
		H_{\text{cw}}\log(1/\delta) \wedge \min_{\substack{(s_i)_{i \neq i^*} \\ \forall i,~s_i \le K_{i;<0}}}
		\left\{ \left(H_{\text{certify}}(\bm{s})+ H^{(1)}_{\text{explore}}(\bm{s})\right) \log(1/\delta) +H^{(0)}_{\text{explore}}(\bm{s})\right\}.	
	\end{equation}
	Since these quantities are unknown, we run FB-CWI under a doubling schedule on $T$ and stop as soon as a budgeted verification succeeds. It uses at most $T$ queries per stage (in Algorithm~\ref{algo:3}, $\phi_1$ is computed from the same samples as the scores, and $\phi_2$ runs $\textsc{Test-CW}$ with at most $T/2$ extra queries).
	A direct verification is to certify that the returned arm $I$ is Condorcet by testing $\Delta_{I,j}>0$ for all $j\neq I$ at level $1-\delta$, which costs $\sum_{j\neq I}\log(1/\delta)/\Delta_{I,j}^2$ and matches the $H_{\text{cw}}\log(1/\delta)$ regime. When the second term in~\eqref{eq:fc_budget} is dominant, our identification relies instead on certifying sub-optimality: many sub-optimal arms have scores $S_k(\cdot)$ that are typically negative while the CW remains positive. This motivates the second verification $\phi_1$, which checks that the elimination frontier is on the negative side. We stop at the first success of $\phi_1$ or $\phi_2$, which guarantees $\delta$-correctness. The complete proof of Theorem~\ref{thm:fc2} is presented in Section~\ref{sec:proof_fc} of the appendix.
	
	\begin{theorem}\label{thm:fc2}
		There exists a constant $c_0$ such that the following holds for any $\delta \in (0,1/6)$. Under the assumption of the existence of a CW, the output of Algorithm~\ref{algo:fc}, denoted $\psi$, with input $(\delta, c)$ where $c\ge c_0$ satisfies
		\[
		\mathbb{P}\left(\psi_{\delta} \neq i^*\right) \le \delta~.
		\]
		Moreover, the total number of queries $N_{\delta}$ made by Algorithm~\ref{algo:fc} satisfies, with probability at least $1-\delta$,
		\[
		N_{\delta} \le \tilde{c}\cdot \bigl( H_{\text{cw}}\log(1/\delta) \wedge \min_{\substack{(s_i)_{i \neq i^*} \\ \forall i,~s_i \le K_{i;<0}}}
		\bigl\{  \bigl(H_{\text{certify}}(\bm{s})+ H^{(1)}_{\text{explore}}(\bm{s})\bigr) \log(1/\delta) +H^{(0)}_{\text{explore}}(\bm{s})\bigr\}\bigr)~,
		\]
		where $\tilde{c}$ is proportional to $c$ and hides logarithmic factors in $K$ and $(\Delta_{i,(1)})_{i\neq i^*}$ and a $\log\log(1/\delta)$ factor~.
	\end{theorem}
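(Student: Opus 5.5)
The proof has two parts: $\delta$-correctness and the high-probability bound on the budget. Throughout, index the stages of the doubling schedule by $m=0,1,2,\dots$, with budget $T_m=2^m\cdot 8K\log(K)$.

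\textbf{Correctness.} The algorithm stops at the first stage $m$ where $\phi_1\lor\phi_2$ is true, so it suffices to show that, simultaneously over all stages, a true flag implies that the returned arm is $i^*$, except on an event of probability at most $\delta$.

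For $\phi_2$, \textsc{Test-CW} certifies $\Delta_{I,j}>0$ for every $j\neq I$ at level $\delta_m\propto\delta/(m(m+1))$. Since a CW exists and is unique, an arm with an all-positive row is $i^*$. I would build \textsc{Test-CW} from anytime-valid confidence bounds on each $\Delta_{I,j}$, and a union bound over $j$ and over stages (the $n(n+1)$ factor) gives total error $\delta/2$.

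For $\phi_1$, the flag says that at every round $k$ the score of the first eliminated arm $\bar\alpha$ is below $-\sqrt{\cdot}$. Since $\bar\alpha$ ranks above every eliminated arm, all eliminated arms have scores below this negative threshold. So it is enough that $S_k(i^*)$ exceeds the threshold with high probability. This holds because $\min\{Z^{(s)}_k(i^*),0\}$ is an empirical mean of a positive-mean variable with $\lceil B_k/4\rceil$ samples, and $Z^{(w)}_k(i^*)$ is at least an empirical estimate of a nonnegative quantile. I would take the Range-Quantile output as (up to deviation) an empirical mean of some duel $(i^*,\beta)$ with $\Delta_{i^*,\beta}>0$. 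A sub-Gaussian bound, union-bounded over $K^2$ pairs, $\log_{8/7}K$ rounds, the $\log T$ internal SH/quantile levels, and the stages, matches the displayed logarithm for $c\ge c_0$. The main care point is that the quantile output is data-dependent, so the bound must be uniform over all pairs $\beta$; this explains the $K^2$ in the log.

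\textbf{Budget.} Let $T^*$ denote the quantity in \eqref{eq:fc_budget}, inflated by polylogarithmic factors. I would show that at the first stage $m^*$ with $T_{m^*}\ge \tilde c\,T^*$, the algorithm stops with probability at least $1-\delta$. The total budget is then $\sum_{m\le m^*}T_m\le 2T_{m^*}\lesssim T^*$, using that each stage costs at most $T_m$ queries.

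There are two cases, according to which term attains the minimum.

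\emph{Case $H_{\mathrm{cw}}\log(1/\delta)$.} With $T\gtrsim H_{\mathrm{cw}}\log(1/\delta)\,\mathrm{polylog}$, the first bound of Theorem~\ref{thm:main_fb} gives $I=i^*$ with probability $1-\delta/4$. The $T/2$ queries of \textsc{Test-CW}, spread adaptively (for example by successive elimination on the row $I$), then certify all signs. This costs $\sum_j\log(1/\delta_m)/\Delta_{i^*,j}^2\lesssim H_{\mathrm{cw}}\log(1/\delta)$, so $\phi_2$ is true.

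\emph{Case $\bm s$-term.} Here I would revisit the second bound of Theorem~\ref{thm:main_fb}. Its proof shows that in each round, with the required probability, at least a $2/3$-fraction of $F_k$ has $S_k\le\bar\Delta_k/2<0$. This set has size at least $\lceil|A_k|/8\rceil$ once one checks the cardinalities from Lemma~\ref{lem:Fk}. Hence the rank-$(|A_k|-\lceil|A_k|/8\rceil+1)$ score satisfies $S_k(\bar\alpha)\le\bar\Delta_k/2$.

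It remains to show $|\bar\Delta_k|/2$ exceeds the threshold $\asymp\sqrt{\log(T)\log(K^2\log T/\delta)/B_k}$. This follows from $\bar\Delta_k^2B_k\gtrsim T/(\log K\,H_{\mathrm{certify}}(\bm s))$ with $T\gtrsim H_{\mathrm{certify}}\log(1/\delta)$. A union bound over rounds then gives $\phi_1$ true with probability at least $1-\delta$.

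\textbf{Main obstacle.} I expect the hardest step to be this last one: upgrading the elimination guarantee ``$i^*$ survives'' to the stronger statement that the elimination frontier is strictly negative by a margin dominating the threshold. This requires the count-tail argument of the second bound to be run directly on $\{\alpha\in F_k: S_k(\alpha)\ge\bar\Delta_k/2\}$, rather than only comparing scores to $S_k(i^*)$.
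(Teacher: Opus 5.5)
Your overall plan coincides with the paper's. Correctness is split into the $\phi_1$ and $\phi_2$ contributions with a union bound over stages. The budget bound comes from showing that a stage with $T$ of order $M_\delta$ succeeds: in the $H_{\mathrm{cw}}$ regime via Theorem~\ref{thm:amain1} plus the \textsc{Test-CW} analysis, and in the $\bm{s}$ regime via the count bound on $F_k$ from Lemma~\ref{lem:main2}, the cardinalities behind Lemma~\ref{lem:Fk}, and the comparison $-L_{k,\delta}\ge \tfrac12\bar\Delta_k$. Your forward formulation there is just the contrapositive of the paper's.

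The genuine gap is in the $\phi_1$ part of correctness. There you control $Z^{(w)}_k(i^*)$ by viewing the \textsc{Range-Quantile} output as an empirical mean of some duel $(i^*,\beta)$ and applying a sub-Gaussian bound uniformly over pairs. That output is $\hat t^{(2)}_{\bar\ell}$, an order statistic at a data-selected level $\bar\ell$. At level $\ell$, each sampled opponent receives only $T_\ell=\lceil \log(16K/(u-d))/(2\epsilon_\ell^2)\rceil$ duels, and since $\epsilon_{L-1}=\sqrt2$, the coarse levels use $O(1)$ duels per opponent. Such estimates admit no deviation bound at scale $\sqrt{\log(1/\delta_n)/B_k}$. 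Indeed, if the gaps $\Delta_{i^*,\beta}$ are small, the roughly $3/16$-order statistic of $O(1)$-sample means is a negative constant with high probability. What keeps the output near the target quantile is that $\bar\ell$ is rarely coarse (Lemma~\ref{lem:i2}), i.e.\ the Lepski-type analysis of Theorem~\ref{thm:conj}. No union bound over single pairs captures this.

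The fix is to invoke Corollary~\ref{cor:quant} with $\epsilon$ equal to half the threshold, using $\Delta^{(k)}_{i^*,(\lceil |A_k|/8\rceil)}>0$. This bounds the failure probability by $\log(\lceil B_k/2\rceil)\exp(-c_0\epsilon^2\lceil B_k/2\rceil/\log\lceil B_k/2\rceil)$. That is exactly why the threshold carries the multiplicative factor $\log(T)$, and why one needs $c\ge 2/c_0$, with $c_0$ the \textsc{Range-Quantile} constant. Neither comes from union bounds over levels or over the data-dependent $\beta$. The $K^2$ inside the logarithm is just slack from the union bound over the strong opponent $u$ together with the $\log(T)$ prefactor. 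With that substitution, the rest of your argument goes through as in the paper.
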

	\begin{remark}[On the constant $c$ in the stopping rule]
		The stopping condition in Algorithm~\ref{algo:3} involves a numerical constant $c_0$  inherited from the high-probability analysis of Corollary~\ref{cor:quant}. This \emph{absolute} value of $c_0$ can be made explicit by tracking constants in the proof. Since we did not optimize numerical factors, we keep $c_0$ symbolic for readability.
	\end{remark}

	\section{Instance Dependent Lower Bounds}\label{sec:lb_fc}
	
	In this section, we provide complementary lower bounds for the budget. First, Theorem~\ref{thm:LB_exp_instance_first_regime} states a fully instance-dependent lower bound for the expected budget of any algorithm. Second, Theorem~\ref{thm:LB_hp_instance} establishes, to the best of our knowledge, the first high-probability lower bound on the budget for CW identification in dueling bandits. 
	
	\noindent Denote by $\mathbb{D}_{\mathrm{cw}}$ the class of dueling bandit environments that admits a CW\footnote{The  restriction to gaps in away from $-1/2$ and $1/2$ is standard in lower bounds with Bernoulli rewards.} :
	\begin{equation}\label{eq:def_acw}
		\mathbb{D}_{\mathrm{cw}}
		:= \Bigl\{
		\bDelta \in [-\tfrac14,\tfrac{1}{4}]^{K \times K} :
		\bDelta=- \bDelta^T \text{ and }\exists\, i^* \in [K]~\text{such that}~ \forall j\ne i^*,~ \Delta_{i^*,j} > 0
		\Bigr\}\enspace .
	\end{equation}
	We say that an algorithm $A$ is $\delta$-correct for CW identification if, for any $\bDelta\in \mathbb{D}_{\mathrm{cw}}$, it identifies $i^*$ with error probability at most $\delta$, that is, $\mathbb{P}_{\bDelta,A}(\hat{i}\ne i^*(\bDelta))\leqslant \delta$, where $\mathbb{P}_{\bDelta,A}$ denotes the probability\footnote{ denote $\mathbb{E}_{\bDelta,A}$ for the corresponding expectation} induced by the interaction between $A$ and the environment with gap matrix $\bDelta$. 
	
	\begin{theorem}\label{thm:LB_exp_instance_first_regime}
		Let $K \geqslant 2$ and $\delta \in (0,1/6)$ and consider any gap matrix $\bDelta\in\mathbb{D}_{\mathrm{cw}}$. For any algorithm $A$ that is $\delta$-correct on $\mathbb{D}_{\mathrm{cw}}$, the budget $N_{\delta}$ satisfies
		\begin{equation} \label{eq:LB_exp_instance_first_regime}
			\mathbb{E}_{\bDelta,A}[N_{\delta}]
			\;\geqslant\; \frac{1}{4} \sum_{i\neq i^*} \frac{\log\!\left(1/(4\delta)\right)}{\Delta_{i,(1)}^2} \; \text{, and } \quad \mathbb{P}_{\boldsymbol{\Delta},A}\left(N_{\delta}\geqslant\frac{1}{3}\sum_{i\neq i^*}\frac{\log\left(1/(6\delta)\right)}{\Delta_{i,(1)}^2}\right)\geqslant\delta.~.
		\end{equation}
	\end{theorem}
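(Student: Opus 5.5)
The plan is a change-of-measure (transportation) argument in the style of Kaufmann, Cappé and Garivier. For each sub-optimal arm $i$ I will build an alternative environment in which $i$ is the Condorcet winner and which differs from $\bDelta$ only in the entries of row $i$ (and, by skew-symmetry, column $i$).

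\textbf{Construction of the alternatives.} Fix $i\neq i^*$ and $\varepsilon>0$ small. Define $\bDelta^{(i)}$ by $\Delta^{(i)}_{i,j}=\varepsilon$ for every $j$ with $\Delta_{i,j}\le 0$, $j\neq i$. Set $\Delta^{(i)}_{j,i}=-\varepsilon$ for those $j$, and leave all other entries unchanged. Then row $i$ of $\bDelta^{(i)}$ is strictly positive. All entries stay in $[-1/4,1/4]$ and the matrix is skew-symmetric, so $\bDelta^{(i)}\in\mathbb{D}_{\mathrm{cw}}$ with $i^*(\bDelta^{(i)})=i$. In particular the entry $(i,i^*)$ is flipped, since $\Delta_{i,i^*}<0$.

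\textbf{Transportation inequality.} Only the pairs $\{i,j\}$ with $\Delta_{i,j}\le 0$ change. Let $N_{i,j}$ count the duels of the unordered pair $\{i,j\}$. The transportation lemma applied to the event $\{\hat i=i^*\}$ gives
\[
\sum_{j:\Delta_{i,j}\le 0}\mathbb{E}_{\bDelta,A}[N_{i,j}]\,\kl\!\bigl(\tfrac12+\Delta_{i,j},\tfrac12+\varepsilon\bigr)\ \ge\ \kl(\delta,1-\delta)\ \ge\ \log\!\bigl(1/(2.4\delta)\bigr).
\]
Let $\varepsilon\to 0$ and use $\kl(\tfrac12-a,\tfrac12)\le c\,a^2$ for $a\le 1/4$, with $c$ close to $2$ (the exact value comes from a Taylor or $\chi^2$-type bound). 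Each term is then at most $c\,\Delta_{i,(1)}^2\,\mathbb{E}[N_{i,j}]$, so
\[
\sum_{j\neq i}\mathbb{E}_{\bDelta,A}[N_{i,j}]\ \ge\ \frac{\log(1/(2.4\delta))}{c\,\Delta_{i,(1)}^2}.
\]

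\textbf{Summing over arms.} Summing over $i\neq i^*$ counts each pair $\{i,j\}$ of sub-optimal arms at most twice, so $\sum_{i\neq i^*}\sum_j \mathbb{E}[N_{i,j}]\le 2\,\mathbb{E}[N_\delta]$. This yields $\mathbb{E}[N_\delta]\ge \frac{1}{2c}\sum_{i\neq i^*}\log(1/(4\delta))/\Delta_{i,(1)}^2$. Getting the constant $1/4$ requires a careful value of $c$; I expect this constant bookkeeping, in particular the double counting of pairs, to be the only delicate point in the expectation bound.

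\textbf{High-probability statement.} Set $\chi:=\frac13\sum_{i\neq i^*}\log(1/(6\delta))/\Delta_{i,(1)}^2$ and argue by contradiction: suppose $\mathbb{P}_{\bDelta,A}(N_\delta\ge\chi)<\delta$. I will apply the data-processing inequality to the observations truncated at the stopping time $\tau=N_\delta\wedge\chi$, using the event $E=\{N_\delta<\chi,\ \hat i=i^*\}$, which is $\mathcal{F}_\tau$-measurable. Then $\mathbb{P}_{\bDelta}(E)\ge 1-2\delta$ and $\mathbb{P}_{\bDelta^{(i)}}(E)\le\delta$. This gives
\[
\sum_j \mathbb{E}_{\bDelta}[N_{i,j}(\tau)]\,c\,\Delta_{i,(1)}^2\ \ge\ \kl(1-2\delta,\delta)\ \ge\ \log(1/(6\delta))
\]
for $\delta<1/6$, up to the exact form of this inequality. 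Summing over $i$ and using $\sum_i\sum_j N_{i,j}(\tau)\le 2\tau\le 2\chi$ produces a contradiction with the choice of $\chi$, once the constant is checked. The main thing to verify here is that truncation at the deterministic level $\chi$ keeps $E$ measurable and that the stated constant $1/3$ is compatible with the factor $2c$.
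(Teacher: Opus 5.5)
Your alternatives and change of measure are essentially the paper's; the paper goes through total variation and Bretagnolle--Huber rather than the transportation lemma. For the quantile bound your route is different and somewhat simpler. The paper zeroes the negative entries of row $k$, which gives a limiting instance with two weak Condorcet winners. It then uses the event $\{N_\delta\le\chi\}$ and needs a two-sided $\pm\epsilon$ approximation to show this event has probability at most $2\delta$ there. Your joint event $\{N_\delta<\chi,\ \hat i=i^*\}$, tested against the $\varepsilon$-alternative whose Condorcet winner is $i$, avoids that limiting instance. It is valid: the event is $\mathcal{F}_\tau$-measurable, and
\[
\kl(1-2\delta,\delta)\ \ge\ (1-2\delta)\log(1/\delta)-\log 2\ \ge\ \log(1/(6\delta)),
\]
since $2\delta\log(1/\delta)\le 2/e<\log 3$.

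The gap is the summation step, and it is not mere bookkeeping. Bounding $\sum_{i\ne i^*}\sum_{j\ne i}\mathbb{E}[N_{\{i,j\}}]\le 2\,\mathbb{E}[N_\delta]$ loses a factor $2$ that no choice of $c$ recovers. Since $\kl(\tfrac12-a,\tfrac12)=-\tfrac12\log(1-4a^2)\ge 2a^2$, the best uniform constant on $a\le 1/4$ is $c=8\log(4/3)\approx 2.30$.
\begin{itemize}
\item For the expectation bound you get the factor $\frac{1}{2c}\approx 0.217$ in front of $\log(1/(2.4\delta))$. This is smaller than $\frac14\log(1/(4\delta))$ once $\delta$ is below about $8\cdot 10^{-3}$.
\item For the quantile bound you only reach $\chi\ge\frac{1}{2c}\sum_{i}\kl(1-2\delta,\delta)/\Delta_{i,(1)}^2$. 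Since $\kl(1-2\delta,\delta)\le\log(1/\delta)$ and $\frac{1}{2c}<\frac13$, this does not contradict $\chi=\frac13\sum_i\log(1/(6\delta))/\Delta_{i,(1)}^2$ for small $\delta$.
\end{itemize}

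The missing idea is the one the paper uses. After letting $\varepsilon\to0$, tied pairs contribute $\kl(\tfrac12,\tfrac12+\varepsilon)\to 0$. So only pairs with $\Delta_{i,j}<0$ remain; your first display already had this restriction before you replaced it with $\sum_{j\ne i}$. By skew-symmetry, at most one of $\Delta_{i,j},\Delta_{j,i}$ is strictly negative, so every duel is charged to at most one row:
\[
\sum_{i\ne i^*}\ \sum_{j:\,\Delta_{i,j}<0} N_{\{i,j\}}\ \le\ N_\delta \qquad \bigl(\text{resp. }\le \tau\le\chi\bigr).
\]
Strictness matters here: with $\Delta_{i,j}\le 0$, a tie would still be counted twice. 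With this counting and $c=8\log(4/3)$:
\begin{itemize}
\item $\frac1c\log(1/(2.4\delta))\ge\frac14\log(1/(4\delta))$ for $\delta\le 1/4$, which gives the expectation bound;
\item $\frac1c>\frac13$ gives a strict contradiction with your choice of $\chi$, which gives the quantile bound.
\end{itemize}
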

	
	\paragraph{Proof Sketch.} The bound~\eqref{eq:LB_exp_instance_first_regime} admits an intuitive oracle interpretation. To certify that $i^*$ is the CW, the algorithm must provide evidence that all other arms $i\neq i^*$ are not CW. Imagine that an oracle reveals, for each $i\neq i^*$, the ``hardest'' opponent $j^*(i)$---that is, the one with largest negative gap $\Delta_{i,j^*(i)}=\Delta_{i,(1)}$. Focusing solely on duels $(i,j^*(i))$, one would still require at least $|\Delta_{i,j^*(i)}|^{-2}\log(1/\delta)$ duels to reliably conclude that $\Delta_{i,(1)}<0$. This is formalized using standard information-theoretic arguments. The extension to the $(1-\delta)$-quantile of the budget uses similar ideas.
	

	\begin{remark}
		The bound~\eqref{eq:LB_exp_instance_first_regime} corresponds to the minimal certification cost as $\sum_{i\ne i^*}  \log\!\left(\frac{1}{4\delta}\right)/\Delta_{i,(1)}^2$ is equal to $\min_{\bm{s}} H_{\mathrm{certify}}(\bm{s},\delta/4)$. In particular, it reveals the asymptotic optimality of the expected budget~\eqref{eq:bound1} obtained by \cite{karnin2016verification} for $\delta \to 0$ and $\bDelta$ fixed.
		
		Consider the specific scenario~\eqref{eq:CW_is_the_strongest} where the CW is the strongest opponent of every suboptimal arm. Then, the bound~\eqref{eq:LB_exp_instance_first_regime} reduces to $H_{\mathrm{cw}}(\delta/4)$, establishing the optimality of Algorithm~\ref{algo:fc} together with that of \cite{maiti2024near} in this specific regime.
	\end{remark}
	
	However, the problem is different when the CW is not the strongest opponent. Establishing this requires new proof techniques to lower bound budget tails. 
	Recall that, by definition, $\bm{s}_{\bDelta}^* = (s^*_i)_{i\ne i^*}$ achieves the minimum in the sample complexity~\eqref{eq:sample_comp}, and let $\Delta_{({\bm s}_{\bDelta}^*)}\coloneqq (\Delta_{i,(s_i)})_{i\ne i^*}$ denote the corresponding gap. The pair $(\bm{s}_{\bDelta}^*,\Delta_{({\bm s}_{\bDelta}^*)})$ fully characterizes the minimum term in bound~\eqref{eq:sample_comp}. We consider, as defined in \eqref{def:minimax_class}, the class $\mathbb D(\bDelta)$ of instances containing $\bDelta$, parametrized by $(\bm{s}_{\bDelta}^*,\Delta_{({\bm s}_{\bDelta}^*)})$.

	
	\begin{theorem}\label{thm:LB_hp_instance}
		Let $A$ be a $\delta$-correct algorithm for CW identification, with $\delta \leqslant 1/12$, and let $\bDelta\in \mathbb{D}_{\mathrm{cw}}$.
		Assume that $\bDelta$ has no ties, that is, $\forall i\ne j$, $\Delta_{i,j}\ne 0$. For this matrix $\bDelta$, one can construct a matrix $\tilde \bDelta$ by permuting the entries of $\bDelta$ in such a way that $\tilde \bDelta\in \mathbb D(\bDelta)$, and such that
		\begin{equation}\label{eq:lower_bound_Hexplore}
			\mathbb{P}_{\tilde{\bDelta},A}\!
			\left(
			N_{\delta}
			\;\geqslant\;
			\frac{1}{3}\,
			\max_{i\ne i^*}
			\frac{K_{i;< 0}}{\|\Delta_i^-\|^2}
			\log\!\left(\frac{1}{6\delta}\right)
			\,\vee\,
			\frac{1}{37 \log(2K)}
			\sum_{i \ne i^*}
			\frac{K_{i;< 0}}{\|\Delta_i^-\|^2}
			\right) 
			\;\geqslant\; \delta \enspace.
		\end{equation}
		Moreover, for all $i\neq i^*$, the rows satisfy $(\tilde{\Delta}_{i,(j)})_{j\le K_{i;<0}}=(\Delta_{i,(j)})_{j\le K_{i;<0}}$, i.e., $\tilde{\Delta}_{i,\cdot}$ and $\Delta_{i,\cdot}$ share the same $K_{i;<0}$ negative entries, up to permutation.
	\end{theorem}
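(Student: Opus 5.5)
The plan is to prove the two terms in the maximum separately, each by exhibiting a hard permuted instance $\tilde{\bDelta}$, and then to combine them. The combination step is short. If the first bound holds for some $\tilde\bDelta^{(1)}$ and the second for some $\tilde\bDelta^{(2)}$, we still need a single $\tilde\bDelta$. So we build one matrix in which every suboptimal row $i$ has its $K_{i;<0}$ negative entries placed at uniformly hidden positions among the columns $j\neq i,i^*$. We then lower bound the quantile of $N_\delta$ for a random permutation drawn from a product prior over rows. This yields a deterministic permutation $\tilde\bDelta$ that satisfies both bounds, after halving constants or using $\max\le$ sum. Permuting within rows preserves skew-symmetry only if we move entries in pairs $(i,j),(j,i)$, so the construction must be done with care. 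We fix $i^*$ and its row, and relabel the suboptimal arms through a permutation of the columns of the subgame on $[K]\setminus\{i^*\}$. Relabeling the \emph{opponents} of row $i$ while keeping the multiset of row values intact preserves $\bm s^*$, the values $\Delta_{i,(s_i^*)}$, and the CW. So $\tilde\bDelta\in\mathbb D(\bDelta)$ and the negative entries are shared, as claimed.

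\textbf{First term (single hard row, $\log(1/\delta)$).} Fix the row $i$ that achieves the max. We reduce to an active testing problem on a vector $v=\Delta_{i,\cdot}$ of length $K-1$ whose negative entries have magnitudes $|\Delta_{i,(j)}|$ at unknown positions. The alternative instance $\bDelta'$ flips all negative entries of row $i$ to small positive values, with their antisymmetric counterparts. This makes $i$ a second candidate CW, or rather makes $i$ the CW once we also lower $\Delta_{i^*,i}$. The key inequality is the transportation lemma applied on the event $\{N_\delta\le \chi\}$. The expected KL accumulated up to $\chi$ is $\sum_j \mathbb{E}[N_{i,j}]\,\mathrm{kl}\lesssim \sum_j \mathbb{E}[N_{i,j}]\Delta_{i,j}^2$. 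Averaging over a uniformly random placement of the negative positions among the $K-1$ columns (symmetrization), each column receives a negative entry with probability $K_{i;<0}/(K-1)$. This gives averaged KL $\lesssim \chi\,\|\Delta_i^-\|^2/K_{i;<0}$. The change-of-measure inequality (Bretagnolle--Huber / $\mathrm{kl}(\delta,\cdot)$ form, as in Theorem~\ref{thm:LB_exp_instance_first_regime}) then forces $\chi\ge \frac13 \frac{K_{i;<0}}{\|\Delta_i^-\|^2}\log(1/(6\delta))$. For the truncated-at-$\chi$ version we use the same quantile argument already used for the second claim of Theorem~\ref{thm:LB_exp_instance_first_regime}.

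\textbf{Second term (sum over rows, no $\delta$).} This is the main obstacle: showing that the searches across different rows do not share samples. Each duel $(i,j)$ informs rows $i$ and $j$ at once, so a naive sum argument double counts. We handle this as follows. Under the random product prior, with constant probability the algorithm must have found at least one negative entry in a constant fraction of rows, since otherwise some non-CW row looks like a CW row and correctness fails at level $\delta\le 1/12$. Finding a hidden entry in row $i$ requires $\gtrsim K_{i;<0}^{-1}K\cdot\|\Delta_i^-\|^{-2}K_{i;<0}$ samples on that row, by a constant-probability KL/Fano bound. A sample on duel $(i,j)$ is charged to both rows, which costs at most a factor $2$. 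The $\log(2K)$ loss comes from grouping rows dyadically by the scale of their per-row cost, so that one level captures a $1/\log(2K)$ fraction of the sum, and then applying a counting argument on that level. The step I expect to be delicate is making the per-row constant-probability search bounds hold simultaneously, which is a multiple-testing lower bound. I would first try a Markov-type argument over rows under the product prior, relying on the independence of row permutations.
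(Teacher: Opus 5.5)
Your hard family is the main gap. As described it is not a consistent construction. Hiding each row's negative entries independently among all columns clashes with skew-symmetry: moving a negative entry of row $i$ to a column $j$ with $\Delta_{i,j}>0$ flips $\Delta_{j,i}$, which changes $K_{j;<0}$ and the multiset of row $j$. A single relabeling of the suboptimal arms, on the other hand, hides nothing row by row.

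More seriously, a family that keeps the CW row fixed (or merely relabels $\bDelta$) cannot in general witness the second term of~\eqref{eq:lower_bound_Hexplore}. Take $K$ even, and let every $i\neq i^*$ lose to $i^*$ by $\Delta$ and to $K/2-1$ suboptimal arms by a tiny $\epsilon>0$ (a regular tournament on $[K]\setminus\{i^*\}$). Then $\sum_{i\neq i^*}K_{i;<0}/\|\Delta_i^-\|^2\asymp K^2/\Delta^2$. By Theorem~\ref{thm:fc2}, run at level $\delta/2$, FC-CWI stops after $\widetilde{O}(H_{\mathrm{cw}}\log(1/\delta))=\widetilde{O}(K\log(1/\delta)/\Delta^2)$ duels with probability $1-\delta/2$ on every instance having this CW row. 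This contradicts the claimed bound for large $K$.

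The missing idea is the paper's family $\bDelta^{(\pi)}$. Keep the sign pattern and, for each $i\neq i^*$, permute the \emph{values} of row $i$ among its negative positions $\Sigma_i^-$, column $i^*$ included, adjusting the antisymmetric entries. Every off-diagonal pair is negative in exactly one row (the loser's), so these permutations act on disjoint pairs and are independent across rows. They preserve skew-symmetry, the CW, all signs and each row's negative multiset. They do scramble the CW row, and this is unavoidable: if the CW row is preserved, the paper only obtains the weaker rates $\frac{1}{\Delta_{i^*,i}^2}\wedge\frac{K_{i;<0}}{\|\Delta_i^-\|^2}$ of Theorem~\ref{thm:precise_LB_high_proba_tilde}.

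The alternative $\bDelta^{(\pi,k)}$ sets the negatives of row $k$ to $0$. It then has two weak CWs, so $A$ rarely stops before $\chi$ under it, and it does not depend on $\pi_k$. Writing the divergence as $\mathrm{KL}(\mathbb{P}^{(\pi,k)},\mathbb{P}^{(\pi)})$ puts the sample counts under this alternative. Averaging over the permutations of $\Sigma_k^-$ then produces exactly the factor $\|\Delta_k^-\|^2/K_{k;<0}$. The search is over the $K_{k;<0}$ negative positions, not over all $K-1$ columns.

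For the sum term, you flag the multiple-testing difficulty but leave it open, and the route you sketch does not close it. A per-row change of measure bounds the expected number of row-$k$ duels under a $k$-dependent alternative, where $A$ need not stop. These bounds therefore cannot be added up against the single budget $\chi$, and ``has found a negative entry'' is not an event you can control directly.

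The paper proceeds as follows.
\begin{itemize}
\item It orders rows by $\beta_k^2=\|\Delta_k^-\|^2/K_{k;<0}$ and sets $I=\argmax_k (k-1)/\beta_k^2$.
\item For $k\le I$ it tests the event $B_k=\{N_\delta\le\chi,\ N_{k,\cdot}\le 4\chi/(I-1)\}$, where $N_{k,\cdot}$ counts duels of $k$ against $\Sigma_k^-$. This is done through a truncation $\tilde A_k$ that stops at $\chi$ or as soon as the row-$k$ cap is exceeded, so the cap holds under every environment.
\item Under the alternative, $\mathbb{P}(B_k)\le 2\delta$.
\item Under the reference, $\sum_k N_{k,\cdot}\le N_\delta$ with no double counting, since each pair is counted only in its loser's row; your factor $2$ is unnecessary. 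Pigeonhole then gives $\mathbb{P}(B_k^c)\le \delta+1/4$ on average over $k\le I$.
\item Hence the average total variation is at least $1/2$. Pinsker, Jensen and the symmetrization above, now with row-$k$ counts capped at $4\chi/(I-1)$, yield $\chi\gtrsim (I-1)/\beta_I^2\gtrsim \sum_{i}\beta_i^{-2}/\log(2K)$ by Lemma~\ref{lem:tech1}.
\end{itemize}
That last step is your dyadic-level idea. What your Markov-type sketch lacks is the per-row budget cap built into the truncated procedure, which is what lets a single $\chi$ control all the alternatives at once.
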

	\begin{remark}
		Observe that $\tilde{\bDelta}$ has exactly the same sign structure as $\bDelta$ (i.e., $\operatorname{sign}(\bDelta)=\operatorname{sign}(\tilde{\bDelta})$), and that the permutation preserves, in each row, the multiset of negative magnitudes. Intuitively, $\tilde{\bDelta}$ has the same dueling structure as $\boldsymbol{\Delta}$, except that the algorithm can no longer exploit any ordering structure between the arms (such as SST). A more general version---Theorem~\ref{thm:precise_LB_high_proba}, which also covers ties and provides an explicit construction---is provided in Appendix~\ref{sec:proof_LB_HP_fc}.
	\end{remark}

	\paragraph{Proof Sketch of~\eqref{eq:lower_bound_Hexplore}.}
	The key idea is to reduce CW identification to multiple active signal detection problems~\citep{castro2014adaptive}: for each $i\neq i^*$, we need to certify that the row $\Delta_{k,\cdot}$ has at least a negative entry, this  with a probability of error at most $\delta$. Along the way, we have to improve state-of-the-art lower bounds  for such detection problems. 
	
	Consider any $\delta$-correct algorithm $A$. We introduce a collection $\boldsymbol{\Delta}^{(\pi)}$ of gap matrices that differ from $\bDelta$ as we permute, on each row $i\neq i^*$, the position of the negative entries by some collection $\pi=(\pi_i)_{i\neq i^*}$ of permutations while preserving the skew symmetry. Then, we fix a specific arm $i\neq i^*$  and construct the gap matrices $\boldsymbol{\Delta}^{(\pi,i)}$, by setting to $0$ the negative entries of the $i$-th row of $\boldsymbol{\Delta}^{(\pi)}$ while preserving skew symmetry. As $\boldsymbol{\Delta}^{(\pi,i)}$ contains two rows with non-negative entries, one easily deduces that the budget of $A$ under $\boldsymbol{\Delta}^{(\pi,i)}$ is arbitrarily large with probability $1-\delta$. In contrast, under $\boldsymbol{\Delta}$, $A$ finishes before $\chi$---the $(1-\delta)$-quantile of $N_\delta$ under $\mathbb{P}_{\boldsymbol{\Delta},A}$. Hence, we reduce $A$ to an active testing problem with budget $\chi$ for any unknown gap matrix $\tilde{\bDelta}$ of the hypotheses
	\[
	H_0^{(i)}:\tilde{\bDelta}=\bDelta^{(\pi,i)} \text{ for some perm. $\pi$} \quad\text{vs.}\quad H_1^{(i)}:\tilde{\bDelta}=\bDelta^{(\pi)}\text{ for some  perm. $\pi$}\enspace .
	\]
	Since the permutation $\pi$ is unknown to the learner, this allows to improve over~\eqref{eq:LB_exp_instance_first_regime} by accounting for the fact that the algorithm must explore all possible positions of the negative entries in row $i$. 
	%
	%
	Then, by a convexity argument, we deduce  $\chi\gtrsim \frac{K_{i;<0}}{\|\Delta_i^-\|^2}\log(1/\delta)$. Optimizing over $i\neq i^*$ yields the first part of the bound  $\chi\gtrsim\max_{i\neq i^*} \frac{K_{i;<0}}{\|\Delta_i^-\|^2}\log(1/\delta)$.
	
	The second term $\sum_{i\neq i^*}\tfrac{K_{i;<0}}{\|\Delta_i^-\|^2}$ interprets as the total cost for testing all hypotheses $H_0^{(i)}$ against $H_1^{(i)}$ with a constant error probability. We develop new arguments for this multiple-hypotheses problem. Unlike the involved technique in~\cite{simchowitz2017simulator}, we reduce w.l.o.g. to the case where all tests have similar complexity $\frac{\|\Delta_i^-\|^2}{K_{i;<0}}$ and we write   $\beta^{2}$ for this common value. Then, we build upon the symmetry of our problem to reduce to the case where each row receives the same sampling effort $\chi/K$. Relying again on lower bounds on active signal detection, we get $\chi/K\gtrsim\beta^{-2}$, which leads to the desired result. We believe that these arguments can generalize to other multiple active testing problems. The full proof is  given in Appendix~\ref{sec:proof_LB_HP_fc}.
	
	\begin{remark}
		While the bound~\eqref{eq:LB_exp_instance_first_regime} from Theorem~\ref{thm:LB_exp_instance_first_regime} captures the cost of certification, the lower bound~\eqref{eq:lower_bound_Hexplore} captures the intrinsic need of exploration. Indeed, from the classical bound of Lemma~\ref{lem:tech1}, we have $\min_{\bm{s}} H_{\text{explore}}(\bm{s},\delta)= \max_{i\ne i^*} K\log(1/\delta)/\|\Delta^-_{i}\|^2+\sum_{i\ne i^*}K/\|\Delta^-_{i}\|^2$,  up to a  factor $\log(2K)$. 
		
		Importantly, the two  lower bounds in Theorems~\ref{thm:LB_exp_instance_first_regime} and \ref{thm:LB_hp_instance} thus imply the following corollary.
	\end{remark}
	
	\begin{corollary}\label{coro:Lower_bound_minimax}
		Let $A$ be a $\delta$-correct algorithm for CW identification, with $\delta \leqslant 1/12$, and let $\bDelta\in \mathbb{D}_{\mathrm{cw}}$. Then, one can construct $\tilde\bDelta\in \mathbb{D}(\bDelta)$, such that 
		\begin{equation}
			\mathbb{P}_{\tilde{\boldsymbol{\Delta}},A}\left(N_{\delta}\gtrsim H_{\text{certify}}(\bm{s}^*,\delta)+H_{\text{explore}}(\bm{s}^*,\delta)\right)\geqslant\delta ~,
		\end{equation}
		where $\gtrsim$ hides a log term in $K$ and a numerical constant. 
	\end{corollary}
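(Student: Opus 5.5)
The plan is to combine Theorem~\ref{thm:LB_exp_instance_first_regime} and Theorem~\ref{thm:LB_hp_instance}, both applied to a single well-chosen matrix $\tilde\bDelta$. For $\tilde\bDelta$ I will take the permuted matrix built in Theorem~\ref{thm:LB_hp_instance}, or its tie-allowing version Theorem~\ref{thm:precise_LB_high_proba} if $\bDelta$ has ties. Theorem~\ref{thm:LB_hp_instance} already gives $\tilde\bDelta\in\mathbb{D}(\bDelta)$. It also states that every row $i\neq i^*$ of $\tilde\bDelta$ carries the same multiset of negative entries as the corresponding row of $\bDelta$. Consequently the quantities $K_{i;<0}$, $\|\Delta_i^-\|$, $\Delta_{i,(1)}$ and $\Delta_{i,(s_i^*)}$ are identical for $\tilde\bDelta$ and $\bDelta$. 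All lower-bound thresholds can therefore be written in terms of $\bDelta$.

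\textbf{Step 1 (certification term).} Apply the quantile half of Theorem~\ref{thm:LB_exp_instance_first_regime} to $\tilde\bDelta$. This gives $\mathbb{P}_{\tilde\bDelta,A}\bigl(N_\delta\ge \tfrac13\sum_{i\ne i^*}\log(1/(6\delta))/\Delta_{i,(1)}^2\bigr)\ge\delta$. Since $|\Delta_{i,(1)}|\ge|\Delta_{i,(s_i^*)}|$, this threshold is at least $H_{\text{certify}}(\bm{s}^*,\delta)$ only if $\Delta_{i,(1)}$ and $\Delta_{i,(s_i^*)}$ are comparable, which they need not be. I therefore use optimality of $\bm{s}^*$ instead. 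Comparing the objective in~\eqref{eq:sample_comp} at $\bm{s}^*$ with its value at $\bm{s}=(1,\dots,1)$ shows that $H_{\text{certify}}(\bm{s}^*)+H_{\text{explore}}(\bm{s}^*)\le H_{\text{certify}}(\bm 1)+H_{\text{explore}}(\bm 1)$. So it suffices to lower bound the overall minimum $\min_{\bm s}\{H_{\text{certify}}(\bm s,\delta)+H_{\text{explore}}(\bm s,\delta)\}$, up to constants and $\log K$ factors.

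\textbf{Step 2 (exploration term).} By the remark following Theorem~\ref{thm:LB_hp_instance}, which relies on Lemma~\ref{lem:tech1}, $\min_{\bm s}H_{\text{explore}}(\bm s,\delta)$ is comparable, up to $\log(2K)$, to $\max_{i\ne i^*}K\log(1/\delta)/\|\Delta_i^-\|^2+\sum_{i\ne i^*}K/\|\Delta_i^-\|^2$. Inequality~\eqref{eq:lower_bound_Hexplore} controls the same expression with $K$ replaced by $K_{i;<0}$. I will bridge this gap using the constraint $s_i\le K_{i;<0}$ and the standard bound $\|\Delta_i^-\|^2\le K_{i;<0}\Delta_{i,(1)}^2$. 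The aim is a weighted comparison in which the certification term absorbs the rows where $K_{i;<0}\ll K$.

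\textbf{Step 3 (combining).} The two tail events each have probability at least $\delta$, but they need not overlap. I combine them by noting that $\mathbb{P}(N\ge a)\ge\delta$ and $\mathbb{P}(N\ge b)\ge\delta$ imply $\mathbb{P}(N\ge \max(a,b))\ge\delta$, and that $\max(a,b)\ge (a+b)/2$. This yields a threshold of order (certification minimum) $+$ (exploration minimum). That quantity is at least $\min_{\bm s}[H_{\text{certify}}+H_{\text{explore}}]$, which equals the value at $\bm s^*$ by definition of $\bm s^*$.

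\textbf{Main obstacle.} I expect the hardest step to be the mismatch between $K_{i;<0}$ in~\eqref{eq:lower_bound_Hexplore} and $K$ in $H_{\text{explore}}$. The mismatch occurs because the minimum in~\eqref{eq:sample_comp} also restricts $s_i\le K/8$. My first attempt will be a case split on each row. If $K_{i;<0}\ge K/8$, the two expressions agree up to a factor $8$. If $K_{i;<0}<K/8$, I will try to show that the corresponding term is dominated by the certification or exploration contribution already available. If that fails, I will instead revisit the construction in Theorem~\ref{thm:precise_LB_high_proba} and also permute positive entries within $\mathbb{D}(\bDelta)$, so that the effective search space becomes all $K$ positions.
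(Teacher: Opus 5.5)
Your plan has a genuine gap, and it lies in Step~3 rather than in the $K$ versus $K_{i;<0}$ mismatch you flag.

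\textbf{The inequality in Step~3 is reversed.} At best, Steps~1--2 yield a threshold of order $\min_{\bm s}H_{\text{certify}}(\bm s,\delta)+\min_{\bm s}H_{\text{explore}}(\bm s,\delta)$. You then claim this is at least $\min_{\bm s}\bigl[H_{\text{certify}}(\bm s,\delta)+H_{\text{explore}}(\bm s,\delta)\bigr]=H_{\text{certify}}(\bm s^*,\delta)+H_{\text{explore}}(\bm s^*,\delta)$. But a sum of separate minima never exceeds the minimum of the sum. Here the two can differ by a power of $K$, because $H_{\text{certify}}$ favours $s_i=1$ while $H_{\text{explore}}$ favours large $s_i$.

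For a concrete instance, take $K$ large and $\log(1/\delta)=\sqrt K$. Let every suboptimal row have one negative entry $-1/4$, then $K/8-1$ negative entries $-K^{-1/4}$, and all remaining negative entries of size $\epsilon\to0$. This is realisable, e.g.\ from a regular tournament on the suboptimal arms. Every admissible $s_i$ costs order $K$ for that row: $16K$ from exploration if $s_i=1$, or $K$ from certification if $s_i\ge 2$. So the target is of order $K^2$. Yet each of the following is only $O(K^{3/2})$:
\begin{itemize}
\item $\min_{\bm s}H_{\text{certify}}$ and $\min_{\bm s}H_{\text{explore}}$;
\item the quantile bound of Theorem~\ref{thm:LB_exp_instance_first_regime};
\item the right-hand side of \eqref{eq:lower_bound_Hexplore}.
\end{itemize}
The $\tilde\bDelta$ of Theorem~\ref{thm:LB_hp_instance} only rearranges negative entries within each row. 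It therefore leaves $\Delta_{i,(1)}$, $K_{i;<0}$ and $\|\Delta_i^-\|$ unchanged. Consequently no row-by-row case analysis that applies these two theorems to a rearrangement of $\bDelta$ can close this gap.

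\textbf{The missing idea.} The corollary is a local-minimax statement. The class $\mathbb{D}(\bDelta)$ only fixes $i^*$, $\bm s^*$ and the gaps $(\Delta_{i,(s_i^*)})_{i\neq i^*}$, so you should choose a hard instance in that class, not a rearrangement of $\bDelta$. The paper builds $M^\epsilon\in\mathbb{D}(\bDelta)$ as follows:
\begin{itemize}
\item each row $i\neq i^*$ has exactly $s_i^*$ entries equal to $\Delta_{i,(s_i^*)}$, and all its other negative entries equal to $-\epsilon$;
\item a block design gives every such row at least $K/8$ negative entries and no ties;
\item $\bm s^*_{M^\epsilon}=\bm s^*$ for small $\epsilon$.
\end{itemize}
On this instance $\Delta_{i,(1)}=\Delta_{i,(s_i^*)}$, so the quantile bound of Theorem~\ref{thm:LB_exp_instance_first_regime} equals $H_{\text{certify}}(\bm s^*,\delta)$ up to constants. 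Moreover, $K_{i;<0}\ge K/8$ and $\|\Delta_i^-\|^2\le 2s_i^*\Delta_{i,(s_i^*)}^2$ turn \eqref{eq:lower_bound_Hexplore} into $H_{\text{explore}}(\bm s^*,\delta)$ up to constants and $\log(2K)$. This also removes your $K$ versus $K_{i;<0}$ obstacle.

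To finish, apply Theorem~\ref{thm:LB_hp_instance} to $M^\epsilon$, obtaining $\tilde\bDelta\in\mathbb{D}(M^\epsilon)=\mathbb{D}(\bDelta)$. Then apply Theorem~\ref{thm:LB_exp_instance_first_regime} to this same $\tilde\bDelta$; it has the same row-wise negative entries, so $\tilde\Delta_{i,(1)}=\Delta_{i,(s_i^*)}$. Finally combine the two tail bounds through the maximum, exactly as in your Step~3.
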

	
	\paragraph{Proof Sketch.}
	For a fixed pair $(\bm{s}^*,\Delta_{({\bm s}_{\Delta}^*)})$, we construct $\tilde{\boldsymbol{\Delta}}\in\mathbb{D}(\boldsymbol{\Delta})$ where each row $i\neq i^*$ has exactly $s_i^*$ negative entries equal to $\Delta_{i,(s_i^*)}$ and the remaining negative entries are all equal to a small $\epsilon_i>0$. For this instance, Theorem~\ref{thm:LB_exp_instance_first_regime} yields $H_{\mathrm{certify}}(\bm{s}^*,\delta)=\sum_{i\neq i^*}\log(1/\delta)/\Delta_{i,(s_i^*)}^2$. Moreover, one can choose parameters such that Theorem~\ref{thm:LB_hp_instance} reduces to $H_{\mathrm{explore}}(\bm{s}^*,\delta)$. See Appendix~\ref{sec:proof_coro_minimax}.
	
	\paragraph{Fixed budget lower bounds.}
	In Appendix~\ref{sec:lb_fb}, we derive fixed-budget lower bounds that match
	the exponential error decay of Theorem~\ref{thm:main_fb}. Unlike the instance-dependent fixed-confidence bounds above, these are of minimax nature.
	
	
	
	\section{Discussion}\label{sec:discussion}
	
	In this manuscript, we consider $\delta$-PAC Condorcet-winner identification in stochastic dueling bandits under the sole assumption that a CW exists. We derive instance-dependent, high-probability sample-complexity guarantees that exploit the full gap matrix, and complement them with new lower bounds highlighting different regimes depending on the underlying instance.
	
	We also improve over the state-of-the-art  both on the upper bound and the lower-bound side when $\bDelta$ satisfies  stronger assumptions such as weak stochastic transitivity (WST). This is further discussed in Section~\ref{sec:related_w}.
	
	While we characterize the budget of FC-CWI as the infimum of $H_{\text{cw}}(\delta)$, which corresponds to a `direct search' of the CW and of $\inf_{\bm s}H_{\text{certify}}({\bm{s}},\delta)+ H_{\text{explore}}({\bm{s}},\delta)$ which corresponds to an `elimination' strategy, our distribution-dependent lower bounds are not always matching.  On the upper-bound side, our guarantees and proof techniques can be sub-optimal in hybrid regimes where, for example, the CW is the strongest opponent for a large fraction of arms while being  nearly tied with a small subset of arms. In such instances, one would expect an optimal procedure to behave heterogeneously: quickly eliminate ``easy'' arms by leveraging the (large) CW gaps, while dedicating targeted effort to the near-ties by probing the broader matrix to find decisive witnesses against those ambiguous contenders. Our current analysis does not fully capture this kind of mixed behavior, and improving it likely requires a sharper allocation mechanism that explicitly adapts to a partial satisfaction of~\eqref{eq:CW_is_the_strongest} across rows. On the lower-bound side, although our results characterize the exploration/certification trade-off in a local-minimax sense, a sharper fully instance-dependent lower bound for the $(1-\delta)$-quantile of the budget remains an appealing direction. This appears technically challenging because it must control rare but costly adaptive search events. A more detailed discussion on lower bounds reflecting this aspect is presented in Appendix~\ref{sec:additional_LB}.
	
	More broadly, CWI is a structured pure-exploration problem over a latent matrix, and closely related issues arise in noisy payoff matrix games and Nash equilibrium identification \citep{zhao2023revisiting, maiti2024near, maiti2025query, ito2025instance}. We believe that the exploration-certification decomposition, together with the accompanying tail-oriented lower-bound viewpoint for the budget, can be useful in these settings, and may help clarify analogous expectation-versus-high-probability separations in equilibrium learning, whose optimal instance-dependent query complexity remains largely open -- see~\cite{maiti2025open}.

	\newpage
	\subsection*{Acknowledgements}
	This work  has  partially been supported by ANR-21-CE23-0035 (ASCAI, ANR).
	\bibliographystyle{plainnat}
	\bibliography{bib_db}

	\newpage
	
	\appendix
	
	
	\section{Comparison to related framework and hypotheses in dueling bandits}\label{sec:related_w}
	
	\paragraph{Total order implies a Condorcet winner.}
	Weak stochastic transitivity (WST) is a standard sufficient condition for a latent total order: for any $i,j,k\in[K]$, $\Delta_{i,j}\ge0$ and $\Delta_{j,k}\ge0$ imply $\Delta_{i,k}\ge0$. Thus (up to tie-breaking) preferences are transitive and admit a total order, whose maximal element is a Condorcet winner (CW). CW/optimal-arm identification in such total-order regimes is studied in, e.g., \cite{falahatgar2017maximum,falahatgar2018limits}, which provide worst-case guarantees under WST (notably an $\mathcal{O}(K\varepsilon^{-2}\log(K/\delta))$ bound for $(\varepsilon,\delta)$-maxing, i.e., returning an arm $I$ such that $\mathbb{P}(\Delta_{I,i^*}\le -\varepsilon)\le \delta$)). Since WST is strictly stronger than merely assuming the existence of a CW, our CW-based bounds directly apply under WST. In particular, our bound for FC-CW1 in Theorem~\ref{thm:fc2} improves over the state-of-the art under WST without relying on that assumption. Furthermore, our lower bounds in Theorems~\ref{thm:LB_exp_instance_first_regime} and~\ref{thm:LB_hp_instance} provide novel distribution-dependent and minimax lower bounds under the WST assumption. Indeed, regarding Theorem~\ref{thm:LB_hp_instance}, if the matrix $\bDelta$ satisfies WST, then the class $\mathbb D(\bDelta)$ defined 
	in~\eqref{def:minimax_class} considered in that theorem only contains gap matrices satisfying WST. A stronger assumption is strong stochastic transitivity (SST), which adds magnitude constraints (for ordered $i\succ j\succ k$, $\Delta_{i,k}\ge \max\{\Delta_{i,j},\Delta_{j,k}\}$); in particular, SST implies a CW and ensures that each suboptimal arm attains its largest loss against the CW. Instance-dependent sample complexity under SST (often with mild regularity such as STI) is characterized in \cite{falahatgar2017maximum,ren2020sample}, and our upper bounds recover these guarantees (of the order $H_{\text{cw}}(\delta)$) up to logarithmic factors without even relying on the SST assumption. 
	Finally, parametric random-utility models such as Bradley--Terry--Luce, Plackett--Luce, and Thurstone \citep{bradley1952rank,luce1959individual,plackett1975analysis,thurstone2017law} are more restrictive than SST and therefore fall within our scope; in particular, our bounds recover existing guarantees for BTL (see, e.g., \cite{ren2020sample}). See the survey of \citet{bengs2021preference} for a broader overview.
	
	\paragraph{Works on other types of winners.}
	Beyond the CW objective \citep{komiyama2015regret,ailon2014reducing,chen2017dueling,saha2022versatile,pekoz2022dueling}, alternative notions include the \textit{Borda winner} \citep{chen2020combinatorial,jamieson2015sparse} (maximizing average pairwise advantage) and the \textit{Copeland winner} \citep{zoghi2015copeland,komiyama2016copeland} (maximizing the number of beaten opponents). Since the Borda and Condorcet winners can differ, Borda-specific guarantees do not transfer to our setting. Copeland winners always exist (possibly non-unique) and generalize the CW; fixed-confidence identification is studied in \citet{zoghi2015copeland}, but when specialized to CW instances the resulting complexity is $\mathcal{O}\left(\max_{i\in[K]}\sum_{j\neq i}\log(1/\delta)/\Delta_{i,j}^2\right)$, which is looser than our bounds.

	\section{Minimax Fixed-Budget Lower Bounds}\label{sec:lb_fb}
	
	In this section, we establish lower bounds for the fixed-budget CW identification. In some way, these are the counterparts of the results of Section~\ref{sec:lb_fc}, only that we only derive minimax bounds, similar to Corollary~\ref{coro:Lower_bound_minimax}.  
	
	We specify a class of distributions, parametrized by the row of the Condorcet Winner. Let $\underline{\Delta} = (\Delta_1,\Delta_2, \dots, \Delta_K )$, with $\Delta_{1}=0$ and $\Delta_i\in(0,1/4)$ for $i\ne 1$. Define $\mathbb{D}^{(1)}(\underline{\Delta})$ as the set of dueling feedback distributions whose gap matrix $\bDelta \in \mathbb{D}_{\mathrm{cw}}$ is such that the row of the Condorcet winner $i^*$, namely $\Delta_{i^*,\cdot}$, is equal to $\underline{\Delta}$ up to a permutation $\sigma$. Formally,
	\begin{equation}\label{def:D1}
		\mathbb{D}^{(1)}(\underline{\Delta})
		\coloneqq \Bigl\{
		\bDelta\in \mathbb{D}_{\mathrm{cw}}
		: \exists \,\sigma \in \mathfrak{S}_{K}
		\text{ s.t. } i^*_{\bDelta}=\sigma(1),  \text{ and } 
		\Delta_{i^*,\cdot} = \sigma(\underline{\Delta})
		\Bigr\},
	\end{equation}
	where $\mathfrak{S}_{K}$ is the set of permutations on $[K]$, and for any $x\in\mathbb{R}^K$, $\sigma(x)=(x_{\sigma(i)})_{i\in [K]}$. The following result is the counterpart of Theorem~\ref{thm:LB_exp_instance_first_regime}. 
	
	\begin{theorem}\label{thm:LB_exp_minimax_first_regime}
		Let $K \ge 2$ and $T\in \mathbb{N}^*$ and  consider any vector $\underline{\Delta}$ with $\Delta_1=0$, and $\Delta_i\in(0,1/4)$ for $i\ne 1$. For any algorithm $A$ with a fixed budget $T$, one has
		\[
		\max_{\bDelta \in \mathbb{D}^{(1)}(\underline{\Delta})}
		\mathbb{P}_{\bDelta, A}\left( \hat{i}_T \neq i^* \right)
		\ge \frac{1}{4}\exp\!\left(-22 \frac{T}{H_{\mathrm{cw}}}\right)~,
		\] 
		where $H_{\mathrm{cw}}=\sum_{i=2}^{K} \frac{1}{\Delta_i^2}$.
	\end{theorem}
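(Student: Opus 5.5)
We reduce to a fixed-budget best-arm-type lower bound with a change of measure, in the spirit of the classical result of \citet{carpentier2016tight} / \citet{kaufmann2016complexity} for multi-armed bandits. We work with a base instance $\bDelta^{(0)}$ and $K-1$ alternatives $\bDelta^{(i)}$, $i=2,\dots,K$, all in $\mathbb{D}^{(1)}(\underline{\Delta})$. They are built so that the only entries that differ are those involving arm $1$ and arm $i$.

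\emph{Construction.} In $\bDelta^{(0)}$, arm $1$ is the CW with row $\Delta^{(0)}_{1,j}=\Delta_j$ for $j\ne1$. Every entry $\Delta^{(0)}_{j,l}$ between suboptimal arms $j,l\neq 1$ is set to $0$, or to an arbitrarily small tie-breaking value if strict signs are needed; this value will not enter the KL computation. In $\bDelta^{(i)}$, we swap the roles of arms $1$ and $i$ only in the CW row. Arm $i$ becomes the CW with $\Delta^{(i)}_{i,1}=\Delta_i$, and $\Delta^{(i)}_{i,j}=\Delta_j$ for $j\ne 1,i$. Arm $1$'s row is changed so that $\Delta^{(i)}_{1,j}$ equals the old $\Delta^{(0)}_{i,j}\approx0$. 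The CW row of $\bDelta^{(i)}$ is then a permutation of $\underline\Delta$, so $\bDelta^{(i)}\in\mathbb{D}^{(1)}(\underline\Delta)$. The two instances differ only on the pairs $(1,i)$ (gap $\Delta_i$ versus $-\Delta_i$), $(1,j)$ and $(i,j)$ for $j\neq 1,i$.

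This is where I expect the main obstacle. If arm $i$ also needs gaps $\Delta_j$ against every $j$, then $\bDelta^{(i)}$ differs from $\bDelta^{(0)}$ on a whole row, and the KL cost would scale like $\sum_j N_{i,j}\Delta_j^2$ instead of $N_{1,i}\Delta_i^2$. To avoid this, I would instead use a ``two-hypothesis swap'' between pairs of instances in which arm $1$ and arm $i$ both have rows close to $\underline\Delta$ with only the $(1,i)$ entry flipped. Concretely, in the base instance set $\Delta^{(0)}_{i,j}=\Delta_j$ for all $j\ne 1,i$ as well. Any ordering among the suboptimal arms is allowed because only the CW row is constrained. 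The alternative then differs \emph{only} in the sign of the $(1,i)$ entry.

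\emph{Change of measure.} With this modified construction, $\mathrm{KL}(\mathbb{P}_{\bDelta^{(0)}},\mathbb{P}_{\bDelta^{(i)}})=\mathbb{E}_0[N_{1,i}]\,\kl(\tfrac12+\Delta_i,\tfrac12-\Delta_i)\le 22\,\mathbb{E}_0[N_{1,i}]\Delta_i^2$ for $\Delta_i\le1/4$. This bound is the origin of the constant $22$. Since $\sum_i\mathbb{E}_0[N_{1,i}]\le T$, there is an $i$ with $\mathbb{E}_0[N_{1,i}]\le T/(H_{\mathrm{cw}}\Delta_i^2)$, i.e.\ a KL divergence of at most $22T/H_{\mathrm{cw}}$. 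Next, apply the Bretagnolle--Huber inequality to the event $\{\hat i_T=1\}$: $\mathbb{P}_0(\hat i_T\ne1)+\mathbb{P}_i(\hat i_T=1)\ge\tfrac12\exp(-\mathrm{KL})$. Since $\hat i_T=1$ is an error under $\bDelta^{(i)}$, the maximum of the two error probabilities is at least $\tfrac14\exp(-22T/H_{\mathrm{cw}})$, which gives the claim.

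The remaining technical check is that the base instance with both arms $1$ and $i$ carrying row $\underline\Delta$ against the others, and $\Delta_{1,i}=\Delta_i$, is skew-symmetric and lies in $[-1/4,1/4]^{K\times K}$. This holds once the suboptimal–suboptimal block is filled consistently (e.g.\ $\Delta_{j,l}=0$ for $j,l\notin\{1,i\}$). I also need the base instance to be chosen after $i$ is identified. To resolve this circularity, I would define the instance family $\{\bDelta^{(0,i)},\bDelta^{(i)}\}_i$ so that all $\bDelta^{(0,i)}$ share the same marginal on pulls $(1,i)$. Equivalently, I would run the averaging argument on a single base instance in which every arm $j\neq1$ has $\Delta_{j,l}=\Delta_l$ for all $l\ne1,j$ (resolved antisymmetrically), and then verify feasibility of this filling.
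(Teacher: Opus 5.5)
There is a genuine gap in the step ``since $\sum_i\mathbb{E}_0[N_{1,i}]\le T$, there is an $i$ with \dots''. That pigeonhole argument needs a single reference law $\mathbb{P}_0$. Your modified base instance cannot be chosen independently of $i$: it requires arm $i$ to beat every other suboptimal arm ($\Delta_{i,j}=\Delta_j>0$ for all $j\neq 1,i$), and in a skew-symmetric matrix at most one suboptimal arm can do that. With $i$-dependent bases $\bDelta^{(0,i)}$, the quantities $\mathbb{E}_{\bDelta^{(0,i)}}[N_{1,i}]$ are taken under different laws. Nothing in your argument forces any of them to be small, since an adaptive algorithm may first locate the strong pair $\{1,i\}$ and then spend most of its budget on it.

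Neither of your repairs works. The law of $N_{1,i}$ under an adaptive algorithm depends on the whole matrix, so you cannot make the $\bDelta^{(0,i)}$ ``share the same marginal on pulls $(1,i)$''. A single base with $\Delta_{j,l}=\Delta_l$ for all suboptimal $j\neq l$ is infeasible, because it needs $\Delta_{j,l}=\Delta_l>0$ and $\Delta_{l,j}=\Delta_j>0$ at once. Any antisymmetric resolution makes some suboptimal arm lose to another suboptimal arm. Then flipping only the $(1,i)$ entry no longer makes $i$ the Condorcet winner, and your one-entry bound $22\,\mathbb{E}[N_{1,i}]\Delta_i^2$ is lost.

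The missing idea, which the paper uses, is that the ``whole row'' obstacle disappears once the gaps are sorted. Assume $\Delta_2\le\cdots\le\Delta_K$ and take a single triangular base $M^{(1)}$. Set $M^{(1)}_{1,\cdot}=\underline{\Delta}$, and for $k\neq 1$ set $M^{(1)}_{k,j}=-\Delta_k$ for $j<k$ (including $j=1$) and $M^{(1)}_{k,j}=\Delta_j$ for $j>k$. The $k$-th alternative changes only the entries $(k,j)$ with $j<k$, setting $M^{(k)}_{k,j}=\Delta_{j+1}$. Row $k$ then becomes a permutation of $\underline{\Delta}$, so $k$ is the Condorcet winner and $M^{(k)}\in\mathbb{D}^{(1)}(\underline{\Delta})$.

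Each modified entry moves by $\Delta_k+\Delta_{j+1}\le 2\Delta_k$. Hence $\mathrm{KL}(\mathbb{P}^{(1)},\mathbb{P}^{(k)})\le 22\,\Delta_k^2\sum_{j<k}\mathbb{E}^{(1)}[N_{\{k,j\}}]$, so the cost per pull is governed by $\Delta_k$, not by the $\Delta_j$. The pair sets $\{\{k,j\}:j<k\}$ are disjoint across $k$. So the budget constraint $\sum_{k\ge 2}\sum_{j<k}\mathbb{E}^{(1)}[N_{\{k,j\}}]\le T$ replaces your $\sum_i\mathbb{E}_0[N_{1,i}]\le T$. Either your pigeonhole-plus-Bretagnolle--Huber step or the paper's version (apply Bretagnolle--Huber for each $k$ and sum) then gives $\tfrac14\exp(-22T/H_{\mathrm{cw}})$. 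Your KL constant and the final max-of-two-errors step are otherwise fine.
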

	
	\begin{remark}
		This theorem reveals that one can exhibit a matrix $\Delta$ for which the the exponential error decay scaling as $\exp(-T/H_{\mathrm{cw}})$ in Theorem~\ref{thm:main_fb} is tight. The proof can be found in Appendix~\ref{sec:proof_LB_exp_FB}.
	\end{remark}
	
	We now turn to the counterpart of Theorem~\ref{thm:LB_hp_instance}. Consider the following class of environments, for which the quantities $(\bm{s}_{\bDelta}^*,\Delta_{(\bm s^*)})$, defined in Section~\ref{sec:lb_fc} are equal to a given couple $(\underline{\Delta},\underline{s})$ up to a permutation of the arms. By convention, we extend $\bm{s}_{\bDelta}^*=(s^*_i)_{i\ne i^*}$ into a $K$-dimensional vector by fixing as $0$ the $i^*$-th entry, that is $s^*_{i^*}\coloneqq 0$. We proceed similarly for $\Delta_{(s^*)}$. 
	\begin{equation}\label{eq:definition:D2}
		\mathbb{D}^{(2)}(\underline{\Delta},\underline{s})
		\coloneqq
		\Bigl\{
		\bDelta \in \mathbb{D}_{\mathrm{cw}} :
		\exists\, \sigma \in \mathfrak{S}_{K} \ \text{s.t.}\ \bm{s}_{\bDelta}^*=\sigma(\underline{s})  \ \text{and}\  \Delta_{(\bm s^*)}=\sigma(\underline{\Delta})
		\Bigr\}\enspace ,
	\end{equation}
	\begin{theorem}\label{thm:LB_minimax_H2}
		Let $T\in\mathbb{N}^*$, $K$ a multiple of $8$, $\underline{\Delta}=(\Delta_1,\dots,\Delta_K)$ with $\Delta_1=0$ and $\Delta_i\in(0,1/4)$ for $i\neq1$, and $\underline{s}=(s_1,\dots,s_K)$ with $s_1=0$, $1\leqslant s_i\leqslant K/4$ for $i=2,\dots,K$. Then, any fixed-budget algorithm $A$ satisfies
		\begin{equation}\label{eq:LB_FB_H2thm}
			\max_{\boldsymbol{\Delta}\in\mathbb{D}^{(2)}(\underline{\Delta},\underline{s})}\mathbb{P}_{A,\boldsymbol{\Delta}}(\hat{i}_T\neq i^*(\boldsymbol{\Delta}))
			\geqslant\frac{1}{4}\exp\!\left(-5\frac{T}{\max_{i=2}^{K/2}\frac{K}{s_i\Delta_i^2}}\right).
		\end{equation}
		
		If, additionally $\Delta_i=\Delta>0$ and $s_i=s\in[K/4]$ for all $i\in\{2,\dots,K/2\}$, then
		\begin{align}
			\max_{\boldsymbol{\Delta}\in\mathbb{D}^{(2)}(\underline{\Delta},\underline{s})}\mathbb{P}_{\boldsymbol{\Delta},A}(\hat{i}_T\neq i^*(\boldsymbol{\Delta}))
			&\geqslant\frac{1}{4}\exp\!\left(-10\frac{T\Delta^2}{K}\right),\label{eq:LB_FB_H3thm}\\
			\max_{\boldsymbol{\Delta}\in\mathbb{D}^{(2)}(\underline{\Delta},\underline{s})}\mathbb{P}_{\boldsymbol{\Delta},A}(\hat{i}_T\neq i^*(\boldsymbol{\Delta}))
			&\geqslant\frac{1}{2}-\sqrt{37\frac{s\Delta^2}{K^2}T}.\label{eq:LB_FB_constant}
		\end{align}
	\end{theorem}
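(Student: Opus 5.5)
Throughout, fix a fixed-budget algorithm $A$ with budget $T$. The plan is to prove all three bounds by a Bayesian / multiple-hypotheses change-of-measure argument over a finite family of instances in $\mathbb{D}^{(2)}(\underline{\Delta},\underline{s})$. The family is built so that the Condorcet winner sits at a hidden position. Distinguishing it from an alternative forces the learner to locate hidden negative entries.

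\textbf{Base instance.} I would build a reference matrix $\bDelta^{(0)}$ as follows.
\begin{itemize}
\item Arm $1$ is the CW, and its row is $\underline{\Delta}$.
\item Each arm $i\in\{2,\dots,K\}$ has exactly $s_i$ negative entries equal to $-\Delta_i$, placed among arms of the second half, together with negative entries of tiny magnitude $\epsilon$ elsewhere.
\item All remaining entries are filled so as to ensure that $\bm s^*=\underline{s}$ and $\Delta_{(\bm s^*)}=\underline{\Delta}$.
\end{itemize}
Following the proof sketch of Corollary~\ref{coro:Lower_bound_minimax}, the $\epsilon$-entries make the optimal trade-off select exactly $s_i$, and the assumption $s_i\le K/4$ leaves room in the half-blocks. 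Checking this membership in $\mathbb{D}^{(2)}$ is routine but fiddly, and I would do it last.

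\textbf{Alternatives for \eqref{eq:LB_FB_H2thm}.} For each $i\in\{2,\dots,K/2\}$, I define $\bDelta^{(i)}$ from $\bDelta^{(0)}$ as follows.
\begin{itemize}
\item Set to a small positive value the $s_i$ negative entries of row $i$ (keeping skew-symmetry).
\item Make row $1$ lose to $i$ by a margin so that $i$ becomes the CW.
\item Rearrange so that $\bDelta^{(i)}$ lies in $\mathbb{D}^{(2)}$ up to a relabelling $\sigma$ swapping $1$ and $i$.
\end{itemize}
The point of this construction is that $\bDelta^{(0)}$ and $\bDelta^{(i)}$ differ only on the $s_i$ duels in row $i$ (plus the duel $(1,i)$, which costs $1/\Delta_i^2$ and is dominated). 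Thus $\mathrm{KL}(\mathbb{P}_{\bDelta^{(0)}},\mathbb{P}_{\bDelta^{(i)}})\lesssim \Delta_i^2\,\mathbb{E}_0[N_i]$, where $N_i$ counts the queries on these entries. To get the factor $K/s_i$, I randomize the positions of the $s_i$ negative entries uniformly among $\sim K/2$ columns (as in the proof of Theorem~\ref{thm:LB_hp_instance}). Averaging over positions then gives an expected count of order $(s_i/K)\,\mathbb{E}_0[N_{i,\cdot}]$, where $N_{i,\cdot}$ is the total sampling of row $i$.

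\textbf{Pigeonhole and change of measure.} Since $\sum_i N_{i,\cdot}\le 2T$, some $i$ has $\mathbb{E}_0[N_{i,\cdot}]\lesssim T/K$. A cruder version also works: take $i$ attaining the max of $K/(s_i\Delta_i^2)$ and use $\mathbb{E}_0 N_{i,\cdot}\le T$. In that case the relevant KL is at most $\sim T s_i\Delta_i^2/K$. The Bretagnolle--Huber inequality then gives $\max(\mathbb{P}_0(\hat i\ne1),\mathbb{P}_i(\hat i\ne i))\ge \tfrac14 e^{-\mathrm{KL}}$, which is \eqref{eq:LB_FB_H2thm} after tracking constants.

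\textbf{Homogeneous case.} When $\Delta_i=\Delta$ and $s_i=s$ for $i\le K/2$, the pigeonhole over the $K/2$ symmetric rows gives a row with $\mathbb{E}_0[N_{i,\cdot}]\le 4T/K$, hence $\mathrm{KL}\lesssim T s\Delta^2/K^2$. Here I expect the main obstacle: to get \eqref{eq:LB_FB_H3thm} with exponent $T\Delta^2/K$ (no $s$), one needs the alternative to be an instance where, say, \emph{all} $K/2$ of a block are modified simultaneously, or a mixture over the location of the CW. I would try a mixture prior over which of $K/2$ arms is the CW, combined with the fact that one must detect a negative entry of size $\Delta$ in each of $\Theta(K)$ rows, via the convexity argument used for Theorem~\ref{thm:LB_hp_instance}. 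For \eqref{eq:LB_FB_constant}, I would instead use Pinsker on the averaged pair: $\mathbb{P}_0(\hat i=1)-\frac{2}{K}\sum_i\mathbb{P}_i(\hat i=1)\le\sqrt{\tfrac12\,\overline{\mathrm{KL}}}$, with $\overline{\mathrm{KL}}\lesssim s\Delta^2T/K^2$. Since $\mathbb{P}_i(\hat i=1)$ is an error under $\bDelta^{(i)}$, the maximal error is at least $\tfrac12-\sqrt{37 s\Delta^2T/K^2}$.
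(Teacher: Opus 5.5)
\textbf{Gap 1: the step meant to produce the factor $s_i/K$ fails as written.}
You bound $\mathrm{KL}(\mathbb{P}_{\bDelta^{(0)}},\mathbb{P}_{\bDelta^{(i)}})$ using counts under $\mathbb{E}_0$, and then average over the positions of the $s_i$ negative entries. But these positions are planted in the \emph{reference}, so $\mathbb{E}_0$ depends on them. An adaptive algorithm can locate and oversample exactly those duels, so the average is not $(s_i/K)\,\mathbb{E}_0[N_{i,\cdot}]$.

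The paper puts the position-free instance in the first slot, $\mathrm{KL}(\mathbb{P}^{(\pi,k)},\mathbb{P}^{\pi})$. Here $\bDelta^{(\pi,k)}$ has row and column $k$ set to zero, so it does not depend on $\pi_k$. Averaging over $\pi_k$ via Lemma~\ref{lem:sum_tech} then legitimately gives $\lesssim \frac{s_k\Delta_k^2}{K/2}\sum_j\mathbb{E}^{(\pi,k)}[N_{\{k,j\}}]\le \frac{2s_k\Delta_k^2}{K}T$.

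The price is that the counts are now under $k$-dependent measures, so your pigeonhole (``some row has $\mathbb{E}_0[N_{i,\cdot}]\le 4T/K$'') is no longer available. This is why \eqref{eq:LB_FB_constant} needs a truncation device:
\begin{itemize}
\item the procedure $\tilde A_k$ stops once row $k$ has received $16T/K$ duels;
\item the test event is $B_k=\{\hat i=k\}\cup\{N_{\{k,\cdot\}}>16T/K\}$;
\item the truncation part of $B_k$ is controlled, on average over $k$, by the pigeonhole under the single reference measure;
\item convexity of TV and Pinsker then conclude.
\end{itemize}

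As a sanity check, your homogeneous claim $\mathrm{KL}\lesssim Ts\Delta^2/K^2$ plus Bretagnolle--Huber would give error at least $\frac14 e^{-cTs\Delta^2/K^2}$. Since $s\le K/4$, this would already imply \eqref{eq:LB_FB_H3thm}, so your ``main obstacle'' is misdiagnosed. For $s=1$ it is also incompatible, for large $K$ and up to logarithmic factors, with the rate $\exp(-\widetilde{\Theta}(T\Delta^2/K))$ that Theorem~\ref{thm:main_fb} guarantees once $T\gtrsim K^2/\Delta^2$.

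\textbf{Gap 2: the instances rest on a misreading of $\mathbb{D}^{(2)}$.}
The class does not prescribe the CW row. The vector $\underline{\Delta}$ lists the gaps $\Delta_{i,(s^*_i)}$ of the suboptimal rows, and $\Delta_1=0$ is only the convention for the CW. With CW row $\underline{\Delta}$, and an alternative that must stay in the class after swapping $1$ and $i$, you must flip the duel $(1,i)$. You must also plant entries equal to $-\Delta_i$ in row $1$, at positions you do not randomize. These duels are not dominated. For the fixed maximizing $i$, an algorithm may spend $\Theta(T)$ samples on them, giving a KL of order $T\Delta_i^2\ge 4Ts_i\Delta_i^2/K$.

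The paper avoids this in three moves:
\begin{itemize}
\item it takes the CW row to be zero, a weak CW, which is legitimate by Lemma~\ref{lem:reduction_weack_cw};
\item it uses an alternative \emph{outside} the class: zeroing row and column $k$ makes $1$ and $k$ two weak CWs;
\item averaged over $\pi$, either $\{\hat i\neq 1\}$ or $\{\hat i\neq k\}$ has probability at least $1/2$ under this alternative, so it compares with $M^\pi$ or with its copy in which arms $1$ and $k$ are exchanged.
\end{itemize}
Reference and alternative then differ in a single row, and the Fano-type Lemma~\ref{lem:fano-type} applies.

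\textbf{Gap 3: no argument for \eqref{eq:LB_FB_H3thm}.}
The paper's argument is short and uses no permutations:
\begin{itemize}
\item take a reference $\bar M$ with \emph{no} CW, whose first $K/2$ rows are identical;
\item take alternatives $\bar M^{(k)}$, $k\le K/4$, obtained by zeroing row and column $k$;
\item use the event $\{\hat i\in[K/4]\}$, whose reference probability is at most $1/2$ without loss of generality;
\item the Fano bound for each $k$ gives $\mathbb{E}_{\bar M}[N_{k,\cdot}]\gtrsim \Delta^{-2}\log(1/\varepsilon)$, with $\varepsilon$ the maximal error;
\item summing over $k$ under the single measure $\mathbb{P}_{\bar M}$, with $\sum_k N_{k,\cdot}\le T$, yields the $s$-free exponent $T\Delta^2/K$.
\end{itemize}
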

	
	\begin{remark}
		Equation~\eqref{eq:LB_FB_constant} shows that for matrices where half the rows equal (up to permutation) a vector with $s$ entries at $-\Delta$ and the rest near zero, any algorithm must pay $\frac{K^2}{s\Delta^2}$ to reach a constant success probability. This is aligned with the probability-independent cost $H_{\mathrm{explore}}^{(0)}(\underline{s}))$ that suffers Algorithm~\ref{algo:3}  --see Theorem~\ref{thm:main_fb}. 
		
		Equation~\eqref{eq:LB_FB_H2thm} establishes probability of error is no smaller than $\exp(-T/H_{\mathrm{explore}}^{(1)}(\underline{s}))$, while \eqref{eq:LB_FB_H3thm} implies that the quantity $\exp(-T/H_{\mathrm{certify}}(\bm{s}))$ is required, at least for some specific highly structured matrices.
	\end{remark}
	
	\paragraph{Comparison with Fixed Confidence lower bounds.}
	
	The proofs for fixed-confidence and fixed-budget settings are remarkably similar. This reflects the strong connection between fixed-budget algorithms and fixed-confidence algorithms with high-probability budget bounds. However, fixed-budget minimax lower bounds cannot be directly deduced from the instance-dependent fixed confidence lower bounds derived earlier. We conjecture that obtaining instance-dependent lower bounds in the fixed-budget setting is considerably more challenging---if not outright impossible. 
	
	The fundamental reason lies in the nature of the algorithms themselves. Any $\delta$-correct fixed-confidence algorithm must incorporate an internal stopping criterion that checks that the identified arm $\hat i$ is indeed the CW, under the assumption that such a unique CW exists in $\mathbb{D}_{\mathrm{cw}}$. Consider now what happens when applying this algorithm to a modified environment with two ``weak'' Condorcet winner candidates $i^*_1$ and $i^*_2$, where both rows satisfy $\Delta_{i^*_1,\cdot}\geqslant\mathbf{0}$ and $\Delta_{i^*_2,\cdot}\geqslant\mathbf{0}$. The algorithm cannot distinguish between these candidates with probability $1-\delta$, forcing an infinite expected stopping time as the verification step never confidently resolves the ambiguity.
	
	Fixed-budget algorithms fundamentally lack such stopping rules, rendering this infinite-budget argument inapplicable. Instead, our lower bounds rely on symmetry arguments: for matrices where multiple suboptimal arms have identical ``difficulty profiles'' (i.e., identical multisets of negative gaps), any reasonable algorithm must select uniformly at random among the ambiguous best arms. Deriving such lower bounds requires constructing highly symmetric matrices that exploit this randomization, a significantly more restrictive condition than the instance-dependent constructions used in fixed confidence.
	
	In Sections~\ref{sec:proof_LB_FC} and~\ref{sec:proof_LB_FB}, we propose two complete constructions respectively for each setting. The proofs for fixed-budget results are similar to the fixed-confidence ones, except that they are applied to specific highly symmetric matrices for the reasons described above.
	\section{Intermediate results: Adaptive quantile estimation}\label{sec:intermediary}
	
	In this section, we present an algorithm for quantile estimation with a fixed budget of samples, and a high-probability guarantee on the estimation error. This result is of independent interest and will be used as a key building block in the proofs of the main results.
	
	\subsection{Quantile Bracketing}
	
	Consider a classical $K$-armed bandit setting, where we are given $K$ arms with means ${(\mu_i)}_{i\in[K]}$. We assume that the samples from the arms are bounded\footnote{The result can be extended to sub-Gaussian variables} by $1$, and without loss of generality that the means are distinct. Denote by $\mu_{(1)} < \mu_{(2)} < \ldots < \mu_{(K)}$ the ordered means of the $K$ arms. For two integers $d \le u$ in $[K]$, our objective is to find a point in the interval $[\mu_{(d)}, \mu_{(u)}]$.
	
	\noindent For this task, a learner is given a fixed budget of $T$ queries, after which the learner outputs a quantity $q_T
	$. In this framework, we are targeting an `adaptive' guarantee in the following sense. Given the budget $T$ as input, we want the output to satisfy 
	\[
	\forall \epsilon >0, \quad \mathbb{P}\left( q_T \notin \left[ \mu_{(d)}-\epsilon, \mu_{(u)}+\epsilon\right]\right) \le \exp\left(-c\cdot r\epsilon^2 \frac{T}{\log(T)}\right)~,
	\]
	where $c$ is a positive universal constant, and $r$ is a positive quantity depending only on $d, u$ and $K$.
	
	\noindent The allocation strategy we develop requires a sufficiently large budget. Specifically, we assume that 
	\[
	T \ge \frac{128K}{u-d}\log_2\left(\frac{128K}{u-d}\right)~. 
	\]
	When $T$ falls below this threshold, the resulting guarantee becomes vacuous (the stated upper bound on the error probability exceeds~$1$). In this regime, we therefore resort to an arbitrary heuristic. Algorithm~\ref{algo:inter} implements this by explicitly branching between the small-budget case $T < \frac{128K}{u-d}\log_2\!\left(\frac{128K}{u-d}\right)$ and the regime where the budget is large enough for the analysis to be meaningful.
	
	Solving this problem requires balancing the tasks of locating the arms whose means fall within the desired rank range, and estimating these means accurately. The algorithm runs a multi-step scheme indexed by $\ell$.
	At each level $\ell$, it draws a random multi-set $\mathcal{A}_\ell$ of arms large enough to contain, with good probability, representatives of the $[d,u]$ quantile range. Then, it allocates $\Theta(\epsilon_\ell^{-2})$ samples per selected arm so that empirical means are accurate up to $\epsilon_\ell$.
	
	\noindent Then we form three empirical quantiles $\hat{t}^{(1)}_{\ell},\hat{t}^{(2)}_{\ell},\hat{t}^{(3)}_{\ell}$ corresponding to ranks slightly below, near the middle of, and slightly above $[d,u]$, yielding a noisy bracket around the target interval. Finally, a Lepski-type stability rule selects the earliest level $\bar{\ell}$ whose middle estimate $\hat{t}^{(2)}_{\ell}$ remains consistent with the brackets produced at all finer levels (up to the tolerance $2\epsilon_{\ell'}$), and returns $\hat{t}^{(2)}_{\bar{\ell}}$.
	
	\noindent Theorem~\ref{thm:conj} states that with budget $T$, the output lies in $[\mu_{(d)}-\epsilon,\mu_{(u)}+\epsilon]$ with high probability for every $\epsilon\in(0,1)$, and the failure probability decays essentially as $\exp(-\Theta(\epsilon^2T/\log T))$ (up to the multiplicative factor $40\log_2(T)$ and the extra $\log(\tfrac{16K}{u-d})$ term).
	The quantity
	\[
	r = \min \left\lbrace \frac{d}{K}, 1-\frac{u}{K} \right\rbrace \cdot \left(\frac{u-d}{K}\right)^2\, ,
	\]
	captures the difficulty of the target rank range as it decreases when the interval is narrower ($u-d$ small) or when it is close to the extremes (small $d$ or large $u$). When $d$ and $u$ are constant fractions of $K$, we have $r=\Theta(1)$ and the bound simplifies to $\log(T)\exp(-c\epsilon^2T/\log T)$.
	
	\begin{algorithm}
		\caption{Range-Quantile$(K, d, u, T)$\label{algo:inter} }
		\begin{algorithmic}
			\STATE \textbf{Input}: $K$ number of arms and a budget $T$, integers $d \le u$ in $[K]$.
			\STATE $L = \floor*{\log_{2}(T/\log_{2}(T))}$ and $\ell_{\min} = \ceil*{\log_2\left(\frac{16K}{u-d}\right)}$
			\STATE \texttt{/* Consider separately the case where we have a small budget */}
			\IF{$ T \le \frac{128K}{u-d}\log_2\left(\frac{128K}{u-d}\right)$ or $u=d$}
			\STATE Allocate budget uniformly over the arms and return the average of empirical means between ranks $d$ and $u$
			\ENDIF
			\STATE \texttt{/* Otherwise if we have large budget: */}
			\FOR{$\ell = \ell_{\min},\dots, L-1$}
			\STATE Let $\epsilon_{\ell} =2\cdot\ 2^{-(L-\ell)/2}$.
			\STATE Sample a set $\mathcal{A}_{\ell}$ of $\floor*{\frac{\epsilon_{\ell}^2T}{\log\left(\frac{16K}{u-d}\right)\log_{2}(T)}}$ arms from $[K]$ with replacement (duplicates are treated as different arms).
			\STATE Allocate $\ceil*{\frac{\log\left(\frac{16K}{u-d}\right)}{2\epsilon_{\ell}^2}}$ samples to each arm $a \in \mathcal{A}_{\ell}$, and compute its empirical mean $\hat{\mu}_a$.
			\STATE Rank the empirical means in $\mathcal{A}_{\ell}$ in increasing order: $\hat{\mu}_{(1)} \le \dots \le \hat{\mu}_{(\abs{\mathcal{A}_{\ell}})}$. 
			\STATE Let:
			\begin{equation}\label{eq:def_ht}
				\hat{t}^{(1)}_{\ell} = \hat{\mu}_{(\ceil*{\frac{3d+u}{4K}\abs{\mathcal{A}_{\ell}}})},~\hat{t}^{(2)}_{\ell} = \hat{\mu}_{(\ceil*{\frac{d+u}{2K}\abs{\mathcal{A}_{\ell}}})},~\hat{t}^{(3)}_{\ell} = \hat{\mu}_{(\ceil*{\frac{d+3u}{4K}\abs{\mathcal{A}_{\ell}}})}.
			\end{equation}
			\ENDFOR
			\STATE Let $\bar{\ell} = \min_{\ell \in \llbracket \ell_{\min},L-1\rrbracket} \left\lbrace \forall \ell' \in \{\ell, \dots, L-1\}: \hat{t}^{(2)}_{\ell} \in \left[ \hat{t}^{(1)}_{\ell'} - 2\epsilon_{\ell'},~\hat{t}^{(3)}_{\ell'} + 2\epsilon_{\ell'} \right] \right\rbrace$,
			\STATE Return $\hat{t}^{(2)}_{\bar{\ell}}$.	
		\end{algorithmic}
	\end{algorithm}

	\begin{theorem}\label{thm:conj}
		Consider Algorithm~\ref{algo:inter} with inputs $(K, d, u, T)$, such that $u>d$. Then, the output satisfies for any $\epsilon\in (0,1)$:
		
		\begin{equation}\label{eq:equation:theorem12}
			\mathbb{P}\left( \hat{t}^{(2)}_{\bar{\ell}} \notin [\mu_{(d)}-\epsilon, \mu_{(u)}+\epsilon] \right) \le 40\log_2(T)\exp\left(-c \cdot r \cdot \frac{\epsilon^2T}{\log\!\left(\frac{16K}{u-d}\right)\log_2(T)}\right)~,
		\end{equation}
		where $r=\min\left\lbrace \frac{d}{K},1-\frac{u}{K} \right\rbrace \left(\frac{u-d}{K}\right)^2$ is a positive quantity depending only on $d$, $u$ and $K$, and $c$ is an absolute numerical constant.
	\end{theorem}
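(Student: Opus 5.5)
The plan is to fix $\epsilon\in(0,1)$ and bound $\mathbb{P}(\hat t^{(2)}_{\bar\ell} > \mu_{(u)}+\epsilon)$; the lower-tail event is symmetric. Let $\ell^\star$ be the smallest level with $2\epsilon_{\ell^\star}\le \epsilon$. If no such level exists in $\llbracket \ell_{\min},L-1\rrbracket$, then $\epsilon \lesssim 2^{-(L-\ell_{\min})/2}$. In that case $\epsilon^2 T/(\log(16K/(u-d))\log_2 T)$ is $O(1/r)$, so the right-hand side of~\eqref{eq:equation:theorem12} exceeds $1$ for small $c$ and there is nothing to prove. This also covers the small-budget branch of the algorithm.

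\textbf{Step 1 (good event at each level).} For a level $\ell$, I define $G_\ell$ as the event on which
\[
\hat t^{(1)}_\ell \le \mu_{(u)}+\epsilon_\ell \quad\text{and}\quad \hat t^{(3)}_\ell\ge \mu_{(d)}-\epsilon_\ell .
\]
I also require $\hat t^{(2)}_\ell\in[\mu_{(d)}-\epsilon_\ell,\mu_{(u)}+\epsilon_\ell]$; more precisely, $\hat t^{(1)}_\ell \ge \mu_{(d)}-\epsilon_\ell$ and $\hat t^{(3)}_\ell\le \mu_{(u)}+\epsilon_\ell$ on the relevant side.

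To bound $\mathbb{P}(G_\ell^c)$ I split it into two sources. The first is the sampling of $\mathcal{A}_\ell$. Write $m_\ell=|\mathcal{A}_\ell|$. The number of sampled arms whose true mean is at most $\mu_{(d)}$ is $\mathrm{Bin}(m_\ell,d/K)$, and the empirical ranks $\frac{3d+u}{4K}m_\ell$, etc., lie at a distance $\gtrsim \frac{u-d}{K}m_\ell$ from $d m_\ell/K$ and from $u m_\ell/K$. A Chernoff/Bernstein bound with variance $\approx \min(d/K,1-u/K)$ therefore gives failure probability
\[
\exp\Bigl(-c\, m_\ell \tfrac{(u-d)^2/K^2}{\min(d/K,1-u/K)+(u-d)/K}\Bigr)\le \exp(-c\,r'\,m_\ell).
\]
This is where $r$ enters; I expect $r'\ge r$ up to constants. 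The second source is estimation noise. Each arm receives $\log(16K/(u-d))/(2\epsilon_\ell^2)$ samples, so by Hoeffding each arm deviates by more than $\epsilon_\ell$ with probability at most $(u-d)/(16K)$. The number of deviating arms is then binomial with mean at most $\frac{u-d}{16K}m_\ell$, which is below the rank slack, and Chernoff again gives $\exp(-c\frac{u-d}{K}m_\ell)$. On both good events, an order-statistics argument (at most slack-many arms are misplaced) yields $G_\ell$. Since $m_\ell\asymp \epsilon_\ell^2T/(\log(16K/(u-d))\log_2T)$, this gives $\mathbb{P}(G_\ell^c)\le 4\exp(-c\,r\,\epsilon_\ell^2T/(\log\cdot\log_2 T))$.

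\textbf{Step 2 (Lepski argument).} I work on $\bigcap_{\ell\ge\ell^\star}G_\ell$. First, $\bar\ell\le\ell^\star$: for every $\ell'\ge\ell^\star$ the good events give
\[
\hat t^{(2)}_{\ell^\star}\le \mu_{(u)}+\epsilon_{\ell^\star}\le \hat t^{(3)}_{\ell'}+2\epsilon_{\ell'}
\]
(using $\hat t^{(3)}_{\ell'}\ge\ldots$ and the monotonicity $\epsilon_{\ell^\star}\le\epsilon_{\ell'}$), and symmetrically for the lower bracket, so $\ell^\star$ is admissible. Second, since $\bar\ell\le \ell^\star$, the defining condition of $\bar\ell$ applied with $\ell'=\ell^\star$ gives
\[
\hat t^{(2)}_{\bar\ell}\le \hat t^{(3)}_{\ell^\star}+2\epsilon_{\ell^\star}\le \mu_{(u)}+3\epsilon_{\ell^\star}.
\]
Rescaling $\ell^\star$ so that $3\epsilon_{\ell^\star}\le\epsilon$ gives the claim. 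This is the reason $G_\ell$ must contain the outer-bracket bounds $\hat t^{(3)}_\ell\le \mu_{(u)}+\epsilon_\ell$ and $\hat t^{(1)}_\ell\ge\mu_{(d)}-\epsilon_\ell$. A union bound over the at most $\log_2 T$ levels $\ell\ge \ell^\star$, each with $\epsilon_\ell\ge\epsilon_{\ell^\star}\asymp\epsilon$, then yields the factor $40\log_2 T$ and the exponent $c\,r\,\epsilon^2T/(\log(16K/(u-d))\log_2T)$.

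\textbf{Main obstacle.} I expect Step 1 to be the hard part: getting the rank-concentration exponent with exactly the factor $r=\min\{d/K,1-u/K\}((u-d)/K)^2$. The difficulty is that the Bernoulli variance near the quantiles is small while the slack is $(u-d)/(4K)$, and the estimation-error count must also be merged in without losing more than constants. It also requires checking that $m_\ell\ge1$ and that the ranks are meaningful for $\ell\ge\ell_{\min}$, which is where the thresholds $\ell_{\min}$ and $T\ge\frac{128K}{u-d}\log_2\frac{128K}{u-d}$ are used. I would first try a plain Hoeffding bound on the binomial counts, which gives the exponent $((u-d)/K)^2 m_\ell$. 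This is already at least $r\,m_\ell$ because $\min\{d/K,1-u/K\}\le1$, so it may suffice.
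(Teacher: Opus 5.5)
Your architecture (per-level good events, show $\bar\ell\le\ell^\star$, then read off $\hat t^{(2)}_{\bar\ell}\le\hat t^{(3)}_{\ell^\star}+2\epsilon_{\ell^\star}$) is the right one, but the admissibility step $\bar\ell\le\ell^\star$ does not go through with your $G_\ell$.

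To show that $\ell^\star$ is admissible you need $\hat t^{(2)}_{\ell^\star}\le\hat t^{(3)}_{\ell'}+2\epsilon_{\ell'}$ for every $\ell'\ge\ell^\star$. Your chain $\hat t^{(2)}_{\ell^\star}\le\mu_{(u)}+\epsilon_{\ell^\star}\le\hat t^{(3)}_{\ell'}+2\epsilon_{\ell'}$ would require roughly $\hat t^{(3)}_{\ell'}\ge\mu_{(u)}-\epsilon_{\ell'}$. However, $G_{\ell'}$ only gives $\hat t^{(3)}_{\ell'}\ge\mu_{(d)}-\epsilon_{\ell'}$.

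The needed inequality is not merely unproven; it is often false. $\hat t^{(3)}_{\ell'}$ concentrates near the population quantile of rank about $(d+3u)/4<u$. The chain therefore fails whenever $\mu_{(u)}$ exceeds that quantile by more than $2\epsilon_{\ell'}$.

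More fundamentally, knowing only that all three empirical quantiles lie in $[\mu_{(d)}-\epsilon_\ell,\mu_{(u)}+\epsilon_\ell]$ gives no ordering between the middle estimate at one level and an outer estimate at another. So $\bigcap_{\ell\ge\ell^\star}G_\ell$ does not imply $\bar\ell\le\ell^\star$; it allows $\hat t^{(2)}_{\ell^\star}\approx\mu_{(u)}$ together with $\hat t^{(3)}_{\ell'}\approx\mu_{(d)}$.

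The missing idea is to bracket the estimators by \emph{intermediate} population quantiles lying strictly between the empirical ranks. Set $m_-=\mu_{(\lceil(5d+3u)/8\rceil)}$ and $m_+=\mu_{(\lceil(3d+5u)/8\rceil)}$. Show that at every level, with high probability, $\hat t^{(1)}_\ell\le m_-+\epsilon_\ell$, $m_--\epsilon_\ell\le\hat t^{(2)}_\ell\le m_++\epsilon_\ell$, and $\hat t^{(3)}_\ell\ge m_+-\epsilon_\ell$, in addition to your outer bounds. Then for $\ell'\ge\ell$, using $\epsilon_\ell\le\epsilon_{\ell'}$, you get $\hat t^{(2)}_\ell\le m_++\epsilon_{\ell'}\le\hat t^{(3)}_{\ell'}+2\epsilon_{\ell'}$, and symmetrically $\hat t^{(2)}_\ell\ge m_--\epsilon_{\ell'}\ge\hat t^{(1)}_{\ell'}-2\epsilon_{\ell'}$. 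That is exactly admissibility.

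These sub-bracket bounds follow from the same rank-concentration argument as your Step 1, since the rank slack is still $(u-d)/(8K)$ in fraction units, so the exponent $c\,r\,N_\ell$ survives. This is what the paper does with the sets $C_{\ell,i}$ and the bound $\mathbb{P}(\bar\ell>\ell)\le 4Lp_\ell$. The real subtlety is this overlapping-bracket design in Step 2, not the factor $r$ in Step 1, which your Hoeffding observation already settles.

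A smaller slip: $\epsilon_\ell=2\cdot 2^{-(L-\ell)/2}$ increases with $\ell$. Hence the smallest level with $2\epsilon_\ell\le\epsilon$ is simply $\ell_{\min}$, which would give an $\epsilon$-independent exponent $c\,r\,N_{\ell_{\min}}$. You need the \emph{largest} level with $3\epsilon_\ell\le\epsilon$, so that $\epsilon_{\ell^\star}\ge\epsilon/(3\sqrt2)$.
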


	\noindent Here, we did not try to optimize the constants. Next, we state a corollary that will be used in the proofs of the main theorems. 
	
	\begin{corollary}\label{cor:quant}
		Consider Algorithm~\ref{algo:inter} with inputs $(K, \ceil{K/8} , \ceil{K/4}, T)$, where $T \ge 4$. 
		Then, the output satisfies for any $\epsilon\in (0,1)$:
		\begin{equation*}
			\mathbb{P}\left( \hat{t}^{(2)}_{\bar{\ell}} \notin [\mu_{(\ceil{K/8})}-\epsilon, \mu_{(\ceil{K/4})}+\epsilon] \right) \le \log(T)\exp\left(-c\cdot \frac{\epsilon^2 T}{\log(T)}\right),
		\end{equation*}
		where $c$ is an absolute numerical constant smaller than $1$.
	\end{corollary}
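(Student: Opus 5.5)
This is a direct specialization of Theorem~\ref{thm:conj}: we plug in $d=\ceil{K/8}$ and $u=\ceil{K/4}$, then absorb the instance-dependent factors into the absolute constant $c$ and the prefactor.

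\emph{Step 1: bounding $r$ from below.} With these choices,
\[
\min\{d/K,\,1-u/K\}\ \ge\ \min\{1/8,\ 3/4-1/K\}.
\]
For $K\ge 2$ this is bounded below by a constant; for very small $K$ the ceilings need a short separate check. Next, $u-d\ge K/4-K/8-1 = K/8-1$, so for $K\ge 16$ we get $(u-d)/K\ge 1/16$. For small $K$ we only use $u-d\ge 1$ whenever $u>d$, together with the fact that $K$ is then bounded, so $(u-d)/K\ge 1/K$ is still at least an absolute constant. Hence $r\ge r_0$ for some absolute $r_0>0$ in every case.

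\emph{Step 2: the logarithmic factor.} Since $u-d\gtrsim K$, the quantity $\log(16K/(u-d))$ is bounded by an absolute constant. It can therefore be absorbed into $c$, and $\log_2 T$ is a constant multiple of $\log T$.

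\emph{Step 3: degenerate cases.} When $u=d$, which happens only for tiny $K$ (e.g.\ $K\le 4$), the target interval is $[\mu_{(d)}-\epsilon,\mu_{(d)}+\epsilon]$. Algorithm~\ref{algo:inter} then averages empirical means uniformly, and a direct Hoeffding bound combined with a union bound over the $K\le 8$ arms gives $\exp(-c\epsilon^2 T)$. For small $T$, the bound in Theorem~\ref{thm:conj} exceeds $1$ and is vacuous anyway, and the same holds for the corollary's bound once $c$ is small enough.

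\emph{Main obstacle: the prefactor.} Theorem~\ref{thm:conj} gives the prefactor $40\log_2 T$, whereas the corollary claims $\log T$. I would handle this by reducing $c$. Whenever $\log(T)\exp(-c\,\epsilon^2T/\log T)<1$ fails, the claim is trivial. Otherwise I need
\[
40\log_2(T)\, e^{-c'x}\ \le\ \log(T)\, e^{-cx}, \qquad x=\epsilon^2T/\log T,
\]
which holds once $(c'-c)x\ge \log(40/\log 2)$. In the complementary regime where $x$ is small, the right-hand side is at least $\log(T)e^{-cx}\ge 1$ for $T\ge 4$ and $c$ small (since $\log 4>1$), so the statement is trivial there too. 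Choosing $c=\min(c'/2,\text{small})<1$ covers both regimes, which completes the proof modulo the constant bookkeeping.
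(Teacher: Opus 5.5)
Your proposal is correct and follows the paper's proof. For $K\ge 5$ (equivalently $u>d$) you specialize Theorem~\ref{thm:conj} after bounding $r$ and $\log(16K/(u-d))$ by absolute constants, and for $K\le 4$ (where $u=d=1$) you argue directly with Hoeffding and a union bound; your explicit handling of the prefactor $40\log_2 T$ is more careful than the paper's one-line absorption. One caveat there: $c=c'/2$ alone fails in the small-$x$ regime, since $e^{-cx}$ can drop to $\log(2)/40$, so ``small'' must mean a small multiple of $c'$, e.g.\ $c\le c'/14$.
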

	\begin{proof}[of Corollary~\ref{cor:quant}]
		Suppose $K \ge 5$, then $\ceil{K/4}>\ceil{K/8}$. Let $\epsilon>0$ and $I = [\mu_{(\ceil{K/8})}-\epsilon, \mu_{(\ceil{K/4})}+\epsilon]$. We have
		\begin{equation}\label{eq:quant1}
			\min\left\lbrace \frac{\ceil{K/8}}{K}, 1-\frac{\ceil{K/4}}{K} \right\rbrace \ge \min \left\lbrace \frac{1}{8}, 1-\frac{K/4+1}{K}\right\rbrace \ge \frac{1}{8}~.
		\end{equation}
		Moreover, using $K \ge 5$ and $K=8q+r$ where $q \in \mathbb{N}$ and $r \in \{0,\dots, 7\}$, we show that
		\begin{equation}\label{eq:quant2}
			\left(\frac{\ceil{K/4}-\ceil{K/8}}{K}\right)^2 \ge \frac{1}{144}~.
		\end{equation}
		Applying Theorem~\ref{thm:conj} with $u=\ceil{K/4}$, $d=\ceil{K/8}$ and using the bounds~\eqref{eq:quant1} and~\eqref{eq:quant2} we obtain
		\begin{align*}
			\mathbb{P}\left(\hat{t}^{(2)}_{\bar{\ell}} \notin I \right) &\le \min\left\lbrace 1,~ 40\log_2(T)\exp\left( -c~\frac{1}{8}\cdot \frac{1}{144}\cdot \frac{\epsilon^2T}{3\log_2(T)}\right)\right\rbrace\\
			&\le\log(T)\exp\left( -c' \frac{\epsilon^2T}{\log(T)}\right)~,
		\end{align*} 
		where $c'$ is a numerical constant. The last line follows by absorbing all numerical constants into $c'>0$, using $T\ge 4$.
		
		\medskip 
		\noindent Suppose now that $K \in \{2, 3, 4\}$, then $\ceil{K/4} = \ceil{K/8} = 1$. In this case, Algorithm~\ref{algo:inter} allocates at least $T/4$ samples to each arm and outputs the minimal empirical mean. Let $a \in [K]$ denote the index corresponding to the arm with the smallest true mean, we therefore have (let $\hat{t}$ denote the output)
		\begin{align*}
			\mathbb{P}\!\left(\hat{t} \notin [\mu_{(\ceil{K/8})}-\epsilon, \mu_{(\ceil{K/4})}+\epsilon]\right) &= 	\mathbb{P}\!\left(\hat{t} <\mu_{(1)}-\epsilon\right) + \mathbb{P}\!\left(\hat{t} >\mu_{(1)}+\epsilon\right)\\
			&\le \mathbb{P}\!\left( \hat{\mu}_a >\mu_{a}+\epsilon\right) + \sum_{i=1}^K \mathbb{P}\!\left( \hat{\mu}_i <\mu_{i}-\epsilon\right)\, ,
		\end{align*} 
		where $\hat{\mu}_i$ denotes the empirical mean of arm $i$. We conclude using Hoeffding's inequality, with the fact that each arm receives at least $T/4$ samples that
		\[
		\mathbb{P}\!\left(\hat{t} \notin [\mu_{(\ceil{K/8})}-\epsilon, \mu_{(\ceil{K/4})}+\epsilon]\right) \le 5\exp\!\left(-\epsilon^2\frac{T}{2} \right)\, ,
		\]
		which corresponds to the result.
	\end{proof}

	\subsection{Proof of Theorem~\ref{thm:conj}}
	Since the right-hand side of~\eqref{eq:equation:theorem12} does not depend on the values of $\mu_{(1)},\ldots, \mu_{(K)}$, it suffices to treat the strictly ordered case where $\mu_{(1)} < \mu_{(2)} < \dots < \mu_{(K)}$; the case where some values are perhaps identical follows by a continuity argument.

	\begin{proof}
		If $T \le \frac{128K}{u-d}\log_2\left(\frac{128K}{u-d}\right)$ the bound is vacuous. Assume that $T \ge \frac{128K}{u-d}\log_2\left(\frac{128K}{u-d}\right)$ so that $\log_2(\log_2 (T)) \ge 1$ and $\ell_{\min} \le L-1$. We introduce the following additional notation, for each $\ell \in \{\ell_{\min}, \dots, L-1\}$ where $\ell_{\min} = \ceil*{\log_2\left(\frac{16K}{u-d}\right)}$, let 
		\begin{equation}\label{eq:def_T}
			N_{\ell} := \abs{\mathcal{A}_{\ell}} = \floor*{\frac{\epsilon_{\ell}^2T}{\log\left(\frac{16K}{u-d}\right)\log_{2}(T)}} \qquad \text{and} \qquad T_{\ell} := \ceil*{\frac{\log\left(\frac{16K}{u-d}\right)}{2\epsilon_{\ell}^2}}~,	
		\end{equation}
		and let 
		\[
		r_0 := \frac{d}{K} \quad \text{,} \quad r_1 := \frac{3d+u}{4K} \quad \text{,} \quad r_2 := \frac{d+u}{2K} \quad \text{,} \quad r_3 := \frac{d+3u}{4K} \quad \text{and} \quad r_4 := \frac{u}{K}~. 
		\]

		\noindent The proof follows the steps below
		\begin{itemize}
			\item We start by a sanity check, verifying that the total number of queries made by Algorithm~\ref{algo:inter} is at most $T$, and that $\bar{\ell}$ exists.
			\item Next, we show an intermediary result about the quantities $\hat{t}_{\ell}^{(i)}$ for $i \in \{1, 2,3\}$ in the form of an upper-bound on
			\[
			\mathbb{P}\left( \hat{t}_{\ell}^{(i)} \notin \left[\mu_{\left(\ceil*{\frac{r_{i-1}+r_i}{2} \cdot K}\right)}-\epsilon_{\ell}, ~\mu_{\left(\ceil*{\frac{r_i+r_{i+1}}{2}\cdot K}\right)}+\epsilon_{\ell}\right] \right)~.
			\]
			\item Finally, we build on the obtained intermediary result to prove that the way $\bar{\ell}$ is defined allows to have the stated guarantees. 
		\end{itemize}
		
		\paragraph{Sanity checks:} Recall the expressions $L = \floor*{\log_{2}(T/\log_{2}(T))}$, $\epsilon_{\ell} =2.\ 2^{-(L-\ell)/2}$ and $\abs{\mathcal{A}_{\ell}} = \floor*{\frac{\epsilon_{\ell}^2T}{\log\left(\frac{16K}{u-d}\right)\log_{2}(T)}}$. Algorithm~\ref{algo:inter} comprises $L-\ell_{\min}$ iterations, for each iteration $\ell \in \{\ell_{\min}, \dots, L-1\}$ it makes $\abs{\mathcal{A}_{\ell}}T_{\ell}$ queries. Thus, the total number of queries is
		\begin{align*}
			\sum_{\ell=\ell_{\min}}^{L-1} \abs{\mathcal{A}_{\ell}} \cdot \ceil*{\frac{\log\left(\frac{16K}{u-d}\right)}{2\epsilon_{\ell}^2}} &= \sum_{\ell=\ell_{\min}}^{L-1} \floor*{\frac{\epsilon_{\ell}^2T}{\log\left(\frac{16K}{u-d}\right)\log_{2}(T)}} \cdot \ceil*{\frac{\log\left(\frac{16K}{u-d}\right)}{2\epsilon_{\ell}^2}}\\ 
			&\le \sum_{\ell=\ell_{\min}}^{L-1} \frac{\epsilon_{\ell}^2T}{\log\left(\frac{16K}{u-d}\right)\log_{2}(T)} \cdot \left(\frac{\log\left(\frac{16K}{u-d}\right)}{2\epsilon_{\ell}^2} + 1 \right)\\
			&= \sum_{\ell=\ell_{\min}}^{L-1} \frac{T}{2\log_2(T)}+ \frac{\epsilon_{\ell}^2T}{\log\left(\frac{16K}{u-d}\right) \log_2(T)}\\
			&< \frac{T}{2} + \frac{T}{\log\left(\frac{16K}{u-d}\right)\log_2(T)} \sum_{\ell=\ell_{\min}}^{L-1} \epsilon_{\ell}^2\\
			&\le \frac{T}{2} + \frac{4T}{\log\left(\frac{16K}{u-d}\right)\log_2(T)} \le T~,	
		\end{align*} 
		where we used in the last line the threshold condition on $T$, giving
		\[
		\log(16K/(u-d))\log_2(T) \ge 8\, .
		\]
		
		\noindent For the definition of the quantity $\bar{\ell}$, note that the set over which the minimum is taken is not empty, since it always contains $L-1$.

		\bigskip
		
		\noindent \textbf{ Step 2: Per-level quantile control.}
		
		\medskip
		
		\noindent In this step we will prove that for every level $\ell\in\{\ell_{\min},\dots,L-1\}$ and every $i\in\{1,2,3\}$
		\begin{equation}\label{eq:i1}
			\mathbb{P}\!\left(\hat{t}^{(i)}_{\ell}\notin C_{\ell,i}\right)\le p_\ell\,,
		\end{equation}
		where we define (for each $\ell$ and $i\in\{1,2,3\}$)
		\begin{align*}
			C_{\ell,i}
			&\coloneq 
			\left[
			\mu_{\left(\left\lceil \frac{r_{i-1}+r_i}{2}K\right\rceil\right)}-\epsilon_\ell,\ 
			\mu_{\left(\left\lceil \frac{r_i+r_{i+1}}{2}K\right\rceil\right)}+\epsilon_\ell
			\right],\\
			\kappa_{d,u}
			&\coloneq 
			\min\Big\{\frac{d}{K},1-\frac{u}{K}\Big\}\Big(\frac{u-d}{60K}\Big)^2,
			\qquad
			p_\ell \coloneq 4\exp\!\left(-\kappa_{d,u}N_\ell\right).
		\end{align*}
		Throughout this step, fix $\ell\in\{\ell_{\min},\dots,L-1\}$ and $i\in\{1,2,3\}$.
		Define the two (random-sample) ranks
		\[
		r_{-}\coloneq \left\lceil \frac{r_{i-1}+2r_i}{3}\,N_\ell\right\rceil, \qquad r_{+}\coloneq \left\lceil \frac{2r_i+r_{i+1}}{3}\,N_\ell\right\rceil,
		\]
		and define the two (population) bracket points
		\[
		m_{-}\coloneq \mu_{\left(\left\lceil \frac{r_{i-1}+r_i}{2}K\right\rceil\right)}, \qquad m_{+}\coloneq \mu_{\left(\left\lceil \frac{r_i+r_{i+1}}{2}K\right\rceil\right)}\,.
		\]
		
		\noindent Let $\gamma_1,\dots,\gamma_{N_\ell}$ be the (random) true means of the sampled multiset
		$\mathcal{A}_\ell$, and let $\gamma_{(1)}\le \dots \le \gamma_{(N_\ell)}$ be their order
		statistics (ties broken arbitrarily).
		
		\noindent Next, introduce the event $E^{(i)}$ given by
		\[
		E^{(i)} \coloneq \big\{\gamma_{(r_-)}<m_-\big\}\ \cup\ \big\{\gamma_{(r_+)}>m_+\big\}\,.
		\]
		Then, by a union bound,
		\begin{equation}\label{eq:decomp1}
			\mathbb{P}\!\left(\hat{t}^{(i)}_\ell\notin C_{\ell,i}\right) \le \underbrace{\mathbb{P}\!\left(E^{(i)}\right)}_{\text{Term 1}} +
			\underbrace{\mathbb{P}\!\left(\hat{t}^{(i)}_\ell\notin C_{\ell,i}\ \text{and}\ \neg E^{(i)}\right)}_{\text{Term 2}}.
		\end{equation}
		
		\noindent Next we will control the probability of $E^{(i)}$ (Term $1$). Define the counts
		\[
		M_1 \coloneq \abs{\{j\in[N_\ell]:\gamma_j<m_-\}},\qquad M_2 \coloneq \abs{\{j\in[N_\ell]:\gamma_j>m_+\}}\,.
		\]
		Since $\mathcal{A}_\ell$ is obtained by sampling arms i.i.d. with replacement from $[K]$,
		these are binomials:
		\[
		M_1 \sim \mathrm{Bin}\!\left(N_\ell,\ \frac{\left\lceil \frac{r_{i-1}+r_i}{2}K\right\rceil-1}{K}\right), \qquad M_2 \sim \mathrm{Bin}\!\left(N_\ell,\ 1-\frac{\left\lceil \frac{r_i+r_{i+1}}{2}K\right\rceil}{K}\right)\,.
		\]
		Moreover, by definition of order statistics we have
		\[
		\{\gamma_{(r_-)}<m_-\}\subseteq \{M_1\ge r_-\}, \qquad \{\gamma_{(r_+)}>m_+\}\subseteq \{M_2\ge N_\ell-r_+ + 1\}\,.
		\]
		Therefore,
		\[
		\mathbb{P}\!\left(E^{(i)}\right) \le \mathbb{P}(M_1\ge r_-)\ +\ \mathbb{P}(M_2\ge N_\ell-r_+ +1)\,.
		\]
		Let us bound the two terms in the upper bound above using binomial tail bounds.
		Using $\lceil x\rceil-1\le x$ and $\lceil x\rceil\ge x$, we have the parameter bounds
		\[
		\frac{\left\lceil \frac{r_{i-1}+r_i}{2}K\right\rceil-1}{K}\ \le\ \frac{r_{i-1}+r_i}{2}, \qquad 1-\frac{\left\lceil \frac{r_i+r_{i+1}}{2}K\right\rceil}{K}\ \le\ 1-\frac{r_i+r_{i+1}}{2}\,.
		\]
		Hence, $M_1$ and $M_2$ are stochastically dominated by
		$\mathrm{Bin}(N_\ell,\frac{r_{i-1}+r_i}{2})$ and $\mathrm{Bin}(N_\ell,1-\frac{r_i+r_{i+1}}{2})$,
		respectively. Also, by construction we have
		\[
		\frac{r_{i-1}+2r_i}{3} - \frac{r_{i-1}+r_i}{2} = \frac{r_i-r_{i-1}}{6}, \qquad \Big(1-\frac{2r_i+r_{i+1}}{3}\Big) - \Big(1-\frac{r_i+r_{i+1}}{2}\Big)=\frac{r_{i+1}-r_i}{6}\,.
		\]
		Applying Hoeffding's inequality to these dominating binomials yields
		\begin{align*}
			\mathbb{P}(M_1\ge r_-)
			&\le \exp\!\left(-2N_\ell\Big(\frac{r_i-r_{i-1}}{6}\Big)^2\right),\\
			\mathbb{P}(M_2\ge N_\ell-r_+ +1)
			&\le \exp\!\left(-2N_\ell\Big(\frac{r_{i+1}-r_i}{6}\Big)^2\right)\,.
		\end{align*}
		Since for each $j\in\{1,2,3,4\}$ one has $r_j-r_{j-1}\ge \frac{u-d}{4K}$, we obtain
		\begin{equation}\label{eq:bound_e1}
			\text{Term 1}=\mathbb{P}\!\left(E^{(i)}\right) \le 2\exp\!\left(-\frac{N_\ell (u-d)^2}{288K^2}\right)\,.
		\end{equation}
		
		\noindent Next, let us upper bound Term $2$ in~\eqref{eq:decomp1}.
		On $\neg E^{(i)}$ we have $\gamma_{(r_-)}\ge m_-$ and $\gamma_{(r_+)}\le m_+$, hence
		\[
		[m_- - \epsilon_\ell,\ m_+ + \epsilon_\ell] \supseteq [\gamma_{(r_-)}-\epsilon_\ell,\ \gamma_{(r_+)}+\epsilon_\ell].
		\]
		Therefore,
		\begin{align*}
			\text{Term 2}
			&= \mathbb{P}\!\left(\hat{t}^{(i)}_\ell\notin [m_--\epsilon_\ell,m_++\epsilon_\ell]\ \text{and}\ \neg E^{(i)}\right)\\
			&\le \mathbb{P}\!\left(\hat{t}^{(i)}_\ell\notin [\gamma_{(r_-)}-\epsilon_\ell,\ \gamma_{(r_+)}+\epsilon_\ell]\right).
		\end{align*}
		By Lemma~\ref{lem:inter}, this implies
		\begin{equation}\label{eq:bound_e2}
			\text{Term 2}\le 2\exp\!\left(-\kappa_{d,u}N_\ell\right).
		\end{equation}
		
		\smallskip
		
		\noindent Combining~\eqref{eq:decomp1},~\eqref{eq:bound_e1} and~\eqref{eq:bound_e2}, and using that
		$\kappa_{d,u}\le (u-d)^2/(288K^2)$, we obtain
		\begin{equation*}
			\mathbb{P}\!\left(\hat{t}^{(i)}_{\ell}\notin C_{\ell,i}\right) \le 4\exp\!\left(-\kappa_{d,u}N_\ell\right)=p_\ell\,,
		\end{equation*}
		which is exactly~\eqref{eq:i1}.
		
		\bigskip
		\noindent \textbf{Step 3: Conclusion.}
		
		\noindent If $\epsilon <3\epsilon_{\ell_{\min}}$, then the upper bound of the theorem is greater than $1$ and the bound is vacuous. Assume that $\epsilon \ge 3\epsilon_{\ell_{\min}}$. Let $\ell^{\star}$ be the largest level such that 
		\[
		3\epsilon_{\ell^{\star}} \le \epsilon~.
		\]
		This implies in particular, since $\ell^{\star}+1$ violates the condition above, that $\epsilon <3\epsilon_{\ell^{\star}+1} = 3\sqrt{2}\epsilon_{\ell^{\star}}$, therefore
		\begin{equation}\label{eq:epsilon_l_star}
			\epsilon_{\ell^{\star}} \ge \frac{\epsilon}{3\sqrt{2}}~. 
		\end{equation}
		
		Next, we will prove that for any $\ell \in \{\ell_{\min}, \dots, L-1\}$, we have 
		\[
		\mathbb{P}(\hat{t}_{\bar{\ell}}^{(2)} \notin [\mu_{(d)}-3\epsilon_{\ell},~\mu_{(u)}+3\epsilon_{\ell}]) \le 2(4L+1)p_{\ell}~.
		\]
		Let $\ell \in \{\ell_{\min}, \dots, L-1\}$, recall that by definition of $\bar \ell$, for $l\geqslant \bar{\ell}$, one has $\hat{t}^{(2)}_{\ell} \leqslant~\hat{t}^{(3)}_{\ell'} + 2\epsilon_{\ell'}$. Then, it holds that 
		\[
		\mathbb{P}\left(\hat{t}_{\bar{\ell}}^{(2)} > \mu_{(u)}+3\epsilon_{\ell}\right) \le \mathbb{P}(\bar{\ell}>\ell)+\mathbb{P}(\hat{t}_{\ell}^{(3)}>\mu_{(u)}+\epsilon_{\ell}) \le 4Lp_{\ell}+p_{\ell}~,
		\]
		where we use Lemma~\ref{lem:i2}, which ensures that for every $\ell \in \{\ell_{\min}, \dots, L-1\}$ we have $\mathbb{P}(\bar{\ell}>\ell) \le 4Lp_{\ell}$,
		and we use the Bound~\ref{eq:i1} from step 2 with $i=3$. The second bound 
		\[
		\mathbb{P}\left(\hat{t}_{\bar{\ell}}^{(2)}< \mu_{(d)}-3\epsilon_{\ell}\right) \le (4L+1)p_{\ell}~,
		\]
		is proven using the same arguments (in particular Bound~\ref{eq:i1} with $i=1$).

		\noindent Applying this bound to $\ell^{\star}$, using $3\epsilon_{\ell^{\star}}\leqslant \epsilon$, we have
		\begin{align*}
			\mathbb{P}\left(\hat{t}_{\bar{\ell}}^{(2)} \notin [\mu_{(d)}-\epsilon,~\mu_{(u)}+\epsilon] \right) &\le \mathbb{P}\left( \hat{t}_{\bar{\ell}}^{(2)} \notin [\mu_{(d)}-3\epsilon_{\ell^{\star}},~\mu_{(u)}+3\epsilon_{\ell^{\star}}] \right)\\
			&\le 2(4L+1)p_{\ell^{\star}}~.
		\end{align*}
		Next, in order to upper bound $p_{\ell^{\star}}$ we use the following lower bound on $N_{\ell^{\star}}$
		\begin{align*}
			N_{\ell^{\star}} = \floor*{\frac{\epsilon_{\ell^{\star}}^2T}{\log\left(\frac{16K}{u-d}\right)\log_2(T)}} &\ge \frac{1}{2} \frac{\epsilon_{\ell^{\star}}^2T}{\log\left(\frac{16K}{u-d}\right)\log_2(T)}\\
			&\ge \frac{\epsilon^2T}{36\log\left(\frac{16K}{u-d}\right)\log_2(T)}~,
		\end{align*}
		where we use the fact that from the assumption on the budget $T$, $N_{\ell^{\star}}\geqslant 2$, and  $\epsilon_{\ell^{\star}}\geqslant \epsilon/3\sqrt{2}$ (see~\eqref{eq:epsilon_l_star}). 
		Therefore, using the definition of $p_{\ell}$,
		\begin{align*}
			p_{\ell^{\star}} &= 4\exp(-\kappa_{d,u}\cdot N_{\ell^{\star}})\\
			&\le 4 \exp\left(-\min \left\lbrace\frac{d}{K},~1-\frac{u}{K} \right\rbrace \left(\frac{u-d}{60K}\right)^2\cdot \frac{\epsilon^2 T}{36\log\left(\frac{16K}{u-d}\right) \log_2(T)}\right)\\
			&= 4\exp\left(-\frac{c r \epsilon^2 T}{\log\left(\frac{16K}{u-d}\right)\log_2(T)}\right)~,
		\end{align*}
		where $c>0$ is an absolute constant, and $r=\min \left\lbrace\frac{d}{K},~1-\frac{u}{K} \right\rbrace \left(\frac{u-d}{K}\right)^2$. Finally, given that $L \le \log_2(T)$ and $2(4L+1)\cdot 4 \le 40\log_2(T)$ for $T\ge 2$,
		\[
		\mathbb{P}\left(\hat{t}_{\bar{\ell}}^{(2)} \notin [\mu_{(d)}-\epsilon,~\mu_{(u)}+\epsilon]\right) \le 40 \log_2(T)\exp\left(-\frac{cr\epsilon^2 T}{\log\left(\frac{16K}{u-d}\right)\log_2(T)}\right)~,
		\]
		which is the desired bound.
	\end{proof}
	
	\noindent Below are two technical lemmas deferred here to avoid cluttering the proof above.

	\begin{lemma}\label{lem:i2}
		For every $\ell \in \{\ell_{\min}, \dots, L-1 \}$, we have $\mathbb{P}(\bar{\ell}>\ell) \le 4Lp_{\ell}\,$.
	\end{lemma}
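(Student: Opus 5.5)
The plan is a direct union bound over the Step~2 events, using only the definition of $\bar{\ell}$ and the per-level guarantee~\eqref{eq:i1}. Since $\bar{\ell}$ is the smallest level in the admissible set, the event $\{\bar{\ell}>\ell\}$ means that $\ell$ itself is not admissible. So there is some $\ell'\in\{\ell,\dots,L-1\}$ with
\[
\hat{t}^{(2)}_{\ell}\notin\bigl[\hat{t}^{(1)}_{\ell'}-2\epsilon_{\ell'},\ \hat{t}^{(3)}_{\ell'}+2\epsilon_{\ell'}\bigr].
\]
I will show that this cannot happen on the good event
\[
G_\ell:=\{\hat{t}^{(2)}_\ell\in C_{\ell,2}\}\cap\bigcap_{\ell'=\ell}^{L-1}\bigl(\{\hat{t}^{(1)}_{\ell'}\in C_{\ell',1}\}\cap\{\hat{t}^{(3)}_{\ell'}\in C_{\ell',3}\}\bigr).
\]
It then follows that $\mathbb{P}(\bar{\ell}>\ell)\le\mathbb{P}(G_\ell^c)$.

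\textbf{Deterministic step.} The key observation is that the brackets of Step~2 share endpoints. The upper endpoint of $C_{\ell',1}$ and the lower endpoint of $C_{\ell,2}$ are both built on the same population mean $\mu_{(a)}$, with $a=\lceil\frac{r_1+r_2}{2}K\rceil$. Likewise, the upper endpoint of $C_{\ell,2}$ and the lower endpoint of $C_{\ell',3}$ share $\mu_{(b)}$, with $b=\lceil\frac{r_2+r_3}{2}K\rceil$.

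For the lower side, on $G_\ell$ we have
\[
\hat{t}^{(1)}_{\ell'}\le\mu_{(a)}+\epsilon_{\ell'}\le\hat{t}^{(2)}_\ell+\epsilon_\ell+\epsilon_{\ell'}.
\]
Since $\epsilon_\ell=2\cdot 2^{-(L-\ell)/2}$ is nondecreasing in $\ell$, we have $\epsilon_\ell\le\epsilon_{\ell'}$ for $\ell'\ge\ell$. Hence $\hat{t}^{(2)}_\ell\ge\hat{t}^{(1)}_{\ell'}-2\epsilon_{\ell'}$.

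The upper side is symmetric: $\hat{t}^{(2)}_\ell\le\mu_{(b)}+\epsilon_\ell\le\hat{t}^{(3)}_{\ell'}+\epsilon_{\ell'}+\epsilon_\ell\le\hat{t}^{(3)}_{\ell'}+2\epsilon_{\ell'}$. Therefore every $\ell'\ge\ell$ satisfies the consistency condition, $\ell$ is admissible, and $\bar{\ell}\le\ell$.

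\textbf{Probabilistic step.} By~\eqref{eq:i1} and a union bound,
\[
\mathbb{P}(G_\ell^c)\le p_\ell+2\sum_{\ell'=\ell}^{L-1}p_{\ell'}.
\]
Because $\epsilon_{\ell'}$ increases with $\ell'$, the sample size $N_{\ell'}=\bigl\lfloor \epsilon_{\ell'}^2T/(\log(\tfrac{16K}{u-d})\log_2 T)\bigr\rfloor$ is nondecreasing in $\ell'$. Hence $p_{\ell'}=4\exp(-\kappa_{d,u}N_{\ell'})\le p_\ell$ for all $\ell'\ge\ell$. The sum has at most $L$ terms, so the bound is at most $(2L+1)p_\ell\le 4Lp_\ell$, using $L\ge1$.

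\textbf{Main obstacle.} The only delicate point is checking that the rank indices in the definitions of $C_{\ell,i}$ genuinely coincide across the adjacent brackets $i=1,2$ and $i=2,3$. This holds because both are defined through the midpoints $\frac{r_1+r_2}{2}$ and $\frac{r_2+r_3}{2}$. The second thing to confirm is the monotonicity of $\epsilon_\ell$, which is what makes the tolerance $2\epsilon_{\ell'}$ large enough to absorb $\epsilon_\ell+\epsilon_{\ell'}$.
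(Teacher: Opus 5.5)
Your proof is correct and follows essentially the paper's argument: you reduce $\{\bar{\ell}>\ell\}$ to failures of the Step~2 brackets through the shared endpoints $\mu_{(\lceil\frac{r_1+r_2}{2}K\rceil)}$ and $\mu_{(\lceil\frac{r_2+r_3}{2}K\rceil)}$, then use $\epsilon_\ell\le\epsilon_{\ell'}$ and $p_{\ell'}\le p_\ell$ for $\ell'\ge\ell$. Packaging this as a single good event $G_\ell$ only changes the bookkeeping: the failure of $\hat{t}^{(2)}_\ell$ is counted once rather than once per $\ell'$, which gives $(2L+1)p_\ell$ instead of the paper's $2Lp_\ell+2\sum_{\ell'\ge\ell}p_{\ell'}$.
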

	\begin{proof}
		Suppose that $\bar{\ell}> \ell$, then using the definition of $\bar{\ell}$ we have necessarily that there exists $\ell' \geqslant \ell$ such that $\hat{t}^{(2)}_{\ell} \notin I_{\ell'}$, with $I_{\ell'}=\left[ \hat{t}^{(1)}_{\ell'} - 2\epsilon_{\ell'},~\hat{t}^{(3)}_{\ell'} + 2\epsilon_{\ell'} \right]$. Therefore,
		\[
		\mathbb{P}(\bar{\ell}>\ell) \le \sum_{\ell' \geqslant \ell} \mathbb{P}\left(\hat{t}_{\ell}^{(2)} < \hat{t}_{\ell'}^{(1)}-2\epsilon_{\ell'}\right)+\sum_{\ell'\geqslant\ell} \mathbb{P}\left(\hat{t}_{\ell}^{(2)} > \hat{t}_{\ell'}^{(3)}+2\epsilon_{\ell'}\right)~.
		\]
		Let
		\[
		m_{-} \coloneq \mu_{\left(\ceil*{\frac{r_1+r_2}{2}K}\right)},\quad m_{+} \coloneq \mu_{\left( \ceil*{\frac{r_2+r_3}{2}K}\right)}~.
		\]
		Let $\ell'\geqslant\ell$, the event $\hat{t}_{\ell}^{(2)} < \hat{t}_{\ell'}^{(1)}-2\epsilon_{\ell'}$ implies $\hat{t}_{\ell'}^{(1)} > m_{-}+\epsilon_{\ell'}$ or $\hat{t}_{\ell}^{(2)}<m_{-}-\epsilon_{\ell'}$. Since we have $\epsilon_{\ell'} \ge \epsilon_{\ell}$ for $\ell' \geqslant \ell$, Bound~\ref{eq:i1} yields $\mathbb{P}(\hat{t}_{\ell'}^{(1)} > m_{-}+\epsilon_{\ell'}) \le p_{\ell'}$ and
		\begin{align*}
			\mathbb{P}(\hat{t}_{\ell}^{(2)}<m_{-}-\epsilon_{\ell'}) &\le \mathbb{P}(\hat{t}_{\ell}^{(2)} < m_{-}-\epsilon_{\ell})\\
			&\le p_{\ell}~.
		\end{align*}
		Therefore,
		\[
		\mathbb{P}(\hat{t}_{\ell}^{(2)} < \hat{t}_{\ell'}^{(1)} -2\epsilon_{\ell'}) \le p_{\ell}+p_{\ell'}~.
		\]
		Similarly,
		\[
		\mathbb{P}(\hat{t}_{\ell}^{(2)}> \hat{t}_{\ell'}^{(3)}+2\epsilon_{\ell'}) \le p_{\ell}+p_{\ell'}~.
		\]
		Therefore, for every $\ell'\geqslant\ell$,
		\[
		\mathbb{P}(\hat{t}_{\ell}^{(2)}\notin I_{\ell'}) \le 2p_{\ell}+2p_{\ell'}~.
		\]
		Summing over $\ell'\geqslant\ell$,
		\[
		\mathbb{P}(\bar{\ell}>\ell) \le \sum_{\ell'\geqslant\ell} (2p_{\ell}+2p_{\ell'}) \le 2Lp_{\ell}+2\sum_{\ell'\geqslant\ell} p_{\ell'}~.
		\]
		Since $N_{\ell'}$ increases with $\ell'$ (thus $p_{\ell'}$ is decreasing), we have $\sum_{\ell'\geqslant\ell}p_{\ell'} \le Lp_{\ell}$, which gives $\mathbb{P}(\bar{\ell}>\ell)\le 4Lp_{\ell}$. 
	\end{proof}

	\begin{lemma}\label{lem:inter}
		Let $\ell \in \{\ell_{\min}, \dots, L-1\}$ and consider the notation introduced in the proof of Theorem~\ref{thm:conj}. For each $i \in \{1, 2, 3\}$, define the two indices
		\[
		r_{-} \coloneq \ceil*{\frac{r_{i-1}+2r_i}{3}N_{\ell}}\quad\text{and}\quad r_{+} \coloneq \ceil*{\frac{2r_i+r_{i+1}}{3}N_{\ell}}\, .
		\]
		Then
		\[
		\mathbb{P}\left( \hat{t}_{\ell}^{(i)} \notin \left[ \gamma_{\left(r_{-}\right)}-\epsilon_{\ell}, \gamma_{\left(r_{+}\right)}+\epsilon_{\ell} \right] \mid \mathcal{A}_{\ell}\right) \le 2\exp\left(-\kappa_{d,u} \cdot N_{\ell}\right)\,. 
		\]
	\end{lemma}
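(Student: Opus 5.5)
Conditionally on $\mathcal{A}_\ell$, the true means $\gamma_1,\dots,\gamma_{N_\ell}$ are fixed, and each empirical mean $\hat\mu_a$ is an average of $T_\ell$ independent samples bounded by $1$. By Hoeffding, for each $a$ we have $\mathbb{P}(\hat\mu_a-\gamma_a>\epsilon_\ell)\le \exp(-2T_\ell\epsilon_\ell^2)\le (u-d)/(16K)$, using $T_\ell\ge \log(16K/(u-d))/(2\epsilon_\ell^2)$; the same bound holds for the lower deviation. Write $\rho:=(u-d)/(16K)$ for this per-arm failure probability. The plan is to reduce the event on the order statistic $\hat t^{(i)}_\ell=\hat\mu_{(m)}$, with $m=\lceil r_iN_\ell\rceil$, to a binomial count of ``bad'' arms and then apply a Chernoff/Hoeffding bound on that count.

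\textbf{Upper deviation.} Suppose $\hat\mu_{(m)}>\gamma_{(r_+)}+\epsilon_\ell$. Then at least $N_\ell-m+1$ empirical means exceed $\gamma_{(r_+)}+\epsilon_\ell$. At most $N_\ell-r_+$ arms have true mean above $\gamma_{(r_+)}$. Hence at least $r_+-m+1$ arms with $\gamma_a\le\gamma_{(r_+)}$ satisfy $\hat\mu_a>\gamma_a+\epsilon_\ell$. These indicators are independent Bernoulli variables with parameter at most $\rho$. So the probability is at most $\mathbb{P}(\mathrm{Bin}(N_\ell,\rho)\ge r_+-m)$.

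\textbf{Lower deviation.} Symmetrically, if $\hat\mu_{(m)}<\gamma_{(r_-)}-\epsilon_\ell$, then at least $m$ arms have $\hat\mu_a<\gamma_{(r_-)}-\epsilon_\ell$, while at most $r_--1$ arms have $\gamma_a<\gamma_{(r_-)}$. Therefore at least $m-r_-+1$ arms deviate downward by more than $\epsilon_\ell$, and the probability is at most $\mathbb{P}(\mathrm{Bin}(N_\ell,\rho)\ge m-r_-)$.

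\textbf{Gap bookkeeping.} It remains to compare the thresholds with the mean $\rho N_\ell$. We have
\[
r_+-m\ge \tfrac{r_{i+1}-r_i}{3}N_\ell-1\ge \tfrac{u-d}{12K}N_\ell-1,
\]
and likewise for $m-r_-$, since consecutive $r_j$ differ by at least $(u-d)/(4K)$. The mean is $\rho N_\ell=\tfrac{u-d}{16K}N_\ell$, so the margin above the mean is of order $\tfrac{u-d}{48K}N_\ell-1$. Hoeffding then gives a bound $\exp(-2N_\ell(\tfrac{u-d}{48K}-1/N_\ell)^2)$.

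\textbf{Main obstacle.} The delicate step is absorbing the ceiling/$-1$ losses: $N_\ell$ must be large enough relative to $K/(u-d)$ for $1/N_\ell$ to be negligible against $(u-d)/(48K)$. This follows from $\ell\ge\ell_{\min}$ together with the budget threshold $T\ge \tfrac{128K}{u-d}\log_2(\tfrac{128K}{u-d})$, which make $N_\ell\gtrsim K/(u-d)$. If the margin is nonetheless too small, the bound is vacuous because $2\exp(-\kappa_{d,u}N_\ell)\ge 1$ there. Once the $-1$ is absorbed, the exponent is at least $N_\ell(u-d)^2/(60K)^2\ge \kappa_{d,u}N_\ell$, since $\min\{d/K,1-u/K\}\le 1$. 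This constant-chasing is where I expect the only real friction. Summing the upper and lower deviation bounds gives the factor $2$ in the statement.
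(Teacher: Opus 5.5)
Your argument is essentially the paper's. Conditionally on $\mathcal{A}_\ell$, a deviation of $\hat t^{(i)}_\ell=\hat\mu_{(m)}$ forces enough arms on the wrong side of $\gamma_{(r_\pm)}$ to deviate by more than $\epsilon_\ell$, each independently with probability at most $\rho=(u-d)/(16K)$, and Hoeffding on the binomial count concludes. Dominating by $\mathrm{Bin}(N_\ell,\rho)$ instead of $\mathrm{Bin}(|\mathcal{G}_\pm|,\rho)$ is legitimate and even slightly sharper, since it makes the factor $\min\{d/K,1-u/K\}$ in $\kappa_{d,u}$ unnecessary.

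Your ``main obstacle'', however, is self-inflicted. You correctly show that at least $r_+-m+1$ (resp.\ $m-r_-+1$) arms must deviate, and then you drop the $+1$. Since $\lceil y\rceil-\lceil x\rceil+1>y-x$, keeping it gives $r_+-m+1>\frac{u-d}{12K}N_\ell$ and $m-r_-+1>\frac{u-d}{12K}N_\ell$. Each tail is then at most $\exp(-2N_\ell(\frac{u-d}{48K})^2)\le\exp(-\kappa_{d,u}N_\ell)$, with nothing to absorb; this is exactly how the paper proceeds.

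The justification you give for absorbing the $-1$ is also wrong. The condition $\ell\ge\ell_{\min}$ only yields $N_\ell\ge\lfloor 2^{\ell+2}/\log(16K/(u-d))\rfloor$. In fact $N_{\ell_{\min}}<48K/(u-d)$ as soon as $K/(u-d)$ exceeds a moderate constant, so your margin $\frac{u-d}{48K}N_\ell-1$ can be negative.

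What actually rescues your version is the vacuity fallback alone, and it does close. Set $a=(u-d)/K$.
\begin{itemize}
\item If $aN_\ell\ge 111$, then $2(aN_\ell/48-1)^2\ge (aN_\ell)^2/3600$, so your bound is below $\exp(-a^2N_\ell/3600)\le\exp(-\kappa_{d,u}N_\ell)$.
\item If $aN_\ell<111$, then $\kappa_{d,u}N_\ell<0.04$, and the right-hand side of the lemma exceeds $1$.
\end{itemize}
So the claimed lower bound on $N_\ell$ plays no role in your argument.
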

	\begin{proof}
		Fix $\ell \in \{\ell_{\min}, \dots, L-1\}$ and $i\in \{1,2,3\}$. Define $q=\ceil{r_iN_{\ell}}$. Given that we have $T_{\ell} = \ceil{\log(16K/(u-d))/(2\epsilon_{\ell}^2)}$, let
		\begin{align}
			\delta_{\ell} &\coloneq \exp\left(-2T_{\ell}\epsilon_{\ell}^2\right)\nonumber\\
			&\le \exp\left(-\log(16K/(u-d))\right) = \frac{u-d}{16K}\,.\label{eq:il1}
		\end{align}
		Moreover, for every arm $j$, given that samples are $1$-range bounded and $\hat{\mu}_j$ is computed with $T_{\ell}$ samples, Hoeffding's inequality gives
		\begin{equation}\label{eq:il2}
			\mathbb{P}(\hat{\mu}_j \le \gamma_j-\epsilon_{\ell} \mid \mathcal{A}_{\ell}) \le \delta_{\ell},\quad \mathbb{P}(\hat{\mu}_j \ge \gamma_j + \epsilon_{\ell} \mid \mathcal{A}_{\ell}) \le \delta_{\ell}\,.
		\end{equation}
		Next, we prove the lower tail of the claimed bound. Recall that $\hat{t}_{\ell}^{(i)} = \hat{\mu}_{(q)}$ and $r_{-} \coloneq \ceil*{\frac{r_{i-1}+2r_i}{3}N_{\ell}}$ and define the event 
		\[
		\mathcal{E}_{-} \coloneq \{ \hat{\mu}_{(q)} < \gamma_{(r_{-})} - \epsilon_{\ell}\}\,.
		\]
		If $\mathcal{E}_{-}$ occurs, then at least $q$ empirical means are smaller than $\gamma_{(r_{-})}-\epsilon_{\ell}$ (with $q\geqslant r_-$). Given that $(\gamma_{(k)})_{k}$ are in non-decreasing order, we conclude that among the set 
		\[
		\mathcal{G}_{-} \coloneq \{j:\, \gamma_j \ge \gamma_{(r_{-})}\}\,,
		\]
		then at least $q-(r_{-}-1)$ elements must satisfy $\hat{\mu}_j < \gamma_{(r_{-})}-\epsilon_{\ell}$. For each element $j \in \mathcal{G}_{-}$ we have $\gamma_j \ge \gamma_{(r_{-})}$, hence $\{\hat{\mu}_j < \gamma_{(r_{-})}-\epsilon_{\ell} \} \subseteq \{ \hat{\mu}_j < \gamma_{j}-\epsilon_{\ell} \}$, using~\eqref{eq:il1} and~\eqref{eq:il2} we have that, conditionally on $\mathcal{A}_{\ell}$, each such downward deviation has probability at most $(u-d)/(16K)$. Therefore, we conclude that 
		\[
		\mathbb{P}(\mathcal{E}_{-}\mid \mathcal{A}_{\ell}) \le \mathbb{P}\left(\text{Bin}(\abs{\mathcal{G}_{-}},(u-d)/(16K)) \ge q-(r_{-}-1)\right)\,.
		\] 
		Using the definitions of $q, r_{-}$ and $r_i$, we have
		\begin{align*}
			q-(r_{-}-1) &= \ceil*{r_i N_{\ell}}-\left(\ceil*{\frac{r_{i-1}+2r_i}{3}N_{\ell}}-1\right)\\
			&\ge r_i N_{\ell} - \frac{r_{i-1}+2r_i}{3}N_{\ell}\\
			&= \frac{r_i-r_{i-1}}{3}N_{\ell} = \frac{u-d}{12K}N_{\ell} \\
			& \geq \frac{u-d}{12K}\abs{\mathcal{G}_{-}} \,. 
		\end{align*} 
		
		Applying Hoeffding's inequality for binomials yields
		\begin{align*}
			\mathbb{P}(\mathcal{E}_{-}\mid \mathcal{A}_{\ell}) &\le \exp\left(-2\abs{\mathcal{G}_{-}}\left(\frac{u-d}{12K}-\frac{u-d}{16K}\right)^2\right)\\
			&\le \exp\left(-2\abs{\mathcal{G}_{-}}\left(\frac{u-d}{48K}\right)^2\right)\,.
		\end{align*}
		Finally, for $i\in \{1,2,3\}$ we have $\frac{r_{i-1}+2r_i}{3}\le \frac{u}{K}$, hence $\abs{\mathcal{G}_{-}}=N_{\ell}-r_{-}+1 \ge (1-u/K)N_{\ell}$. Therefore,
		\begin{align}
			\mathbb{P}(\mathcal{E}_{-}\mid \mathcal{A}_{\ell}) &\le \exp\left(-2\left(1-\frac{u}{K}\right)N_{\ell}\left(\frac{u-d}{48K}\right)^2\right)\nonumber\\
			&\le \exp\left(-\kappa_{d,u}\cdot N_{\ell}\right)\,.\label{eq:il_p1}
		\end{align}	
		
		\noindent Let us show the upper tail of the claimed bound. We follow similar steps as in the lower tail proof. Consider $r_{+} \coloneq \ceil*{\frac{2r_i+r_{i+1}}{3}N_{\ell}}$ and define
		\[
		\mathcal{E}_{+} \coloneq \{\hat{\mu}_{(q)} > \gamma_{(r_{+})}+\epsilon_{\ell}\}\,.
		\]
		If $\mathcal{E}_{+}$ occurs, then at least $N_{\ell}-q+1$ empirical means exceed $\gamma_{(r_{+})}+\epsilon_{\ell}$. At most $N_{\ell}-r_{+}$ arms can have true mean larger than $\gamma_{(r_{+})}$, therefore at least $(N_{\ell}-q+1)-(N_{\ell}-r_{+})=r_{+}-q+1$ arms from the set
		\[
		\mathcal{G}_{+} \coloneq \{j: \gamma_j \le \gamma_{(r_{+})}\}\,
		\]
		must satisfy $\hat{\mu}_j > \gamma_{(r_{+})}+\epsilon_{\ell}$. For each $j \in \mathcal{G}_{+}$ we have $\gamma_j \le \gamma_{(r_{+})}$, therefore 
		$\{\hat{\mu}_j > \gamma_{(r_{+})}+\epsilon_{\ell}\} \subseteq \{\hat{\mu}_j \ge \gamma_j +\epsilon_{\ell}\}$, using~\eqref{eq:il1} and~\eqref{eq:il2} we have that, conditionally on $\mathcal{A}_{\ell}$, each such upward deviation has probability at most $(u-d)/(16K)$ conditionally on $\mathcal{A}_{\ell}$. Thus, conditionally on $\mathcal{A}_{\ell}$, we have 
		\[
		\mathbb{P}(\mathcal{E}_{+}\mid \mathcal{A}_{\ell}) \le \mathbb{P}\left(\text{Bin}(\abs{\mathcal{G}_{+}},(u-d)/(16K)) \ge r_{+}-q+1 \right)\,.
		\]
		Also, we have
		\begin{align*}
			r_{+}-q+1 &\ge \frac{2r_i+r_{i+1}}{3}N_{\ell} - r_i N_{\ell}\\
			&= \frac{r_{i+1}-r_i}{3}N_{\ell} = \frac{u-d}{12K}N_{\ell}\,.
		\end{align*}
		Therefore, using the binomial Hoeffding bound we obtain
		\begin{align*}
			\mathbb{P}(\mathcal{E}_{+}\mid \mathcal{A}_{\ell}) &\le \exp\left(-2\abs{\mathcal{G}_{+}}\left(\frac{u-d}{48K}\right)^2\right)\\
			&\le \exp\left(-2r_{+}\left(\frac{u-d}{48K}\right)^2\right)\,.
		\end{align*}
		Moreover, for $i\in \{1,2,3\}$ we have $\frac{2r_i+r_{i+1}}{3} \ge \frac{d}{K}$, so $r_{+} \ge \frac{d}{K}N_{\ell}$. Hence,
		\begin{align}
			\mathbb{P}(\mathcal{E}_{+}\mid \mathcal{A}_{\ell}) &\le \exp\left(-2\frac{d}{K}N_{\ell} \left(\frac{u-d}{48K}\right)^2\right)\nonumber\\
			&\le \exp\left(-\kappa_{d,u}\cdot N_{\ell}\right)\,.\label{eq:il_p2}
		\end{align}
		The conclusion follows by combining~\eqref{eq:il_p1}, and~\eqref{eq:il_p2} which leads to the bound
		\[
		\mathbb{P}(\hat{\mu}_{(q)} \notin [\gamma_{(r_{-})}-\epsilon_{\ell},~\gamma_{(r_{+})}+\epsilon_{\ell}]) \le 2\exp\left(-\kappa_{d,u}N_{\ell}\right)~.
		\]
	\end{proof}

	\subsection{A result on Sequential Halving Algorithm by~\cite{zhao2023revisiting}}
	
	Consider a $K$-armed bandit problem with Bernoulli reward with unknown means $\mu_1,\ldots, \mu_K$. As in the previous subsection, we write $\mu_{(1)}\leq \ldots\leq \mu_{(K)}$ for its ordered values.
	Sequential halving is a classical elimination scheme for pure-exploration problems~\citep{karnin2013almost}. It proceeds in at most $\lceil \log_2 K\rceil$ phases. Starting from the full set of $K$ candidate arms, each phase spends approximately $\lfloor T/\log_2 K\rfloor$ samples by distributing them uniformly across the surviving arms, then ranks arms by their empirical means and discards the top half. Since our goal is to identify arms with the smallest mean, we retain the bottom-ranked half after each phase. This procedure is known to be adaptive for simple-regret minimization, in the sense formalized by Theorem~\ref{thm:main0} below.
	\begin{theorem}{[From~\cite{zhao2023revisiting}]}\label{thm:main0}
		Consider the Algorithm SH with inputs $T$ and $K$. The output $I_T$ satisfies for any $\epsilon>0$ and $m \in [K]$:
		\[
		\mathbb{P}\left( \mu_{I_T} \ge \mu_{(m)}+\epsilon \right) \le \exp\left(-c \frac{m\epsilon^2~T}{K\log^3(K)}\right),
		\]
		where $c$ is a positive absolute constant. 
	\end{theorem}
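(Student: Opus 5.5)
We need to prove the sequential-halving guarantee of Theorem~\ref{thm:main0}: for every $m\in[K]$ and $\epsilon>0$, SH lands above $\mu_{(m)}+\epsilon$ with probability at most $\exp(-c\, m\epsilon^2 T/(K\log^3 K))$. We argue phase by phase and only track the \emph{good} arms $G_m:=\{a:\mu_a\le \mu_{(m)}\}$, where $|G_m|\ge m$. Write $R=\lceil\log_2 K\rceil$ for the number of phases, $S_r$ for the surviving set at phase $r$ (so $|S_r|\approx K/2^{r-1}$), and $t_r\approx T/(R|S_r|)$ for the per-arm budget in phase $r$. The budget spent on a surviving arm doubles each phase, and this compensates for the shrinking population.

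\textbf{Step 1 (a slowly drifting threshold).} Split the slack $\epsilon$ into increments $\epsilon_r=\epsilon/R$, or a geometric schedule summing to $\epsilon$. Define $\theta_r:=\min_{a\in S_r}\mu_a$. We show that, with high probability, $\theta_{r+1}\le\theta_r+\epsilon_r$. The best surviving arm $a_r$ is discarded only if at least $|S_r|/2$ arms $b$ have $\hat\mu_b\le\hat\mu_{a_r}$. This requires either $\hat\mu_{a_r}\ge\mu_{a_r}+\epsilon_r/2$, whose probability is $\exp(-t_r\epsilon_r^2/2)$, or at least $|S_r|/2$ arms with $\mu_b>\theta_r+\epsilon_r$ satisfying $\hat\mu_b\le\theta_r+\epsilon_r/2$. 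The second event is a binomial large deviation: each such arm errs with probability at most $p=\exp(-t_r\epsilon_r^2/2)$, and all $|S_r|/2$ of them must err, which is much smaller. The one delicate case is when fewer than $|S_r|/2$ arms exceed $\theta_r+\epsilon_r$. In that case some survivor in $[\theta_r,\theta_r+\epsilon_r]$ is retained anyway, so the threshold still moves by at most $\epsilon_r$.

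\textbf{Step 2 (good arms survive while they are numerous).} The elementary bound of Step 1 gives exponents $t_r\epsilon_r^2\sim T\epsilon^2/(K R^3)$ in the first phase, without the factor $m$. To gain $m$, we run the argument at phases $r\le r_m:=\log_2(K/m)$, where $t_r\approx 2^{r}T/(KR)\gtrsim$ nothing better than $T/(KR)$. We replace the ``best arm'' by the whole group $G_m\cap S_r$. A counting argument shows that at least half of $G_m\cap S_r$ survives unless many of its members are underestimated, or many bad arms (mean above $\mu_{(m)}+\epsilon_r$) are overestimated. Since $|G_m\cap S_r|\gtrsim m/2^{r}$, a Chernoff bound on the number of misranked arms gives failure probability $\exp(-c(m/2^r)\,t_r\epsilon_r^2)=\exp(-c\,m\,T\epsilon^2/(K R^3))$. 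Here the factor $2^r$ in $t_r$ cancels the shrinking group size exactly. Applying Chernoff to counts is what brings in the factor $m$.

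\textbf{Step 3 (after phase $r_m$).} Once roughly one good arm remains, we switch to the threshold argument of Step 1. From that point $t_r\gtrsim (K/m)\,T/(KR)=T/(mR)$, so each per-phase exponent is at least $T\epsilon_r^2/(mR)\ge mT\epsilon^2/(KR^3)$ whenever $m^2\le K$. In the opposite regime, $r_m$ is small and Step 2 already covers almost all phases. A union bound over the $R$ phases gives the extra $\log K$ factors, up to the constant in front of the exponential. We absorb this constant by taking $c$ smaller, or note that the bound is trivial when the exponent is below a constant.

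The main obstacle is Step 2, namely the counting and Chernoff argument on $G_m\cap S_r$. Bad arms can sit just above $\mu_{(m)}$, so the argument has to charge the $\epsilon$-slack per phase, and the cumulative drift must be kept at most $\epsilon$. This drift control is what costs the $\log^3 K$ factor: one $\log$ comes from the budget split, and $\log^2$ from $\epsilon_r=\epsilon/R$ entering squared.
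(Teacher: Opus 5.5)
The paper gives no proof of this statement; it imports it from \cite{zhao2023revisiting}, so your argument has to stand on its own. The overall plan is reasonable: follow the arms below a threshold that drifts by $\epsilon_r=\epsilon/R$ per phase, and use that $|G_m\cap S_r|\,t_r\approx mT/(KR)$ does not depend on $r$. However, two steps fail as written.

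\textbf{The switching phase is wrong.} After $r-1$ halvings the group has about $m/2^{r-1}$ members, so it shrinks to $O(1)$ at $r\approx 1+\log_2 m$, not at $r_m=\log_2(K/m)$. When $m>\sqrt K$, roughly $m^2/K$ good arms are still alive at your $r_m$. From there, the single-arm exponent of Step~3, $t_r\epsilon_r^2\approx T\epsilon^2/(mR^3)$, falls short of the target $mT\epsilon^2/(KR^3)$ by exactly this factor $m^2/K$. For example, $m=K/2$ gives $r_m=1$, so essentially only the Step~1 argument is used; it yields $\exp(-cT\epsilon^2/(KR^3))$ instead of $\exp(-cT\epsilon^2/R^3)$. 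The claim that for $m^2>K$ Step~2 ``covers almost all phases'' is therefore backwards. The repair is to run the group argument up to $r^\star\approx 1+\log_2 m$, where $t_{r^\star}\approx mT/(KR)$. After that, Step~1 alone gives the target exponent, and no case split on $m^2$ versus $K$ is needed.

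Two smaller points in Step~2. First, the directions are reversed: SH keeps the smallest empirical means, so members of $G_m$ are lost when \emph{over}estimated and bad arms survive when \emph{under}estimated. Second, ``half of $G_m\cap S_r$ survives'' is the wrong invariant when $G_m\cap S_r$ fills almost all of $S_r$. With a single bad arm, its survival alone violates it, and that has probability about $e^{-ct_r\epsilon_r^2}$, not $e^{-cg_rt_r\epsilon_r^2}$. Use $g_{r+1}\ge\lceil\min(g_r,|S_r|/2)/2\rceil$ instead.

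\textbf{Uninformative early phases are not handled.} Your Chernoff count relies on a small per-arm error probability $p\le e^{-t_r\epsilon_r^2/2}$; a bound such as $(4p)^{g_r/2}$ is vacuous unless $t_r\epsilon_r^2>2\log 4$. Write $x:=mT\epsilon^2/(KR^3)$ for the target exponent. Then $t_1\epsilon_1^2=T\epsilon^2/(KR^3)=x/m$, so whenever $1\ll x\ll m$ the first phases have $t_r\epsilon_r^2\ll 1$ while $g_rt_r\epsilon_r^2\approx x\gg 1$. In those phases each arm deviates by $\epsilon_r/2$ in the harmful direction with probability close to $1/2$, or even larger for Bernoulli arms with few samples. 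So ``many members of $G_m$ misestimated'' is not a rare event, and your decomposition cannot give $e^{-cx}$.

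These phases need a rank-based argument. One option is a coupling:
\begin{itemize}
\item $\mathrm{Bin}(t,\mu)$ is stochastically increasing in $\mu$.
\item The number of $G_m$-arms among the $|S_r|/2$ smallest keys can only grow when their values decrease and the others' increase.
\item Couple with the instance where every arm of $S_r$ has the threshold mean, using uniformly random tie-breaking. Then the surviving good count dominates a hypergeometric variable with mean $g_r/2$.
\end{itemize}
This preserves the good fraction up to a relative loss $O(\sqrt{x/g_r})=O(\sqrt{t_r}\,\epsilon_r)$ with probability $1-e^{-cx}$. These losses grow geometrically, so they sum to a small constant up to the phase where $t_r\epsilon_r^2$ reaches a small constant, which is exactly where $g_r\asymp x$. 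After $O(1)$ transition phases with constant-fraction losses, your Hoeffding count takes over.

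Finally, saying the bound is ``trivial when the exponent is below a constant'' does not remove the union-bound factor $R$, since $\exp(-cx)<1$ for every $x>0$. That factor can only be absorbed once $x\gtrsim\log R$.
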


	\section{Guarantees on \textsc{FB CWI} procedure (Algorithm~\ref{algo:3})}\label{sec:proof_fb}
	
	In order to structure the proofs, in this section we present guarantees about the output of Algorithm~\ref{algo:3} when fed with input $(\delta, T)$. More precisely the output being $(\phi_1 \vee \phi_2, I)$, here we provide upper bounds on the probability of misidentification error for the arm candidate $I$. This corresponds to the typical kind of guarantees encountered in context of best arm identification in the fixed budget framework. In turn, we apply these results in Section~\ref{sec:proof_fc} to prove the guarantees presented in Theorem~\ref{thm:fc2}.

	\subsection{First Upper Bound}
	\begin{theorem}\label{thm:amain1}
		The output of \textsc{FB CWI} (Algorithm~\ref{algo:3}) with input $T$ satisfies:
		\[
		\mathbb{P}\left( \psi_T \neq i^*\right) \le 27K\log(K)\log(T) \cdot\exp\left(-c\cdot\frac{T}{\log(T)\log(K)H_{\text{cw}}} \right),
		\]
		where $c$ is a numerical constant, and we recall that $H_{\text{cw}}$ is defined by
		\[
		H_{\text{cw}} = \sum_{i\neq i^*} \frac{1}{\Delta^2_{i^*,i}}~,
		\]
		if $\Delta_{i^*,i}>0$ for all $i \in [K]\setminus \{i^*\}$ and $H_{\text{cw}} = +\infty$ otherwise.
	\end{theorem}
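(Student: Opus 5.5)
The argument follows the sketch of the first bound of Theorem~\ref{thm:main_fb}. We control, for each round $k$, the conditional failure probability $\mathbb{P}(i^*\notin A_{k+1}\mid i^*\in A_k, A_k)$, and then take a union bound over the at most $\log_{8/7}(K)$ rounds. If $H_{\text{cw}}=+\infty$ there is nothing to prove, so we assume every $\Delta_{i^*,i}>0$.

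Fix a round $k$ and write $n_k=\abs{A_k}$. Set $E_k=\{\alpha\in A_k:\Delta^{(k)}_{\alpha,(\lceil n_k/4\rceil)}\le 0\}$ and $\Delta_k=\Delta^{(k)}_{i^*,(\lceil n_k/8\rceil)}>0$. By Lemma~\ref{lem:pure_tech3}, $\abs{E_k}\ge\lceil n_k/4\rceil>\lceil n_k/8\rceil$. Hence, if $i^*$ lands in the bottom $\lceil n_k/8\rceil$ positions, some $\alpha\in E_k$ is ranked above it. For every $\alpha$ we have $S_k(\alpha)\le Z^{(w)}_k(\alpha)$, since $\min\{Z^{(s)}_k,0\}\le 0$. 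We therefore bound
\[
\mathbb{P}(i^*\notin A_{k+1}\mid A_k)\le \mathbb{P}\big(S_k(i^*)\le \tfrac12\Delta_k\big)+\sum_{\alpha\in E_k}\mathbb{P}\big(Z^{(w)}_k(\alpha)\ge \tfrac12\Delta_k\big).
\]

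\textbf{The terms $\alpha\in E_k$.} The row $(\Delta_{\alpha,\beta})_{\beta\in A_k}$ has its $\lceil n_k/4\rceil$-quantile at most $0$. Corollary~\ref{cor:quant}, applied with budget $\lceil B_k/2\rceil$ and $\epsilon=\Delta_k/2$, gives
\[
\mathbb{P}\big(Z^{(w)}_k(\alpha)\ge \Delta_k/2\big)\le \log(T)\exp\big(-c\,\Delta_k^2B_k/\log T\big).
\]
The rescaling between $q$ and $\Delta$ changes only constants, and the diagonal entry, or the exclusion of $\alpha$ itself, shifts quantile indices by at most one. This must be absorbed, possibly by using slightly adjusted quantile indices.

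\textbf{The term for $i^*$.} Every gap of $i^*$ is positive, so the strong-opponent part satisfies $\min\{Z^{(s)}_k(i^*),0\}\ge 0$ unless the empirical mean over $\lceil B_k/4\rceil$ samples deviates. By Hoeffding, this part falls below $-\Delta_k/4$ only with probability at most $\exp(-c B_k\Delta_k^2)$. Here the SH output is an arbitrary arm, but every gap $\Delta_{i^*,\cdot}\ge\Delta_{i^*,(1)}$; I would instead compare to the relevant gap $\Delta_{i^*,i^{*(s)}}\ge 0$ and simply use deviation $\Delta_k/4$ around a nonnegative mean. The weak part $Z^{(w)}_k(i^*)$ falls below $\Delta_{i^*,(\lceil n_k/8\rceil)}-\Delta_k/4=\tfrac34\Delta_k$ with probability at most $\log(T)\exp(-c\Delta_k^2B_k/\log T)$, again by Corollary~\ref{cor:quant}. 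Combining the two parts bounds $\mathbb{P}(S_k(i^*)\le \Delta_k/2)$. Multiplying by $\abs{E_k}\le K$ then gives a per-round failure probability of at most $cK\log(T)\exp(-c\Delta_k^2B_k/\log T)$.

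\textbf{Converting the exponent.} The main obstacle is turning this exponent into an expression in $H_{\text{cw}}$, uniformly over the random set $A_k$. The row of $i^*$ restricted to $A_k$ is a subset of its full row. Hence the $\lceil n_k/8\rceil$-th smallest entry of this sub-row is at least the $\lceil n_k/8\rceil$-th smallest entry of the full row, that is, $\Delta_k\ge\Delta_{i^*,(\lceil n_k/8\rceil)}$. Moreover, for any $m$,
\[
m/\Delta_{i^*,(m)}^2\le\sum_{j\le m}1/\Delta_{i^*,(j)}^2\le H_{\text{cw}}.
\]
With $B_k\ge T/(2n_k\log_{8/7}K)$, this gives
\[
\Delta_k^2B_k\ge \frac{\lceil n_k/8\rceil\,T}{16\,\lceil n_k/8\rceil\, n_k\log_{8/7}(K)\,\Delta_{i^*,(\lceil n_k/8\rceil)}^{-2}}\gtrsim \frac{T}{\log(K)H_{\text{cw}}},
\]
using $n_k\le 8\lceil n_k/8\rceil$.

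\textbf{Union bound and small-budget regime.} Summing over the at most $\log_{8/7}K$ rounds yields the claim, with the constant $27$ obtained by tracking the terms above. The small-budget regime, where $B_k$ is too small for Corollary~\ref{cor:quant} or SH to apply, needs no separate argument: there the stated bound exceeds $1$ after adjusting $c$.
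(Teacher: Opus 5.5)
Your proposal is correct and follows essentially the same route as the paper. You reduce to a per-round bound via some $\alpha\in E_k$ ranked above $i^*$, split at $\tfrac12\Delta_k$, control $Z^{(w)}_k$ with Corollary~\ref{cor:quant} and the strong part of $i^*$ with Hoeffding, convert $\Delta_k^2B_k$ into $T/(\log(K)H_{\text{cw}})$, and union bound over the rounds. The one small slip is that $\lceil n_k/4\rceil>\lceil n_k/8\rceil$ fails for $n_k\le 4$, but your conclusion survives because $i^*\notin E_k$ (the paper uses Lemma~\ref{lem:39} at this step), and conditioning on the SH output before applying Hoeffding is a slightly cleaner substitute for the paper's union bound over the $K-1$ possible opponents.
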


	\paragraph{Notation:} 
	Let $\bm{\Delta}^{(k)} \in [-1/2, 1/2]^{\abs{A_k}\times \abs{A_k}}$ denote the sub-matrix of $\bm{\Delta}$ restricted to rows and columns in $A_k$. For $\alpha \in A_k$, let $\left(\Delta^{(k)}_{\alpha,(i)}\right)_{i \in \{1, \dots, \abs{A_k}-1\}}$ denote the ordered gaps between $\alpha$ and arms in $A_k\setminus \{\alpha\}$ such that:
	\[
	\Delta^{(k)}_{ \alpha,(1)} \le \dots \le \Delta^{(k)}_{ \alpha,(\abs{A_k}-1)}~.
	\]
	Since the gaps sub-matrix for arms in $A_k$ is skew-symmetric (i.e., $\forall i,j: \bm{\Delta}_{i,j}^{(k)} = -\bm{\Delta}_{j,i}^{(k)}$), the number of arms such that $\Delta^{(k)}_{\alpha, \left(\ceil*{\abs{A_k}/4}\right)} \le 0$ is at least $\ceil*{\abs{A_k}/4}$ (see Lemma~\ref{lem:pure_tech3}). Let $E_k \subset A_k$ denote the last set of arms: 
	\[
	E_k := \left\lbrace \alpha \in A_k: \Delta^{(k)}_{\alpha, \left(\ceil*{\abs{A_k}/4}\right)} \le 0 \right\rbrace~.
	\]
	Finally, we remind the reader that for any $j \in [K]$, the quantities $\Delta_{j,(1)} \le \dots \le \Delta_{j,(K-1)}$ correspond to the ordered gaps between $j$ and all arms in $[K]\setminus \{j\}$.
	
	\paragraph{Proof of Theorem~\ref{thm:amain1}.}
	Suppose that $T \ge 8K\log_{8/7}(K)$, otherwise the bound is vacuous. Assume that $\Delta_{i^*,i}>0$ for all $i\neq i^*$. Otherwise, if $\Delta_{i^*,j}=0$ for some
	$j\neq i^*$, then $H_{\mathrm{cw}}=+\infty$ and the stated bound is trivial.
	
	\medskip
	
	We start by bounding the probability of the event $\psi_T \neq i^*$ by the probabilities that $i^*$ gets eliminated at some step $k$.
	Since $i^*\in A_1=[K]$, the event $\{\psi_T\neq i^*\}$ implies that there exists a round $k\in\{1,\dots,k_{\max}-1\}$ such that $i^*\in A_k$ but
	$i^*\notin A_{k+1}$. Hence,
	\begin{align}
		\mathbb{P}(\psi_T\neq i^*)
		&= \mathbb{P}\Big(\bigcup_{k=1}^{k_{\max}-1}\{i^*\in A_k,\ i^*\notin A_{k+1}\}\Big)\nonumber\\
		&\le \sum_{k=1}^{k_{\max}-1}\mathbb{P}(i^*\in A_k,\ i^*\notin A_{k+1})\nonumber\\
		&\le k_{\max}\cdot \max_{k\in\{1,\dots,k_{\max}-1\}} \mathbb{P}(i^*\notin A_{k+1}\mid i^*\in A_k)\,.\label{eq:bn1}
	\end{align}
	Recall that $k_{\max}\le \lceil\log_{8/7}(K)\rceil$.
	
	\medskip
	
	\noindent Next, we upper-bound $\mathbb{P}(i^*\notin A_{k+1}\mid i^*\in A_k)$. Fix $k\in\{1,\dots,k_{\max}-1\}$ and condition on $\{i^*\in A_k\}$.
	If $i^*\notin A_{k+1}$, then by the definition of the next set (keeping only the top fraction),
	the number of arms in $A_k$ with score smaller than $S_k(i^*)$ is at most
	$\ceil{\abs{A_k}/8}$. Equivalently, at least $\abs{A_k}-\ceil{\abs{A_k}/8}$ arms in $A_k$ have score at least $S_k(i^*)$. 
	
	\noindent By Lemma~\ref{lem:39} applied to the skew-symmetric matrix $\bm{\Delta}^{(k)}$, we conclude that if $\abs{A_k}\ge 3$ then the intersection between $E_k$ and $A_{k+1}$ (which have a size of $\abs{A_k}-\ceil{\abs{A_k}/8}$) is non-empty, therefore
	\[
	\exists \alpha\in E_k:\quad S_k(\alpha)\ge S_k(i^*).
	\]
	Otherwise, if $\abs{A_k} = 2$ and $i^* \in A_k$, we necessarily have $E_k = A_{k}\setminus \{i^*\}$. We conclude that
	\begin{align*}
		&\mathbb{P}(i^*\notin A_{k+1}\mid i^*\in A_k)
		\le \mathbb{P}\Big(\exists \alpha\in E_k:\ S_k(\alpha)\ge S_k(i^*)\Big)\\
		& \quad \le \underbrace{\mathbb{P}\left(S_k(i^*) \le \frac12\,\Delta^{(k)}_{i^*,(\ceil{\abs{A_k}/8})}\right)}_{\text{Term 1}}
		+\underbrace{\mathbb{P}\left(\exists \alpha\in E_k:\ S_k(\alpha)\ge \frac12\,\Delta^{(k)}_{i^*,(\ceil{\abs{A_k}/8})}\right)}_{\text{Term 2}}~.
	\end{align*}
	Denote $\Delta_k \coloneq \Delta^{(k)}_{i^*,(\ceil{\abs{A_k}/8})}$. By Lemma~\ref{lem:main1}, Terms 1 and 2 satisfy
	\begin{align}
		\mathbb{P}(i^*\notin A_{k+1}\mid i^*\in A_k)
		&\le (K+2\log(T))\exp\left(-c\,\frac{\Delta_k^2}{\log(\ceil{B_k/2})}\,B_k\right)\nonumber\\
		&\qquad + K\log(T)\exp\left(-c'\,\frac{\Delta_k^2}{\log(\ceil{B_k/2})}\,B_k\right)\nonumber\\
		&\le 3K\log(T) \exp\left(-c_1\,\frac{\Delta_k^2}{\log(\ceil{B_k/2})}\,B_k\right)~,
		\label{eq:bt1}
	\end{align}
	where $c_1=\min\{c,c'\}$.
	
	\medskip
	
	\noindent
	Next, we develop a bound on $\Delta^{(k)}_{i^*,(\ceil{\abs{A_k}/8})}$ using $H_{\mathrm{cw}} = \sum_{i \neq i^*} \frac{1}{\Delta_{i,i^*}^2}$.
	Recall $B_k=\left\lfloor \frac{T}{\abs{A_k}\log_{8/7}(K)}\right\rfloor$. We have
	\begin{align*}
		\Delta_k^2\,B_k
		&= \Delta_k^2\left\lfloor \frac{T}{\abs{A_k}\log_{8/7}(K)}\right\rfloor\\
		&\ge \Delta_k^2\cdot \frac{T}{2\abs{A_k}\log_{8/7}(K)}\\
		&\ge \frac{\Delta_k^2}{\lceil \abs{A_k}/8\rceil}\cdot \frac{T}{16\log_{8/7}(K)}\\
		&\ge \frac{T}{16\log_{8/7}(K)\cdot \sum_{i\neq i^*}\frac{1}{\Delta_{i^*,i}^2}}
		= \frac{T}{16\log_{8/7}(K)\,H_{\mathrm{cw}}}\,,
	\end{align*}
	where we used in the second line the fact that $T \ge 8K\log_{8/7}(K)$ and Lemma~\ref{lem:T} in the last line (using $\Delta_k \coloneq \Delta^{(k)}_{i^*,(\ceil{\abs{A_k}/8})}$).
	Plugging this into~\eqref{eq:bt1} yields
	\begin{align}
		\mathbb{P}(i^*\notin A_{k+1}\mid i^*\in A_k) &\le 3K\log(T) \cdot \exp\!\left(-c_1\,\frac{T}{16\log_{8/7}(K)\,\log(\lceil B_k/2\rceil)\,H_{\mathrm{cw}}}\right)\nonumber\\
		&\le 3K\log(T) \exp\!\left(-c_1'\,\frac{T}{\log(K)\log(T)\,H_{\mathrm{cw}}}\right)~,
		\label{eq:b_case2}
	\end{align}
	for numerical constants $c_1,c_1'>0$ (using $\log_{8/7}(K)=\Theta(\log K)$ and $\log(\lceil B_k/2\rceil)\le \log(T)$).
	
	\medskip
	
	\noindent Finally, we combine the bounds~\eqref{eq:bn1} and~\eqref{eq:b_case2}, and use $k_{\max}\le \ceil{\log_{8/7}(K)}$, we get
	\[
	\mathbb{P}(\psi_T\neq i^*) \le \lceil\log_{8/7}(K)\rceil\,\cdot 3K\log(T) \exp\!\left(-c_1'\,\frac{T}{\log(K)\log(T)\,H_{\mathrm{cw}}}\right)~,
	\]
	which yields the claimed form
	\[
	\mathbb{P}(\psi_T\neq i^*) \le 27K\log(K)\log(T)\exp\!\left(-c\,\frac{T}{\log(T)\log(K)H_{\mathrm{cw}}}\right)~,
	\]
	for a numerical constant $c>0$.

	\medskip 
	
	\noindent It remains to prove the following technical lemma.

	\begin{lemma}\label{lem:main1}
		Consider step $k$ in Algorithm~\ref{algo:3} and assume that $i^* \in A_k$. Let $\Delta_k \coloneq \Delta^{(k)}_{i^*, (\lceil \abs{A_k}/8\rceil)}$, then
		\begin{align*}
			\mathbb{P}\left( S_k(i^*) \le \frac{1}{2} \Delta_k\right)
			&\le (K+2\log(T))\exp\left( -c \frac{\Delta_{k}^2}{\log(\lceil B_k/2\rceil)}\,B_k\right),\\
			\mathbb{P}\left(\exists \alpha \in E_k:\ S_k(\alpha) \ge \frac{1}{2} \Delta_{k}\right)
			&\le K\log(T)\exp\left(-c'~\frac{\Delta_{k}^2}{\log(\lceil B_k/2\rceil)}\,B_k\right)~,
		\end{align*}
		for numerical constants $c,c'>0$.
	\end{lemma}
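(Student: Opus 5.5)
We prove the two inequalities separately, working conditionally on $A_k$ (which is independent of the samples drawn at round $k$). Recall $S_k(\alpha)=\min\{Z^{(s)}_k(\alpha),0\}+Z^{(w)}_k(\alpha)$ and that $Z^{(w)}_k(\alpha)$ is the output of \textsc{Range-Quantile} run on the $\abs{A_k}-1$ arms $\{(\alpha,\beta)\}_{\beta\in A_k\setminus\{\alpha\}}$ with budget $\ceil{B_k/2}$ and ranks $(\ceil{\abs{A_k}/8},\ceil{\abs{A_k}/4})$. We apply Corollary~\ref{cor:quant} with budget $\ceil{B_k/2}$. Up to the harmless discrepancy between $\abs{A_k}$ and $\abs{A_k}-1$ in the rank indices, which affects only constants, this gives, for every $\epsilon\in(0,1)$,
\[
\mathbb{P}\bigl(Z^{(w)}_k(\alpha)\notin[\Delta^{(k)}_{\alpha,(\ceil{\abs{A_k}/8})}-\epsilon,\ \Delta^{(k)}_{\alpha,(\ceil{\abs{A_k}/4})}+\epsilon]\bigr)\le \log(T)\exp\bigl(-c\,\epsilon^2 B_k/\log(\ceil{B_k/2})\bigr).
\]

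\textbf{First inequality.} Under $S_k(i^*)\le \Delta_k/2$, either $Z^{(w)}_k(i^*)\le \tfrac34\Delta_k$ or $\min\{Z^{(s)}_k(i^*),0\}\le -\tfrac14\Delta_k$.

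For the first event, use $\epsilon=\Delta_k/4$ in the display. Since $Z^{(w)}_k(i^*)\ge \Delta_k-\Delta_k/4$ outside the failure event, the probability is at most $\log(T)\exp(-c\Delta_k^2B_k/(16\log\ceil{B_k/2}))$.

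For the second event, observe that whatever opponent $i^{*(s)}$ SH returns, $\Delta_{i^*,i^{*(s)}}>0$ because $i^*$ is the CW. Hence $Z^{(s)}_k(i^*)\le -\Delta_k/4$ requires the empirical mean of $\ceil{B_k/4}$ samples to deviate below its mean by at least $\Delta_k/4$. Conditioning on the SH output and applying Hoeffding, this has probability at most $\exp(-c\Delta_k^2B_k)$. A union bound over the at most $K$ possible opponents, or simply conditioning, gives a factor of at most $K$.

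Adding the two contributions yields the stated $(K+2\log T)\exp(\cdot)$ form after adjusting constants.

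\textbf{Second inequality.} For $\alpha\in E_k$ we have $\min\{Z^{(s)}_k(\alpha),0\}\le 0$, so $S_k(\alpha)\le Z^{(w)}_k(\alpha)$. Moreover, $\Delta^{(k)}_{\alpha,(\ceil{\abs{A_k}/4})}\le 0$ by the definition of $E_k$. Hence $S_k(\alpha)\ge \Delta_k/2$ forces $Z^{(w)}_k(\alpha)\ge \Delta^{(k)}_{\alpha,(\ceil{\abs{A_k}/4})}+\Delta_k/2$. Applying the display with $\epsilon=\Delta_k/2$ bounds this by $\log(T)\exp(-c\Delta_k^2B_k/(4\log\ceil{B_k/2}))$. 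A union bound over $\abs{E_k}\le K$ completes the second inequality.

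\textbf{Expected obstacles.} The step needing the most care is the \emph{small-$|A_k|$ case}, $\abs{A_k}\le 4$, where the two ranks coincide and Corollary~\ref{cor:quant} must be invoked in its degenerate branch (minimum of empirical means). There one must verify that the row minimum of $i^*$ within $A_k$ is still at least $\Delta_k$; this holds since $\ceil{\abs{A_k}/8}=1$. One must also check the condition $T\ge 4$ and the fact that $\Delta_k\le 1/2<1$, so that $\epsilon\in(0,1)$ is admissible. Everything else is bookkeeping of constants.
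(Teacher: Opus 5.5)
Your proposal is correct and follows the paper's proof. For $i^*$, you bound the strong-opponent term by Hoeffding, using that $\Delta_{i^*,j}>0$ for every opponent SH might return, and the weak term by Corollary~\ref{cor:quant} with $\epsilon=\Delta_k/4$; for $\alpha\in E_k$, you use $S_k(\alpha)\le Z^{(w)}_k(\alpha)$ together with $\Delta^{(k)}_{\alpha,(\lceil\abs{A_k}/4\rceil)}\le 0$ and a union bound over $E_k$. Your case split for the first inequality is slightly tighter than the paper's, which carries a redundant extra term $\mathbb{P}(Z^{(w)}_k(i^*)\le\frac12\Delta_k)$, and your remarks on the $\abs{A_k}$ versus $\abs{A_k}-1$ indexing and the degenerate small-$\abs{A_k}$ branch make explicit points the paper leaves implicit.
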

	
	\begin{proof}
		Assume $T \ge 8K\log_{8/7}(K)$, this guarantees $B_k=\left\lfloor\frac{T}{\abs{A_k}\log_{8/7}(K)}\right\rfloor \ge 8$.
		Let $c_1$ denote the constant from Corollary~\ref{cor:quant}.
		
		\medskip
		\noindent{\bfseries Proof of the first bound:}
		Let $i_s^*$ be the strong opponent chosen for $i^*$ at step $k$ of Algorithm~\ref{algo:3}.
		Recall
		\[
		S_k(i^*)=\min\{Z_k^{(s)}(i^*),0\}+Z_k^{(w)}(i^*)\,.
		\]
		Therefore, we have
		\begin{equation}
			\mathbb{P}\!\left(S_k(i^*)\le \frac12\Delta_k\right) \le
			\mathbb{P}\!\left(Z_k^{(s)}(i^*)+Z_k^{(w)}(i^*)\le \frac12\Delta_k\right)
			+\mathbb{P}\!\left(Z_k^{(w)}(i^*)\le \frac12\Delta_k\right)\,.\label{eq:lem-main1-split}
		\end{equation}
		
		\smallskip
		\noindent Recall that the event $\left\lbrace Z_k^{(s)}(i^*)+ Z_k^{(w)}(i^*) \le \frac{1}{2}\Delta_k \right\rbrace$ implies that
		\[
		\left\lbrace Z_k^{(s)}(i^*) \le -\frac{1}{4}\Delta_k~\text{ or }~Z_k^{(w)}(i^*) \le \frac{3}{4}\Delta_k \right\rbrace~,
		\]
		Combining with Inequality~\ref{eq:lem-main1-split} we obtain
		{\small \begin{equation}\label{eq:ub}
				\mathbb{P}\left(S_k(i^*) \le \frac{1}{2}\Delta_k\right) \le \mathbb{P}\left(Z_k^{(s)}(i^*) \le -\frac{1}{4}\Delta_k\right)+2\mathbb{P}\left(Z_k^{(w)}(i^*) \le \frac{3}{4}\Delta_k \right)~.
		\end{equation}}
		\noindent We use Hoeffding's inequality (Lemma~\ref{lem:H}) to bound the first term in the upper bound above. For any fixed $i \in [K]\setminus\{i^*\}$ and $\epsilon>0$, we have
		\begin{equation*}
			\mathbb{P}\left(\hat{\Delta}_{i^*, i}-\Delta_{i^*, i} \le -\epsilon \right) \le \exp\left(-\frac{\epsilon^2B_k}{2} \right)~,	
		\end{equation*}
		where $\hat{\Delta}_{i^*, i}$ is the empirical mean of duels between $(i^*, i)$ computed using $\ceil{B_k/4}$ samples. Therefore, applying the bound above with $\epsilon = \Delta_k/4$ and a union bound over the arms, we have
		\begin{align}
			\mathbb{P}\left(Z_k^{(s)}(i^*) \le -\frac{1}{4}\Delta_k\right) &\le \mathbb{P}\left(Z_k^{(s)}(i^*)-\Delta_{i^*,i^*_s} \le -\frac{1}{4}\Delta_k\right)\nonumber\\
			&\le (K-1)\exp\left( -\frac{\Delta_k^2}{32}B_k\right)~.\label{eq:p1}
		\end{align}
		where we used the fact that $\Delta_{i^*, j} \ge 0$ for all $j \in [K]$. Now, using Corollary~\ref{cor:quant} which gives a guarantee on the output $Z_k^{(w)}(i^*)$, we have
		\begin{align}
			\mathbb{P}\left( Z_k^{(w)}(i^*) \le \frac{3}{4}\Delta_k \right) &= \mathbb{P}\left( Z_k^{(w)}(i^*) \le \Delta^{(k)}_{i^*,(\ceil{\abs{A_k}/8})}-\frac{1}{4}\Delta_k \right)\nonumber\\
			&\le \log\left(\ceil*{\frac{B_k}{2}}\right)\exp\left( -c_1 \frac{\Delta_k^2}{32\log(\ceil{B_k/2})}B_k\right)~. \label{eq:p2}
		\end{align}
		
		\smallskip
		\noindent We conclude by combining~\eqref{eq:p1},~\eqref{eq:p2} and~\eqref{eq:ub} that 
		\begin{equation*}
			\mathbb{P}\left(S_k(i^*) \le \frac{1}{2}\Delta_{k}\right) \le (K-1+2\log(T))\exp\left( -c\frac{\Delta_{k}^2}{\log(\ceil{B_k/2})}B_k\right)~,
		\end{equation*}
		where $c$ is a numerical constant.

		\medskip
		\noindent{\bfseries Proof of the second bound:}
		Fix $\alpha\in E_k$. By definition of $E_k$, we have
		\[
		\Delta^{(k)}_{\alpha,(\lceil \abs{A_k}/4\rceil)} \le 0~.
		\]
		Moreover, $\min\{Z_k^{(s)}(\alpha),0\}\le 0$, hence
		\begin{align*}
			\mathbb{P}\left(S_k(\alpha)\ge \frac12\Delta_k\right)
			&= \mathbb{P}\left(\min\{Z_k^{(s)}(\alpha),0\}+Z_k^{(w)}(\alpha)\ge \frac12\Delta_k\right)\\
			&\le \mathbb{P}\left(Z_k^{(w)}(\alpha)\ge \frac12\Delta_k\right)\\
			&\le \mathbb{P}\left(Z_k^{(w)}(\alpha)\ge \Delta^{(k)}_{\alpha,(\lceil \abs{A_k}/4\rceil)}+\frac12\Delta_k\right)\,,
		\end{align*}
		where the last step uses $\Delta^{(k)}_{\alpha,(\lceil \abs{A_k}/4\rceil)}\le 0$.
		Applying Corollary~\ref{cor:quant} to $Z_k^{(w)}(\alpha)$ then yields
		\begin{equation}\label{eq:lem-main1-eachalpha}
			\mathbb{P}\left(S_k(\alpha)\ge \frac12\Delta_k\right) \le \log\left(\left\lceil\frac{B_k}{2}\right\rceil\right)
			\exp\left(-c_1\,\frac{\Delta_k^2}{8\log(\lceil B_k/2\rceil)}\,B_k\right)~.
		\end{equation}
		
		\smallskip
		\noindent Finally, union bound over $\alpha\in E_k$ gives
		\[
		\mathbb{P}\left(\exists \alpha\in E_k:\ S_k(\alpha)\ge \frac12\Delta_k\right)
		\le \abs{E_k} \log\left(\left\lceil\frac{B_k}{2}\right\rceil\right) \exp\left(-c'\,\frac{\Delta_k^2}{\log(\lceil B_k/2\rceil)}B_k\right)~,
		\]
		\noindent We may bound $\abs{E_k}$ by $K$; hence the above yields the stated form
		\[
		\mathbb{P}\left(\exists \alpha \in E_k:\ S_k(\alpha) \ge \frac12 \Delta_k\right)
		\le K\log(T)\exp\left(-c'~\frac{\Delta_k^2}{\log(\lceil B_k/2\rceil)}\,B_k\right)~,
		\]
		for a numerical constant $c'>0$. This proves the second inequality and concludes the lemma.
	\end{proof}
	
	\subsection{Second Upper Bound}
	For each $i \neq i^*$, let $\Delta_{i,(k)}$ denote the ordered gaps $(\Delta_{i,j})_{i\neq j}$ as 
	\[
	\Delta_{i,(1)} \le \dots \le \Delta_{i, (K-1)}~,
	\]
	Denote by $K_{i;<0}$ the number of $j$ such that $\Delta_{i,j} < 0$. For each $i \in [K]$, let $s_i \le K_{i;<0}$, and $\bm{s}=(s_1, \dots, s_K)$. Here, we take the convention $K_{i^*;<0}=0$. We recall the expressions of the quantities $H_{\text{certify}}(\bm{s})$, $H_{\text{explore}}^{(0)}(\bm{s})$ and $H_{\text{explore}}^{(1)}(\bm{s})$
	\[
	H_{\text{certify}}(\bm{s}) = \sum_{i\neq i^*} \frac{1}{\Delta_{i,(s_i)}^2},\quad H_{\text{explore}}^{(1)}(\bm{s}) = \max_{i \neq i^*} \frac{K}{s_i\Delta_{i,(s_i)}^2}~\text{ and }~H^{(0)}_{\text{explore}}(\bm{s}) = \sum_{i \neq i^*} \frac{K}{s_i\Delta_{i,(s_i)}^2}~.
	\]
	
	\begin{theorem}\label{thm:amain2}
		For any $\bm{s}$ such that $1 \le s_i \le K_{i;<0}$, it holds that 
		\begin{equation*}
			\mathbb{P}\left( \psi_T \neq i^*\right) \le 47K\log(K)\log(T) \exp\!\left( -\frac{c_1}{\log^3(K)\log(T)}\frac{T-c_2\log^5(H^{(0)}_{\text{explore}}(\bm{s}))\cdot H^{(0)}_{\text{explore}}(\bm{s})}{H_{\text{explore}}^{(1)}(\bm{s})+H_{\text{certify}}(\bm{s})}\right), 
		\end{equation*}
		where $c_1$ and $c_2$ are numerical constants.
	\end{theorem}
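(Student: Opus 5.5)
The plan mirrors the proof of Theorem~\ref{thm:amain1}. I first reduce to a per-round bound through \eqref{eq:bn1}, that is, $\mathbb{P}(\psi_T\neq i^*)\le k_{\max}\max_k \mathbb{P}(i^*\notin A_{k+1}\mid i^*\in A_k)$. I then replace the threshold $\tfrac12\Delta_k$ by a quantity adapted to $\bm{s}$. Fix a round $k$ with $i^*\in A_k$ and recall $E_k$ (so $\abs{E_k}\ge\lceil\abs{A_k}/4\rceil$ by Lemma~\ref{lem:pure_tech3}). Order the values $\{\Delta_{\alpha,(s_\alpha)}\}_{\alpha\in E_k}$ increasingly and set $\bar\Delta_k:=\Delta_{E_k:\lceil 7\abs{E_k}/8\rceil}<0$ and $F_k:=\{\alpha\in E_k:\Delta_{\alpha,(s_\alpha)}\le\bar\Delta_k\}$. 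A counting argument in the spirit of Lemma~\ref{lem:39} (this is Lemma~\ref{lem:Fk}) shows the following: if $i^*$ is eliminated, then at least $\lceil\abs{F_k}/3\rceil$ arms of $F_k$ have score at least $S_k(i^*)$. This gives the decomposition \eqref{eq:ub:proba:1}.

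\emph{Term for $i^*$.} Since $\Delta_{i^*,\cdot}\ge 0$, the strong component is handled as in Lemma~\ref{lem:main1}. Hoeffding's inequality plus a union bound over the $K-1$ possible opponents gives $\mathbb{P}(Z^{(s)}_k(i^*)\le \bar\Delta_k/4)\le K e^{-cB_k\bar\Delta_k^2}$. Since $Z^{(w)}_k(i^*)$ lands near a nonnegative quantile (Corollary~\ref{cor:quant}), we get $\mathbb{P}(S_k(i^*)\le\tfrac12\bar\Delta_k)\le (K+2\log T)\exp(-cB_k\bar\Delta_k^2/\log T)$. To convert this, I bound $B_k\bar\Delta_k^2$ from below. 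At least $\abs{E_k}/8\gtrsim\abs{A_k}$ arms $\alpha$ satisfy $\Delta_{\alpha,(s_\alpha)}^2\le\bar\Delta_k^2$, hence $\abs{A_k}/\bar\Delta_k^2\lesssim H_{\text{certify}}(\bm{s})$. This yields $B_k\bar\Delta_k^2\gtrsim T/(\log(K)H_{\text{certify}}(\bm{s}))$.

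\emph{Term for $F_k$.} For $\alpha\in F_k$, the event $S_k(\alpha)\ge\tfrac12\bar\Delta_k$ requires either that SH fails to return an opponent $\beta$ with $\Delta_{\alpha,\beta}\le\tfrac34\Delta_{\alpha,(s_\alpha)}$, or that the empirical estimate of that gap overshoots. The SH step is applied to the arms $(\beta,\alpha)$, whose means are $-\Delta_{\alpha,\beta}$ up to an offset; Theorem~\ref{thm:main0} is used with $m=s_\alpha$ and $\epsilon=\tfrac14|\Delta_{\alpha,(s_\alpha)}|$. Adding the Hoeffding bound on the estimate, and using that $Z^{(w)}_k(\alpha)$ is near a nonpositive quantile, gives
\[
p_\alpha\le 3\log(T)\exp\!\Big(-c\frac{B_k s_\alpha\Delta_{\alpha,(s_\alpha)}^2}{K\log^3 K}\Big)+\exp(-cB_k\bar\Delta_k^2/\log T).
\]
Conditionally on $A_k$, these events are independent across $\alpha$, since each arm uses fresh samples. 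Hence the count $N:=\abs{\{\alpha\in F_k:S_k(\alpha)\ge\bar\Delta_k/2\}}$ is a sum of independent Bernoulli variables with parameters $p_\alpha$. A Chernoff bound then gives $\mathbb{P}(N\ge\lceil\abs{F_k}/3\rceil)\le \binom{\abs{F_k}}{\lceil\abs{F_k}/3\rceil}\prod p_\alpha\le (e\cdot 3\,\bar p)^{\abs{F_k}/3}$, where $\bar p$ is a geometric-mean-type quantity.

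\emph{Main obstacle.} The hard step is converting this product into the stated exponent, in particular handling the polylog prefactor $3\log T$ and the threshold $H^{(0)}_{\text{explore}}\log^5$. The plan is as follows. The product of the first terms equals $(3\log T)^{\abs{F_k}/3}$ times $\exp(-cB_k\sum_{\alpha\in F_k}\Gamma_\alpha/(K\log^3K))$. By the AM--HM inequality, $\sum_{\alpha\in F_k}\Gamma_\alpha\ge \abs{F_k}^2/\sum_\alpha \Gamma_\alpha^{-1}\ge \abs{F_k}^2 K^{-1}/H^{(0)}_{\text{explore}}$, so the exponent is at least $\abs{F_k}\,T/(H^{(0)}_{\text{explore}}\log^4K)$. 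When $T\ge c_2H^{(0)}_{\text{explore}}\log^5(H^{(0)}_{\text{explore}})$, this dominates the $\abs{F_k}\log\log T$ prefactor. Handling this requires a case split on whether $T$ is large relative to $H^{(0)}_{\text{explore}}$, using $\log T\lesssim\log H^{(0)}_{\text{explore}}+\log(T/H^{(0)}_{\text{explore}})$. The leftover exponent, obtained by keeping only a single factor $p_\alpha$ bounded via $\Gamma_\alpha\ge K/H^{(1)}_{\text{explore}}$, gives $\exp(-c(T-c_2H^{(0)}\log^5H^{(0)})/(\log^4(K)\log(T)H^{(1)}_{\text{explore}}))$. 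Combining the two terms, using $1/(a+b)\le\min(1/a,1/b)$ style bounds to merge the denominators into $H^{(1)}_{\text{explore}}+H_{\text{certify}}$, and union bounding over $k\le\lceil\log_{8/7}K\rceil$ rounds gives the claimed bound with constant $47$.
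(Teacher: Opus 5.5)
Your reduction (round-wise union bound, $E_k$, $\bar\Delta_k$, $F_k$, Lemma~\ref{lem:Fk}) and your treatment of the $i^*$ term, including $\abs{A_k}/\bar\Delta_k^2\le 32\,H_{\text{certify}}(\bm{s})$, match the paper. The gap is in the step meant to produce the $H^{(1)}_{\text{explore}}(\bm{s})$ rate. Keeping a single factor $p_\alpha$ and using $\Gamma_\alpha\ge K/H^{(1)}_{\text{explore}}(\bm{s})$ gives only the exponent $cB_k\Gamma_\alpha/(K\log^3K)\ge cB_k/(\log^3(K)\,H^{(1)}_{\text{explore}}(\bm{s}))$. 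Since $B_k\asymp T/(\abs{A_k}\log K)$, this is of order $T/(\abs{A_k}\log^4(K)\,H^{(1)}_{\text{explore}}(\bm{s}))$, which is a factor $\abs{A_k}$ weaker than the claimed exponent (up to $K$ in the first round).

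The $1/\abs{A_k}$ lost in the per-arm budget can only be recovered by aggregating over $\Theta(\abs{A_k})$ independent arms. The paper's argument is built for exactly this. It keeps the $\lceil 3\abs{F_k}/4\rceil$ arms of $F_k$ with the largest $\Gamma_\alpha$, so that Lemma~\ref{lem:T} gives a \emph{uniform} per-arm bound $p_k\le\exp(-cT/(\log^3(K)\log(T)\sum_{i\in E_k}K/\Gamma_i))$. It then dominates the count by a binomial whose tail, via the sub-Gaussian constant $\phi(p_k)$ of Lemma~\ref{thm:tech1}, has exponent of order $\abs{F_k}\log(1/p_k)$. It concludes with $\abs{E_k}/\sum_{i\in E_k}K/\Gamma_i\ge 1/H^{(1)}_{\text{explore}}(\bm{s})$.

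Your product route can be repaired, but the counting bound must be stated correctly. One has $\mathbb{P}(N\ge m)\le\sum_{\abs{S}=m}\prod_{\alpha\in S}p_\alpha\le\binom{\abs{F_k}}{m}\max_{\abs{S}=m}\prod_{\alpha\in S}p_\alpha$, and the maximum sits on the $m$ arms with the smallest $\Gamma_\alpha$. A bound through a geometric mean of all $p_\alpha$, $\alpha\in F_k$, is not valid (Maclaurin's inequality goes the other way). Likewise, an exponent summing $\Gamma_\alpha$ over all of $F_k$ cannot be paired with only $\abs{F_k}/3$ prefactors.

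A clean fix is to split each failure into an SH failure and an estimation/weak-quantile failure, and require $\lceil m/2\rceil$ occurrences of one type.
\begin{itemize}
\item The second type already gives the $H_{\text{certify}}(\bm{s})$ rate from a union bound over single arms, because $B_k\bar\Delta_k^2\gtrsim T/(\log(K)\,H_{\text{certify}}(\bm{s}))$ suffers no $1/\abs{A_k}$ loss.
\item For SH failures, on any $S$ with $\abs{S}=\lceil m/2\rceil\ge c\abs{A_k}$, use \emph{all} factors: $\sum_{\alpha\in S}\Gamma_\alpha\ge\max\{\abs{S}K/H^{(1)}_{\text{explore}}(\bm{s}),\ \abs{S}^2K/H^{(0)}_{\text{explore}}(\bm{s})\}$. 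The second bound is AM--HM; note it is $K$, not $K^{-1}$.
\item Half of the resulting exponent, of order $\abs{S}\,T/(\log^4(K)\,H^{(0)}_{\text{explore}}(\bm{s}))$, absorbs $\binom{\abs{F_k}}{\abs{S}}\le 2^{6\abs{S}}$ once $T\gtrsim H^{(0)}_{\text{explore}}(\bm{s})\log^4K$. The other half gives $\exp(-cT/(\log^4(K)\,H^{(1)}_{\text{explore}}(\bm{s})))$.
\end{itemize}
With these corrections, your heterogeneous-product argument is a valid and somewhat more elementary alternative to the paper's truncation plus optimal sub-Gaussian tail. It bypasses the $p_k\le 1/18$ calibration and Lemmas~\ref{lem:absorb} and~\ref{lem:pure_tech}. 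As written, however, the mechanism you give for the $H^{(1)}_{\text{explore}}$ term does not deliver it.
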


	\noindent We restate and extend the notation introduced in the last section.
	\paragraph{Notation:} 
	Let $\bm{\Delta}^{(k)} \in [-1/2, 1/2]^{\abs{A_k}\times \abs{A_k}}$ denote the sub-matrix of $\bm{\Delta}$ restricted to lines and rows in $A_k$. For $\alpha \in A_k$, let $\left(\Delta^{(k)}_{\alpha,(i)}\right)_{i \in \{1, \dots, \abs{A_k}-1\}}$ denote the ordered gaps between $\alpha$ and arms in $A_k$ such that:
	\[
	\Delta^{(k)}_{ \alpha,(1)} \le \dots \le \Delta^{(k)}_{ \alpha,(\abs{A_k}-1)}.
	\]

	\noindent Recall that since the gaps sub-matrix for arms in $A_k$ is skew-symmetric (i.e., $\forall i,j: \bm{\Delta}_{i,j}^{(k)} = -\bm{\Delta}_{j,i}^{(k)}$), the number of arms such that $\Delta^{(k)}_{\alpha, (\ceil*{\abs{A_k}/4})} \le 0$ is at least $\ceil*{\abs{A_k}/4}$ (see Lemma~\ref{lem:pure_tech3}). Let $E_k \subset A_k$ denote the last set of arms
	\[
	E_k := \left\lbrace \alpha \in A_k: \Delta^{(k)}_{\alpha, (\ceil{\abs{A_k}/4})} \le 0 \right\rbrace~.
	\]
	We rank the quantities $(\Delta_{\alpha, (s_{\alpha})})_{\alpha \in E_k}$. We denote the ranked sequence with ties broken arbitrarily $\left(\Delta_{E_k:i}\right)_{i \in [\abs{E_k}]}$
	\[
	\Delta_{E_k:1} \le \dots \le \Delta_{E_k:\abs{E_k}}~.
	\] 
	
	\noindent Define the quantity $\bar{\Delta}_k$ by
	\begin{equation}\label{eq:def_bar}
		\bar{\Delta}_k := \Delta_{E_k:\ceil*{\frac{7}{8}\abs{E_k}}}\le 0~.
	\end{equation}
	Observe that when $\bar{\Delta}_k = 0$, we necessarily have $\Delta_{i,(s_i)} = 0$ for some $i \in [K]\setminus \{i^*\}$, this implies in particular that $H_{\text{certify}} = \infty$ and the bound becomes loose. Therefore, in the remainder of this proof, we assume that $\bar{\Delta}_k <0$.
	
	\smallskip
	\noindent Let $F_k$ denote the subset of arms in $E_k$ such that $\Delta_{\alpha, (s_{\alpha})} \le \bar{\Delta}_k$. 
	\begin{equation}\label{eq:def_F}
		F_k := \left\lbrace \alpha \in E_k: \Delta_{\alpha, (s_{\alpha})} \le \bar{\Delta}_k\right\rbrace~.	
	\end{equation}
	
	\noindent Finally, we denote for each $i\neq i^*:~\Gamma_i := s_i \Delta_{i, (s_i)}^2$, and let $(\Gamma_{(i)})_{i \neq i^*}$ correspond to the ranked quantities $\Gamma_{(2)} \le \dots \le \Gamma_{(K)}$, with ties broken arbitrarily.

	\paragraph{Proof of Theorem~\ref{thm:amain2}.}
	Fix $\bm{s}$ such that $1\le s_i\le K_{i;<0}$ for all $i\neq i^*$ (and $K_{i^*;<0}=0$ by convention). Note that by the assumption of the uniqueness of the Condorcet winner we have $K_{i;<0} \ge 1$ for any $i \neq i^*$.
	Let $c>0$ be a numerical constant (chosen smaller than the constants appearing in Corollary~\ref{cor:quant} and Theorem~\ref{thm:main0}). We assume that $T \ge 8K\log_{8/7}(K)$, otherwise the bound of the theorem is vacuous.
	
	\medskip
	
	\noindent Similar to the proof of Theorem~\ref{thm:amain1}, we start by bounding the probability of the event $\psi_T \neq i^*$ by the probabilities that $i^*$ gets eliminated at some step $k$. We have
	\begin{equation}\label{eq:union_rounds_amain2}
		\mathbb{P}(\psi_T\neq i^*)
		\le \sum_{k=1}^{k_{\max}-1} \mathbb{P}(i^*\notin A_{k+1},\ i^*\in A_k)
		\le k_{\max}\cdot \max_{k\le k_{\max}-1}\mathbb{P}(i^*\notin A_{k+1}\mid i^*\in A_k)\,.
	\end{equation}
	Recall $k_{\max}\le \lceil \log_{8/7}(K)\rceil$. Fix $k \in \{1, \dots, k_{\max}-1\}$, we will first consider the case where $\abs{A_k} \ge 3$. The case where $\abs{A_k} = 2$ is simple and is left to the end of this proof.
	
	\medskip
	Next, we build the argument of our proof on the observation that given $i^* \in A_k$, the event $i^* \notin A_{k+1}$ implies in particular that the number of arms $\alpha \in A_k$ with a score $S_k(\alpha)$ larger than $S_k(i^*)$ is at least $\abs{A_k}-\ceil{\abs{A_k}/8}$. Therefore, the event $i^* \notin A_{k+1}$ implies that the number of arms in $F_k$ with a score $S_k(\cdot)$ larger than $S_k(i^*)$ is at least $\abs{A_{k+1} \cap F_k} \ge \ceil{\abs{F_k}/3}$ as stated in the following lemma
	\begin{lemma}\label{lem:Fk}
		Let $k \in \{1, \dots, k_{\max}-1\}$, recall the definition of $F_k$ given in~\eqref{eq:def_F}. We have, if $\abs{A_k} \ge 3$ then
		\[
		\abs{A_{k+1} \cap F_k} \ge \ceil*{\frac{1}{3}\abs{F_k}}~.
		\]
	\end{lemma}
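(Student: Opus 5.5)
This is a counting argument that uses only the sizes of $A_k$, $A_{k+1}$, $E_k$ and $F_k$. I would first lower bound $\abs{F_k}$, then upper bound the number of arms of $A_k$ discarded at round $k$, and compare the two.

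First, by definition of $\bar\Delta_k$ as the $\ceil{\tfrac78\abs{E_k}}$-th smallest of $(\Delta_{\alpha,(s_\alpha)})_{\alpha\in E_k}$, at least $\ceil{\tfrac78\abs{E_k}}$ arms of $E_k$ satisfy $\Delta_{\alpha,(s_\alpha)}\le\bar\Delta_k$. Hence
\[
\abs{F_k}\ge \tfrac78\abs{E_k}.
\]
By Lemma~\ref{lem:pure_tech3}, $\abs{E_k}\ge\ceil{\abs{A_k}/4}$, so
\[
\abs{F_k}\ge \tfrac{7}{32}\abs{A_k}.
\]

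Second, the algorithm keeps the top $\abs{A_k}-\ceil{\abs{A_k}/8}$ arms of $A_k$. At most $\ceil{\abs{A_k}/8}$ arms are therefore removed, and this holds in particular for arms of $F_k\subseteq A_k$. This gives
\[
\abs{A_{k+1}\cap F_k}\ge \abs{F_k}-\ceil{\abs{A_k}/8}.
\]
It therefore suffices to show
\[
\abs{F_k}-\ceil{\abs{A_k}/8}\ge \ceil{\abs{F_k}/3},
\]
or equivalently $\floor{2\abs{F_k}/3}\ge\ceil{\abs{A_k}/8}$.

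Write $n=\abs{A_k}$ and use $\abs{F_k}\ge \ceil{\tfrac78\ceil{n/4}}$. For large $n$, $\tfrac23\cdot\tfrac{7}{32}n=\tfrac{7}{48}n>\tfrac n8$, so the inequality holds with slack of order $n/48$. That slack is what has to absorb the rounding losses.

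The main obstacle is the integer rounding for small $n$. I would handle it by a finite case check: for each small $n\ge3$, compute the guaranteed $\abs{F_k}$ and $\ceil{n/8}$ and verify the inequality, and for $n$ beyond an explicit threshold use the linear bound.

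A first check suggests the inequality is tight or fails for $n\in\{3,4\}$. There $\ceil{n/4}=1$, so $\abs{E_k}\ge1$ and $\abs{F_k}\ge1$, while $\ceil{n/8}=1$ and $\floor{2/3}=0<1$. In that regime I would sharpen the bound on $\abs{E_k}$ directly from skew-symmetry. A $3\times3$ or $4\times4$ skew-symmetric matrix without ties yields at least two rows containing a nonpositive entry, which gives $\abs{E_k}\ge2$ and so $\abs{F_k}\ge2$. If a residual small case still resists, I would fall back on using that $i^*\notin F_k$ while $i^*$ is the arm assumed eliminated, which frees one elimination slot for the comparison.
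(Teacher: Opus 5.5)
Your proposal is correct and is essentially the paper's proof. You use the same chain $|F_k|\ge\lceil\tfrac78|E_k|\rceil$, $|E_k|\ge\lceil|A_k|/4\rceil$, $|A_{k+1}\cap F_k|\ge|F_k|-\lceil|A_k|/8\rceil$, the same reduction to $\lfloor\tfrac23|F_k|\rfloor\ge\lceil|A_k|/8\rceil$, and the same skew-symmetry fix for $|A_k|\in\{3,4\}$: at most one row is entirely positive, so $|E_k|\ge|A_k|-1\ge2$, and ties only help.

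Your $i^*$-based fallback is therefore never needed, and it should be avoided: the lemma is also used in the fixed-confidence analysis for an arbitrary top-scoring set, with no elimination of $i^*$. The only real difference is for $|A_k|\ge5$. There the paper replaces your linear bound plus finite check by the exact inequality $2\lceil 7m/8\rceil\ge3\lceil m/2\rceil$ for $m=|E_k|\ge2$, together with $\lceil|A_k|/8\rceil\le\lceil|E_k|/2\rceil$. Your route also works: the linear bound suffices once $|A_k|\ge74$, and the cases $5\le|A_k|\le73$ check out.
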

	This lemma implies that if $i^*$ is eliminated at step $k$ (i.e., $i^* \notin A_{k+1}$), then many ``bad'' arms in $F_k$ beat $i^*$. More precisely, since $A_{k+1}$ consists of the top-scoring arms at step $k$, every $\alpha\in A_{k+1}$ satisfies
	$S_k(\alpha)\ge S_k(i^*)$ whenever $i^*\notin A_{k+1}$. Therefore,
	\[
	\{i^*\notin A_{k+1}\}\ \subseteq\ 
	\left\{ \big|\{\alpha\in F_k:\ S_k(\alpha)\ge S_k(i^*)\}\big|
	\ \ge\ \left\lceil\frac{1}{3}\abs{F_k}\right\rceil \right\}.
	\]
	Introduce the threshold $\frac{1}{2}\bar{\Delta}_k$ defined by~\eqref{eq:def_bar} and split
	\begin{align*}
		\mathbb{P}(i^*\notin A_{k+1}\mid i^*\in A_k)
		&\le \mathbb{P}\!\left(\big|\{\alpha\in F_k:\ S_k(\alpha)\ge S_k(i^*)\}\big|
		\ge \left\lceil\frac{1}{3}\abs{F_k}\right\rceil\right)\\
		&\le \underbrace{\mathbb{P}\!\left(S_k(i^*)\le \frac{1}{2}\bar{\Delta}_k\right)}_{\text{Term 1}} + \underbrace{\mathbb{P}\!\left(\big|\{\alpha\in F_k:\ S_k(\alpha)\ge \frac{1}{2}\bar{\Delta}_k\}\big| \ge \left\lceil\frac{1}{3}\abs{F_k}\right\rceil\right)}_{\text{Term 2}}\,.
	\end{align*}
	
	\medskip
	\noindent The following lemma is a key step in the proof, we postponed its proof to the next subsection.
	\begin{lemma}\label{lem:main2}
		Under the assumptions of Theorem~\ref{thm:amain2}, consider step $k$ in Algorithm~\ref{algo:3}. Then, we have 
		{\small
			\begin{align*}
				\mathbb{P}\left(S_k(i^*) \le \frac{1}{2}\bar{\Delta}_k \mid i^* \in A_k \right) &\le (K+\log(T))\exp\left(-c\frac{\bar{\Delta}_k^2}{\log(B_k)}B_k\right) \\
				\mathbb{P}\left( \abs{\{ \alpha \in F_k: \frac{1}{2} \bar{\Delta}_k \le S_k(\alpha)\}} \ge \ceil*{\frac{1}{3} \abs{F_k}}\right) &\le \exp\left( -\frac{c}{\log^3(K)\log(T)}\cdot\frac{T-c'\log^5(H^{(0)}_{\text{explore}}(\bm{s}))\cdot H^{(0)}_{\text{explore}}(\bm{s})}{H_{\text{explore}}^{(1)}(\bm{s})}\right)~,
		\end{align*}}
		where $c$ and $c'$ are positives numerical constants.
	\end{lemma}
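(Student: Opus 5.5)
The two bounds in Lemma~\ref{lem:main2} are handled separately.

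\textbf{First bound.} We follow the proof of the first bound of Lemma~\ref{lem:main1}, with $\Delta_k$ replaced by $\bar{\Delta}_k$. Since $\bar{\Delta}_k<0$ and $S_k(i^*)=\min\{Z_k^{(s)}(i^*),0\}+Z_k^{(w)}(i^*)$, the event $S_k(i^*)\le \bar{\Delta}_k/2$ forces $Z_k^{(s)}(i^*)\le \bar{\Delta}_k/4$ or $Z_k^{(w)}(i^*)\le -|\bar{\Delta}_k|/4$. For the first event, every gap $\Delta_{i^*,j}$ is nonnegative, so Hoeffding's inequality applied to the $\lceil B_k/4\rceil$ duels, together with a union bound over the $K-1$ possible strong opponents, gives a bound $(K-1)\exp(-\bar{\Delta}_k^2B_k/32)$. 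For the second event, the row of $i^*$ inside $A_k$ is nonnegative, so its $\lceil|A_k|/8\rceil$-quantile is $\ge 0$. Corollary~\ref{cor:quant} with $\epsilon=|\bar{\Delta}_k|/4$ and budget $\lceil B_k/2\rceil$ then gives $\log(T)\exp(-c\bar{\Delta}_k^2B_k/\log B_k)$.

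\textbf{Second bound.} For each $\alpha\in F_k$ we have $S_k(\alpha)\le \min\{Z_k^{(s)}(\alpha),0\}+Z_k^{(w)}(\alpha)$, and the weak part satisfies $Z_k^{(w)}(\alpha)\le |\bar{\Delta}_k|/4$ except on an event of probability $\log(T)\exp(-c\bar{\Delta}_k^2B_k/\log B_k)$. It is controlled as in the second bound of Lemma~\ref{lem:main1}, because $\alpha\in E_k$ has a nonpositive $\lceil|A_k|/4\rceil$-quantile. It therefore suffices that $Z_k^{(s)}(\alpha)\le \frac34\bar{\Delta}_k$.

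This in turn holds when two things happen. First, SH returns an opponent $\beta$ with $\Delta_{\alpha,\beta}\le \Delta_{\alpha,(s_\alpha)}/2\cdot$(const). This is Theorem~\ref{thm:main0} with $m=s_\alpha$ and $\epsilon=|\Delta_{\alpha,(s_\alpha)}|/2$, and it fails with probability at most $\exp(-cB_k\Gamma_\alpha/(K\log^3K))$. Second, the empirical estimate over $\lceil B_k/4\rceil$ samples is accurate to $|\bar{\Delta}_k|/8$, which uses $\Delta_{\alpha,(s_\alpha)}\le\bar{\Delta}_k$. Both steps use fresh samples conditionally on $A_k$, so the events $\{S_k(\alpha)\ge \bar{\Delta}_k/2\}$ for $\alpha\in F_k$ are conditionally independent, each with probability at most $p_\alpha:=\exp(-cB_k\Gamma_\alpha/(K\log^3K))+2\log(T)\exp(-c\bar{\Delta}_k^2B_k/\log B_k)$.

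The count in Term 2 is then a sum of independent Bernoullis. Bounding $\mathbb{P}(\sum X_\alpha\ge\lceil|F_k|/3\rceil)$ by $\binom{|F_k|}{\lceil |F_k|/3\rceil}\max\prod p_\alpha$, i.e.\ by $(3e\bar p)^{|F_k|/3}$ with $\bar p$ a geometric-type average, reduces the problem to showing that the product of the $|F_k|/3$ largest $p_\alpha$ is small.

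\textbf{Main obstacle.} The hard step is converting $\sum_\alpha B_k\Gamma_\alpha/K$ into $T/H^{(1)}_{\text{explore}}$ with a deficit of $H^{(0)}_{\text{explore}}$. Since $B_k\asymp T/(|A_k|\log K)$, each of the $|F_k|/3$ worst arms of $F_k$ contributes exponent $\gtrsim T\Gamma_\alpha/(|A_k|K\log^4K)$. To handle this I would sort the $\Gamma_{(i)}$. If $T\gtrsim \log^5(H^{(0)})H^{(0)}$, then at most a small fraction of arms have $T\Gamma_\alpha/K\lesssim |A_k|\log^{4}K\log(\cdot)$. Indeed, $\#\{\alpha:\Gamma_\alpha\le x\}\le x H^{(0)}/K$ by Markov applied to $\sum K/\Gamma$, so these arms fall outside the worst third of $F_k$, given that $|F_k|\gtrsim|A_k|/8\cdot 1/4$ by construction of $\bar{\Delta}_k$. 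For the remaining arms $p_\alpha$ is tiny. At least one factor of the product then yields $\exp(-cT/(\log^4K\log T\,H^{(1)}_{\text{explore}}))$, using $\Gamma_\alpha\ge K/H^{(1)}_{\text{explore}}$, after subtracting the $H^{(0)}$ deficit. The polylog bookkeeping (the $\log^5$ term and the $\log T$ arising from $\log B_k$) is where I expect the constants to be delicate.
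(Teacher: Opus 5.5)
\textbf{First bound.} This matches the paper's argument exactly: the event forces $Z_k^{(s)}(i^*)\le\frac14\bar\Delta_k$ or $Z_k^{(w)}(i^*)\le\frac14\bar\Delta_k$, and you then apply Hoeffding with a union bound over opponents, and Corollary~\ref{cor:quant} using that the row of $i^*$ is positive.

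\textbf{Second bound: your route versus the paper's.} Here you take a different route.
\begin{itemize}
\item You bound the count by a union bound over $\lceil|F_k|/3\rceil$-subsets of independent, heterogeneous events. You locate the bad arms by a Markov count on $\sum_i K/\Gamma_i$.
\item The paper first discards the quarter of $F_k$ with the smallest $\Gamma_\alpha$. Via Lemma~\ref{lem:T} and Lemmas~\ref{lem:prob_f},~\ref{lem:absorb}, it obtains a \emph{uniform} per-arm bound $p_k\le\frac{1}{18}\wedge\exp\bigl(-cT/(\log^4K\log T\sum_{i\in E_k}K/\Gamma_i)\bigr)$.
\item It then applies a binomial tail with the Bernoulli sub-Gaussian constant $\phi(p)\le1/(2\log(1/p))$, which gives an exponent of order $|F_k|\log(1/p_k)$.
\end{itemize}
Both are small-$p$ Chernoff arguments, and yours can be made to work. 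However, its final step fails as written.

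\textbf{The gap.} You claim that ``at least one factor of the product then yields $\exp(-cT/(\log^4K\log T\,H^{(1)}_{\text{explore}}))$''. This is false.
\begin{itemize}
\item Since $B_k\asymp T/(|A_k|\log K)$, a single factor only gives $p_\alpha\le\exp(-cB_k\Gamma_\alpha/(K\log^3K))\approx\exp\bigl(-cT/(|A_k|\log^4K\,H^{(1)}_{\text{explore}})\bigr)$. This misses the target by a factor of order $|A_k|$ in the exponent, up to logarithms, and $|A_k|$ is as large as $K$ in early rounds.
\item No single arm does better when, for example, all $\Gamma_i$ equal $K/H^{(1)}_{\text{explore}}$.
\item The $|A_k|$ is cancelled only by aggregation. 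Your Markov count shows that at least $\lceil|F_k|/3\rceil-|F_k|/6\gtrsim|A_k|$ of the $\lceil|F_k|/3\rceil$ largest $p_\alpha$ are good. You must multiply all of them.
\item Each good factor must also be small enough to absorb $\binom{|F_k|}{\lceil|F_k|/3\rceil}\le(3e)^{|F_k|/3+1}$ and the $\log T$ prefactors. This is what the deficit condition $T\gtrsim H^{(0)}_{\text{explore}}\log^5H^{(0)}_{\text{explore}}$ is for, and it is exactly what the factor $|F_k|$ in the paper's $\exp(-|F_k|/(864\phi(p_k)))$ encodes.
\end{itemize}

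\textbf{The weak-score term.} The same aggregation is needed for the weak-score part of your $p_\alpha$, which you never relate to $H^{(1)}_{\text{explore}}$. That term is governed by $\bar\Delta_k$, i.e.\ by a certification-type quantity. Using $s_i\le K$, one has $|A_k|/\bar\Delta_k^2\le 32\sum_{i\in E_k}\Delta_{i,(s_i)}^{-2}\le 32\sum_{i\in E_k}K/\Gamma_i\le 32|E_k|H^{(1)}_{\text{explore}}$. So again only a product over $\Omega(|E_k|)$ factors removes the $|E_k|$.

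\textbf{A constant slip.} With $\epsilon=|\Delta_{\alpha,(s_\alpha)}|/2$ in Theorem~\ref{thm:main0}, you only get $\Delta_{\alpha,\beta}<\frac12\bar\Delta_k$. After an estimation error of $|\bar\Delta_k|/8$, this gives $Z_k^{(s)}(\alpha)<\frac38\bar\Delta_k$, not $\frac34\bar\Delta_k$. Take $\epsilon=|\Delta_{\alpha,(s_\alpha)}|/8$, which is the paper's $\frac78\bar\Delta_k$ step; it costs only a constant in the exponent.
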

	A direct application of the lemma above gives (for numerical constants $c,c'>0$)
	\begin{align}
		\mathbb{P}(i^*\notin A_{k+1}\mid i^*\in A_k)
		&\le (K+\log(T))\exp\!\left(-c\,\frac{\bar{\Delta}_k^2}{\log(B_k)}\,B_k\right)\nonumber\\
		&\quad + \exp\!\left(
		-\frac{c}{\log^3(K)\log(T)}\cdot
		\frac{T-c'\log^5(H^{(0)}_{\mathrm{explore}}(\bm{s}))\,H^{(0)}_{\mathrm{explore}}(\bm{s})}
		{H^{(1)}_{\mathrm{explore}}(\bm{s})}
		\right).
		\label{eq:tm2_rewrite}
	\end{align}
	
	\medskip
	
	\noindent Next, we will convert the dependence of the bound on $\bar{\Delta}_k$ into $H_{\mathrm{certify}}(\bm{s})$.
	
	\noindent Recall $\abs{E_k}\ge \lceil \abs{A_k}/4\rceil$. Since $\bar{\Delta}_k=\Delta_{E_k:\lceil \frac{7}{8}\abs{E_k}\rceil}$,
	and the sequence $(\Delta_{E_k:i})_{i}$ is non-decreasing and non-positive, the squared sequence $(\Delta_{E_k:i}^2)_i$ is non-increasing. Applying Lemma~\ref{lem:T} to $(\Delta_{E_k:i}^2)_{i\in[\abs{E_k}]}$ yields
	\begin{align}
		\abs{A_k}\cdot \frac{1}{\bar{\Delta}_k^2} &\le 4\abs{E_k}\cdot \frac{1}{\bar{\Delta}_k^2}
		\le 32\cdot \left\lceil \frac{\abs{E_k}}{8}\right\rceil \cdot \frac{1}{\Delta_{E_k:\lceil\frac{7}{8}\abs{E_k}\rceil}^2}
		\le 32\sum_{\alpha\in E_k}\frac{1}{\Delta_{\alpha,(s_\alpha)}^2}
		\le 32 H_{\mathrm{certify}}(\bm{s}).
		\label{eq:fc_bound_0_rewrite}
	\end{align}
	Using $B_k=\left\lfloor \frac{T}{\abs{A_k}\log_{8/7}(K)}\right\rfloor\ge \frac{T}{2\abs{A_k}\log_{8/7}(K)}$
	(and $\log(B_k)\le \log(T)$), we obtain
	\begin{align}
		(K+\log(T))\exp\!\left(-c\,\frac{\bar{\Delta}_k^2}{\log(B_k)}\,B_k\right)
		&\le 2(K+\log(T)) \exp\!\left( -c'\,\frac{T}{H_{\mathrm{certify}}(\bm{s})\,\log(T)\,\log(K)} \right)\,,
		\label{eq:tm3_rewrite}
	\end{align}
	for a numerical constant $c'>0$ (absorbing $\log_{8/7}(K)=\Theta(\log K)$ into constants).
	
	\medskip

	\noindent Next combine~\eqref{eq:tm2_rewrite} and~\eqref{eq:tm3_rewrite}, and using
	$\exp(-a)+\exp(-b)\le 2\exp(-\min\{a,b\})$, we get that if $\abs{A_k} \ge 3$, we have
	\[
	\mathbb{P}(i^*\notin A_{k+1}\mid i^*\in A_k) \le 3(K+\log(T))\exp\!\left(-\frac{c}{\log^3(K)\log(T)}\cdot
	\frac{T-c_2\log^5(H^{(0)}_{\mathrm{explore}}(\bm{s}))\,H^{(0)}_{\mathrm{explore}}(\bm{s})} {H^{(1)}_{\mathrm{explore}}(\bm{s})+H_{\mathrm{certify}}(\bm{s})}\right)\,,
	\]
	for a numerical constant $c_2>0$ (renaming constants). To conclude we need to consider the edge case where $\abs{A_k} = 2$ (last iteration). In this case we have $\abs{E_k} = \abs{F_k} =: \{\alpha\}$. Therefore,
	\begin{align*}
		\mathbb{P}(i^*\notin A_{k+1}\mid i^*\in A_k)
		&\le \mathbb{P}\!\left( S_k(\alpha)\ge S_k(i^*)\right)\\
		&\le \mathbb{P}\!\left(S_k(i^*)\le \frac{1}{2}\bar{\Delta}_k\right) + \mathbb{P}\!\left( S_k(\alpha)\ge \frac{1}{2}\bar{\Delta}_k \right)\,.
	\end{align*}
	The first term in the upper bound can be bounded using~\eqref{lem:main2}, the second term can be bounded using Lemma~\ref{lem:prob_f}. The resulting bound is smaller than the one obtained when $\abs{A_k}\ge 3$.
	
	\smallskip
	\noindent Finally, using~\eqref{eq:union_rounds_amain2} and $k_{\max}\le \lceil\log_{8/7}(K)\rceil$,
	and absorbing $\lceil\log_{8/7}(K)\rceil$ and additive logarithms into the prefactor, we obtain
	\[
	\mathbb{P}(\psi_T\neq i^*) \le 47K\log(K)\log(T) \exp\!\left(-\frac{c_1}{\log^3(K)\log(T)}\cdot \frac{T-c_2\log^5(H^{(0)}_{\mathrm{explore}}(\bm{s}))\,H^{(0)}_{\mathrm{explore}}(\bm{s})}
	{H^{(1)}_{\mathrm{explore}}(\bm{s})+H_{\mathrm{certify}}(\bm{s})}\right)\,,
	\]
	which is the claim of Theorem~\ref{thm:amain2}.
	
	\subsection{Proofs of Technical Lemmas}
	\subsubsection{Proof of Lemma~\ref{lem:Fk}}
	\begin{proof}
		Recall $\bar{\Delta}_k=\Delta_{E_k:\lceil \frac78 \abs{E_k}\rceil}$ and
		$F_k=\{\alpha\in E_k:\Delta_{\alpha,(s_\alpha)}\le \bar{\Delta}_k\}$, hence
		\begin{equation}\label{eq:Fk_quant}
			\abs{F_k}\ \ge\ \Big\lceil \frac78 \abs{E_k}\Big\rceil.
		\end{equation}
		Algorithm~\ref{algo:3} keeps $\abs{A_{k+1}}=\abs{A_k}-\ceil*{\abs{A_k}/8}$ arms, so for any
		$A_{k+1}\subseteq A_k$ and $F_k\subseteq A_k$,
		\begin{equation}\label{eq:inter_det}
			\abs{A_{k+1}\cap F_k}\ \ge \abs{A_{k+1}}+\abs{F_k}-\abs{A_k} = \abs{F_k}-\ceil*{\frac{\abs{A_k}}{8}}~.
		\end{equation}
		
		\medskip
		\noindent\textbf{Case 1: $\abs{A_k}\ge 5$.}
		By Lemma~\ref{lem:pure_tech3}, $\abs{E_k}\ge \lceil \abs{A_k}/4\rceil\ge 2$, hence
		$\lceil \abs{A_k}/8\rceil\le \lceil \abs{E_k}/2\rceil$.
		Moreover, for every integer $m\ge 2$,
		\begin{equation}\label{eq:keyceil}
			2\Big\lceil \frac{7m}{8}\Big\rceil \ge 3\Big\lceil \frac{m}{2}\Big\rceil~,
		\end{equation}
		(which follows by writing $m=8q+r$ and checking $r\in\{0,\dots,7\}$; the only delicate
		residue $r=1$ is harmless since then $q\ge 1$).
		Applying~\eqref{eq:keyceil} with $m=\abs{E_k}$ and using~\eqref{eq:Fk_quant} gives
		$\lfloor \tfrac23 \abs{F_k}\rfloor \ge \lceil \abs{E_k}/2\rceil \ge \lceil \abs{A_k}/8\rceil$.
		Plugging into~\eqref{eq:inter_det} yields
		\[
		\abs{A_{k+1}\cap F_k} \ge \abs{F_k}-\Big\lfloor \frac23 \abs{F_k}\Big\rfloor = \Big\lceil \frac13\abs{F_k}\Big\rceil~.
		\]
		
		\medskip
		\noindent\textbf{Case 2: $\abs{A_k}\in\{3,4\}$.}
		Here $\lceil \abs{A_k}/8\rceil=1$. By skew-symmetry of $\Delta^{(k)}$, at most one row can have
		all off-diagonal entries $>0$, hence at least $\abs{A_k}-1$ rows have $\Delta^{(k)}_{\alpha,(1)}\le 0$,
		so $\abs{E_k}\ge \abs{A_k}-1\in\{2,3\}$. Then $\lceil \tfrac78\abs{E_k}\rceil=\abs{E_k}$, and since $F_k\subseteq E_k$,
		\eqref{eq:Fk_quant} implies $\abs{F_k}=\abs{E_k}\ge \abs{A_k}-1$. Using~\eqref{eq:inter_det},
		\[
		\abs{A_{k+1}\cap F_k}\ \ge\ \abs{F_k}-1~,
		\]
		and for $\abs{F_k}\in\{2,3\}$ this satisfies $\abs{F_k}-1\ge \lceil \abs{F_k}/3\rceil$.
		
		\medskip
		\noindent Combining both cases proves that for every $\abs{A_k}\ge 3$,
		\[
		\abs{A_{k+1}\cap F_k}\ \ge\ \Big\lceil \frac13 \abs{F_k}\Big\rceil~. 
		\]
	\end{proof}

	\subsubsection{Proof of Lemma~\ref{lem:main2}}
	\begin{proof}
		Fix a round $k\in\{1,\dots,k_{\max}-1\}$ and recall $\bar{\Delta}_k\le 0$ by construction.
		
		\bigskip
		
		\noindent\textit{Proof of the first bound }\;
		\[
		\mathbb{P}\!\left(S_k(i^*)\le \frac12\bar{\Delta}_k\right) \le (K+\log(T))\exp\!\left(-c\frac{\bar{\Delta}_k^2}{\log(B_k)}B_k\right)\, .
		\]
		If $\bar{\Delta}_k=0$ the bound is immediate. Assume $\bar{\Delta}_k<0$.
		Recall $S_k(i^*)=\min\{Z_k^{(s)}(i^*),0\}+Z_k^{(w)}(i^*)$. Then
		\begin{align*}
			\mathbb{P}\!\left(S_k(i^*)\le \frac12\bar{\Delta}_k\right) &\le \mathbb{P}\!\left(Z_k^{(s)}(i^*)+Z_k^{(w)}(i^*)\le \frac12\bar{\Delta}_k\right) +\mathbb{P}\!\left(Z_k^{(w)}(i^*)\le \frac12\bar{\Delta}_k\right)\\
			&\le \mathbb{P}\!\left(Z_k^{(s)}(i^*)\le \frac14\bar{\Delta}_k\right) +2\,\mathbb{P}\!\left(Z_k^{(w)}(i^*)\le \frac14\bar{\Delta}_k\right),
		\end{align*}
		where the last step uses $\bar{\Delta}_k<0$, hence
		$\{Z_k^{(w)}(i^*)\le \frac12\bar{\Delta}_k\}\subseteq \{Z_k^{(w)}(i^*)\le \frac14\bar{\Delta}_k\}$.
		
		\medskip
		
		\noindent The first term in the last upper-bound is bounded using Hoeffding and a union bound over the opponent choice,
		\[
		\mathbb{P}\!\left(Z_k^{(s)}(i^*)\le \frac14\bar{\Delta}_k\right) \le (K-1)\exp\!\left(-\frac{\bar{\Delta}_k^2}{32}B_k\right)\,.
		\]
		The second term is bounded by Corollary~\ref{cor:quant},
		\begin{align*}
			\mathbb{P}\!\left(Z_k^{(w)}(i^*)\le \frac14\bar{\Delta}_k\right) &\le \mathbb{P}\!\left(Z_k^{(w)}(i^*)\le \Delta^{(k)}_{i^*, (\ceil{\abs{A_k}/8})}+\frac14\bar{\Delta}_k\right)\\
			&\le \log(T)\exp\!\left(-c\,\frac{\bar{\Delta}_k^2}{\log(\lceil B_k/2\rceil)}\,B_k\right)\,.	
		\end{align*}
		Finally, absorbing $\log(\lceil B_k/2\rceil)$ into $\log(B_k)$ and constants yields
		\[
		\mathbb{P}\!\left(S_k(i^*)\le \frac12\bar{\Delta}_k\right)
		\le (K+\log(T))\exp\!\left(-c\,\frac{\bar{\Delta}_k^2}{\log(B_k)}\,B_k\right).
		\]
		
		\bigskip
		
		\noindent\textit{Proof of the second bound }\;
		{\small \[
			\mathbb{P}\!\left(|\{\alpha\in F_k:\ S_k(\alpha)\ge \frac12\bar{\Delta}_k\}|
			\ge \lceil \abs{F_k}/3\rceil\right) \le \exp\left( -\frac{c}{\log^3(K)\log(T)}\cdot\frac{T-c'\log^5(H^{(0)}_{\text{explore}}(\bm{s}))\cdot H^{(0)}_{\text{explore}}(\bm{s})}{H_{\text{explore}}^{(1)}(\bm{s})}\right)\,.
			\]}
		We start from the indicator-sum form
		\begin{equation}\label{eq:exp_t2}
			\mathbb{P}\!\left(|\{\alpha\in F_k:\ S_k(\alpha)\ge \tfrac12\bar{\Delta}_k\}| \ge \left\lceil\frac{\abs{F_k}}{3}\right\rceil\right) =
			\mathbb{P}\!\left(\sum_{\alpha\in F_k}\mathds{1}\!\left(S_k(\alpha)\ge \tfrac12\bar{\Delta}_k\right) \ge \left\lceil\frac{\abs{F_k}}{3}\right\rceil\right)\,.
		\end{equation}
		
		\medskip
		
		\noindent Next, we keep only the hardest $3/4$ of $F_k$. More formally, we rank $\Gamma_\alpha=s_\alpha\Delta_{\alpha,(s_\alpha)}^2$ over $\alpha\in F_k$ as
		$\Gamma_{F_k:1}\le \cdots \le \Gamma_{F_k:\abs{F_k}}$, and let $F_k^{(3/4)}$ be the subset
		containing the top $\lceil 3\abs{F_k}/4\rceil$ arms with largest $\Gamma_\alpha$.
		Then $\abs{F_k\setminus F_k^{(3/4)}} = \floor{\abs{F_k}/4}$, so
		\[
		\left\{\sum_{\alpha\in F_k}\mathds{1}(S_k(\alpha)\ge \tfrac12\bar{\Delta}_k) \ge \left\lceil\frac{\abs{F_k}}{3}\right\rceil\right\}
		\subseteq \left\{\sum_{\alpha\in F_k^{(3/4)}}\mathds{1}(S_k(\alpha)\ge \tfrac12\bar{\Delta}_k) \ge \left\lceil\frac{\abs{F_k}}{12}\right\rceil\right\}\,.
		\]
		
		\medskip
		
		\noindent Let us develop a uniform per-arm bound on $F_k^{(3/4)}$. Lemma~\ref{lem:prob_f} below gives such a bound
		\begin{lemma}\label{lem:prob_f}
			Let $\alpha \in F_k^{(3/4)}$. We have
			\begin{equation}\label{eq:prob}
				\mathbb{P}\left( S_k(\alpha) \ge \frac{1}{2}\bar{\Delta}_k\right) \le (\log(T)+K)\exp\!\left(-c"\,\frac{T}{\log^3(K)\log(T)\,\sum_{i\in E_k}\frac{K}{s_i\Delta_{i,(s_i)}^2}}\right)~,
			\end{equation}
			where $c"$ is a positive numerical constant. Moreover, if $T \ge c'\,H^{(0)}_{\mathrm{explore}}(\bm{s})\,\log^5\!\big(H^{(0)}_{\mathrm{explore}}(\bm{s})\big)$, where $c'\coloneq 10^3 \vee \frac{960}{c"} \log^2(\frac{960}{c"})$, we have 
			\[
			\mathbb{P}\left( S_k(\alpha) \ge \frac{1}{2}\bar{\Delta}_k\right) \le \frac{1}{18}~.
			\]
		\end{lemma}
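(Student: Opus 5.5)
Fix $\alpha\in F_k^{(3/4)}$. Since $S_k(\alpha)=\min\{Z_k^{(s)}(\alpha),0\}+Z_k^{(w)}(\alpha)$, the event $\{S_k(\alpha)\ge \tfrac12\bar\Delta_k\}$ implies either that the strong-opponent estimate fails to be very negative, or that the weak score is abnormally large. We therefore split
\[
\mathbb{P}\big(S_k(\alpha)\ge \tfrac12\bar\Delta_k\big)\le \mathbb{P}\big(Z_k^{(s)}(\alpha)\ge \tfrac34\bar\Delta_k\big)+\mathbb{P}\big(Z_k^{(w)}(\alpha)\ge -\tfrac14\bar\Delta_k\big).
\]
For the second term, $\alpha\in E_k$ gives $\Delta^{(k)}_{\alpha,(\lceil\abs{A_k}/4\rceil)}\le 0$. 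Corollary~\ref{cor:quant} with $\epsilon=|\bar\Delta_k|/4$ then bounds it by $\log(T)\exp(-c\bar\Delta_k^2B_k/\log T)$. By~\eqref{eq:fc_bound_0_rewrite} we have $\bar\Delta_k^2B_k\gtrsim T/(\log(K)H_{\text{certify}}(\bm s))$. We also need $H_{\text{certify}}(\bm s)\le \sum_{i}K/(s_i\Delta_{i,(s_i)}^2)$, and the sum can be restricted to $E_k$ by the same Lemma~\ref{lem:T} argument, using $\abs{A_k}/\bar\Delta_k^2\lesssim\sum_{i\in E_k}1/\Delta_{i,(s_i)}^2\le\sum_{i\in E_k}K/(s_i\Delta_{i,(s_i)}^2)$. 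Together these give a term of the required form.

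For the first term, we condition on the opponent $\alpha^{(s)}$ returned by SH. Let $G$ be the event $\Delta_{\alpha,\alpha^{(s)}}\le \tfrac78\Delta_{\alpha,(s_\alpha)}$. On $G$ we have $\Delta_{\alpha,\alpha^{(s)}}\le\tfrac78\bar\Delta_k$, because $\alpha\in F_k$. Since the fresh $\lceil B_k/4\rceil$ samples are independent of the SH output, Hoeffding with deviation $\tfrac18|\bar\Delta_k|$ gives $\exp(-c\bar\Delta_k^2B_k)$, which is handled exactly as above. A union over opponents is unnecessary here, but it would cost only the factor $K$ that appears in the statement. For $G^c$ we apply Theorem~\ref{thm:main0} to the SH instance on the duels $(\beta,\alpha)$, whose means are $\Delta_{\alpha,\beta}$ up to sign/centering, with minimization. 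We take $m=s_\alpha$ and $\epsilon=\tfrac18|\Delta_{\alpha,(s_\alpha)}|$, with budget $\lceil B_k/4\rceil$ over $K-1$ arms. This gives $\exp(-c\,s_\alpha\Delta_{\alpha,(s_\alpha)}^2B_k/(K\log^3K))=\exp(-cB_k\Gamma_\alpha/(K\log^3K))$.

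The key step, and the main obstacle, is converting $B_k\Gamma_\alpha/K$ into $T/\sum_{i\in E_k}K/\Gamma_i$ uniformly over $\alpha\in F_k^{(3/4)}$. Since $B_k\asymp T/(\abs{A_k}\log K)$, we need $\abs{A_k}/\Gamma_\alpha\lesssim\sum_{i\in E_k}1/\Gamma_i$. This is where we use that $\alpha$ lies in the top $3/4$ by $\Gamma$ within $F_k$. At least $\lfloor\abs{F_k}/4\rfloor$ arms $i\in F_k\subseteq E_k$ have $\Gamma_i\le\Gamma_\alpha$, so $\sum_{i\in E_k}1/\Gamma_i\ge\lfloor\abs{F_k}/4\rfloor/\Gamma_\alpha$. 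Moreover $\abs{F_k}\ge\tfrac78\abs{E_k}\ge\tfrac7{32}\abs{A_k}$. The cases with small $\abs{A_k}$, where the floor vanishes, are handled by $\alpha$ itself contributing $1/\Gamma_\alpha$. Collecting the three terms yields~\eqref{eq:prob} with prefactor $\log(T)+K$.

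For the second claim, we substitute $T\ge c'H^{(0)}_{\text{explore}}\log^5 H^{(0)}_{\text{explore}}$ and use $\sum_{i\in E_k}K/\Gamma_i\le H^{(0)}_{\text{explore}}$. The exponent is then at least $c''c'\log^5(H)/(\log^3 K\log T)$. Since $H^{(0)}_{\text{explore}}\ge K$, we have $\log^3K\le\log^3H$. We need $\log T\lesssim\log H+\log c'$, which can fail for huge $T$; there, however, the exponent grows like $T/\log T$ anyway. A short monotonicity check of $T/\log T$ then shows that the exponent exceeds $\log(18(\log T+K))$ once $c'\ge 10^3\vee\frac{960}{c''}\log^2\frac{960}{c''}$. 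This last calculus step is tedious but routine.
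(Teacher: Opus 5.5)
Your proposal is correct and follows essentially the same route as the paper. It uses a strong-opponent Hoeffding term, an SH-failure term controlled by Theorem~\ref{thm:main0} with $m=s_\alpha$ and $\epsilon=\tfrac18|\Delta_{\alpha,(s_\alpha)}|$, and a weak-score term controlled by Corollary~\ref{cor:quant}, then the same rank argument on $F_k^{(3/4)}$ with $|F_k|\gtrsim|A_k|$ to reach $\sum_{i\in E_k}K/\Gamma_i$, and the same monotonicity-in-$T$ evaluation at $T=c'H^{(0)}_{\mathrm{explore}}\log^5 H^{(0)}_{\mathrm{explore}}$ (closed in the paper via Lemma~\ref{lem:variations}) for the $1/18$ claim; the only differences are cosmetic, namely nesting the SH event inside the strong-opponent term and conditioning on the SH output instead of union-bounding over opponents, which just saves the factor $K-1$.
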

		
		\smallskip
		\noindent In the remainder of this proof we assume that the condition $T \ge c'\,H^{(0)}_{\mathrm{explore}}(\bm{s})\,\log^5\!\big(H^{(0)}_{\mathrm{explore}}(\bm{s})\big)$ is satisfied, otherwise the upper bound stated by the theorem is greater than $1$ and is thus vacuous. Denote by $p_k$ the bound given by the lemma above
		\begin{equation*}
			p_k \coloneq \frac{1}{18} \wedge ~(\log(T)+K)\exp\!\left(-c"\,\frac{T}{\log^3(K)\log(T)\,\sum_{i\in E_k}\frac{K}{s_i\Delta_{i,(s_i)}^2}}\right)~.
		\end{equation*}
		We use Lemma~\ref{lem:absorb}, which is purely technical and deferred to Section~\ref{sec:tech}, to obtain the following upper bound
		\begin{equation}\label{eq:p_k}
			p_k \le \exp\left(-\bar{c}_1\cdot \frac{T}{\log^3(K)\log(T)\,\sum_{i\in E_k}\frac{K}{s_i\Delta_{i,(s_i)}^2}}\right)~,
		\end{equation}
		where $\bar{c}_1$ is a numerical constant depending only on $c"$.
		Therefore, by independence across arms in the construction of $S_k(\cdot)$ since the algorithm uses independent fresh samples per arm,
		\[
		\sum_{\alpha\in F_k^{(3/4)}}\mathds{1}\!\left(S_k(\alpha)\ge \tfrac12\bar{\Delta}_k\right)
		\ \ \text{is stochastically dominated by}\ \ 
		M_k\sim \mathrm{Bin}\!\left(\left\lceil\frac{3}{4}\abs{F_k}\right\rceil,\ p_k\right)\,.
		\]
		Consequently, using the fact that $p_k \le \frac{1}{18}$ implies $\left\lceil\frac{\abs{F_k}}{12}\right\rceil - p_k\left\lceil\frac{3}{4}\abs{F_k}\right\rceil \ge \frac{\abs{F_k}}{24}$ we have
		\begin{align}
			\mathbb{P}\!\left(\sum_{\alpha\in F_k^{(3/4)}}\mathds{1}\!\left(S_k(\alpha)\ge \tfrac12\bar{\Delta}_k\right) \ge \left\lceil\frac{\abs{F_k}}{12}\right\rceil\right) &\le \mathbb{P}\!\left(M_k\ge \left\lceil\frac{\abs{F_k}}{12}\right\rceil\right)\nonumber\\
			&\le \mathbb{P}\!\left(M_k-\mathbb{E}[M_k]\ge \frac{\abs{F_k}}{24}\right)~.\label{eq:e5}
		\end{align}
		Next, we use Lemma~\ref{thm:tech2} which provides a deviation bound for binomial variables in regimes where the parameters can be small. Recall that $M_k$ is a binomial distribution with parameters $(p_k, \ceil{3\abs{F_k}/4})$. We have
		\begin{equation}\label{eq:e7}
			\mathbb{P}\!\left(M_k-\mathbb{E}[M_k]\ge \frac{\abs{F_k}}{24}\right) \le \exp\!\left(-\frac{\abs{F_k}}{864\,\phi(p_k)}\right)~,
		\end{equation}
		where $\phi$ is the function defined in Lemma~\ref{lem:tech1}. Since we have proved that $p_k \le \frac{1}{18}$, the expression of $\phi(p_k)$ is therefore given by
		\[
		\phi(p_k) = \frac{\frac{1}{2}-p_k}{\log(1-p_k) - \log(p_k)}~.
		\]
		Since the function $\phi$ is increasing on $(0,1/2)$ and, since by Lemma~\ref{lem:pure_tech} we have, for any $y>0$,
		\[
		0< \frac{\frac{1}{2}-\exp(-y)}{\log(1-\exp(-y))-\log(\exp(-y))} \le \frac{1}{2y}~,
		\]
		we conclude using the bound~\eqref{eq:p_k} that 
		\[
		\frac{1}{\phi(p_k)} \ge 2 \frac{\bar{c}_1}{320 \log(T) \log^3(K)}\cdot \frac{T}{\sum_{i \in E_k} \frac{K}{s_i \Delta_{i,(s_i)}^2}}~.
		\]
		Using the bound above with~\eqref{eq:e7} and $\abs{F_k}\ge \frac{7}{8}\abs{E_k}$, we have
		\begin{align*}
			\mathbb{P}\left( M_k-\mathbb{E}[M_k] \ge \frac{\abs{F_k}}{24}\right) &\le \exp\left(-\frac{\abs{F_k} }{864}\cdot \frac{\bar{c}_1}{320\log(T)\log^3(K)}\cdot \frac{T}{\sum_{i \in E_k} \frac{K}{s_i\Delta_{i,(s_i)}^2}}\right)\\
			&\le \exp\left( -\bar{c}_2\cdot\frac{T \abs{E_k}}{\log^3(K)\log(T)\sum_{i \in E_k} \frac{K}{s_i\Delta_{i,(s_i)}^2}}\right)~,
		\end{align*}
		where $\bar{c}_2$ is a numerical constant. We then use the fact that
		\[
		\frac{\abs{E_k}}{\sum_{i\in E_k}\frac{K}{s_i\Delta_{i,(s_i)}^2}}
		\ \ge\ \frac{1}{\max_{i\neq i^*}\frac{K}{s_i\Delta_{i,(s_i)}^2}}
		\ =\ \frac{1}{H^{(1)}_{\mathrm{explore}}(\bm{s})}~.
		\]
		Plugging these two relations into~\eqref{eq:e5} then \eqref{eq:exp_t2} yields for a numerical constant $\bar{c}_3$
		\[
		\mathbb{P}\!\left(|\{\alpha\in F_k:\ S_k(\alpha)\ge \tfrac12\bar{\Delta}_k\}| \ge \left\lceil\frac{\abs{F_k}}{3}\right\rceil\right)
		\le \exp\!\left(-\,\bar{c}_3\cdot \frac{T}{\log^3(K)\log(T)\,H^{(1)}_{\mathrm{explore}}(\bm{s})}\right),
		\]
		as soon as $T\ge c' H^{(0)}_{\mathrm{explore}}(\bm{s})\log^5(H^{(0)}_{\mathrm{explore}}(\bm{s}))$.
		Reintroducing the shift (to cover smaller $T$) gives the stated bound in Lemma~\ref{lem:main2}.
	\end{proof}
	\begin{proof}{\textbf{of Lemma~\ref{lem:prob_f}}}.
		Fix $\alpha \in F_k^{(3/4)}$, let $\alpha^{(s)}$ denote the strong opponent chosen for $\alpha$. We have
		\begin{align}
			\mathbb{P}\left( S_k(\alpha) \ge \frac{1}{2} \bar{\Delta}_k\right) &= \mathbb{P}\left( \min\{Z^{(s)}_k(\alpha),~0 \}+ Z^{(w)}_k(\alpha)\ge \frac{1}{2} \bar{\Delta}_k\right)\nonumber\\
			&\le \mathbb{P}\left( Z^{(s)}_k(\alpha)+ Z^{(w)}_k(\alpha)\ge \frac{1}{2} \bar{\Delta}_k\right)\nonumber\\
			&\le \mathbb{P}\left( Z_k^{(s)}(\alpha)-\Delta_{\alpha, \alpha^{(s)}} \ge-\frac{1}{4} \bar{\Delta}_k\right)\nonumber\\
			&\qquad + \mathbb{P}\left( \Delta_{\alpha, \alpha^{(s)}} \ge \frac{7}{8} \bar{\Delta}_k\right)+\mathbb{P}\left( Z^{(w)}_k(\alpha) \ge -\frac{1}{8} \bar{\Delta}_k\right)~.\label{eq:qq-1}
		\end{align}
		Using Hoeffding's concentration inequality with a union bound over the possible choices of $\alpha^{(s)}$, we have
		\begin{equation}\label{eq:qq1}
			\mathbb{P}\left( Z_k^{(s)}(\alpha)-\Delta_{\alpha, \alpha^{(s)}} \ge -\frac{1}{4} \bar{\Delta}_k\right) \le (K-1)\exp\left(- \frac{\bar{\Delta}_k^2}{32} B_k\right)~.
		\end{equation}
		Since $\alpha \in F_k \subset E_k$, we have $\Delta^{(k)}_{\alpha, (\ceil{\abs{A_k}/4})} \le 0$. Therefore, by Corollary~\ref{cor:quant}, we get
		\begin{align}
			\mathbb{P}\left(Z_k^{(w)}(\alpha) \ge -\frac{1}{8} \bar{\Delta}_k\right) &\le \mathbb{P}\left( Z_k^{(w)}(\alpha) \ge \Delta^{(k)}_{\alpha, (\ceil{\abs{A_k}/4})}- \frac{1}{8}\bar{\Delta}_k \right)\nonumber\\
			&\le \log\left(T\right)\exp\left(-c\cdot \frac{\bar{\Delta}_k^2}{64\log(\ceil{B_k/2})}\ceil*{\frac{B_k}{2}}\right)\nonumber\\
			&\le \log\left(T\right)\exp\left(-c\cdot \frac{\bar{\Delta}_k^2}{128\log(B_k)}B_k\right)~.\label{eq:qq3}
		\end{align}
		Since $\alpha \in F_k$, by definition of $F_k$ given in~\eqref{eq:def_F}, we have $\Delta_{\alpha, (s_{\alpha})} \le \bar{\Delta}_k \le 0$. Therefore,
		\[
		\mathbb{P}\left(\Delta_{\alpha, \alpha^{(s)}} \ge \frac{7}{8} \bar{\Delta}_k\right) \le \mathbb{P}\left(\Delta_{\alpha, \alpha^{(s)}} \ge \frac{7}{8}\Delta_{\alpha, (s_{\alpha})} \right)~.
		\]
		Then using Theorem~\ref{thm:main0} we have
		\begin{align}
			\mathbb{P}\left(\Delta_{\alpha, \alpha^{(s)}} \ge \frac{7}{8} \bar{\Delta}_k\right) &\le \mathbb{P}\left(\Delta_{\alpha, \alpha^{(s)}} \ge \frac{7}{8}\Delta_{\alpha, (s_{\alpha})} \right)\nonumber\\
			&=\mathbb{P}\left(\Delta_{\alpha, \alpha^{(s)}} \ge \Delta_{\alpha, (s_{\alpha})} - \frac{1}{8} \Delta_{\alpha, (s_{\alpha})}\right)\nonumber\\ 
			&\le \exp\left(-c\cdot \frac{s_{\alpha}\Delta^2_{\alpha, (s_{\alpha})}}{64K\log^3(K)}\ceil*{\frac{B_k}{4}}\right)\nonumber\\
			&\le \exp\left(-\frac{c}{256}\cdot \frac{s_{\alpha}\Delta^2_{\alpha, (s_{\alpha})}}{K\log^3(K)}B_k\right)~.\label{eq:qq2}
		\end{align}
		
		\noindent We conclude by plugging the bounds~\eqref{eq:qq1},~\eqref{eq:qq3} and~\eqref{eq:qq2} into~\eqref{eq:qq-1}
		\begin{equation*}
			\mathbb{P}\left( S_k(\alpha) \ge \frac{1}{2}\bar{\Delta}_k\right) \le (K+\log(T))\exp\left(-c'~\min\left\lbrace \frac{\Gamma_{\alpha}}{K\log^3(K)}, \frac{\bar{\Delta}_k^2}{\log(B_k)}\right\rbrace B_k\right)~,
		\end{equation*}
		where $c' := \frac{c}{1024}$. Now it remains to prove that 
		\[
		\min\left\{\frac{\Gamma_{F_k:\lceil |F_k|/4\rceil}}{K\log^3(K)},\ \frac{\bar{\Delta}_k^2}{\log(B_k)}\right\}B_k
		\ \ge c' \frac{T}{\log^3(K)\log(T)\,\sum_{i \in E_k} \frac{K}{\Gamma_i}}~.
		\]
		Recall Lemma~\ref{lem:T} gives
		\[
		\frac{\ceil*{\frac{\abs{F_k}}{4}}}{\Gamma_{F_k:\ceil*{\frac{1}{4}\abs{F_k}}}} \le \sum_{i \in F_k} \frac{1}{\Gamma_i}\le \sum_{i \in E_k} \frac{1}{\Gamma_i}~.
		\]
		Therefore,
		\begin{equation*}
			\frac{\abs{F_k}}{4\sum_{i\in E_k} \frac{1}{\Gamma_i}} \le \Gamma_{F_k:\ceil*{\frac{1}{4}\abs{F_k}}}~.
		\end{equation*}
		Hence, using the bound above and the definition of $B_k$ we obtain
		\begin{align*}
			\frac{\Gamma_{F_k:\ceil*{\frac{1}{4}\abs{F_k}}}}{K\log^3(K)}B_k &\ge \frac{\abs{F_k}}{4\sum_{i\in E_k} \frac{1}{\Gamma_i}}\cdot \frac{1}{K\log^3(K)}\cdot \frac{T}{2\abs{A_k}\log_{8/7}(K)}\\
			&= \frac{T}{8\log^3(K)\log_{8/7}(K)}\cdot \frac{1}{\sum_{i\in E_k} \frac{K}{\Gamma_i}}\cdot \frac{\abs{F_k}}{\abs{A_k}}\\
			&\ge \frac{T}{138\log^4(K)\sum_{i\in E_k} \frac{K}{\Gamma_i}}\cdot \frac{\abs{F_k}}{\abs{A_k}}~,
		\end{align*}
		Recall that $\abs{F_k} \ge \ceil*{\frac{7}{8}\abs{E_k}} \ge \ceil*{\frac{3}{16}\abs{A_k}}$. Therefore, the bound above gives
		\begin{equation}\label{eq:pk1}
			\frac{\Gamma_{F_k:\ceil*{\frac{1}{4}\abs{F_k}}}}{K\log^3(K)}B_k \ge \frac{T}{736\log^4(K)\sum_{i\in E_k} \frac{K}{\Gamma_i}}~.
		\end{equation}
		Moreover, we have 
		\begin{align}
			\abs{A_k} \frac{1}{\bar{\Delta}_k^2}&\le 4\abs{E_k}\cdot \frac{1}{\bar{\Delta}_k^2}\le 32\cdot\ceil*{\frac{1}{8} \abs{E_k}} \frac{1}{\Delta^2_{E_k:\ceil*{\frac{7}{8}\abs{E_k}}}}\nonumber\\
			&\le 32\cdot\sum_{\alpha \in E_k} \frac{1}{\Delta^2_{\alpha, (s_{\alpha})}}~,\label{eq:pk0}
		\end{align}
		where we used again Lemma~\ref{lem:T} in the second line. Therefore, we have
		\begin{align}
			\frac{\bar{\Delta}_k^2}{\log(B_k)} B_k &\ge \frac{\bar{\Delta}_k^2}{\log(B_k)} \frac{T}{2\abs{A_k}\log_{8/7}(K)}\nonumber\\
			&\ge \frac{1}{\sum_{\alpha \in E_k} \frac{1}{\Delta^2_{\alpha, (s_{\alpha})}}} \cdot \frac{ T}{64\log(B_k)\log_{8/7}(K)}~.\label{eq:pk2}
		\end{align}
		Therefore, combining~\eqref{eq:pk1} and~\eqref{eq:pk2}, we get, 
		{ \begin{align}
				\mathbb{P}\left(S_k(\alpha) \ge \frac{1}{2}\bar{\Delta}_k\right) & \leqslant l\exp\left(-c'~\min\left\lbrace \frac{\Gamma_{F_k:\ceil*{\frac{1}{4}\abs{F_k}}}}{K\log^3(K)},~\frac{\bar{\Delta}_k^2}{\log(B_k)}\right\rbrace B_k\right)\nonumber\\
				&\le l \exp\left(-\frac{c'}{736} \min\left\lbrace \frac{1}{\displaystyle\log^3(K) \sum_{i \in E_k} \frac{K}{\Gamma_i}},~\frac{1}{\displaystyle\sum_{\alpha \in E_k} \frac{1}{\Delta^2_{\alpha, (s_{\alpha})}}\log(T)} \right\rbrace \frac{T}{\log(K)}\right)\nonumber\\
				&\le l \exp\left(-\frac{c'}{736}~\frac{T}{\log^3(K)\log(T)\sum_{i \in E_k} \frac{K}{\Gamma_i}} \right)~, \label{eq:pk3}
		\end{align}}
		where we used $\log(T) \ge \log(K)$ in the last line, and the pre-factor is $l=(\log(T)+K)$.
		
		\smallskip
		\noindent For the remainder of this proof, we denote $H \coloneq H^{(0)}_{\text{explore}}(\bm{s})$. Let us prove the last claim. 
		
		\noindent Let $c" \coloneq 10^3 \vee \frac{960}{c'} \log^2(\frac{960}{c'})$, which implies that $c' \ge 960\frac{\log^2(c")}{c"}$. The function $T \mapsto (\log(T)+K) \exp\left(-\frac{c'}{736} \frac{T}{H \log^3(K)\log(T)}\right)$ is non-increasing on the interval $[ c" \cdot H \log^5(H), +\infty)$.
		Therefore, we have using~\eqref{eq:pk3}
		\begin{align}
			\mathbb{P}\left(S_k(\alpha) \ge \frac{1}{2}\bar{\Delta}_k\right) &\le (\log(c"H\log^5(H))+K) \exp\left(-\frac{c'}{736}\cdot \frac{c" H \log^5(H)}{H\log^3(K)\log(c"H\log^5(H))}\right)\nonumber\\
			&\le (\log(c"H\log^5(H))+H) \exp\left(-\frac{c'}{736}\cdot \frac{c" \log^2(H)}{\log(c"H \log^5(H))}\right)\nonumber\\
			&\le (\log(c"H\log^5(H))+H) \exp\left(-4\log^2(c")\cdot \frac{ \log^2(H)}{\log(c"H\log^5(H))}\right)~.\label{eq:pk4}
		\end{align}
		where we used in the second line the facts that $H \ge K \min_{i,j}\Delta_{i,j}^{-2} \ge 4K$ (since $\abs{\Delta_{i,j}} \le \tfrac{1}{2}$) and that $c'\ge 960\frac{\log^2(c")}{c"}$ by definition of $c"$. Next, we show that 
		\[
		2\frac{\log^2(c")\log^2(H)}{\log(c"H\log^5(H))} \ge \log(c"H)~,
		\]
		this bound is derived just by studying the variations of a function and using $c" \ge 10^3$ by definition and $H \ge 4K \ge 8$, the proof is deferred to Lemma~\ref{lem:variations} in Section~\ref{sec:tech}. Combining the bound above with \eqref{eq:pk4}, we obtain
		\begin{align*}
			\mathbb{P}\left(S_k(\alpha) \ge \frac{1}{2}\bar{\Delta}_k\right) &\le (\log(c"H\log^5(H))+H) \exp\left(-2\log(c"H)\right)\nonumber\\
			&\le \frac{\log(c"H\log^5(H))+H}{(c"H)^2} \le \frac{1}{18}~,
		\end{align*}
		where we used $100\log(c"H\log^5(H)) \le c"H^2$ and $36H \le (c"H)^2$, given that $H \ge 8$ and $c" \ge 10^3$. 
	\end{proof}

	\section{Proof of Theorem~\ref{thm:fc2}  }\label{sec:proof_fc}
	This routine, presented in Algorithm~\ref{algo:4}, serves as one of the two certification sub-procedures in the fixed-confidence algorithm. Given a confidence level~$\delta$, a query budget~$T$, and a candidate Condorcet winner~$I$, it sequentially tests whether one can certify—using at most~$T$ comparisons—that all pairwise gaps $(\Delta_{I,i})_{i\neq I}$ are positive with probability at least~$1-\delta$. The budget is allocated uniformly across these gaps, and the procedure terminates as soon as either (i) a negative gap is detected, (ii) all gaps are certified positive, or (iii) the budget~$T$ is exhausted.
	
	\begin{algorithm} 
		\caption{ Test-CW \label{algo:4} }
		\begin{algorithmic}
			\STATE \textbf{Input}: $I \in [K], \delta, T$.
			\STATE \textbf{Initialize}: $C=[K]\setminus \{I\}$, empirical means $\tilde{\Delta}_{I,j} = 0$ for $j \in C$, count variable $t \gets 1$.
			\STATE Let $n \gets \log_2\left(\frac{T}{4K\log_{8/7}(K)}\right)$.
			\STATE $N_{I,j} \gets 0$ for all $j\in [K]$  //Query count
			\WHILE{$C \neq \emptyset$ \text{ and } $ t\le T$}
			\STATE Sample duel $(I,j)$ for $j\in \text{argmin}_{j \in C} N_{I,j}$ and update corresponding empirical means.
			\STATE $N_{I,j} \gets N_{I,j}+1$, $t \gets t+1$.
			\STATE \texttt{/* Check the sign of the gaps using concentration */}
			\FOR{$j \in C$}
			\IF{$\tilde{\Delta}_{I,j} \ge \sqrt{\frac{\log\left(KN^2_{I,j}\frac{n(n+1)}{\delta}\right)}{N_{I,j}}}$}
			\STATE $C \gets C \setminus \{j\}$.
			\ELSIF{$\tilde{\Delta}_{I, j} \le -\sqrt{\frac{\log\left(KN^2_{I,j}\frac{n(n+1)}{\delta}\right)}{N_{I,j}}}$}
			\STATE \textbf{break}
			\ENDIF
			\ENDFOR
			\ENDWHILE
			\IF{$ C=\emptyset$}
			\STATE Return $\textbf{True}$
			\ELSE
			\STATE Return $\textbf{False}$
			\ENDIF
		\end{algorithmic}
	\end{algorithm}
	
	\subsection{Proof of $\delta$-correctness}
	
	Let $c_0$ denote the absolute numerical constant corresponding to the one appearing in the upper bound of Corollary~\ref{cor:quant}.
	Theorem~\ref{thm:fc2} states that Algorithm~\ref{algo:fc} with input $\delta \in (0,1)$ and $c \ge 2/c_0$, it outputs an arm different from the CW with probability at most $\delta$. Let $\psi_{\delta}$ denote the output of Algorithm~\ref{algo:fc} when the input is $\delta$. We will prove that
	\[
	\mathbb{P}(\psi_{\delta} \neq i^*) \le \delta~.
	\]
	To prove this claim, we introduce the following notation. In the $n$-th iteration (i.e., the $n$-th call to Algorithm~\ref{algo:3}), denote by
	$\bar{\alpha}^{(n)}, \phi_1^{(n)}, \phi_2^{(n)}, I^{(n)}$ and $T^{(n)}$ the corresponding values of
	$\bar{\alpha},\phi_1,\phi_2,I$ and $T$, and let $\varphi^{(n)}\coloneqq \phi_1^{(n)}\vee \phi_2^{(n)}$.
	For convenience define, for all $n\ge 1$,
	\[
	\delta_{n}\coloneqq \frac{\delta}{8K^2\log_{8/7}(K)\,\log(T^{(n)})\,n(n+1)}~.
	\]
	Let $S_k^{(n)}(\cdot)$ denote the score used at round $k$ within the $n$-th call to Algorithm~\ref{algo:3},
	and let $Z_{k}^{(w,n)}(\cdot)$ denote its weak component. Recall that each call to Algorithm~\ref{algo:3}
	has at most $k_{\max}\le \lceil\log_{8/7}(K)\rceil$ rounds. Finally, in the $n$-th call to \textsc{Test-CW}
	(Algorithm~\ref{algo:4}) with inputs $(I^{(n)},\delta,T^{(n)})$, let $\tilde{\Delta}_{I^{(n)},j}$ denote the
	final empirical estimate of $\Delta_{I^{(n)},j}$ for each $j\neq I^{(n)}$.
	
	\smallskip
	\noindent If $\psi_{\delta}\neq i^*$, then for some $n\ge 1$ the algorithm must have certified an incorrect candidate,
	namely $\{I^{(n)}\neq i^*\}$ and $\{\varphi^{(n)}=\texttt{True}\}$. Hence, by a union bound,
	\begin{align}
		\mathbb{P}(\psi_{\delta}\neq i^*)
		&\le \mathbb{P}\Big(\exists n\ge 1:\ I^{(n)}\neq i^*,\ \varphi^{(n)}=\texttt{True}\Big)\nonumber\\
		&\le \sum_{n=1}^{\infty}\mathbb{P}\Big(I^{(n)}\neq i^*,\ \phi_1^{(n)}=\texttt{True}\Big)
		+\sum_{n=1}^{\infty}\mathbb{P}\Big(I^{(n)}\neq i^*,\ \phi_2^{(n)}=\texttt{True}\Big)~.
		\label{eq:fc_split_terms}
	\end{align}
	
	\medskip
	\noindent We first bound the contribution of $\phi_1$. On the event $\{I^{(n)}\neq i^*,\ \phi_1^{(n)}=\texttt{True}\}$,
	during the $n$-th run of Algorithm~\ref{algo:3} the true Condorcet winner $i^*$ must have been eliminated at
	some round $k<k_{\max}$ (otherwise the procedure would return $I^{(n)}=i^*$). By the definition of the
	selection $\bar{\alpha}^{(n)}$ and the condition $\phi_1^{(n)}=\texttt{True}$, this implies that for some
	$k<k_{\max}$,
	\[
	S_k^{(n)}(i^*)<-\sqrt{\frac{2c\log(T^{(n)})\log(1/\delta_n)}{\lceil B_k^{(n)}/4\rceil}}.
	\]
	Using $S_k^{(n)}(i^*)=\min\{\hat{\Delta}^{(k,n)}_{i^*,u},0\}+Z^{(w,n)}_{k}(i^*)$ (for the opponent $u$ queried
	at that round) and the fact that $\min\{x,0\}+y<-\eta$ implies $(x<-\eta/2)\ \text{or}\ (y<-\eta/2)$, we get
	{\small \begin{align}
			\mathbb{P}\Big(I^{(n)}\neq i^*,\ \phi_1^{(n)}=\texttt{True}\Big)
			&\le \sum_{k<k_{\max}}
			\mathbb{P}\!\left(\exists u \neq i^*:~\min\{\hat{\Delta}^{(k,n)}_{i^*,u},0\}+Z^{(w,n)}_{k}(i^*)<-\sqrt{\frac{2c\log(T^{(n)})\log(1/\delta_n)}{\lceil B_k^{(n)}/4\rceil}}\right)\nonumber\\
			&\le \sum_{k<k_{\max}}\sum_{u\neq i^*}
			\mathbb{P}\!\left(\hat{\Delta}^{(k,n)}_{i^*,u}<-\sqrt{\frac{c\log(T^{(n)})\log(1/\delta_n)}{2\lceil B_k^{(n)}/4\rceil}}\right)\nonumber\\
			&\quad+\sum_{k<k_{\max}}
			\mathbb{P}\!\left(Z^{(w,n)}_{k}(i^*)<-\sqrt{\frac{c\log(T^{(n)})\log(1/\delta_n)}{2\lceil B_k^{(n)}/4\rceil}}\right)~.
			\label{eq:phi1_decomp}
	\end{align}}
	For the first term, since $\Delta_{i^*,u}>0$ for all $u\neq i^*$, we can center and apply Hoeffding's inequality:
	for $N=\lceil B_k^{(n)}/4\rceil$,
	\begin{align}
		\mathbb{P}\!\left(\hat{\Delta}^{(k,n)}_{i^*,u}<-\sqrt{\frac{c\log(T^{(n)})\log(1/\delta_n)}{2N}}\right)
		&\le \mathbb{P}\!\left(\hat{\Delta}^{(k,n)}_{i^*,u}-\Delta_{i^*,u}<-\sqrt{\frac{\log(1/\delta_n)}{2N}}\right)\nonumber\\
		&\le \exp\!\left(-2N\cdot\frac{\log(1/\delta_n)}{2N}\right)\le \delta_n~.
		\label{eq:hoeffding_fc}
	\end{align}
	For the second term in \eqref{eq:phi1_decomp}, Corollary~\ref{cor:quant} (with constant $c_0$) gives
	{\small \begin{align}\nonumber
			\mathbb{P}\!\left(Z^{(w,n)}_{k}(i^*)<-\sqrt{\frac{c\log(T^{(n)})\log(1/\delta_n)}{2\lceil B_k^{(n)}/4\rceil}}\right)
			&\le \mathbb{P}\!\left(Z^{(w,n)}_{k}(i^*)<\Delta^{(k)}_{i^*, (\ceil{\abs{A_k}/8})}-\sqrt{\frac{c\log(T^{(n)})\log(1/\delta_n)}{2\lceil B_k^{(n)}/4\rceil}}\right)\\
			&\le \log\!\left(\ceil*{\frac{B_k^{(n)}}{2}}\right)
			\exp\!\left(-c_0\cdot\frac{c\log(T^{(n)})\log(1/\delta_n)}{2\log(\ceil{B_k^{(n)}/2})}\right)\nonumber\\
			&\le \log\!\left(\ceil*{\frac{B_k^{(n)}}{2}}\right)\delta_n,
			\label{eq:quant_fc}
	\end{align}}
	where we used $c\ge 2/c_0$, and $B_k^{(n)}\leqslant T^{(n)}$. Plugging \eqref{eq:hoeffding_fc} and \eqref{eq:quant_fc} into
	\eqref{eq:phi1_decomp}, summing over $k<k_{\max}$ and using $\log(\lceil B_k^{(n)}/2\rceil)\le \log(T^{(n)})$
	and $k_{\max}\le \lceil\log_{8/7}(K)\rceil$, we obtain
	\begin{align}
		\sum_{n=1}^{\infty}\mathbb{P}\Big(I^{(n)}\neq i^*,\ \phi_1^{(n)}=\texttt{True}\Big)
		&\le \sum_{n=1}^{\infty}\left((k_{\max}(K-1))\delta_n + k_{\max}\log(T^{(n)})\delta_n\right)\nonumber\\
		&\le \sum_{n=1}^{\infty}\frac{\delta}{2n(n+1)}\le \frac{\delta}{2}.
		\label{eq:term1_fc}
	\end{align}
	
	\medskip
	\noindent We now bound the contribution of $\phi_2$ in \eqref{eq:fc_split_terms}. On the event $\{I^{(n)}\neq i^*,\ \phi_2^{(n)}=\texttt{True}\}$,
	the $n$-th call to \textsc{Test-CW} returns \texttt{True} although $I^{(n)}$ is not the Condorcet winner.
	In particular, for some $N\ge 1$ the test must have accepted the comparison against $i^*$, meaning that
	\[
	\tilde{\Delta}_{I^{(n)},i^*}>\sqrt{\frac{\log\!\left(KN^2\frac{n(n+1)}{\delta}\right)}{N}}.
	\]
	Hence, by a union bound over $n\ge 1$, $N\ge 1$ and all $i\neq i^*$, and since $\Delta_{i,i^*}\le 0$ when $i\neq i^*$,
	\begin{align}
		\sum_{n=1}^{\infty}\mathbb{P}\Big(I^{(n)}\neq i^*,\ \phi_2^{(n)}=\texttt{True}\Big)
		&\le \sum_{n=1}^{\infty}\sum_{N=1}^{\infty}\sum_{i\neq i^*}
		\mathbb{P}\!\left(\tilde{\Delta}_{i,i^*}>\sqrt{\frac{\log\!\left(KN^2\frac{n(n+1)}{\delta}\right)}{N}}\right)\nonumber\\
		&\le \sum_{n=1}^{\infty}\sum_{N=1}^{\infty}\sum_{i\neq i^*}
		\mathbb{P}\!\left(\tilde{\Delta}_{i,i^*}-\Delta_{i,i^*}>\sqrt{\frac{\log\!\left(KN^2\frac{n(n+1)}{\delta}\right)}{N}}\right)\nonumber\\
		&\le \sum_{n=1}^{\infty}\sum_{N=1}^{\infty}\sum_{i\neq i^*}\frac{\delta}{4K\,N^2\,n(n+1)}
		\ \le\ \frac{\delta}{2}~,
		\label{eq:term2_fc}
	\end{align}
	where the last inequality follows from Hoeffding's inequality and $\sum_{N\ge 1}1/N^2\le 2$.
	
	\smallskip
	\noindent Finally, combining \eqref{eq:fc_split_terms} with \eqref{eq:term1_fc} and \eqref{eq:term2_fc} yields
	\[
	\mathbb{P}(\psi_{\delta}\neq i^*)\le \frac{\delta}{2}+\frac{\delta}{2}=\delta~,
	\]
	which concludes the proof.

	\subsection{Proof of Theorem~\ref{thm:fc2} (sample complexity statement)}\label{sec:proof_fc_sc}
	
	We build on the guarantees established for Algorithm~\ref{algo:3} (fixed-budget elimination with certification)
	to prove the second statement of Theorem~\ref{thm:fc2}. We use the notation: for each $i\neq i^*$,
	$\Delta_{i,(1)}\le \cdots \le \Delta_{i,(K-1)}$ denotes the ordered list of gaps $(\Delta_{i,j})_{j\neq i}$,
	and $K_{i;<0}:=\big|\{j:\Delta_{i,j}<0\}\big|$. Fix any vector $\bm{s}=(s_1,\dots,s_K)$ such that
	$s_i\le K_{i;<0}$ for all $i\neq i^*$ (and $K_{i^*;<0}=0$ by convention), and recall
	\[
	H_{\text{certify}}(\bm{s})=\sum_{i\neq i^*}\frac{1}{\Delta_{i,(s_i)}^2},\qquad
	H^{(1)}_{\text{explore}}(\bm{s})=\max_{i\neq i^*}\frac{K}{s_i\Delta_{i,(s_i)}^2},\qquad
	H^{(0)}_{\text{explore}}(\bm{s})=\sum_{i\neq i^*}\frac{K}{s_i\Delta_{i,(s_i)}^2}~.
	\]
	
	\noindent Let $c_1$ be the numerical constant in Theorem~\ref{thm:amain1} and let $c_2,c_3$ be the numerical constants
	in Lemma~\ref{lem:main2}. Let $c_0$ be the numerical constant in Corollary~\ref{cor:quant}, and assume
	$c\ge 2/c_0$ as in the statement of Theorem~\ref{thm:fc2}. For concision, define
	\begin{align*}
		G_{1,\delta}
		&:= \frac{32}{c_1}H_{\text{cw}} \log(K)\log\!\Big(\frac{32}{c_1}\frac{K H_{\text{cw}}}{\delta}\Big)
		\log\!\Big(c_1^{-1} H_{\text{cw}}\log(K/\delta)\Big)~,\\
		G_{2,\delta}
		&:= \frac{512c}{c_2} H_{\text{certify}}(\bm{s})\,\log^3(K)\,
		\log\!\Big(\frac{32c}{c_2}\frac{K H^{(0)}_{\text{explore}}(\bm{s})}{\delta}\Big)~,\\
		G_{3,\delta}
		&:= \frac{32c}{c_2} H^{(1)}_{\text{explore}}(\bm{s})\,\log^3(K)\,\log\left(\frac{2Kk_{\max}}{\delta}\right)
		\log\left(\frac{32c}{c_2}H^{(1)}_{\text{explore}}(\bm{s}) \log^3(K)\log\left(\frac{2Kk_{\max}}{\delta}\right)\right)~,\\
		G_{0}
		&:= \frac{2c_3}{c_2}\,H^{(0)}_{\text{explore}}(\bm{s})\,
		\log^5\!\Big(\frac{2c_3}{c_2}H^{(0)}_{\text{explore}}(\bm{s})\Big)~.
	\end{align*}
	Algorithm~\ref{algo:fc} doubles the budget parameter $T$ at each unsuccessful iteration; therefore, if one can show that whenever
	\begin{equation}\label{eq:threshold_T_fc_sc}
		T\in\Big[M_\delta,\ 2M_\delta\Big],\qquad \text{ where } M_\delta:=\min\Big\{G_{1,\delta},\ G_{2,\delta}+G_{3,\delta}+G_0\Big\}~,
	\end{equation}
	the call to Algorithm~\ref{algo:3} with inputs $(\delta,T,c)$ returns $\phi_1\vee \phi_2=\texttt{True}$
	with probability at least $1-6\delta$, then it follows that the total number of queries $N_\delta$
	used by Algorithm~\ref{algo:fc} is at most a universal constant multiple of $M_\delta$ with probability at least
	$1-6\delta$ (since the sum of a doubling schedule up to the first successful budget is at most $2$ times that budget).
	We now verify this success probability for any $T$ satisfying \eqref{eq:threshold_T_fc_sc}, distinguishing two regimes
	depending on which term attains the minimum.
	
	\medskip\noindent
	\textbf{Regime 1: $G_{1,\delta}\le G_{2,\delta}+G_{3,\delta}+G_0$.}
	Then, we focus on the regime $T\in[G_{1,\delta},2G_{1,\delta}]$. In this regime we certify correctness through the fixed-budget guarantee
	of Theorem~\ref{thm:amain1}. Indeed, for $T\in [G_{1,\delta},~2G_{1,\delta}]$, we have
	\begin{align}
		\mathbb{P}(I\neq i^*)
		&\le 27K\log(K)\log(T)\exp\!\Big(-c_1\,\frac{T}{\log(T)\log(K)H_{\text{cw}}}\Big)\nonumber\\
		&\le 27K\log(K)\log(2G_{1,\delta})
		\exp\!\Big(-c_1\,\frac{G_{1,\delta}}{\log(2G_{1,\delta})\log(K)H_{\text{cw}}}\Big)~.\label{eq:fcl3}
	\end{align}
	We now use the explicit definition of $G_{1,\delta}$ and the crude upper bound
	\begin{equation}\label{eq:logG1_bound_fc_sc}
		\log(2G_{1,\delta}) \le 16 \log\Big(c_1^{-1}H_{\text{cw}}\log(K/\delta)\Big)~,
	\end{equation}
	which results from the expression of $G_{1, \delta}$, $K,H_{\text{cw}}\ge 2$ and $\delta\in(0,1/6)$.
	Using the bound \eqref{eq:logG1_bound_fc_sc} and plugging it back in \eqref{eq:fcl3}, we obtain
	\begin{align*}
		\mathbb{P}\left(I \neq i^*\right) &\le 432\cdot K\log(K)\log(c_1^{-1}H_{\text{cw}}\log(K/\delta))\cdot\exp\left(-2 \cdot \log(32c_1^{-1}KH_{\text{cw}}/\delta)\right)\\
		&\le \delta\cdot \frac{432K\log(K)\log(c_1^{-1}H_{\text{cw}}\log(K/\delta))\cdot \delta}{(32c_1^{-1}KH_{\text{cw}})^2}\\
		&\le \delta\, .
	\end{align*}
	where in the last line we used the fact that $K, H_{\text{cw}} \ge 2, \delta\in (0, 1/6)$ and $c_1 \in (0,1)$. 
	\smallskip
	\noindent It remains to argue that, conditional on $I=i^*$, the auxiliary certification Test-CW (Algorithm~\ref{algo:4})
	returns $\texttt{True}$ with probability at least $1-2\delta$, hence overall
	$\mathbb{P}(\phi_1\vee \phi_2=\texttt{True})\ge 1-3\delta$ in this regime.
	Run Test-CW with inputs $(i^*,\delta,T)$ and let $T_i$ be the number of comparisons allocated to pair $(i^*,i)$.
	By construction, $\sum_{i\neq i^*}T_i\le T$. If Test-CW returns \texttt{False}, then either it exhausted the budget
	without eliminating all opponents, or it triggered a negative-deviation stopping rule. Formally, define
	\[
	\mathcal{E}_1:=\Big\{\sum_{i\neq i^*}T_i=T\Big\},\qquad
	\mathcal{E}_2:=\Big\{\exists j\neq i^*,\ \exists N\ge 1:\ 
	\tilde{\Delta}_{i^*,j}(N)\le -\sqrt{\tfrac{\log(KN^2\,\frac{n(n+1)}{\delta})}{N}}\Big\},
	\]
	so that $\{\phi_2=\texttt{False}\}\subseteq \mathcal{E}_1\cup\mathcal{E}_2$. Since $\Delta_{i^*,j}\ge 0$ for all $j\neq i^*$,
	Hoeffding's inequality and a union bound give
	\begin{align}
		\mathbb{P}(\mathcal{E}_2) &\le \sum_{j\neq i^*}\sum_{N\ge 1} \mathbb{P}\!\left(\tilde{\Delta}_{i^*,j}(N)-\Delta_{i^*,j}\le -\sqrt{\tfrac{\log(KN^2\,\frac{n(n+1)}{\delta})}{N}}\right)\nonumber\\
		&\le \sum_{j\neq i^*}\sum_{N\ge 1}\frac{\delta}{K\,n(n+1)\,N^2}\ \le\ \delta~,\label{eq:E2_bound_fc_sc}
	\end{align}
	where we used in the last line the fact that $n(n+1) \ge 2 \ge \pi^2/6$.

	\noindent Next, for each $i\neq i^*$ define
	\[
	\bar{T}_i:=\frac{16}{\Delta_{i^*,i}^2} \log\left(\frac{32Kn(n+1)}{\delta \Delta_{i^*,i}^2}\right)~.
	\]
	Lemma~\ref{lem:tech_fc} below ensures that  $\sum_{i\neq i^*}\bar{T}_i<G_{1,\delta}\le T$, hence
	\begin{align}
		\mathbb{P}(\mathcal{E}_1)
		&=\mathbb{P}\!\left(\sum_{i\neq i^*}T_i=T\right)
		\le \mathbb{P}\!\left(\sum_{i\neq i^*}T_i\ge G_{1,\delta}\right)
		\le \mathbb{P}\!\left(\exists i\neq i^*: T_i>\bar{T}_i\right)\nonumber\\
		&\le \sum_{i\neq i^*}\mathbb{P}(T_i>\bar{T}_i)~.\label{eq:E1_reduction_fc_sc}
	\end{align}
	If $T_i>\bar{T}_i$, then at time $N=\bar{T}_i$ the arm $i$ was not eliminated, meaning
	\[
	\tilde{\Delta}_{i^*,i}(\bar{T}_i)\;<\;\sqrt{\frac{\log\!\big(K\bar{T}_i^2\,\frac{n(n+1)}{\delta}\big)}{\bar{T}_i}}~.
	\]
	By Lemma~\ref{lem:tech_fc}, the RHS is at most
	$\Delta_{i^*,i}-\sqrt{\frac{\log(K\bar{T}_i^2\,\frac{n(n+1)}{\delta})}{2\bar{T}_i}}$, hence
	\begin{align}
		\mathbb{P}(T_i>\bar{T}_i)
		&\le \mathbb{P}\!\left(\tilde{\Delta}_{i^*,i}(\bar{T}_i)-\Delta_{i^*,i}
		< -\sqrt{\frac{\log(K\bar{T}_i^2\,\frac{n(n+1)}{\delta})}{2\bar{T}_i}}\right)\nonumber\\
		&\le \sum_{N\ge 1}\frac{\delta}{K\,n(n+1)\,N^2}\ \le\ \frac{\delta}{K}~,\label{eq:Ti_tail_fc_sc}
	\end{align}
	where the last line uses Hoeffding and a union bound over $N\ge 1$.
	Combining \eqref{eq:E1_reduction_fc_sc} and \eqref{eq:Ti_tail_fc_sc} yields $\mathbb{P}(\mathcal{E}_1)\le \delta$.
	Together with \eqref{eq:E2_bound_fc_sc}, we obtain $\mathbb{P}(\phi_2=\texttt{False})\le 2\delta$, hence
	$\mathbb{P}(\phi_2=\texttt{True})\ge 1-2\delta$ when $I=i^*$. This completes Regime~1.
	
	\medskip\noindent
	\textbf{Regime 2: $G_{1,\delta}> G_{2,\delta}+G_{3,\delta}+G_0$.}
	Then $T\in[G_{2,\delta}+G_{3,\delta}+G_0,\ 2(G_{2,\delta}+G_{3,\delta}+G_0)]$.
	We show that, for such $T$, the certification variable $\phi_1$ in Algorithm~\ref{algo:3} remains $\texttt{True}$ with probability at least $1-2\delta$.
	Let $\bar{\alpha}_k$ be the arm ranked $\abs{A_k}-\ceil{\abs{A_k}/8}+1$ at round $k$ according to scores $S_k(\cdot)$.
	Define
	\[
	L_{k,\delta}:=\sqrt{\frac{2c\log(T)}{\lceil B_k/4\rceil}\log\!\Big(\frac{1}{\delta_{n,K}}\Big)},\qquad
	\delta_{n,K}:=\frac{\delta}{8K^2\log_{8/7}(K)\log(T)\,n(n+1)}~.
	\]
	By the update rule for $\phi_1$, the event $\{\phi_1=\texttt{False}\}$ implies that for some $k\le k_{\max}$,
	$S_k(\bar{\alpha}_k)\ge -L_{k,\delta}$. The definition of $\bar{\alpha}_k$ entails that at most $\ceil*{\abs{A_k}/8}$ arms have score not larger than $-L_{k,\delta}$, i.e.
	\begin{equation}\label{eq:score_count_fc_sc}
		\{S_k(\bar{\alpha}_k)\ge -L_{k,\delta}\}\ \subseteq\
		\left\{\sum_{\alpha\in A_k}\mathds{1}\big(S_k(\alpha)<-L_{k,\delta}\big)\le \ceil*{\frac{\abs{A_k}}{8}}\right\}~.
	\end{equation}
	
	\smallskip
	\noindent We now relate $-L_{k,\delta}$ to the threshold $\frac12\bar{\Delta}_k$ used in Lemma~\ref{lem:main2}. 
	Recall the definitions (as in the proof of Theorem~\ref{thm:amain2}): let
	\[
	E_k:=\Big\{\alpha\in A_k:\ \Delta^{(k)}_{\alpha,(\ceil{\abs{A_k}/4})}\le 0\Big\},\qquad \abs{E_k}\ge \ceil*{\abs{A_k}/4}~,
	\]
	and define the $7/8$-quantile $\bar{\Delta}_k:=\Delta_{E_k:\ceil{(7/8)\abs{E_k}}}\le 0$ and the subset
	\[
	F_k:=\Big\{\alpha\in E_k:\ \Delta_{\alpha,(s_\alpha)}\le \bar{\Delta}_k\Big\}~.
	\]
	Recall that by definition we have $\bar{\Delta}_k \le 0$. Moreover, $\bar{\Delta}_k \to 0$ implies that $H_{\text{certify}}^{(\bm{s})} \to \infty$ and the bound resulting on the choice of $\bm{s}$ are vacuous in this case. We therefore suppose that $\bar{\Delta}_k < 0$. Lemma~\ref{lem:tech_fc_bis} ensures that for all $k\le k_{\max}$ and all $T$ in the present regime,
	\begin{equation}\label{eq:L_vs_Deltabar_fc_sc}
		-L_{k,\delta}\ \ge\ \frac12\,\bar{\Delta}_k~.
	\end{equation}
	We assume that $\abs{A_k} \ge 3$, the case $\abs{A_k} = 2$ is treated in the end. Using \eqref{eq:L_vs_Deltabar_fc_sc} inside \eqref{eq:score_count_fc_sc} gives
	\begin{align}
		\mathbb{P}\!\left(S_k(\bar{\alpha}_k)\ge -L_{k,\delta}\right) &\le \mathbb{P}\!\left(\sum_{\alpha\in A_k}\mathds{1}\Big(S_k(\alpha)<\tfrac12\bar{\Delta}_k\Big) \le \ceil*{\frac{\abs{A_k}}{8}}\right)\nonumber\\
		&= \mathbb{P}\!\left(\sum_{\alpha\in A_k}\mathds{1}\Big(S_k(\alpha)\ge\tfrac12\bar{\Delta}_k\Big)
		\ge \abs{A_k}-\ceil*{\frac{\abs{A_k}}{8}}\right)~.\label{eq:Ak_to_upper_tail_fc_sc}
	\end{align}
	Since $\abs{A_k}-\ceil*{\abs{A_k}/8}=\abs{A_{k+1}}$ and Lemma~\ref{lem:Fk} gives $\abs{A_{k+1}\cap F_k}\ge \ceil*{\abs{F_k}/3}$, the RHS of \eqref{eq:Ak_to_upper_tail_fc_sc} is upper bounded by
	\[
	\mathbb{P}\!\left(\sum_{\alpha\in F_k}\mathds{1}\Big(S_k(\alpha)\ge\tfrac12\bar{\Delta}_k\Big)\ge \ceil*{\frac{\abs{F_k}}{3}}\right)~.
	\]
	Lemma~\ref{lem:main2} then yields, for a numerical constant $c_2>0$,
	\begin{equation}\label{eq:lemma_main2_apply_fc_sc}
		\mathbb{P}\!\left(S_k(\bar{\alpha}_k)\ge -L_{k,\delta}\right) \le \exp\!\left( -c_2\frac{T-c_3 H^{(0)}_{\text{explore}}(\bm{s})\log^5(H^{(0)}_{\text{explore}}(\bm{s}))}{\log^3(K)\log(T)}\cdot\frac{1}{H^{(1)}_{\text{explore}}(\bm{s})}\right)~.
	\end{equation}
	
	\smallskip
	\noindent Next, we use $T \ge G_{3,\delta}+G_0$ with Lemma~\ref{lem:inverse}, which turns the last inequality into a bound on the exponent term of \eqref{eq:lemma_main2_apply_fc_sc} leading to
	\begin{equation}\label{eq:per_round_bound}
		\mathbb{P}\!\left(S_k(\bar{\alpha}_k)\ge -L_{k,\delta}\right) \le \frac{\delta}{2Kk_{\max}}~.
	\end{equation}
	which is the desired per-round bound.
	
	\smallskip 
	\noindent Suppose that $\abs{A_k} = 2$, then we have $E_k = F_k \coloneq \{\alpha\}$. Therefore 
	\begin{align*}
		\mathbb{P}\!\left(S_k(\bar{\alpha}_k)\ge -L_{k,\delta}\right) &\le \mathbb{P}\left(S_k(\alpha) \ge \frac{1}{2}\bar{\Delta}\right)\\
		&\le  \exp\!\left( -c_2\frac{T-c_3 H^{(0)}_{\text{explore}}(\bm{s})\log^5(H^{(0)}_{\text{explore}}(\bm{s}))}{\log^3(K)\log(T)}\cdot\frac{1}{H^{(1)}_{\text{explore}}(\bm{s})}\right)\\
		&\le \frac{\delta}{2Kk_{\max}}~.
	\end{align*}
	where in the second line we used Lemma~\ref{lem:prob_f} (which provides a smaller upper bound than the one given above).
	
	\medskip
	\noindent Finally, since $\{\phi_1=\texttt{False}\}\subseteq \bigcup_{k\le k_{\max}}\{S_k(\bar{\alpha}_k)\ge -L_{k,\delta}\}$, a union bound and \eqref{eq:per_round_bound} yield
	\[
	\mathbb{P}(\phi_1=\texttt{False})\ \le\ \sum_{k=1}^{k_{\max}}\frac{\delta}{2K\,k_{\max}}\ \le\ \delta~,
	\]
	which completes Regime~2.
	
	\medskip\noindent
	\textbf{Conclusion.}
	In either regime, for any $T$ satisfying \eqref{eq:threshold_T_fc_sc} the call to Algorithm~\ref{algo:3}
	returns $\phi_1\vee \phi_2=\texttt{True}$ with probability at least $1-6\delta$.
	Since Algorithm~\ref{algo:fc} doubles $T$ until this event occurs, its total number of queries $N_\delta$
	is at most a universal constant multiple of $M_\delta=\min\{G_{1,\delta},G_{2,\delta}+G_{3,\delta}+G_0\}$
	with probability at least $1-6\delta$. Absorbing numerical constants into $\bar{c}_1,\bar{c}_2$ yields the stated
	bounds of Theorem~\ref{thm:fc2}.
	
	\bigskip
	\noindent The lemmas below are technical. 
	
	\begin{lemma}\label{lem:tech_fc}
		Consider the notation introduced in the proof of Theorem~\ref{thm:fc2}.
		Then we have
		\[
		\sum_{i\neq i^*} \bar{T}_i < G_{1, \delta}~.
		\]
		Moreover, for all $i\neq i^*$
		\[
		2\sqrt{\frac{\log\left(K\bar{T}^2_{i}\frac{n(n+1)}{\delta}\right)}{\bar{T}_i}} < \Delta_{i^*,i}~.
		\] 
	\end{lemma}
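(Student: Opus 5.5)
The plan is to prove the second inequality first and then deduce the first from it by summation. Set $a_i:=\log\bigl(\frac{32Kn(n+1)}{\delta\Delta_{i^*,i}^2}\bigr)$, so that $\bar T_i=16a_i/\Delta_{i^*,i}^2$. The second claim is equivalent to
\[
\log\Bigl(K\bar T_i^2\tfrac{n(n+1)}{\delta}\Bigr)<\tfrac{\Delta_{i^*,i}^2}{4}\,\bar T_i=4a_i .
\]
Expanding the left-hand side gives
\[
\log\tfrac{Kn(n+1)}{\delta}+2\log 16+2\log a_i+2\log\Delta_{i^*,i}^{-2}.
\]
Write $x:=\frac{32Kn(n+1)}{\delta\Delta_{i^*,i}^2}$, so that $a_i=\log x$. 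Then $\log\frac{Kn(n+1)}{\delta}+\log\Delta_{i^*,i}^{-2}=\log x-\log 32$, and the left-hand side is at most $2\log x+2\log\log x+O(1)$. We have $x\ge 32\cdot 2\cdot 6\cdot 4$ because $K\ge2$, $\delta<1/6$, $n(n+1)\ge2$ and $\Delta\le 1/2$. Hence $\log x$ is large enough that $2\log\log x+2\log 16-\log 32\le 2\log x$. This gives the strict inequality $<4\log x$. It is a one-variable calculus check, and I will verify the constant numerically at the smallest admissible $x$.

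For the sum, write $\sum_i\bar T_i=16\sum_i\Delta_{i^*,i}^{-2}a_i\le 16H_{\mathrm{cw}}\max_i a_i$. Each term is bounded by
\[
a_i\le \log\Bigl(\frac{32Kn(n+1)H_{\mathrm{cw}}}{\delta}\Bigr),
\]
using $\Delta_{i^*,i}^{-2}\le H_{\mathrm{cw}}$. Here $n=\log_2(T/(4K\log_{8/7}K))$ with $T\le 2G_{1,\delta}$, which is the regime in force. So $n(n+1)\lesssim\log^2(G_{1,\delta})$, and this is polylogarithmic in $H_{\mathrm{cw}}\log(K/\delta)$. The target $G_{1,\delta}$ contains the factor $\frac{32}{c_1}H_{\mathrm{cw}}\log(K)\log\bigl(\frac{32KH_{\mathrm{cw}}}{c_1\delta}\bigr)$ times the extra factor $\log(c_1^{-1}H_{\mathrm{cw}}\log(K/\delta))$. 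This extra factor is meant to absorb the $\log(n(n+1))$ contribution.

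The only obstacle is bounding $\log(n(n+1))$ by that extra log factor. I first bound $\log(2G_{1,\delta})\le 16\log(c_1^{-1}H_{\mathrm{cw}}\log(K/\delta))$, which is \eqref{eq:logG1_bound_fc_sc} from the paper. This gives $\log(n(n+1))\le 2\log\log(2G_{1,\delta})+O(1)$, which is at most a constant multiple of that log factor. Using $c_1<1$, $\log K\ge\log 2$ and $\log$-factors $\ge 1$, a constant-factor slack remains, since $32/c_1>16$. I would then conclude
\[
16H_{\mathrm{cw}}\Bigl[\log\tfrac{32KH_{\mathrm{cw}}}{\delta}+\log(n(n+1))\Bigr]<G_{1,\delta}.
\]
The circular dependence, with $n$ defined through $T$ and $T$ bounded via $G_{1,\delta}$, is harmless because the bound only uses $T\le 2G_{1,\delta}$.
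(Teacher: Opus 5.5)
Your proposal is correct and follows essentially the same route as the paper: the second claim reduces to $\log(K\bar T_i^2 n(n+1)/\delta)<4\log\bigl(\frac{32Kn(n+1)}{\delta\Delta_{i^*,i}^2}\bigr)$, and the first to $16H_{\mathrm{cw}}[\log(32KH_{\mathrm{cw}}/\delta)+\log(n(n+1))]<G_{1,\delta}$, with $\log(n(n+1))$ controlled via $T\le 2G_{1,\delta}$ (you go through $\log(2G_{1,\delta})\le 16\log(c_1^{-1}H_{\mathrm{cw}}\log(K/\delta))$, whereas the paper bounds $n$ directly from the expression of $G_{1,\delta}$). Two small points: the sum bound does not actually use the second inequality, despite your opening sentence. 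Also, at $K=2$ the final absorption relies on $\log\bigl(\frac{32KH_{\mathrm{cw}}}{c_1\delta}\bigr)$ and $\log(c_1^{-1}H_{\mathrm{cw}}\log(K/\delta))$ being bounded below by absolute constants (since $\log 2<1$), not merely on $32/c_1>16$.
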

	\begin{proof}
		We have
		\begin{align}
			\sum_{i \neq i^*} \bar{T}_i &= \sum_{i \neq i^*} \frac{16}{\Delta_{i^*,i}^2} \log\left(\frac{32Kn(n+1)}{\delta \Delta_{i^*,i}^2}\right)\nonumber\\
			&\le \sum_{i \neq i^*} \frac{16}{\Delta_{i^*,i}^2} \log\left(\frac{32Kn(n+1)}{\delta}H_{\text{cw}}\right)\nonumber\\
			&\le 16H_{\text{cw}} (\log(32KH_{\text{cw}}/\delta)+\log(n(n+1)))~.\label{eq:a1}
		\end{align}
		Therefore we only need to prove that
		\[
		16H_{\text{cw}} (\log(32KH_{\text{cw}}/\delta)+\log(n(n+1))) \le G_{1,\delta}\, ,
		\]
		which is equivalent to 
		\[
		\log\left( \frac{32KH_{\text{cw}}}{\delta}\right)+\log(n(n+1)) \le \frac{32}{16c_1}\log(K)\log(KH_{\text{cw}}/\delta)\log(c_1^{-1}H_{\text{cw}}\log(K/\delta))\, . 
		\]
		Observe that to prove the bound above we just need an upper bound on $\log(n(n+1))$, more precisely, given that $\log(K) \log(c_1^{-1}H_{\text{cw}}\log(K/\delta) \ge 2$ and $c_1 <\tfrac{1}{2}$, it suffices the show that
		\begin{equation}\label{eq:target}
			\log(n(n+1)) \le \frac{1}{c_1}\log(K)\log(KH_{\text{cw}}/\delta)
		\end{equation}
		We have from the definition of $n = \log_2\left(\frac{T}{4K\log_{8/7}(K)}\right)$ 
		and $T\le 2G_{1,\delta}$ that
		\begin{align*}
			n &\le \log_2\left( \frac{2G_{1,\delta}}{2K\log_{8/7}(K)}\right)\\
			&\le \log_2\left( \frac{32\log(8/7)}{c_1}\frac{H_{\text{cw}}}{K}\log\left(\frac{32KH_{\text{cw}}}{c_1\delta} \right)\log\left(c_1^{-1}H_{\text{cw}} \log(K/\delta)\right)\right)\\
			&\le \log_2\left( \frac{6}{c_1^2}\frac{H_{\text{cw}}^2}{K}\log^2\left(\frac{32KH_{\text{cw}}}{c_1\delta} \right)\right)\\
			&\le 2\log_2\left( \frac{6}{c_1}H_{\text{cw}}\log\left(\frac{32KH_{\text{cw}}}{c_1\delta} \right)\right)
		\end{align*}
		This gives
		\begin{align*}
			\log(n(n+1)) &\le 2\log(n+1)\\
			&\le 2\log\left(2\log_2\left(\frac{12}{c_1}\frac{H_{\text{cw}}}{K}\log\left(\frac{32KH_{\text{cw}}}{c_1\delta}\right) \right)  \right)\,,
		\end{align*}
		which gives \eqref{eq:target}, and leads to the first claim of the lemma.
		
		\noindent The second claim of the lemma is equivalent to $\sqrt{\frac{\log\left(K\bar{T}_i^2 \frac{n(n+1)}{\delta}\right)}{\bar{T}_i}} < \frac{\Delta_{i^*, i}}{2}$ which in turn is implied by $\log\left(K\bar{T}_i^2 \frac{n(n+1)}{\delta}\right)< 4\cdot \log\left(\frac{32Kn(n+1)}{\delta \Delta_{i^*,i}^2}\right)$, which is verified given the definition of $\bar{T}_i$.
		
	\end{proof}	
	
	\begin{lemma}\label{lem:tech_fc_bis}
		Consider the notation introduced in the proof of Theorem~\ref{thm:fc2}.
		If $\bar{\Delta}_k<0$ and
		\[
		T \in [G_{2,\delta}+G_{3,\delta}+G_0,\ 2(G_{2,\delta}+G_{3,\delta}+G_0)] ,
		\]
		then
		\[
		-L_{k,\delta} \ge \frac{1}{2}\bar{\Delta}_k~.
		\]
	\end{lemma}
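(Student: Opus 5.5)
The inequality $-L_{k,\delta}\ge \tfrac12\bar{\Delta}_k$ is equivalent, since both sides are nonpositive, to $L_{k,\delta}^2\le \tfrac14\bar{\Delta}_k^2$, that is,
\[
\frac{2c\log(T)\log(1/\delta_{n,K})}{\lceil B_k/4\rceil}\le \frac{\bar{\Delta}_k^2}{4}.
\]
The plan is to lower bound $\lceil B_k/4\rceil\bar{\Delta}_k^2$ by a multiple of $T/(\log(K)H_{\text{certify}}(\bm{s}))$, and then use $T\ge G_{2,\delta}$ to dominate $\log(T)\log(1/\delta_{n,K})$.

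\textbf{Step 1 (lower bound on $\lceil B_k/4\rceil\bar{\Delta}_k^2$).} I reuse the chain \eqref{eq:fc_bound_0_rewrite}: by Lemma~\ref{lem:pure_tech3}, $\abs{E_k}\ge\lceil\abs{A_k}/4\rceil$, and Lemma~\ref{lem:T} applied to the nonincreasing sequence $(\Delta_{E_k:i}^2)_i$ gives $\abs{A_k}/\bar{\Delta}_k^2\le 32H_{\text{certify}}(\bm{s})$. Together with $B_k\ge T/(2\abs{A_k}\log_{8/7}(K))$, which holds because $T\ge 8K\log_{8/7}K$ (guaranteed by $T\ge G_0$), this yields
\[
\lceil B_k/4\rceil\,\bar{\Delta}_k^2\ge \frac{T}{256\log_{8/7}(K)\,H_{\text{certify}}(\bm{s})}.
\]
It therefore suffices to show
\[
T\ge 2048\,c\,\log_{8/7}(K)\,H_{\text{certify}}(\bm{s})\log(T)\log(1/\delta_{n,K}).
\]

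\textbf{Step 2 (bounding the logarithms).} Write $H^{(0)}=H^{(0)}_{\text{explore}}(\bm{s})$ and recall $\delta_{n,K}=\delta/(8K^2\log_{8/7}(K)\log(T)n(n+1))$, with $n\le\log_2 T$. Since $T\le 2(G_{2,\delta}+G_{3,\delta}+G_0)$, each $G$ is polynomial in $K$, $H^{(0)}$, $\log(1/\delta)$ and the constants. Note that $H_{\text{certify}}\le H^{(0)}$ and $H^{(1)}_{\text{explore}}\le H^{(0)}$, because $K/s_i\ge1$. Hence
\[
\log(1/\delta_{n,K})\lesssim \log\!\Big(\frac{KH^{(0)}}{\delta}\Big)+\log\log\log(1/\delta),
\qquad
\log T\lesssim \log\!\Big(\frac{H^{(0)}}{\delta}\Big).
\]

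\textbf{Step 3 (absorption into $G_{2,\delta}$).} The remaining task is to compare the right-hand side of the Step 1 target with $G_{2,\delta}=\frac{512c}{c_2}H_{\text{certify}}\log^3(K)\log\!\big(\frac{32c}{c_2}\frac{KH^{(0)}}{\delta}\big)$. The factors $c$, $H_{\text{certify}}$, $\log_{8/7}K\lesssim\log^3K$ and $\log(1/\delta_{n,K})\lesssim\log(KH^{(0)}/\delta)$ are all matched, with $c_2<1$ providing numerical slack. This is the main obstacle: the extra $\log(T)$ factor is not explicitly present in $G_{2,\delta}$. I would handle it in one of two ways. The first is to exploit the spare $\log^2 K$ together with the $\log^5(H^{(0)})$ in $G_0$, using $T\ge G_2+G_0$ and the elementary inequality $a+b\ge 2\sqrt{ab}$ to generate the missing $\log T$. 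If that is too weak, the fallback is to invert $T\ge A\log T$ via Lemma~\ref{lem:inverse}, as is done for $G_{3,\delta}$. That requires the implicit doubling of logarithmic factors to be absorbed into $\tilde{c}$, which is legitimate since Theorem~\ref{thm:fc2} already hides such polylogarithmic factors. In either case, once the needed bound on $T$ holds, Step 1 gives $L_{k,\delta}\le|\bar{\Delta}_k|/2$, which is the claim.
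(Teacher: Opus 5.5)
\textbf{Where your proposal matches the paper.} Your Step~1 is the paper's own reduction, \eqref{eq:goal_fc_bis}--\eqref{eq:last_tar}. Your constant $2048$ is the correct one; the paper's $512$ loses a factor~$4$. Your fallback for Step~3 is also exactly the paper's route. Lemma~\ref{lem:inverse2} shows, by monotonicity of $u(T)/T$, that the target holds as soon as $T\ge T_0:=2048\,cLHB\log(2048\,cLHB)$, where $L=\log_{8/7}K$, $H=H_{\text{certify}}(\bm{s})$ and $B\asymp\log(K^2L/\delta)+\log(cLH)$.

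\textbf{The gap.} What your proposal leaves open is the step that matters: deriving such a bound from $T\ge G_{2,\delta}+G_{3,\delta}+G_0$. Your first option cannot supply it. As $\delta\to0$, $G_0$ stays fixed, so $2\sqrt{G_0G_{2,\delta}}=O(\sqrt{\log(1/\delta)})$. Meanwhile $G_{2,\delta}$ exceeds $cLH\log(1/\delta)$ only by a fixed factor of order $\log^2K/c_2$. Neither can cover the extra factor $\log T\ge\log\log(1/\delta)$. Your Step~2 estimate $\log T\lesssim\log(H^{(0)}/\delta)$ is also too lossy for any version of the argument, because it makes the target quadratic in $\log(1/\delta)$. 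You need $\log T\lesssim\log\bigl(H^{(0)}_{\text{explore}}(\bm{s})\log(K/\delta)\bigr)$, which does follow from $T\le 2(G_{2,\delta}+G_{3,\delta}+G_0)$.

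\textbf{The obstacle is real, and the paper does not resolve it either.} Lemma~\ref{lem:inverse2} simply asserts $G_{2,\delta}\ge T_0$, yet $G_{2,\delta}/T_0\to0$ as $\delta\to0$ because of the factor $\log(2048\,cLHB)$ in $T_0$. $G_0$ does not depend on $\delta$. $G_{3,\delta}$ does carry a $\log\log(1/\delta)$, but it is proportional to $H^{(1)}_{\text{explore}}(\bm{s})$, which can be smaller than $H_{\text{certify}}(\bm{s})$ by a factor of order $K$.

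Concretely, take an instance where every suboptimal row has order-$K$ negative entries, all equal to $-\Delta$, and $s_i\asymp K$. Then $\bar{\Delta}_1=-\Delta$, and $L_{1,\delta}^2\le\bar{\Delta}_1^2/4$ forces $T\gtrsim cKL\log(1/\delta)\log\log(1/\delta)/\Delta^2$. On the other hand, as $\delta\to0$, $G_{2,\delta}+G_{3,\delta}+G_0\asymp\frac{c}{c_2}\log^3K\log(1/\delta)\bigl(K+\log\log(1/\delta)\bigr)/\Delta^2$. So for $K\gg\log^2K/c_2$ and $\log\log(1/\delta)\gg\log^2K/c_2$, the lemma as literally stated fails. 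No argument can close your Step~3 with the current $G_{2,\delta}$.

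\textbf{The repair} is the one you suggest: replace $G_{2,\delta}$ by (a multiple of) $T_0$. This exceeds the current $G_{2,\delta}$ by at most a factor of order $\log\bigl(cKH_{\text{certify}}(\bm{s})\log(K/\delta)\bigr)$. That is harmless for Theorem~\ref{thm:fc2}, whose $\tilde c$ already hides logarithms of $K$ and of the gaps and a $\log\log(1/\delta)$. Lemma~\ref{lem:inverse2}, i.e.\ your fallback, then proves the corrected statement.
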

	
	\begin{proof}
		Let 
		\[
		\delta_{n,K}=\frac{\delta}{8K^2\log_{8/7}(K)\log(T)\,n(n+1)}.
		\]
		Assume $\bar{\Delta}_k<0$. Since $L_{k,\delta}\ge 0$, the inequality
		$-L_{k,\delta}\ge \tfrac12\bar{\Delta}_k$ is equivalent to
		$L_{k,\delta}\le -\tfrac12\bar{\Delta}_k$, i.e.
		\begin{equation}\label{eq:tar}
			L_{k,\delta}^2 \ \le\ \frac{\bar{\Delta}_k^2}{4}.	
		\end{equation}
		Recalling
		\[
		L_{k,\delta}^2
		=\frac{2c\log(T)}{\lceil B_k/4\rceil}\,\log\!\Big(\frac{1}{\delta_{n,K}}\Big), \quad B_k=\floor*{\frac{T}{\abs{A_k}\log_{8/7}(K)}}~,
		\]
		we have \eqref{eq:tar} is implied by 
		\begin{equation}\label{eq:goal_fc_bis}
			\log(T)\,\log\!\Big(\frac{1}{\delta_{n,K}}\Big)
			\ \le\ \frac{\bar{\Delta}_k^2}{8c}\,\frac{T}{\abs{A_k}\log_{8/7}(K)}~.
		\end{equation}
		Next, using the inequality \eqref{eq:pk0}
		\[
		\frac{\abs{A_k}}{\bar{\Delta}_k^2}\ \le\ 32\sum_{\alpha\in E_k}\frac{1}{\Delta_{\alpha,(s_\alpha)}^2}~,
		\]
		and the fact that $i^*\notin E_k$ (since all $\Delta_{i^*,j}>0$ so $\Delta^{(k)}_{i^*,(\lceil|A_k|/4\rceil)}>0$),
		we have $\sum_{\alpha\in E_k}\frac{1}{\Delta_{\alpha,(s_\alpha)}^2}\le H_{\mathrm{certify}}(\bm{s})$. Hence
		\begin{equation}\label{eq:sim1}
			\frac{\bar{\Delta}_k^2}{|A_k|}\ \ge\ \frac{1}{32\,H_{\mathrm{certify}}(\bm{s})}~.	
		\end{equation}
		Moreover, using the expression of $\delta_{n,K}$ with $n \le \log_2\left(\frac{T}{2K\log_{8/7}(K)}\right)$, we have
		\begin{align}
			\log\!\Big(\frac{1}{\delta_{n,K}}\Big) &\le \log\!\Big(\frac{8K^2\log_{8/7}(K)}{\delta}\Big) +\log\log(T)+\log(n(n+1))\nonumber\\
			&\le \log\!\Big(\frac{8K^2\log_{8/7}(K)}{\delta}\Big) +\log\log(T)+2\log\log_2\left(\frac{2T}{K\log_{8/7}(K)}\right)~.\label{eq:sim2}
		\end{align}
		Combining \eqref{eq:sim1}, \eqref{eq:sim2} with \eqref{eq:goal_fc_bis} we conclude that we only need that $T$ satisfies the bound
		{\small \begin{equation}\label{eq:last_tar}
				\log(T) \left[\log\!\Big(\frac{8K^2\log_{8/7}(K)}{\delta}\Big) +\log\log(T)+2\log\log_2\left(\frac{2T}{K\log_{8/7}(K)}\right) \right] \le \frac{T}{512c \log_{8/7}(K)H_{\text{certify}}(\bm{s})}~.
		\end{equation} }
		Give that $T \ge G_{2,\delta}+G_{0}$, using the expressions of $G_{2, \delta}$ and $G_{0}$, with the statement of the technical Lemma~\ref{lem:inverse2}, we conclude that  \eqref{eq:last_tar} is satified, which concludes the proof.
	\end{proof}
	
	\begin{lemma}\label{lem:inverse}
		Suppose that $T \ge G_0 +G_{3,\delta}$.
		Then we have
		\[
		\exp\left( -c_2 \frac{T-c_3H^{(0)}_{\text{explore}}(\bm{s}) \log^5(H^{(0)}_{\text{explore}})(\bm{s})}{\log(K)\log(T)} \frac{1}{H_{\text{explore}}^{(1)}(\bm{s})}\right) \le \frac{\delta}{2Kk_{\max}}~.
		\]
	\end{lemma}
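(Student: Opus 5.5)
The claim reduces to showing that the exponent in the displayed bound is at least $\log(2Kk_{\max}/\delta)$. Write $H_0:=H^{(0)}_{\text{explore}}(\bm{s})$, $H_1:=H^{(1)}_{\text{explore}}(\bm{s})$ and $a:=\log(2Kk_{\max}/\delta)$. The inequality is then equivalent to
\[
c_2\,\frac{T-c_3H_0\log^5(H_0)}{\log(K)\log(T)\,H_1}\ \ge\ a~.
\]
The first step is to remove the shift. By definition $G_0=\frac{2c_3}{c_2}H_0\log^5\!\big(\frac{2c_3}{c_2}H_0\big)$, and we assume without loss of generality that $2c_3/c_2\ge 1$; otherwise we enlarge $c_3$, which only weakens Lemma~\ref{lem:main2}. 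Then $G_0\ge c_3H_0\log^5(H_0)$, so $T\ge G_0+G_{3,\delta}$ gives $T-c_3H_0\log^5(H_0)\ge T-G_0\ge G_{3,\delta}$. It would therefore suffice to prove
\[
\frac{T-G_0}{\log T}\ \ge\ \frac{a\,\log(K)\,H_1}{c_2}~.
\]
Since $\log T$ appears in the denominator, a lower bound on $T-G_0$ alone is not enough, so the argument has to handle the logarithm.

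The second step is an inversion of the form $x/\log x$. Set $b:=\frac{32c}{c_2}H_1\log^3(K)\,a$, so that $G_{3,\delta}=b\log b$. The target follows from $(T-G_0)/\log T\ge b/32$, up to the harmless factor $c\log^2K\ge1$, which we absorb because $c\ge c_0$ and $c$ can be taken at least $1$. I would split according to which of $G_0$ and $G_{3,\delta}$ dominates.

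If $G_{3,\delta}\ge G_0$, then $T\le 2(G_0+G_{3,\delta})\le 4b\log b$ in the relevant range. If $T$ is larger than this, the map $T\mapsto (T-G_0)/\log T$ is nondecreasing for $T\ge G_0+e$, so it suffices to check the smallest admissible $T$. In either case $\log T\le \log(4b\log b)\le 2\log b+2$. Combining this with $T-G_0\ge b\log b$ gives $(T-G_0)/\log T\ge b\log b/(2\log b+2)\ge b/4$, since $b\ge e$; this follows from $H_1\ge 4K\ge 8$ and $a\ge 1$.

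If instead $G_0>G_{3,\delta}$, the worry is that $\log T\approx\log G_0$ could exceed $\log b$. This case is handled by the monotonicity argument: it is enough to bound $(G_0+b\log b-G_0)/\log(G_0+b\log b)$. Here I would use that $H_0\le K H_1$, so $\log G_0\lesssim \log(K H_1)+5\log\log(KH_1)$, while $b\ge H_1$. Hence $\log(G_0+b\log b)\le C\log b+C\log K$, and $\log^3 K\ge\log K$ lets the extra $\log K$ be absorbed into the slack present in $b$. This yields $(T-G_0)/\log T\ge b/(C'\log K)$, which still beats $a\log(K)H_1/c_2$ because $b$ carries the factor $\log^3K$ and the target only $\log K$.

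The main obstacle is this last logarithmic comparison, namely controlling $\log T$ by $\log b+O(\log K)$ when $G_0$ dominates. Everything else is absorbing numerical constants, which I would do by enlarging the constants $32c/c_2$ in $G_{3,\delta}$ as needed.
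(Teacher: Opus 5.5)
Your argument does establish the inequality as literally displayed, with $\log(K)\log(T)$ in the denominator, up to constant bookkeeping. But it takes a lossier route than the paper, and the loss is not cosmetic. Write $H_0=H^{(0)}_{\text{explore}}(\bm{s})$ and $H_1=H^{(1)}_{\text{explore}}(\bm{s})$.

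\textbf{The paper's route.} The paper never subtracts $G_0$ wholesale. From $c_2\le 1$ and $2c_3/c_2\ge 1$ it gets $G_0\ge 2c_3H_0\log^5(H_0)$, hence $T-c_3H_0\log^5(H_0)\ge T/2$. Then $T\ge G_{3,\delta}=A\log A$ with $A=\frac{32c}{c_2}H_1\log^3(K)\log(2Kk_{\max}/\delta)>e$, and monotonicity of $x\mapsto x/\log x$ gives $T/\log T\ge A/2$. So the ratio is at least $A/4\ge \frac{1}{c_2}H_1\log^3(K)\log(2Kk_{\max}/\delta)$, using only $c\ge 1$. No case split is needed: when $G_0$ dominates, all of $T$ (not just $T-G_0$) stays in the numerator and beats $\log T$ automatically.

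\textbf{What your reduction loses.} Replacing the numerator by $T-G_0$ discards $G_0-c_3H_0\log^5(H_0)\ge G_0/2$. That is exactly the term that controls the regime $G_0\gg G_{3,\delta}$. In your second case you are therefore left comparing $b\log b$ with $\log(G_0+b\log b)$, and you pay with the $\log^2K$ gap between the $\log^3K$ inside $b$ and the $\log K$ in the displayed target.

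\textbf{That gap is an artifact of the statement.} The paper's proof concludes with $\log^3(K)\log(T)$. This is also what appears in \eqref{eq:lemma_main2_apply_fc_sc}, where the lemma is actually applied. For that version your reduced inequality is false. Take $H_1$ bounded, e.g.\ $s_i\asymp K$ and $|\Delta_{i,(s_i)}|=1/4$ for all $i\neq i^*$. Then $H_0\asymp K$, so $\log T\asymp\log K$ at $T=G_0+G_{3,\delta}$. Meanwhile $(T-G_0)/\log T=b\log b/\log T\asymp \frac{c}{c_2}\,a\log^2(K)\log b$ with $\log b=O(\log\log K)$ for fixed $\delta$. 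This is $o\bigl(a\log^3(K)/c_2\bigr)$ as $K\to\infty$, so the reduction cannot give the $\log^3$ bound.

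\textbf{Constant issues even for the displayed version.}
\begin{itemize}
\item Your last step needs $c\log K$ to exceed a constant depending on $\log(c_3/c_2)$. The assumption $c\ge 1$ does not give this for small $K$.
\item ``Enlarging $32c/c_2$ in $G_{3,\delta}$'' changes the lemma's hypothesis. You would instead have to strengthen the lower bound on $c$.
\item The claim $H_1\ge 4K$ is false in general; only $H_1\ge 4$ holds. That still suffices for $b\ge e$.
\end{itemize}

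The missing idea is simply to absorb $c_3H_0\log^5(H_0)$ into $T/2$ rather than into $G_0$. That gives the stronger $\log^3(K)$ bound in one line.
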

	\begin{proof}
		The desired inequality
		is equivalent to
		\begin{equation}\label{eq:goal_inverse_new}
			\frac{T-c_3H^{(0)}_{\text{explore}}(\bm{s}) \log^5(H^{(0)}_{\text{explore}}(\bm{s}))}{\log(T)}\ \ge\ \frac{1}{c_2}\,H_{\text{explore}}^{(1)}(\bm{s})\log^3(K) \log\left(\frac{2Kk_{\max}}{\delta}\right)~.
		\end{equation}
		Define
		\[
		A \coloneq 32\frac{c}{c_2}H_{\text{explore}}^{(1)}(\bm{s})\log^3(K)\log\left(\frac{2Kk_{\max}}{\delta}\right)~.
		\]
		By assumption,
		{\small 
			\[
			T \ \ge\ 2\frac{c_3}{c_2}H^{(0)}_{\text{explore}}(\bm{s})\log^5(H^{(0)}_{\text{explore}}(\bm{s})) \;+\; A\log(A)~.
			\]
		}
		Since $c_2\le 1$ (as is the case for the numerical constant $c_2$ coming from the preceding bounds),
		the first term implies $T\ge 2H^{(0)}_{\text{explore}}(\bm{s}) \log^5(H^{(0)}_{\text{explore}}(\bm{s}))$, hence
		\begin{equation}\label{eq:TminusB_lower}
			T-c_3 H^{(0)}_{\text{explore}}(\bm{s}) \log^5(H^{(0)}_{\text{explore}}(\bm{s})) \ge\ \frac{T}{2}.
		\end{equation}
		Next, the function $f(x):=x/\log x$ for $x>e$ is increasing.
		Moreover, we have $\log(2Kk_{\max}/\delta)>1$ and $H_{\text{explore}}^{(1)}(\bm{s})\ge 4$ (since $\abs{\Delta}_{i,(s_i)}\le \tfrac12$ and $s_i\le K$), and with $c\ge 1$ and $c_2\le 1$ this yields $A\ge 32\cdot 4\cdot (\log 2)^3\cdot \log 4>e$. Therefore, $A\log A>e$ and in particular $\log(A\log A)>0$.
		
		\smallskip
		\noindent Since $T\ge A\log A$ and $f$ is increasing on $(e,\infty)$, we obtain
		\begin{equation}\label{eq:f_monotone}
			\frac{T}{\log T}\ =\ f(T)\ \ge\ f(A\log A)\ =\ \frac{A\log A}{\log(A\log A)}.
		\end{equation}
		Finally, because $A>e$, we have $\log(A\log A)=\log A+\log\log A\le \log A+\log A=2\log A$, and therefore
		\begin{equation}\label{eq:fA_lower}
			\frac{A\log A}{\log(A\log A)}\ \ge\ \frac{A\log A}{2\log A}\ =\ \frac{A}{2}.
		\end{equation}
		Combining \eqref{eq:TminusB_lower}, \eqref{eq:f_monotone}, and \eqref{eq:fA_lower} gives
		\[
		\frac{T-c_3 H^{(0)}_{\text{explore}}(\bm{s}) \log^5(H^{(0)}_{\text{explore}}(\bm{s}))}{\log T}\ \ge\ \frac{1}{2}\cdot \frac{T}{\log T}\ \ge\ \frac{1}{2}\cdot \frac{A}{2}\ =\ \frac{A}{4}.
		\]
		By the definition of $A$,
		\[
		\frac{A}{4}\ =\ 8\,\frac{c}{c_2}\,H_1\log^3(K)\,\log(2Kk_{\max}/\delta)\ \ge\ \frac{1}{c_2}\,H_1\log^3(K)\,\log(2Kk_{\max}/\delta),
		\]
		where we used $c\ge 1$.
		This proves \eqref{eq:goal_inverse_new}, and hence
		\[
		\exp\left( -c_2\,\frac{T-c_3 H^{(0)}_{\text{explore}}(\bm{s}) \log^5(H^{(0)}_{\text{explore}}(\bm{s}))}{\log^3(K)\log(T)\,H_1}\right)
		\le e^{-\log(2Kk_{\max}/\delta)} = \frac{\delta}{2Kk_{\max}}~.
		\]
	\end{proof}
	\begin{lemma}\label{lem:inverse2}
		Let $K\ge 2$, $\delta\in(0,1)$, if $T \;\ge\; G_0+G_{2,\delta}$, then
		{\small \begin{equation*}
				\log(T) \left[\log\!\Big(\frac{8K^2\log_{8/7}(K)}{\delta}\Big) +\log\log(T)+2\log\log_2\left(\frac{2T}{K\log_{8/7}(K)}\right) \right] \le \frac{T}{512c \log_{8/7}(K)H_{\text{certify}}(\bm{s})}~.
		\end{equation*} }
	\end{lemma}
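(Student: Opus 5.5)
The plan is to treat the bracketed term as a sum of three pieces, each increasing at most logarithmically in $T$, and to show that each piece is at most a third of the right-hand side once $T \ge G_0 + G_{2,\delta}$.

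Write $H := H_{\text{certify}}(\bm{s})$, $a := 512c\log_{8/7}(K)$, and
\[
\Lambda := \log\!\Big(\frac{8K^2\log_{8/7}(K)}{\delta}\Big).
\]
The left-hand side is then
\[
\log(T)\,\Lambda + \log(T)\log\log(T) + 2\log(T)\log\log_2\!\Big(\frac{2T}{K\log_{8/7}(K)}\Big).
\]
Since $\log_2(2T/(K\log_{8/7}K)) \le \log_2(T)$ for $K\ge 2$, the last two pieces are bounded by $3\log(T)\log\log_2(T)$, up to constants. So it suffices to prove the two inequalities
\[
\log(T)\,\Lambda \le \frac{T}{2aH}, \qquad 3\log(T)\log\log_2(T) \le \frac{T}{2aH}.
\]

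For the first inequality I would use only $T \ge G_{2,\delta}$. By definition,
\[
G_{2,\delta} = \frac{512c}{c_2}\,H\log^3(K)\,\log\!\Big(\frac{32cK H^{(0)}_{\text{explore}}(\bm{s})}{c_2\delta}\Big).
\]
The logarithm inside dominates $\Lambda$ up to a constant, because $H^{(0)}_{\text{explore}}(\bm{s}) \ge 4K$ and $\log_{8/7}K \le 8K$. Moreover $\log^3(K)/c_2$ absorbs the factor $\log_{8/7}(K)$ hidden in $a$, using $c_2\le 1$.

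The difficulty is that $T/\log T$ must dominate $2aH\Lambda$, while $G_{2,\delta}$ contains no extra $\log T$ factor. I would handle this with the same monotonicity trick as in Lemma~\ref{lem:inverse}: $x\mapsto x/\log x$ is increasing, and $G_0$ supplies the missing logarithm. Concretely, I would use $T\ge \max(G_0,G_{2,\delta})$ and $T\ge G_0+G_{2,\delta}$ to write
\[
\frac{T}{\log T} \ge \frac{G_{2,\delta}}{\log T}.
\]
I would then bound $\log T$ by $O\big(\log G_{2,\delta} + \log G_0\big)$ and absorb it via the $\log^5(H^{(0)}_{\text{explore}})$ in $G_0$ and the $\log^3 K$ in $G_{2,\delta}$.

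I expect this absorption to be the main obstacle. If it fails as stated, the fallback is to note that $\log T \lesssim \log(H^{(0)}_{\text{explore}}) + \log\log(1/\delta)$ in the relevant range. The residual $\log\log(1/\delta)$ would then be absorbed by the hidden $\log\log(1/\delta)$ factor allowed in Theorem~\ref{thm:fc2}, by enlarging the constant in $G_{2,\delta}$.

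For the second inequality I would use $T\ge G_0 \ge \frac{2c_3}{c_2}H^{(0)}_{\text{explore}}\log^5(H^{(0)}_{\text{explore}})$, together with
\[
aH \le 512c\log_{8/7}(K)\,H^{(0)}_{\text{explore}}/K,
\]
which holds since $K/(s_i\Delta^2_{i,(s_i)}) \ge 1/\Delta^2_{i,(s_i)}$. The inequality then reduces to
\[
\frac{T}{\log(T)\log\log T} \gtrsim H^{(0)}_{\text{explore}}\,\frac{\log K}{K}.
\]
This follows from monotonicity of $x/(\log x\log\log x)$ and the $\log^5$ slack, since $\log^5 x \gg \log x\,\log\log x$ for $x\ge 8$. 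As a last step I would check that the constants align, requiring $c_3/c_2$ large relative to $c$; this is consistent with how $c_3$ is fixed in Lemma~\ref{lem:prob_f}.
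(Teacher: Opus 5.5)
\textbf{The gap is in your first inequality $\log(T)\,\Lambda\le T/(2aH)$, and it cannot be closed under the stated hypothesis.} You identified the obstacle correctly, but your main route does not overcome it. The hypothesis gives $T\ge G_{2,\delta}\gtrsim aH\Lambda$ (modulo constants), and you need an extra factor $\log T$. By monotonicity of $T/\log T$ it suffices to check the threshold $T=G_0+G_{2,\delta}$. There $\log T\ge \log G_{2,\delta}$, which grows like $\log\log(1/\delta)$. Neither $G_0$ nor the factor $\log^3(K)$ depends on $\delta$, so nothing can absorb it.

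In fact the lemma is false as stated. Fix $K$, $\bm{s}$, the instance $\bDelta$ and the constants $c,c_2,c_3$. Take $T=G_0+G_{2,\delta}$ and let $\delta\to 0$. Then
\[
\frac{T}{512c\log_{8/7}(K)H_{\text{certify}}(\bm{s})}=\frac{\log^3(K)}{c_2\log_{8/7}(K)}\,\log(1/\delta)\,\bigl(1+o(1)\bigr),
\]
while the left-hand side is at least $\log(T)\log(1/\delta)$ with $\log T\to\infty$. So the inequality fails once $\log T$ exceeds roughly $\log(8/7)\log^2(K)/c_2$.

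\textbf{The paper's proof has the same gap.} It asserts without justification that $T\ge G_0+G_{2,\delta}$ implies $T\ge T_0=2048\,cLHB\log(2048\,cLHB)$. Since $B\sim\log(1/\delta)$, $T_0$ is of order $cLH\log(1/\delta)\log\log(1/\delta)$, whereas $G_0+G_{2,\delta}$ is of order $cH\log^3(K)\log(1/\delta)/c_2$. Everything after that assertion is sound, and it is the clean form of your monotonicity idea:
\begin{itemize}
\item bound the bracket by $u(T)=\log T\,[\log(8K^2L/\delta)+3\log\log T+4]$;
\item use that $u(T)/T$ is decreasing;
\item evaluate at $T_0$, which has the form $A\log A$ with $A\asymp cLHB$.
\end{itemize}
This route needs no split into two inequalities. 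Your fallback is therefore the right repair, but it changes the statement rather than proving it. Also, ``enlarging the constant'' is not enough: $G_{2,\delta}$ must be redefined with an extra logarithmic factor, e.g.\ in the $A\log A$ form already used for $G_{3,\delta}$, so that it dominates $T_0$. This costs only a $\log\log(1/\delta)$-type factor in Theorem~\ref{thm:fc2}, which that theorem already hides.

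\textbf{There are also three smaller slips in your second inequality.}
\begin{itemize}
\item The bound $H_{\text{certify}}(\bm{s})\le H^{(0)}_{\text{explore}}(\bm{s})/K$ goes the wrong way. From $1\le s_i\le K$ one only gets $H^{(0)}_{\text{explore}}(\bm{s})/K\le H_{\text{certify}}(\bm{s})\le H^{(0)}_{\text{explore}}(\bm{s})$, so you lose a factor $K$. The $\log^5$ slack can absorb this, but your reduced inequality is not the right one.
\item You cannot require $c_3/c_2$ to be large relative to $c$. Here $c$ is a free input of Algorithm~\ref{algo:fc}, $c_3$ is a numerical constant from Lemma~\ref{lem:main2} independent of $c$, and $G_0$ does not scale with $c$. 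Indeed, for fixed $\delta$ the lemma also fails as $c\to\infty$: the left-hand side grows like $\log c\,\log\log c$ while the right-hand side grows like $\log c$. Any $c$-dependence must be carried by the corrected $G_{2,\delta}$.
\item Bounding $\log T$ by $O(\log G_{2,\delta}+\log G_0)$ is false for arbitrary $T\ge G_0+G_{2,\delta}$. You must first use monotonicity in $T$ to reduce to the threshold value.
\end{itemize}
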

	\begin{proof}
		Let $L:=\log_{8/7}(K)$, $H:=H_{\mathrm{certify}}(\bm{s})$, and set
		\[
		M:=512cLH,\qquad B:=\log\!\Big(\frac{8K^2L}{\delta}\Big)+3\log(8M)+10~.
		\]
		By the definition of $G_0$ and $G_{2,\delta}$, and given $c_2<1/8$ the assumption
		$T\ge G_0+G_{2,\delta}$ implies in particular that
		\[
		T\ge T_0:=2048\cdot cLHB\log(2048\cdot cLHB)\qquad\text{and}\qquad T\ge e^2 .
		\]
		
		\medskip
		\noindent Moreover, we have for $T\ge2$,
		\[
		\log\log_2\!\Big(\frac{2T}{KL}\Big) \le \log\log(T)+2~,
		\]
		Therefore the left-hand side is at most
		\[
		u(T):=\log(T)\left[ \log\!\Big(\frac{8K^2L}{\delta}\Big)+3\log\log T+4 \right]~.
		\]
		\noindent The function $u(T)/T$ is decreasing on $[e^2,\infty)$. Hence for all $T\ge T_0$,
		\[
		u(T)\le \frac{T}{T_0}\,u(T_0)~.
		\]
		
		\medskip
		\noindent Now $T_0=2048\cdot cLHB\log(2048\cdot cLHB)$ (given that $G_{2, \delta} \ge 2048\cdot cLHB\log(2048\cdot cLHB)$) gives
		\[ 
		\log T_0=\log(2048\cdot cLHB)+\log\log(2048\cdot cLHB)\le 2\log(2048\cdot cLHB)~,
		\]
		and
		\[
		\log\log T_0\le \log\log(2048\cdot cLHB)+1~,
		\]
		so
		\[
		u(T_0)\le 2\log(2048\cdot cLHB)\big(\log\!\Big(\frac{8K^2L}{\delta}\Big)+3\log\log(2048\cdot cLHB)+7\big).
		\]
		Moreover, 
		\begin{align*}
			\log\log(2048\cdot cLHB) &\le \log(2048\cdot cLHB)\\
			&=\log(2048\cdot cLH)+\log B\\ 
			&\le \log(2048\cdot cLH)+B~, 			
		\end{align*}
		and
		$B=\log\left(\frac{8K^2L}{\delta}\right)+3\log(2048\cdot cLH)+10$, hence
		\begin{align*}
			\log\left(\frac{8K^2L}{\delta}\right)+3\log\log(2048\cdot cLHB)+7 &\le \log\left(\frac{8K^2L}{\delta}\right)+3\log(2048\cdot cLH)+3B+7\\
			&=4B-3\le 4B~.
		\end{align*}
		Therefore
		\[
		u(T_0)\le 2\log(2048\cdot cLHB)\cdot 4B=8B\log(2048\cdot cLHB)=\frac{T_0}{512\cdot cLH}.
		\]
		Combining the previous displays yields $u(T)\le T/M$ for all $T\ge T_0$, i.e.
		\[
		\log(T)\!\left[\log\left(\frac{8K^2L}{\delta}\right)+\log\log(T)+2\log\log_2\!\Big(\frac{2T}{KL}\Big)\right]
		\le \frac{T}{512\,cLH}~,
		\]
		which is exactly the desired inequality.
	\end{proof}

	\section{Proofs of Section~\ref{sec:lb_fc}}\label{sec:proof_LB_FC}
	
	In this section, we provide all proofs for the instance-dependent fixed-confidence lower bounds. 
	
	We begin with a short roadmap in Subsection~\ref{sec:roadmap_LB} describing the classical change-of-measure arguments underlying all our constructions (along the way, we fix some notation). In Subsection~\ref{sec:proof_LB_exp_fc}, we prove Theorem~\ref{thm:LB_exp_instance_first_regime}, separating the expected budget bound (Subsection~\ref{sec:proof_expectation_Hkarnin}) from the high-probability quantile bound (Subsection~\ref{sec:proof_HP_karnin}). 
	
	In Subsection~\ref{sec:proof_LB_HP_fc}, we explain the construction leading to Theorem~\ref{thm:LB_hp_instance} and state a more precise formulation in Theorem~\ref{thm:precise_LB_high_proba}, proved in Subsection~\ref{sec:proof_precise_LB}. Corollary~\ref{coro:Lower_bound_minimax} follows in Subsection~\ref{sec:proof_coro_minimax}. 
	
	Finally, Subsection~\ref{sec:additional_LB} discusses lower bounds preserving CW row structure.
	
	\subsection{Roadmap on change-of-measure lower bounds}\label{sec:roadmap_LB}
	
	All our proofs follow a common three-step structure.
	
	\paragraph{First step: reference and alternative instances.}
	We fix a \emph{reference instance} $\bDelta \in \mathbb{D}_{\mathrm{cw}}$, that is, a gap matrix admitting 
	a (unique) Condorcet winner $i^*(\bDelta)$. In the fixed-confidence regime, our lower bounds are 
	instance-dependent, so all constructions are built directly from the given $K\times K$ matrix
	\[
	\bDelta = (\Delta_{i,j})_{i,j \in [K]}.
	\]
	For some results (e.g., Theorem~\ref{thm:LB_hp_instance}), we also consider a local class of 
	instances obtained from $\bDelta$ by permuting the negative entries of each row, while preserving 
	prescribed structural features (CW, sign structure, multiset of negative entries, effective 
	sparsity, and so on). This leads to families $\{\bDelta^{(\pi)}\}_\pi$ indexed by permutations $\pi$, 
	but the reference object remains $\bDelta$.
	
	For each suboptimal arm $k \neq i^*$, we construct an \emph{alternative instance}
	\[
	\bDelta^{(k)}
	\]
	in such a way that $i^*$ is no longer the (strong) Condorcet winner, while $k$ becomes the CW or at 
	least a \emph{weak} CW, in the sense that the $k$-th row of $\bDelta^{(k)}$ contains only 
	non-negative entries. A typical construction consists in modifying only the $k$-th row of $\bDelta$, 
	for example by setting all its negative entries to a small constant $\epsilon \geq 0$, and updating 
	the $k$-th column so as to preserve symmetry. In more refined arguments, we first permute the 
	negative entries within each row.
	
	\paragraph{Second step: total variation.}
	We then exploit the properties of a given algorithm $A$ in order to exhibit, for each $k \neq i^*$, a separating event $B_k$ on which the two laws assign very different probabilities, 
	\[
	\mathbb{P}_{\bDelta,A}(B_k) \text{ is large (typically } \geq 1-\delta\text{),}
	\qquad
	\mathbb{P}_{\bDelta^{(k)},A}(B_k) \text{ is small (typically } \leq \delta\text{).}
	\]
	Here $\mathbb{P}_{\bDelta,A}$ denotes the law of all observations (and internal randomness) when 
	algorithm $A$ interacts with environment $\bDelta$. A natural choice, when $A$ is 
	$\delta$-correct for CW identification, is the event $\{\hat{i} = k\}$, where $\hat{i}$ is the 
	recommendation output by $A$. For quantile (high-probability) lower bounds, we additionally introduce 
	the $(1-\delta)$-quantile $\chi$ of the budget $N_\delta$ under $\mathbb{P}_{\bDelta,A}$, and we 
	consider events such as $\{N_\delta \leq \chi\}$ or their intersections with identification events. 
	The precise choice of $B_k$ varies from theorem to theorem, but the goal is always to produce a 
	set on which the two laws $\mathbb{P}_{\bDelta,A}$ and $\mathbb{P}_{\bDelta^{(k)},A}$ have very 
	different probabilities, thereby enforcing a large total-variation distance:
	\[
	\mathrm{TV}\!\big(\mathbb{P}_{\bDelta,A}, \mathbb{P}_{\bDelta^{(k)},A}\big)
	\;\geq\; 
	\big| \mathbb{P}_{\bDelta,A}(B_k) - \mathbb{P}_{\bDelta^{(k)},A}(B_k) \big|.
	\]
	
	The total-variation distance is then controlled from above through a standard data-processing 
	inequality: we use either Pinsker's inequality, the Bretagnolle--Huber inequality, or a 
	Fano-type inequality (see Lemma~\ref{lem:fano-type}) to relate $\mathrm{TV}$ to the 
	Kullback--Leibler divergence. For example,
	\[
	\mathrm{TV}(P,Q) 
	\;\leq\; \sqrt{\tfrac{1}{2}\,\mathrm{KL}(P,Q)}, 
	\qquad\text{or}\qquad
	1 - \mathrm{TV}(P,Q) \;\geq\; \tfrac{1}{2}\exp\big(-\mathrm{KL}(P,Q)\big).
	\]
	The choice depends on the error-probability regime: Bretagnolle--Huber is convenient in the very 
	small-$\delta$ regime, while Pinsker is often sharper in moderate-error regimes.
	
	\paragraph{Third step: decomposition and control of the KL divergence.}
	The last step is to decompose the Kullback--Leibler divergence between the laws induced by $A$ 
	under the two environments. Let $N_{i,j}$ denote the total number of observed duels of the ordered 
	pair $(i,j)$ between time $1$ and the stopping time $N_\delta$, and let 
	$N_{\{i,j\}} = N_{i,j} + N_{j,i}$ be the total number of observations of the unordered pair 
	$\{i,j\}$. A standard KL-decomposition for adaptive bandit algorithms (see, e.g., 
	Lattimore and Szepesvári, 2020, Lemma~15.1) yields
	\[
	\mathrm{KL}\big(\mathbb{P}_{\bDelta,A}, \mathbb{P}_{\bDelta^{(k)},A}\big)
	\;=\; 
	\sum_{1 \leq i < j \leq K} 
	\mathbb{E}_{\bDelta,A}\big[N_{\{i,j\}}\big] \,
	\mathrm{KL}\big( \nu_{\Delta_{i,j}}, \nu_{\Delta^{(k)}_{i,j}} \big),
	\]
	where $\nu_{\Delta_{i,j}}$ denotes the Bernoulli distribution with parameter $1/2 + \Delta_{i,j}$. 
	In all our constructions, $\bDelta$ and $\bDelta^{(k)}$ differ only on a small set of pairs 
	(typically those involving arm $k$), so the sum reduces to those indices. The Bernoulli 
	KL-divergence admits the classical upper bound, for $p,q \in (0,1)$,
	\[
	\mathrm{kl}(p,q) 
	\;\leq\; \frac{(p-q)^2}{q(1-q)},
	\]
	which, in our setting, leads to bounds of the form
	\[
	\mathrm{KL}\big(\mathbb{P}_{\bDelta,A}, \mathbb{P}_{\bDelta^{(k)},A}\big)
	\;\lesssim\; 
	\sum_{\{i,j\} \,\text{modified}} 
	\mathbb{E}_{\bDelta,A}[N_{\{i,j\}}] \,\big(\Delta_{i,j} - \Delta^{(k)}_{i,j}\big)^2.
	\]
	
	Combining this upper bound with the lower bound on total variation from the third step yields a 
	constraint that any $\delta$-correct algorithm $A$ must satisfy. Rearranging these inequalities 
	then produces a lower bound on the sample complexity, typically expressed in terms of the expected 
	budget $\mathbb{E}_{\bDelta,A}[N_\delta]$ or on the $(1-\delta)$-quantile of $N_\delta$, and 
	involving the instance-dependent hardness parameters (such as $\Delta_{i,(1)}$ or 
	$\|\Delta^-_i\|_2^2/K_{i;<0}$).

	\subsection{Proof of Theorem~\ref{thm:LB_exp_instance_first_regime}}\label{sec:proof_LB_exp_fc}
	
	We prove separately the bound in expectation (paragraph~\ref{sec:proof_expectation_Hkarnin}) and the bound on the $(1-\delta)$-quantile of $N_\delta$ (paragraph~\ref{sec:proof_HP_karnin}). The two arguments share the same change-of-measure structure (see Section~\ref{sec:roadmap_LB}) and differ only in the way we exploit either the identification rule or the stopping rule to construct a separating event.
	
	\subsubsection{Bound in expectation from Theorem~\ref{thm:LB_exp_instance_first_regime}}\label{sec:proof_expectation_Hkarnin}

	Fix $K \geqslant 2$ and $\delta \in (0,1)$. Consider an algorithm $A$ that is $\delta$-correct over the entire class $\mathbb{D}_{\mathrm{cw}}$. We fix any matrix $\bDelta \in \mathbb{D}_{\mathrm{cw}}$ such that $i^*=1$ and prove the following expected lower bound that holds for this specific instance:
	\begin{equation}\label{eq:LB_exp_karnin}
		\mathbb{E}_{\bDelta,A}[N_{\delta}]
		\;\geqslant\; \frac{1}{4} \sum_{i\neq i^*} \frac{\log\!\left(\frac{1}{4\delta}\right)}{\Delta_{i,(1)}^2}~.
	\end{equation}
	
	\paragraph{Sketch of proof.}
	We follow the three-step roadmap of Section~\ref{sec:roadmap_LB}. 
	\emph{(i) Reference and alternative instances.} Fix any $\bDelta\in\mathbb{D}_{\mathrm{cw}}$ with CW $i^*=1$; this is our reference instance. For each suboptimal arm $k\neq 1$, we construct an alternative 
	instance $\bDelta^{(k)}$ by lifting all non-positive entries in row $k$ to a small constant 
	$\epsilon>0$ (and adjusting the $k$-th column to preserve symmetry), so that $k$ becomes the 
	CW in $\bDelta^{(k)}$. 
	\emph{(ii) Separating event and total variation.} Since the algorithm is $\delta$-correct for CW 
	identification, it must distinguish $\bDelta$ from each $\bDelta^{(k)}$ with error at most 
	$\delta$ when deciding between $i^*$ and $k$ as CW. Using the test event $B_k=\{\hat i = k\}$, we 
	obtain that the total-variation distance between $\mathbb{P}_{\bDelta,A}$ and $\mathbb{P}_{\bDelta^{(k)},A}$ is at 
	least $1-2\delta$, which, via the Bretagnolle--Huber inequality, yields a lower bound on 
	$\mathrm{KL}(\mathbb{P}_{\bDelta,A},\mathbb{P}_{\bDelta^{(k)},A})$. 
	\emph{(iii) KL decomposition.} We then decompose this KL along unordered pairs. The two instances 
	$\bDelta$ and $\bDelta^{(k)}$ differ only on duels involving arm $k$, and for each such pair 
	the Bernoulli parameters differ by at most a constant of order $|\Delta_{k,(1)}|$. A Bernoulli 
	KL upper bound, combined with the decomposition, forces 
	$\mathbb{E}_{\bDelta,A}[N_k]\gtrsim \Delta_{k,(1)}^{-2}\log(1/\delta)$, where $N_k$ counts 
	duels involving $k$. Summing this constraint over all $k\neq 1$ yields the desired lower 
	bound~\eqref{eq:LB_exp_karnin}.
	\medskip 
	
	\begin{proof}
		\noindent\underline{Step 1: reference and perturbed instances.}
		
		The argument is fully instance-dependent: we fix an arbitrary gap matrix 
		$\bDelta \in \mathbb{D}_{\mathrm{cw}}$ with Condorcet winner $i^*=1$, and work throughout with 
		this specific instance. We denote $\mathbb{P}$ the probability induced by the interaction between $\boldsymbol{\Delta}$ and $A$.
		
		Let $\epsilon>0$ be a constant, arbitrary small. 
		Let $k \neq 1$ be an arm that is not the CW under $\bDelta$. A simple way to modify $\bDelta$ so that $k$ becomes the CW, is to make all non-positive entries in the $k$-th row of $\bDelta$ equal to $\epsilon$. 
		
		Construct the gap matrix $\bDelta^{(k)}$ as follows. For all $i,j \notin \{1,k\}$, set $\Delta_{i,j}^{(k)} = \Delta_{i,j}$.
		Set $\Delta^{(k)}_{k,1} = \epsilon$ and $\Delta^{(k)}_{1,k} = -\epsilon$.
		Finally, for each $j \notin \{1,k\}$, define
		\begin{equation}\label{def:Delta_k_epsilon}
			\Delta^{(k)}_{k,j} =
			\begin{cases}
				\Delta_{k,j}, & \text{if } \Delta_{k,j} > 0,\\[2pt]
				\epsilon, & \text{if } \Delta_{k,j} \leqslant 0,
			\end{cases}
			\qquad
			\Delta^{(k)}_{j,k} = - \Delta^{(k)}_{k,j}.
		\end{equation}
		
		For $\epsilon$ small enough, the modified matrix $\bDelta^{(k)}$ can be represented as
		\[
		\bDelta^{(k)} =
		\begin{pmatrix}
			0 & \Delta_{1,2} & \cdots & \Delta_{1,k-1} & \textcolor{blue}{-\epsilon} & \Delta_{1,k+1} & \cdots & \Delta_{1,K} \\[2mm]
			\Delta_{2,1} & 0 & \cdots & \Delta_{2,k-1} & \textcolor{blue}{-(\epsilon \vee \Delta_{2,k})} & \Delta_{2,k+1} & \cdots & \Delta_{2,K} \\[2mm]
			\vdots & \vdots & \ddots & \vdots & \textcolor{blue}{\vdots} & \vdots & & \vdots \\[2mm]
			\Delta_{k-1,1} & \Delta_{k-1,2} & \cdots & 0 & \textcolor{blue}{-(\epsilon \vee \Delta_{k-1,k})} & \Delta_{k-1,k+1} & \cdots & \Delta_{k-1,K} \\[2mm]
			\textcolor{blue}{\epsilon} &
			\textcolor{blue}{\epsilon \vee \Delta_{k,2}} &
			\textcolor{blue}{\cdots} &
			\textcolor{blue}{\epsilon \vee \Delta_{k,k-1}} &
			0 &
			\textcolor{blue}{\epsilon \vee \Delta_{k,k+1}} &
			\textcolor{blue}{\cdots} &
			\textcolor{blue}{\epsilon \vee \Delta_{k,K}} \\[2mm]
			\Delta_{k+1,1} & \Delta_{k+1,2} & \cdots & \Delta_{k+1,k-1} & \textcolor{blue}{-(\epsilon \vee \Delta_{k+1,k})} & 0 & \cdots & \Delta_{k+1,K} \\[2mm]
			\vdots & \vdots & & \vdots & \textcolor{blue}{\vdots} & \vdots & \ddots & \vdots \\[2mm]
			\Delta_{K,1} & \Delta_{K,2} & \cdots & \Delta_{K,k-1} & \textcolor{blue}{-(\epsilon \vee \Delta_{K,k})} & \Delta_{K,k+1} & \cdots & 0
		\end{pmatrix}~,
		\]
		where the blue entries indicate the differences with respect to the reference $\bDelta$. In fact, only non-positive entries of row $k$  are modified between $\bDelta$ and $\bDelta^{(k)}$.
		
		Since $\epsilon> 0$, the $k$-th row $\Delta^{(k)}_{k,\cdot}$ is positive (aside from $\Delta^{(k)}_{k,k}$). Hence, the CW of $\bDelta^{(k)}$ is $k$. As standard in this line of work, our construction is motivated by the fact that the instance $\bDelta^{(k)}$ is hard to distinguish from the reference gap matrix $\bDelta$.
		
		\medskip
		
		For $k \ge 2$, denote by $\mathbb{P}^{(k)}$ the distribution of the data when the underlying gap matrix is $\bDelta^{(k)}$. Formally, when the algorithm queries a pair $(i,j)$ with $i<j$, it receives a sample $X_{i,j} \sim \mathcal{B}\bigl(\Delta_{i,j}^{(k)} + \tfrac{1}{2}\bigr)$, where $\mathcal{B}(p)$ denotes the Bernoulli distribution with parameter $p$.   
		
		\medskip 
		
		\noindent \underline{Step 2: information-theoretic arguments.}
		
		\noindent Let $\hat{i}$ denote the output of algorithm $A$, which is assumed to be $\delta$-correct over $\mathbb{D}_{\mathrm{cw}}$. When the gap matrix is $\bDelta^{(k)}$ the CW is $k$, so $\delta$-correctness implies
		\[
		\forall k\ne i^*,\;\mathbb{P}^{(k)}(\hat{i} = k) \ge 1-\delta
		\quad\text{and}\quad
		\mathbb{P}(\hat{i} = k) \leqslant \delta~.
		\]
		By the definition of the total variation distance,
		\begin{align}
			\mathrm{TV}\bigl(\mathbb{P}, \mathbb{P}^{(k)}\bigr)
			&\ge \bigl|\mathbb{P}(\hat{i} = k) - \mathbb{P}^{(k)}(\hat{i} = k)\bigr| \ge 1 - 2\delta~. \label{eq:tv1}
		\end{align}
		From \eqref{eq:tv1}, the Bretagnolle–Huber inequality (Theorem 14.2 in \citealp{lattimore2020bandit}) yields
		\begin{equation}\label{eq:BH}
			1 - 2\delta
			\leqslant \mathrm{TV}\bigl(\mathbb{P}, \mathbb{P}^{(k)}\bigr)
			\leqslant 1 - \frac{1}{2}
			\exp\!\left\{-\mathrm{KL}\bigl(\mathbb{P}, \mathbb{P}^{(k)}\bigr)\right\}.
		\end{equation}
		
		\noindent\underline{Step 3: computing the KL divergence and concluding on the budget.}
		
		For any unordered pair $\{i,j\}$ with $i \ne j$, denote by $N_{\{i,j\}}$ the number of duels involving either $(i,j)$ or $(j,i)$, that is,
		\[
		N_{\{i,j\}} \coloneqq N_{i,j} + N_{j,i},
		\]
		where
		\begin{equation*}
			N_{i,j} \coloneqq |\bigl\{t \in [N_{\delta}] : (I_t,J_t) = (i,j)\bigr\}|,
		\end{equation*}
		and $N_{\delta}$ is the stopping time of the algorithm.
		
		Using the divergence decomposition lemma (Lemma 15.1 in \citealp{lattimore2020bandit}) and the fact that the two instances $\bDelta$ and $\bDelta^{(k)}$ differ only on pairs involving arm $k$, we obtain
		\begin{align}
			\mathrm{KL}\bigl(\mathbb{P}, \mathbb{P}^{(k)}\bigr)
			&= \sum_{1 \leqslant i < j \leqslant K}
			\mathbb{E}\!\left[N_{\{i,j\}}\right]
			\,\mathrm{KL}\bigl(\mathbb{P}_{i,j}, \mathbb{P}_{i,j}^{(k)}\bigr)
			\nonumber \\
			&= \sum_{\substack{i=1 \\ i \neq k}}^{K}
			\mathbb{E}\!\left[N_{\{k,i\}}\right]
			\,\mathrm{KL}\bigl(\mathbb{P}_{k,i},\mathbb{P}_{k,i}^{(k)}\bigr).
			\label{eq:tv3}
		\end{align}
		
		We now upper bound $\mathrm{KL}\bigl(\mathbb{P}_{k,i}, \mathbb{P}_{k,i}^{(k)}\bigr)$. Let $i\in[K]$ with $i\ne k$. If $\Delta_{k,i} > 0$, then by construction $\mathbb{P}_{k,i} = \mathbb{P}_{k,i}^{(k)}$. Otherwise, if $\Delta_{k,i} < 0$, the corresponding Bernoulli feedback distributions satisfy
		\[
		\mathbb{P}_{k,i} = \mathcal{B}\!\left(\tfrac{1}{2} + \Delta_{k,i}\right),
		\qquad
		\mathbb{P}_{k,i}^{(k)} = \mathcal{B}\!\left(\tfrac{1}{2}+\epsilon\right),
		\]
		so that
		\begin{align}
			\mathrm{KL}\bigl(\mathbb{P}_{k,i}, \mathbb{P}_{k,i}^{(k)}\bigr)
			&= \mathrm{kl}\!\left(\Delta_{k,i} + \tfrac12,\; \epsilon + \tfrac12\right)  \nonumber \\
			&\leqslant \frac{\bigl(\Delta_{k,i} + \tfrac12 - (\epsilon + \tfrac12)\bigr)^2}
			{\bigl(-\epsilon + \tfrac12\bigr)\bigl(\epsilon + \tfrac12\bigr)}
			\label{eq:kl1}\\
			&\leqslant \frac{(\epsilon-\Delta_{k,(1)})^2}{\bigl(-\epsilon + \tfrac12\bigr)\bigl(\epsilon + \tfrac12\bigr)} \label{eq:kl2}~.
		\end{align}
		Here, \eqref{eq:kl1} follows from the standard upper bound
		\(\mathrm{kl}(p,q) \leqslant (p-q)^2/[q(1-q)]\) for \(p,q \in (0,1)\), while \eqref{eq:kl2} uses that
		\(\Delta_{k,i}^2 \leqslant \Delta_{k,(1)}^2\) by definition of \(\Delta_{k,(1)}\) as the smallest negative entry in row $k$.
		Similarly, if $\Delta_{k,j}=0$, 
		\[
		\mathbb{P}_{k,j} = \mathcal{B}\!\left(\tfrac{1}{2}\right),
		\qquad
		\mathbb{P}_{k,j}^{(k)} = \mathcal{B}\!\left(\tfrac{1}{2} + \epsilon\right),
		\]
		so that the same computation gives
		\[\mathrm{KL}\bigl(\mathbb{P}_{k,j}, \mathbb{P}_{k,j}^{(k)}\bigr) \leqslant \frac{\epsilon^2}{\bigl(-\epsilon + \tfrac12\bigr)\bigl(\epsilon + \tfrac12\bigr)} \enspace, \]
		which vanishes to $0$ with $\epsilon\to 0$. 
		
		Combining these bounds with \eqref{eq:tv3}, and taking the limit $\epsilon\to 0$, we obtain
		\[
		\mathrm{KL}\bigl(\mathbb{P}, \mathbb{P}^{(k)}\bigr)
		\leqslant 4 \left(
		\sum_{\substack{i=1 \\ i \ne k}}^{K}
		\mathds{1}_{\{\Delta_{k,i} < 0\}}
		\,\mathbb{E}\!\left[N_{\{k,i\}}\right]
		\right) \Delta_{k,(1)}^2.
		\]
		Using the Bretagnolle–Huber inequality \eqref{eq:BH} from Step~3 , we then get
		\[
		\sum_{\substack{i=1 \\ i \ne k}}^{K}
		\mathds{1}_{\{\Delta_{k,i} < 0\}}
		\,\mathbb{E}\!\left[N_{\{k,i\}}\right]
		\ge \frac{1}{4\,\Delta^2_{k,(1)}} \log\!\frac{1}{4\delta}~.
		\]
		Summing over $k \ge 2$, we conclude that the total number of queries $N_{\delta}$ satisfies
		\begin{align*}
			\mathbb{E} \left[ N_{\delta}\right]
			&\ge \sum_{k \ne 1}^K \left(
			\sum_{\substack{i=1 \\ i \ne k}}^{K}
			\mathds{1}_{\{\Delta_{k,i} < 0\}}
			\,\mathbb{E}\!\left[N_{\{k,i\}}\right]
			\right) \\
			&\ge \frac{1}{4}\sum_{k = 2}^K \frac{1}{\Delta_{k,(1)}^2}
			\log\!\frac{1}{4\delta}~,
		\end{align*}
		which is exactly the claimed lower bound~\eqref{eq:LB_exp_karnin}.
	\end{proof}
	
	\subsubsection{Bound in quantile in  Theorem~\ref{thm:LB_exp_instance_first_regime}}\label{sec:proof_HP_karnin}
	
	In this paragraph, we prove the quantile bound from Theorem~\ref{thm:LB_exp_instance_first_regime}, namely
	\begin{equation}\label{eq:LB_hp_karnin}
		\mathbb{P}_{\boldsymbol{\Delta},A}\left(N_{\delta}\geqslant\frac{1}{3}\sum_{i\neq i^*}\frac{\left(\frac{1}{6\delta}\right)}{\Delta_{i,(1)}^2}\right)\geqslant\delta~.
	\end{equation}

	\paragraph{Sketch of proof.}
	Even though expectation lower bounds are naturally weaker than quantile bounds, deducing a 
	quantile lower bound from its expectation counterpart is nontrivial. Still, the arguments largely 
	mirror the expectation proof (paragraph~\ref{sec:proof_expectation_Hkarnin}), we follow the same 
	three-step roadmap. \emph{(i) Reference instance and alternative 
		instances}: we fix $\bDelta\in\mathbb{D}_{\mathrm{cw}}$ with CW $i^*$ and, for each $k\neq i^*$, 
	construct $\bDelta^{(k)}$ by zeroing the negative entries in row/column $k$. In particular, 
	$\bDelta^{(k)}$ has two nonnegative rows (those of $1$ and $k$) and therefore does not belong to 
	$\mathbb{D}_{\mathrm{cw}}$. \emph{(ii) Separating event and total variation}: we now use the stopping 
	rule instead of the recommendation and consider the event $B=\{N_\delta > Q\}$, where $Q$ is the 
	$(1-\delta)$-quantile of $N_\delta$ under $\bDelta$. This event has small probability under 
	$\mathbb{P}$, but (after a continuity argument using perturbations of $\bDelta^{(k)}$) it 
	has large probability under $\mathbb{P}^{(k)}$, since $A$ cannot quickly decide between 
	$\hat{i}=1$ and $\hat{i}=k$. \emph{(iii) KL decomposition}: as before, we decompose the KL along 
	pairs and use that $\bDelta$ and $\bDelta^{(k)}$ differ only on duels involving $k$. The only extra 
	ingredient is that, since $B$ depends only on the first $Q$ observations, we introduce a truncated 
	algorithm $\tilde A$ that stops at time $Q$, apply the KL decomposition to $\tilde A$, and then 
	reinterpret the resulting inequality as a lower bound on $Q$, i.e., on the $(1-\delta)$-quantile of 
	$N_\delta$. 
	
	\medskip
	\begin{proof}
		Let $A$ be any $\delta$-correct algorithm on $\mathbb{D}_{\mathrm{cw}}$.
		
		\noindent\underline{Step 1: Reference and alternative instances.}
		
		As for the expectation bound, we fix any $\bDelta$ as the reference instance, with $i^*=1$. Denote $\mathbb{P}_A$ for the probability induced by the interaction between $A$ and $\bDelta$. Then, for each suboptimal arm $k\in\{2,\dots,K\}$, construct $\boldsymbol{\Delta}^{(k)}$ by setting to zero all entries $(k,j)$ with $\Delta_{k,j}<0$, that is, $\boldsymbol{\Delta}^{(k)}$ is defined by Equation~\eqref{def:Delta_k_epsilon} with $\epsilon=0$. 
		
		The matrices $\boldsymbol{\Delta}$ and $\boldsymbol{\Delta}^{(k)}$ differ only in row/column $k$. By construction, rows $1$ and $k$ of $\boldsymbol{\Delta}^{(k)}$ contain only nonnegative entries, then $\boldsymbol{\Delta}^{(k)}$ contains two weak CW, and $1$ and $k$ are tied. In particular, $\boldsymbol{\Delta}^{(k)}\notin\mathbb{D}_{\mathrm{cw}}$. 
		Denote $\mathbb{P}_A^{(k)}$ the distribution induced by $A$ interacting with $\boldsymbol{\Delta}^{(k)}$.
		
		\medskip
		
		\noindent\underline{Step 2: bound in total variation and reduction to fixed budget.}
		
		Consider the recommendation rule $\hat{i}$ and the budget $N_\delta$ of $A$. Define $Q$ as the $(1-\delta)$-quantile of the budget under $\mathbb{P}$:
		\begin{equation}\label{def:chi_karnin}
			Q=\inf\{x>0\text{ s.t. }\mathbb{P}(N_{\delta}\geqslant x)\leqslant\delta\}.
		\end{equation}
		
		Consider the event $B:=\{N_\delta\leq\chi\}$ where the budget is smaller than $\chi$.
		
		We define a truncated version $\tilde{A}$ of $A$ with budget at most $\chi$ as follows: run $A$ for $t=1,\dots,\chi$; if $A$ stops before time $\chi$, return $\tilde{i}=\hat{i}$; else stop at time $\chi$ and return $\tilde{i}=0$. Let $\tilde{N}_\delta$, $\tilde{i}$ be $\tilde{A}$'s budget/recommendation, $\mathbb{P}_{\tilde{A}}$ (resp. $\mathbb{P}_{\tilde{A}}^{(k)}$) its law under $\boldsymbol{\Delta}$ (resp. $\boldsymbol{\Delta}^{(k)}$). By construction,
		\[
		B=\{\tilde{i} \ne 0\} ~,
		\]
		so this event is measurable with respect to the observations of algorithm $\tilde{A}$. Moreover, it has the same probability under $A$ and $\tilde{A}$. 
		
		\noindent We now lower bound the total variation distance between $\mathbb{P}_{\tilde{A}}$ and $\mathbb{P}_{\tilde{A}}^{(k)}$.

		By the definition of $Q$ in~\eqref{def:chi_karnin}, we have
		\begin{equation}\label{eq:H2_tv1_karnin}
			\mathbb{P}_{\tilde{A}}(\tilde{i} = 0)
			= \mathbb{P}_{A}(N_{\delta} > Q)
			\;\leqslant\; \delta.
		\end{equation}
		
		Under $\bDelta^{(k)}$, the instance does not belong to $\mathbb{D}_{\mathrm{cw}}$, so we cannot 
		directly invoke $\delta$-correctness. We therefore approximate $\bDelta^{(k)}$ by nearby instances in 
		$\mathbb{D}_{\mathrm{cw}}$. More precisely, define $\bDelta^{(k,\epsilon)}$ as~\eqref{def:Delta_k_epsilon},  i.e., by lifting all zero entries in row $k$ of $\bDelta^{(k)}$ to 
		$\epsilon>0$ (and adjusting the $k$-th column to preserve symmetry). For $0<\epsilon \leqslant 1/4$, the matrix $\bDelta^{(k,\epsilon)}$ lies in $\mathbb{D}_{\mathrm{cw}}$ and admits $k$ as its CW. Similarly, define $\bDelta^{(k,-\epsilon)}$ by subtracting $\epsilon$ to all zero entries int the $k$-th row of $\bDelta^{(k)}$ (and adjusting the $k$-th column to preserve symmetry); so that $\bDelta^{(k,-\epsilon)}\in\mathbb{D}_{\mathrm{cw}}$ and admits $1$ as its CW.
		
		Let $\mathbb{P}^{(k,\epsilon)}_A$ and $\mathbb{P}^{(k,-\epsilon)}_A$ denote the laws of $A$ under 
		$\bDelta^{(k,\epsilon)}$ and $\bDelta^{(k,-\epsilon)}$, respectively. Since $A$ is $\delta$-correct 
		on $\mathbb{D}_{\mathrm{cw}}$, we have
		\begin{equation}\label{eq:approximate_tv_karnin}
			\mathbb{P}^{(k,\epsilon)}_A(\hat{i}\neq k) \leqslant \delta,
			\qquad
			\mathbb{P}^{(k,-\epsilon)}_A(\hat{i}\neq 1) \leqslant \delta~.
		\end{equation}
		
		Moreover, $\mathbb{P}^{(k,\epsilon)}_{{A}}$ converges in total variation to 
		$\mathbb{P}^{(k)}_{{A}}$ as $\epsilon\to 0$.
		Using these facts and letting $\epsilon\to 0$, we obtain
		\begin{align}
			\mathbb{P}_{A}^{(k)}(N_{\delta} \leqslant \chi)
			&= \mathbb{P}_{A}^{(k)}(\hat{i} = 1, N_{\delta} \leqslant \chi)
			+ \mathbb{P}_{A}^{(k)}(\hat{i} \neq 1, N_{\delta} \leqslant \chi) \nonumber \\
			&= \lim_{\epsilon \to 0}
			\Big[
			\mathbb{P}_{A}^{(k,\epsilon)}(\hat{i} = 1, N_{\delta} \leqslant \chi)
			+
			\mathbb{P}_{A}^{(k,-\epsilon)}(\hat{i} \neq 1, N_{\delta} \leqslant \chi)
			\Big] \nonumber \\
			&\leqslant 2\delta \label{eq:H2_tv2_karnin} \enspace,
		\end{align}
		where the last inequality follows from~\eqref{eq:approximate_tv_karnin}.
		
		Since $\tilde{A}$ and $A$ coincide up to time $Q$, we also have 
		$\mathbb{P}^{(k)}_{\tilde{A}}(\tilde{i}\neq 0) = \mathbb{P}^{(k)}_{A}(N_{\delta} \leqslant Q)$, so 
		\eqref{eq:H2_tv2_karnin} implies
		\[
		\mathbb{P}^{(k)}_{\tilde{A}}(\tilde{i}\neq 0) \leqslant 2\delta.
		\]
		
		combining \eqref{eq:H2_tv1_karnin} with this inequality, and writing $B=\{\tilde{i}=0\}$, we obtain
		\begin{equation}\label{eq:H2_tv_karnin}
			\mathrm{TV}\bigl(\mathbb{P}_{\tilde{A}}, \mathbb{P}_{\tilde{A}}^{(k)}\bigr) \geqslant \mathbb{P}^{(k)}_{\tilde{A}}(B^c) - \mathbb{P}_{\tilde{A}}(B)\;
			\geqslant 1 - 3\delta \enspace.
		\end{equation}
		
		\begin{remark}
			Intuitively, one may think of $\tilde{A}$ as a fixed-budget algorithm with budget $Q$. This can be 
			viewed as a reduction: from any $\delta$-correct algorithm $A$ that enjoys a high-probability 
			control on its budget (namely, $\mathbb{P}(N_\delta \leqslant Q)\ge 1-\delta$), we construct a 
			fixed-budget algorithm $\tilde{A}$ with budget $Q$ that inherits the same distinguishing power 
			between the reference instance and its perturbations.
		\end{remark}
		
		\noindent\underline{Step 3: computing the KL divergence.}
		
		By the Bretagnolle–Huber inequality (see, e.g., \citealp{lattimore2020bandit}), we have 
		\begin{equation*}
			1 - 3\delta
			\leqslant \mathrm{TV}\bigl(\mathbb{P}_{\tilde{A}}, \mathbb{P}^{(k)}_{\tilde{A}}\bigr)
			\leqslant 1 - \frac{1}{2} \exp\Bigl\{ -\mathrm{KL}\bigl(\mathbb{P}_{\tilde{A}}, \mathbb{P}^{(k)}_{\tilde{A}}\bigr) \Bigr\} \enspace.
		\end{equation*}
		In particular,
		\begin{equation}\label{eq:H2_BH_karnin}
			\mathrm{KL}\bigl(\mathbb{P}_{\tilde{A}}, \mathbb{P}^{(k)}_{\tilde{A}}\bigr)
			\;\geqslant\; \log\!\left(\frac{1}{6\delta}\right).
		\end{equation}
		
		We now decompose $\mathrm{KL}\bigl(\mathbb{P}_{\tilde{A}}, \mathbb{P}^{(k)}_{\tilde{A}}\bigr)$. 
		For $i \neq j$ in $[K]$, recall that $N_{i,j}$ denotes the number of duels $(i,j)$, while $N_{\{i,j\}} = N_{i,j} + N_{j,i}$ is the number of duels with unordered pair $\{i,j\}$. By the standard KL decomposition for adaptive procedures~\citep{lattimore2020bandit},
		\begin{align}
			\mathrm{KL}\bigl(\mathbb{P}_{\tilde{A}}, \mathbb{P}^{(k)}_{\tilde{A}}\bigr)
			&= \sum_{i \neq j} \mathbb{E}_{\tilde{A}}\bigl[{N}_{i,j}\bigr]\,
			\mathrm{KL}\bigl(\mathbb{P}_{i,j}, \mathbb{P}_{i,j}^{(k)}\bigr) \nonumber \\
			&= \sum_{j : \Delta_{k,j}<0} \mathbb{E}_{\tilde{A}}\bigl[N_{\{k,j\}}\bigr]\,
			\mathrm{KL}\bigl(\mathbb{P}_{k,j}, \mathbb{P}^{(k)}_{k,j}\bigr) \label{eq:H2_KL1_karnin}\enspace,
		\end{align}
		since  $\bDelta$ and $\bDelta^{(k)}$ differ only on duels $\{k,j\}$ with $\Delta_{k,j}<0$.
		
		For $j$ with $\Delta_{k,j}<0$, $\mathbb{P}_{k,j}^{(k)}=\mathcal{B}(1/2)$, $\mathbb{P}_{k,j}=\mathcal{B}(1/2+\Delta_{k,j})$ with $\Delta_{k,j}\in[-1/4,0]$. Thus,
		\begin{align}
			\mathrm{KL}\bigl(\mathbb{P}_{k,j}, \mathbb{P}_{k,j}^{(k)}\bigr)
			&= \frac{1}{2} \log\!\left(\frac{\tfrac{1}{2}}{\tfrac{1}{2} + \Delta_{k,j}}\right)
			+ \frac{1}{2} \log\!\left(\frac{\tfrac{1}{2}}{\tfrac{1}{2} - \Delta_{k,j}}\right) \nonumber\\
			&= -\frac{1}{2} \log\!\bigl(1 - 4\Delta_{k,j}^2\bigr)
			\;\leqslant\; 8 \log\!\left(\frac{4}{3}\right)\Delta_{k,j}^2 \label{eq:bound_kl}~,
		\end{align}
		where \eqref{eq:bound_kl} follows from  $\sup_{x \in [0,1/4]} \frac{-\log(1-x)}{x} \leqslant 4\log\!\left(\frac{4}{3}\right)$, 
		applied with $x = 4\Delta_{k,j}^2 \in [0,1/4]$.
		
		Plugging these bounds into \eqref{eq:H2_KL1_karnin}, we obtain
		\begin{align}
			\mathrm{KL}\bigl(\mathbb{P}_{\tilde{A}}, \mathbb{P}_{\tilde{A}}^{(k)}\bigr)
			&\leqslant 8\log\!\left(\frac{4}{3}\right)
			\sum_{j: \Delta_{k,j}<0}
			\mathbb{E}_{\tilde{A}}[N_{\{k,j\}}]\,\Delta_{k,j}^2 \nonumber\\
			&\leqslant 8\log\!\left(\frac{4}{3}\right)
			\left(\sum_{j: \Delta_{k,j}<0}
			\mathbb{E}_{\tilde{A}}[N_{\{k,j\}}]\,\right)\Delta_{k,(1)}^2~,\label{eq:KL_karnin}
		\end{align}
		since $\Delta_{k,j}^2\leqslant \Delta^2_{k,(1)}$ whenever $\Delta_{k,j}<0$.
		Combining~\eqref{eq:H2_BH_karnin}--\eqref{eq:KL_karnin}, we obtain  
		\begin{align}
			\frac{1}{8\log(4/3)}\frac{1}{\Delta^2_{k,(1)}}\log\left(\frac{1}{6\delta}\right)
			&\leqslant
			\sum_{j: \Delta_{k,j}<0}
			\mathbb{E}_{\tilde{A}}[N_{\{k,j\}}]\,~.\label{eq:final_karnin_1}
		\end{align}
		
		Since $\tilde{A}$ uses at most $\chi$ duels, we have almost surely, under $\mathbb{P}_{\tilde{A}}$,
		\[
		\sum_{i\ne i^*}\sum_{j: \Delta_{k,j}<0 }
		N_{\{k,j\}}\
		\;\leqslant\; \tilde{N}_{\delta} \leqslant \chi ~,
		\]
		in particular, the same bound also holds in expectation $\mathbb{E}_{\tilde{A}}$. 
		
		Summing in~\eqref{eq:final_karnin_1} over $k\ne i^*$ yields
		\[\frac{1}{8\log(4/3)}\sum_{k\ne i^*}\frac{1}{\Delta^2_{k,(1)}}\log\left(\frac{1}{6\delta}\right) \leqslant \sum_{k\ne i^*} \sum_{j: \Delta_{k,j}<0}
		\mathbb{E}_{\tilde{A}}[N_{\{k,j\}}] \leqslant \chi ~.\] 
		
		Using the numerical bound $8\log(4/3)<3$ and the definition of $Q$ as the $(1-\delta)$-quantile of 
		$N_\delta$ then gives the high-probability lower bound~\eqref{eq:LB_hp_karnin}. 
		
	\end{proof}
	
	\subsection{Proof of Theorem~\ref{thm:LB_hp_instance}}\label{sec:proof_LB_HP_fc}
	
	In this subsection, we prove the high-probability lower bound from Theorem~\ref{thm:LB_hp_instance}.
	
	We first introduce additional notation and state a more precise version of the result, in particular, we consider here the case where $\bDelta$ might contains some ties.
	
	\paragraph{Notation and precise formulation}
	
	Consider a matrix $\bDelta$ with entries in $\bigl[-\tfrac{1}{4},\tfrac{1}{4}\bigr]$ that admits a unique CW $i^*(\bDelta)$, that is, $\bDelta \in \mathbb{D}_{\mathrm{cw}}$. Without loss of generality, we assume that $i^*(\bDelta) = 1$, and we keep this matrix fixed throughout this section. We now define a family of environments obtained from $\bDelta$ by permuting its entries in a specific way. This will lead to a more precise formulation of Theorem~\ref{thm:LB_hp_instance}.
	
	Fix an antisymmetric matrix $\bSigma = (\sigma_{i,j})_{1 \leqslant i,j \leqslant K}$ defined, for any $1 \leqslant i \ne j \leqslant K$, by
	\begin{equation}\label{def:sign_delta}
		\sigma_{i,j} =
		\begin{cases}
			\operatorname{sign}(\Delta_{i,j})~, & \text{if } \Delta_{i,j} \neq 0~,\\[2pt]
			f_{\mathrm{tb}}(i,j)~, & \text{if } \Delta_{i,j} = 0~,
		\end{cases}
	\end{equation}
	where $f_{\mathrm{tb}}:[K]^2 \to \{-1,0,1\}$ is an antisymmetric function used as a tie-breaking convention. In other words, $\bSigma$ records the sign pattern of the gap matrix $\bDelta$, with a fixed rule when $\Delta_{i,j} = 0$. Note that $\bSigma$ is antisymmetric, since $\bDelta$ is antisymmetric and $f_{\mathrm{tb}}$ is antisymmetric by assumption. For now, we do not specify how we fix this convention, as the precise choice will arise naturally at the very end of our proofs. 
	
	For a suboptimal arm $i \in \{2,\dots,K\}$, denote by $\Sigma_i^-$ the set of arms that beat $i$ (with ties broken according to $f_{\mathrm{tb}}$),
	\[
	\Sigma_i^- \coloneqq \bigl\{ j \in [K]\setminus\{i\} : \sigma_{i,j} = -1 \bigr\} \enspace.
	\]
	
	For any $i \in \{2,\dots,K\}$, define $\Pi_i(\bSigma)$ as the set of permutations of $\Sigma_i^-$:
	\begin{equation*}
		\Pi_i(\bSigma) \coloneqq \bigl\{ \pi_i : \Sigma_i^- \to \Sigma_i^- \,\big|\, \pi_i \text{ is a bijection} \bigr\} \enspace,
	\end{equation*}
	and set
	\begin{equation}\label{def:Pi}
		\Pi(\bSigma) \coloneqq \Pi_2(\bSigma) \times \cdots \times \Pi_K(\bSigma) \enspace.
	\end{equation}
	
	Fix a permutation $\pi \coloneqq (\pi_i)_{i=2}^K \in \Pi(\bSigma)$. For each row $i \neq 1$, we permute the entries of $\bDelta$ indexed by $\Sigma_i^-$ according to $\pi_i$. To preserve antisymmetry, we apply the same permutation $\pi_i$ to the corresponding entries in column $i$. We thus define the matrix $\bDelta^{(\pi)}$ as follows: for all $(i,j) \in [K]^2$,
	\begin{equation}\label{def:Delta_pi}
		\Delta^{(\pi)}_{i,j} \coloneqq
		\begin{cases}
			\Delta_{i,\pi_i(j)}, & \text{if } \sigma_{i,j}=-1 \;, \\[2pt]
			-\Delta_{j,\pi_j(i)}, & \text{if } \sigma_{i,j}=1 \;, \\[2pt]
			\Delta_{i,j}, & \text{else}  \;,
		\end{cases}
	\end{equation}
	the last condition might happen when $\sigma_{i,j}=0$, which only happens for ties. 
	
	The permutations in $\Pi(\bSigma)$ destroy any exploitable ordering structure between arms while preserving, in each row, the multiset of non-positive entries and hence the row-wise hardness parameters $(K_{i;< 0},\|\Delta^-_i\|_2^2)$.
	
	We now list properties that are preserved by the permutation $\pi$. In the following lemma, a row 
	$j$ is said to be a \emph{weak CW} for a given gap matrix if all entries in its row are 
	nonnegative.
	
	\begin{lemma}\label{lemma:Delta_pi}
		For any $\bSigma$ satisfying \eqref{def:sign_delta} and any $\pi \in \Pi(\bSigma)$, we have
		\begin{enumerate}
			\item  \centering $\bDelta^{(\pi)}=-(\bDelta^{(\pi)})^T$ \quad (antisymmetry)
			\item  $\forall j\ne 1, \Delta^{(\pi)}_{1,j}\geqslant 0$ \quad (row $1$ is a weak CW)
			\item $\forall i\ne 1$, $K_{i,< 0}(\bDelta)=K_{i,< 0}(\bDelta^{(\pi)})$ and 
			$K_{i,\leqslant 0}(\bDelta) = K_{i,\leqslant 0}(\bDelta^{(\pi)})$ \quad  (same sign structure)
			\item $\forall i\ne 1, \forall  s\leqslant K_{i,\leqslant 0},\; \Delta^{(\pi)}_{i,(s)}=\Delta_{i,(s)}$ \quad (same ordered nonpositive entries) 
		\end{enumerate}     
		
		In particular, if $\bDelta$ has no ties (that is, $\forall i \ne j$, $\Delta_{i,j}\ne 0$), then $\bDelta^{(\pi)}\in \mathbb{D}_{\mathrm{cw}}$. Moreover, for any $\bm{s}$, $H_{\text{explore}}(\bm{s},\delta)$ and $H_{\text{certify}}(\bm{s},\delta)$ remains unchained under any permutation $\pi \in \Pi(\bSigma)$. 
	\end{lemma}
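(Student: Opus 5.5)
The plan is to verify the four properties directly from definition~\eqref{def:Delta_pi}, using the antisymmetry of $\bSigma$ and the fact that each $\pi_i$ is a bijection of $\Sigma_i^-$. The first step is a case analysis on $\sigma_{i,j}$ to check that the three cases of \eqref{def:Delta_pi} are consistent under transposition. If $\sigma_{i,j}=-1$ then $\sigma_{j,i}=1$, so $\Delta^{(\pi)}_{j,i}=-\Delta_{i,\pi_i(j)}=-\Delta^{(\pi)}_{i,j}$; the case $\sigma_{i,j}=1$ is symmetric. The remaining case covers $\sigma_{i,j}=0$ and the diagonal, and there $\Delta^{(\pi)}_{i,j}=\Delta_{i,j}=-\Delta_{j,i}=-\Delta^{(\pi)}_{j,i}$. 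This proves item~1.

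For item~2, we first have to make sure that $\Delta^{(\pi)}_{i,j}$ is well defined. Whenever $\sigma_{i,j}=-1$ we need $i\neq 1$ so that $\pi_i$ exists, and this holds because row $1$ is the CW row, so $\sigma_{1,j}=1$ for all $j\neq 1$. The tie-breaking rule is only ever applied to zero entries, and none exist in row $1$. Consequently, for $j\neq 1$ we have $\sigma_{1,j}=1$ and $\Delta^{(\pi)}_{1,j}=-\Delta_{j,\pi_j(1)}$. Since $\pi_j(1)\in\Sigma_j^-$, we get $\sigma_{j,\pi_j(1)}=-1$, which means $\Delta_{j,\pi_j(1)}\le 0$ (it is negative, or zero broken as $-1$). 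Hence $\Delta^{(\pi)}_{1,j}\ge 0$.

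For items~3 and~4, fix a row $i\neq 1$ and split its off-diagonal entries according to $\sigma_{i,j}$. On $\Sigma_i^-$, row $i$ of $\bDelta^{(\pi)}$ is $(\Delta_{i,\pi_i(j)})_{j\in\Sigma_i^-}$, which is a rearrangement of the multiset $\{\Delta_{i,j}:j\in\Sigma_i^-\}$, and each of these values is $\le 0$. Entries with $\sigma_{i,j}=0$ are unchanged and equal to $0$. On entries with $\sigma_{i,j}=1$, the value is $-\Delta_{j,\pi_j(i)}$, and $\Delta_{j,\pi_j(i)}\le 0$ because $\pi_j(i)\in\Sigma_j^-$; so these entries are $\ge 0$. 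This is the step needing the most care: we must check that such an entry is strictly positive when $\Delta_{i,j}>0$, and zero only when the original one was a broken tie. That, however, holds only in aggregate. The key point is that the positive-side entries come from other rows, so the strict signs of $\Delta^{(\pi)}_{i,j}$ for $\sigma_{i,j}=1$ may change between $>0$ and $=0$.

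To handle this, the plan is to define $K_{i,<0}$ and $K_{i,\le 0}$ through the entries lying in $\Sigma_i^-$ together with the zero entries. The negative entries of row $i$ of $\bDelta^{(\pi)}$ all lie in $\Sigma_i^-$, because positive-side entries are $\ge0$ and ties are $0$. They form the same multiset as the negative entries of $\bDelta$ in row $i$, which gives $K_{i,<0}$ and the ordered nonpositive values $\Delta_{i,(s)}$ of item~4. For $K_{i,\le 0}$, a positive-side entry could become zero only if some $\Delta_{j,\pi_j(i)}=0$, that is, only in the presence of ties. If the tie convention makes this delicate, we would state item~3 for the count of $\Sigma_i^-$ together with the zeros, which is what is actually used downstream.

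In the no-ties case, $\sigma$ is exactly the strict sign pattern. All positive-side entries are then $-(\text{negative})>0$, so the rows $i\neq1$ keep exactly $K_{i,<0}\ge1$ negative entries, and row $1$ is strictly positive. Together with antisymmetry and the entries staying in $[-\tfrac14,\tfrac14]$ (they are a permutation of the original values), this gives $\bDelta^{(\pi)}\in\mathbb{D}_{\mathrm{cw}}$. Finally, $H_{\text{explore}}$ and $H_{\text{certify}}$ depend on $\bDelta$ only through $K$ and the values $(\Delta_{i,(s_i)})_{i\neq1}$ with $s_i\le K_{i;<0}$, and these are invariant by item~4.
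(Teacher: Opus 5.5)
Your proofs of items~1 and~2, of the no-ties case, and of the invariance of $H_{\text{explore}}$ and $H_{\text{certify}}$ are correct and follow the paper's argument. The antisymmetry step uses the same case analysis on $\sigma_{i,j}$, and the CW-row step uses $\Delta^{(\pi)}_{1,j}=-\Delta_{j,\pi_j(1)}\ge 0$ because $\pi_j(1)\in\Sigma_j^-$. Your check that $\pi_1$ is never invoked is a detail the paper leaves implicit. You also show that every negative entry of row $i$ of $\bDelta^{(\pi)}$ sits in a position of $\Sigma_i^-$, and that these entries form the same multiset as the negative entries of row $i$ of $\bDelta$. This correctly gives $K_{i,<0}(\bDelta^{(\pi)})=K_{i,<0}(\bDelta)$ and item~4 for $s\le K_{i,<0}$, which is all the two final claims need.

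Where you hedge, you should instead conclude: the $K_{i,\le 0}$ half of item~3, and item~4 for $K_{i,<0}<s\le K_{i,\le 0}$, are false in general, so no proof of them can exist.

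\emph{Counterexample.} Take $K=3$, $\Delta_{1,2}=a$, $\Delta_{1,3}=b$ with $a,b\in(0,\tfrac14]$, and $\Delta_{2,3}=0$. Break the tie with $f_{\mathrm{tb}}(2,3)=-1$, so $\Sigma_2^-=\{1,3\}$, $\Sigma_3^-=\{1\}$ and $\sigma_{3,2}=1$. Let $\pi_2$ swap $1$ and $3$. Then $\Delta^{(\pi)}_{3,2}=-\Delta_{2,\pi_2(3)}=-\Delta_{2,1}=a>0$. Row~3 therefore goes from $(-b,0)$ to $(-b,a)$: $K_{3,\le 0}$ drops from $2$ to $1$, and $\Delta_{3,(2)}$ changes from $0$ to $a$. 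Meanwhile $\Delta^{(\pi)}_{1,2}=0$, so row~1 is only a weak CW, consistent with item~2.

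\emph{Where the paper's proof goes wrong.} It asserts that \eqref{def:Delta_pi} ``only permutes the entries'' indexed by $\Sigma_i^-$. This overlooks the entries with $\sigma_{i,j}=1$, which are imported from other rows, which is exactly the mechanism you spotted. Your ``in aggregate'' remark is right: the multiset of all matrix entries is preserved, but zeros can migrate between rows.

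\emph{What to state instead.} The ``$<0$'' versions of items~3--4 hold for every $\bSigma$. The ``$\le 0$'' versions hold when no zero entry is broken as $\pm1$, i.e.\ $f_{\mathrm{tb}}(i,j)=0$ whenever $\Delta_{i,j}=0$. In particular they hold with no ties, where $K_{i,\le 0}=K_{i,<0}$. Your proposed fallback, ``the count of $\Sigma_i^-$ together with the zeros'', is determined by $\bSigma$ alone and says nothing about $\bDelta^{(\pi)}$, so it is not a substitute; state the restriction explicitly instead.
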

	
	\begin{proof}[Proof of Lemma~\ref{lemma:Delta_pi}]
		\noindent 1.~\underline{Anti-symmetry} 
		Let $(i,j)\in [K]^2$ with $i\ne j$. \\
		Assume that $(\sigma_{i,j}=-1)$ which is equivalent to $ j\in \Sigma_{i}^-$. By the first line in Equation~\eqref{def:Delta_pi}, one has $\Delta_{i,j}^{(\pi)}=\Delta^{(\pi)}_{i,\pi_i(j)}$ . Then, by the second line applied with $(j,i)$ ($\sigma_{j,i}=-1$), one has $\Delta^{(\pi)}_{j,i}=-\Delta_{i,\pi_i(j)}=-\Delta^{(\pi)}_{i,j}$. \\
		The case $\sigma_{i,j}=1$ is treated similarly. \\
		For $(\sigma_{i,j}=0)$, one also have $\sigma_{j,i}=0$ and $\Delta_{i,j}^{\pi}=\Delta_{i,j}=-\Delta_{j,i}=-\Delta_{j,i}^{\pi}$.  It proves the anti-symmetry of $\bDelta^{(\pi)}$. 
		
		\medskip\noindent
		2.~\underline{CW row} 
		By definition, $i^*\in \Sigma_i^-$ for any $i\ne i^*$ (and $i^*=1$ by assumption). In particular, $\Delta^{(\pi)}_{1,i}=\Delta_{\pi_i(1),i} \geqslant 0$, the non-negativity comes from the fact that $\pi_i(1)\in \Sigma_i^-$, that is $\pi_i(1)$ also beats $i$ (or is even with $i$ in the case where $\Delta$ contains ties). 
		
		\medskip\noindent
		3.~\underline{Sign structure.}
		For every non-CW arm $i\neq 1$, the set  of indices $\Sigma_i^-$ contains only entries $j$ such that $\Delta_{i,j}\leqslant 0$, and \eqref{def:Delta_pi} only permutes the entries in that 
		set. Hence the multiset of nonpositive entries in row $i$ is preserved, and so are the counts 
		$K_{i,<0}$ and $K_{i,\leqslant 0}$.
		
		\medskip\noindent
		4.~\underline{Order statistics.}
		Since the multiset of nonpositive entries in row $i$ is preserved up to permutation, the ordered 
		sequence $(\Delta_{i,(s)})_{s\leqslant K_{i,\leqslant 0}}$ is unchanged, which gives 
		$\Delta^{(\pi)}_{i,(s)}=\Delta_{i,(s)}$ for all $s\leqslant K_{i,\leqslant 0}$. In particular, for any vector 
		$\bm{s}$ with $s_i\leqslant K_{i;<0}$ (a condition independent of $\pi$), the gaps 
		$\Delta^{(\pi)}_{i,(s_i)}$ coincide with $\Delta_{i,(s_i)}$, so that 
		$H_{\mathrm{explore}}(\bm{s},\delta)$ and $H_{\mathrm{certify}}(\bm{s},\delta)$ remain unchanged 
		under any $\pi\in\Pi(\bSigma)$.
	\end{proof}
	
	Now we are ready to state Theorem~\ref{thm:precise_LB_high_proba}, which directly implies Theorem~\ref{thm:LB_hp_instance} and provides a more constructive formulation.
	
	\begin{theorem}\label{thm:precise_LB_high_proba}
		Let $A$ be a $\delta$-correct algorithm over the class $\mathbb{D}_{\mathrm{cw}}$, with $\delta \leqslant 1/12$. Let $\bDelta \in \mathbb{D}_{\mathrm{cw}}$ be such that $i^*(\bDelta)=1$.
		
		Define $\chi$ as the smallest positive number such that, for any sign convention $\bSigma$ (satisfying~\eqref{def:sign_delta}) and any $\pi \in \Pi(\bSigma)$, one has $\mathbb{P}_{\bDelta^{(\pi)},A}(N_{\delta} > \chi) \leqslant \delta$. Equivalently,
		\begin{equation}\label{def:chi}
			\chi \coloneqq \inf \left\{ x > 0 : \sup_{\bSigma} \sup_{\pi \in \Pi(\bSigma)} \mathbb{P}_{\bDelta^{(\pi)},A}(N_{\delta} > x) \leqslant \delta \right\} \enspace.
		\end{equation}
		In other words, $\chi$ is a uniform $(1-\delta)$-quantile of the budget, taken in the worst case 
		over all admissible sign conventions and permutations.
		
		Then, $\chi$ satisfies
		\begin{align}
			\chi &\geqslant \frac{1}{8\log(4/3)}  \max_{i=2}^K \frac{K_{i;\leqslant 0}}{\|\Delta_i^-\|^2}  \log\!\left(\frac{1}{6\delta}\right), \label{eq:lb_H2} \\[2pt]
			\chi &\geqslant \frac{1}{128\log(4/3)}\cdot \frac{1}{\log(2K)}\sum_{i = 2}^K \frac{K_{i;\leqslant 0}}{\|\Delta_i^-\|^2} \enspace. \label{eq:lb_H0} 
		\end{align}
	\end{theorem}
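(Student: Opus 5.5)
The plan is to follow the three-step roadmap of Section~\ref{sec:roadmap_LB}, with alternatives built on top of the permuted family $\{\bDelta^{(\pi)}\}_{\pi\in\Pi(\bSigma)}$. Fix a sign convention $\bSigma$ and a suboptimal arm $i\neq 1$. For $\pi\in\Pi(\bSigma)$, let $\bDelta^{(\pi,i)}$ be obtained from $\bDelta^{(\pi)}$ by setting to $0$ every entry $(i,j)$ with $\sigma_{i,j}=-1$, and adjusting column $i$ to keep antisymmetry. Two features of this alternative drive the argument.
\begin{itemize}
\item \emph{Independence from $\pi_i$.} $\bDelta^{(\pi,i)}$ depends on $\pi$ only through $\pi_{-i}=(\pi_j)_{j\neq i}$. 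The entries of column $i$ coming from other rows are permuted by the $\pi_j$, which are held fixed.
\item \emph{Two weak CWs.} By Lemma~\ref{lemma:Delta_pi}, both row $1$ and row $i$ of $\bDelta^{(\pi,i)}$ are nonnegative.
\end{itemize}
Exactly as in the quantile part of Theorem~\ref{thm:LB_exp_instance_first_regime}, I perturb the zeroed entries to $\pm\epsilon$, invoke $\delta$-correctness on $\mathbb{D}_{\mathrm{cw}}$ and let $\epsilon\to0$. This gives $\mathbb{P}_{\bDelta^{(\pi,i)},A}(N_\delta\le\chi)\le 2\delta$. By definition of $\chi$ in~\eqref{def:chi}, we also have $\mathbb{P}_{\bDelta^{(\pi)},A}(N_\delta>\chi)\le\delta$ for every $\pi$ and $\bSigma$. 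I then pass to the truncated algorithm $\tilde A$, which stops at time $\chi$. For every $\pi$ this gives $\mathrm{TV}(\mathbb{P}_{\bDelta^{(\pi)},\tilde A},\mathbb{P}_{\bDelta^{(\pi,i)},\tilde A})\ge 1-3\delta$.

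\textbf{First bound~\eqref{eq:lb_H2}.} Fix $\pi_{-i}$ and write $Q_i:=\mathbb{P}_{\bDelta^{(\pi,i)},\tilde A}$; by the first feature, $Q_i$ does not depend on $\pi_i$. I apply Bretagnolle--Huber in the direction $\mathrm{KL}(Q_i,\mathbb{P}_{\bDelta^{(\pi)},\tilde A})\ge\log(1/(6\delta))$, which holds for every $\pi_i\in\Pi_i(\bSigma)$. The key point is that the KL decomposition is then taken under the fixed law $Q_i$:
\[
\mathrm{KL}(Q_i,\mathbb{P}_{\bDelta^{(\pi)},\tilde A})=\sum_{j\in\Sigma_i^-}\mathbb{E}_{Q_i}[N_{\{i,j\}}]\,\mathrm{kl}\bigl(\tfrac12,\tfrac12+\Delta_{i,\pi_i(j)}\bigr)\le 8\log(4/3)\sum_{j\in\Sigma_i^-}\mathbb{E}_{Q_i}[N_{\{i,j\}}]\,\Delta_{i,\pi_i(j)}^2 .
\]
Only the $\Delta_{i,\pi_i(j)}$ depend on $\pi_i$, so I average this inequality over $\pi_i$ drawn uniformly; this is the convexity step. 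Each $\Delta^2_{i,\pi_i(j)}$ averages to $\|\Delta_i^-\|^2/K_{i;\le0}$, so the average is
\[
8\log(4/3)\,\mathbb{E}_{Q_i}\Bigl[\textstyle\sum_{j\in\Sigma_i^-}N_{\{i,j\}}\Bigr]\,\|\Delta_i^-\|^2/K_{i;\le 0}.
\]
Since $\tilde A$ uses at most $\chi$ duels, the bracketed sum is at most $\chi$. Rearranging and maximizing over $i$ gives~\eqref{eq:lb_H2}.

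\textbf{Second bound~\eqref{eq:lb_H0}.} The difficulty here is that the laws $Q_i$ differ across $i$, so the per-row sampling efforts cannot simply be summed against the single budget $\chi$.
\begin{itemize}
\item \emph{Bucketing.} I first bucket the rows dyadically according to $\beta_i^2:=\|\Delta_i^-\|^2/K_{i;\le0}$. The bucket $\mathcal I$ carrying the largest share of $\sum_i\beta_i^{-2}$ loses at most a $\log(2K)$ factor. On $\mathcal I$ all $\beta_i^2$ are within a factor $2$ of a common $\beta^2$, so it suffices to prove $\chi\gtrsim|\mathcal I|/\beta^2$.
\item \emph{KL under a common reference law.} I would work with the KL in the direction $\mathrm{KL}(\mathbb{P}_{\bDelta^{(\pi)},\tilde A},Q_i)$. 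Its decomposition is taken under the reference law $\mathbb{P}_{\bDelta^{(\pi)},\tilde A}$, which is common to all $i\in\mathcal I$. Since $\delta\le1/12$ and the TV is at least $1-3\delta\ge3/4$, Pinsker gives a constant lower bound on this KL for each $i$. This yields $\sum_{i\in\mathcal I}\sum_j\mathbb{E}_{\bDelta^{(\pi)}}[N_{\{i,j\}}\Delta^2_{i,\pi_i(j)}]\gtrsim|\mathcal I|$, after averaging over uniform $\pi$.
\end{itemize}

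\textbf{Main obstacle.} The reference law now depends on $\pi_i$, so the averaging trick of the first part no longer decouples $N_{\{i,j\}}$ from $\Delta^2_{i,\pi_i(j)}$: the algorithm may learn where the large negative entries sit and concentrate its samples there. The target is the active signal-detection estimate
\[
\mathbb{E}_\pi\,\mathbb{E}_{\bDelta^{(\pi)}}\Bigl[\textstyle\sum_j N_{\{i,j\}}\Delta^2_{i,\pi_i(j)}\Bigr]\lesssim \beta^2\,\mathbb{E}_\pi\,\mathbb{E}_{\bDelta^{(\pi)}}[N_i],
\]
with $N_i:=\sum_{j\in\Sigma_i^-}N_{\{i,j\}}$, valid in the constant-probability regime. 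This would show that finding heavy entries in row $i$ costs as much as sampling that row uniformly at rate $\beta^{-2}$. I would try to prove it by comparing to the null law $Q_i$, which is $\pi_i$-free, on the event that the cumulative information in row $i$ stays below a constant. Combined with $\sum_{i\in\mathcal I}N_i\le\chi$ almost surely, this gives $\chi\gtrsim|\mathcal I|/\beta^2$, and multiplying back by the $\log(2K)$ bucket factor gives~\eqref{eq:lb_H0}.
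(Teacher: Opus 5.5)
Your argument for \eqref{eq:lb_H2} follows the paper's: the alternative $\bDelta^{(\pi,i)}$ is free of $\pi_i$, you truncate at $\chi$, apply Bretagnolle--Huber with the $\pi_i$-free law as first argument, and average over $\pi_i$. One detail is missing. The average of $\Delta^2_{i,\pi_i(j)}$ over $\pi_i$ is $\|\Delta_i^-\|^2/|\Sigma_i^-|$. This equals $\|\Delta_i^-\|^2/K_{i;\leqslant 0}$ only if the tie-breaking rule sends every tie of row $i$ into $\Sigma_i^-$. So you must use the supremum over $\bSigma$ in the definition of $\chi$ and choose $f_{\mathrm{tb}}(i,\cdot)=-1$ for the maximizing row. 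For the sum bound this cannot be done for all rows at once, because of antisymmetry. The paper therefore builds a convention that loses only a factor $2$.

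For \eqref{eq:lb_H0} there is a genuine gap, exactly at the step you call the main obstacle. Once the KL is taken under the reference law $\mathbb{P}_{\bDelta^{(\pi)},\tilde A}$, the chain ``constant $\lesssim \mathrm{KL}\lesssim \beta^2\,\mathbb{E}[N_i]$'' fails at the second inequality. As stated, for an algorithm run up to time $\chi$, your displayed signal-detection estimate is false. Suppose row $i$ has $s$ entries equal to $-\Delta$ and the others (nearly) zero. An algorithm that sweeps row $i$ until it locates the heavy entries, then samples only those, makes $\sum_j N_{\{i,j\}}\Delta^2_{i,\pi_i(j)}/(\beta^2 N_i)$ tend to $K_{i;\leqslant0}/s$. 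At most a version truncated at constant accumulated information could hold. You do not prove it, nor show that such a truncation is compatible with both the Pinsker lower bound and the constraint $\sum_{i\in\mathcal I}N_i\leqslant \chi$. That is the whole content of the sum bound.

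The paper avoids the signal-detection estimate altogether. It keeps the favorable direction $\mathrm{KL}(\mathbb{P}^{(\pi,k)},\mathbb{P}^{(\pi)})$, whose sampling law does not depend on $\pi_k$, so averaging over $\pi_k$ decouples as in the first bound. Per-row effort is then controlled by truncation rather than by a common law.

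\begin{itemize}
\item Let $I$ maximize $(k-1)/\beta_k^2$; your dyadic bucketing would serve equally well. The paper uses the event $B_k=\{N_\delta\leqslant\chi\}\cap\{N_{k,\cdot}\leqslant 4\chi/(I-1)\}$. A truncated procedure $\tilde A_k$ can decide $B_k$ while spending at most $4\chi/(I-1)$ duels on row $k$.
\item Under $\bDelta^{(\pi,k)}$ one has $\mathbb{P}(B_k)\leqslant 2\delta$.
\item Under the common reference law, $\sum_k N_{k,\cdot}\leqslant N_\delta\leqslant\chi$. A pigeonhole argument then gives $\frac{1}{I-1}\sum_{k=2}^I\mathbb{P}^{(\pi)}(B_k^c)\leqslant \delta+\frac14$.
\item Hence the total variation is at least $\frac12$, but only on average over $k$. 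Apply Pinsker, then Jensen over $k$ and $\pi$, using $\mathbb{E}^{(\pi,k)}[N_{k,\cdot}]\leqslant 4\chi/(I-1)$ and $\beta_k^2\leqslant\beta_I^2$. This gives $\chi\gtrsim (I-1)/\beta_I^2$, and Lemma~\ref{lem:tech1} yields \eqref{eq:lb_H0}.
\end{itemize}

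This per-row cap combined with the averaged pigeonhole argument is the idea your proposal is missing.
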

	
	We postpone the proof of Theorem~\ref{thm:precise_LB_high_proba} to the following Subsection~\ref{sec:proof_precise_LB}, which is divided between the proof of Equation~\eqref{eq:lb_H2} and Equation~\eqref{eq:lb_H0}. We first explain how this result directly implies Theorem~\ref{thm:LB_hp_instance}. 
	
	\medskip
	\begin{proof}[Proof of Theorem~\ref{thm:LB_hp_instance}]
		
		Let $A$ be a $\delta$-correct algorithm over the class $\mathbb{D}_{\mathrm{cw}}$, with $\delta \leqslant 1/12$. Fix any matrix $\bDelta \in \mathbb{D}_{\mathrm{cw}}$ with CW $i^* = 1$. Consider $\chi$ as defined in Equation~\eqref{def:chi}. 
		
		Combining the numerical bounds $8\log(4/3)<3$ and $128\log(4/3)<37$ with the definition of $\chi$ and the lower bounds~\eqref{eq:lb_H2} and~\eqref{eq:lb_H0}, we obtain that there exist a sign convention $\bSigma$ and a permutation $\pi \in \Pi(\bSigma)$ such that
		\[
		\mathbb{P}_{\bDelta^{(\pi)},A}\!
		\left(
		N_{\delta} \geqslant
		\frac{1}{3} \cdot
		\max_{i\ne i^*}\frac{K_{i;\leqslant 0}}{\|\Delta_i^-\|^2}
		\log\!\left(\frac{1}{6\delta}\right)
		\,\vee\,
		\frac{1}{37 \log(2K)} \sum_{i \ne i^*} \frac{K_{i;\leqslant 0}}{\|\Delta_i^-\|^2}
		\right) \;\geqslant\; \delta~.
		\]
		
		By Lemma~\ref{lemma:Delta_pi}, if $\bDelta$ contains no ties, then the matrix $\bDelta^{\pi}$ admits a Condorcet winner and verifies all properties required of the matrix $\tilde\bDelta$ from Theorem~\ref{thm:LB_hp_instance}. This proves Theorem~\ref{thm:LB_hp_instance} in the 
		no-ties case.
	\end{proof}
	
	\begin{remark}
		Observe that, by the second point of Lemma~\ref{lemma:Delta_pi}, it is possible that $\bDelta^{(\pi)}$ admits no CW, when $\bDelta$ admit some ties. Yet it still admits a weak Condorcet winner, in the same that $\Delta^{\pi}_{1,\cdot}$ admits only non-negative entries, while all the other rows admit at least one negative entry. Then, one can construct a matrix $\bDelta^{\pi,\epsilon}$ as close as we need to $\bDelta^{(\pi)}$ and such that $\bDelta^{(\pi,\epsilon)}$ admit $1$ as CW. We can for instance add a small constant $\epsilon>0$ to the first row of $\bDelta^{(\pi,\epsilon)}$ [and $-\epsilon$ to the first column]. For $\epsilon$ small enough, the given matrix $\bDelta^{(\pi,\epsilon)}$ shares most properties of $\Delta$.
	\end{remark}
	
	\subsection{Proof of Theorem~\ref{thm:precise_LB_high_proba}}\label{sec:proof_precise_LB}
	
	\subsubsection{Proof of Equation~\eqref{eq:lb_H2} in Theorem~\ref{thm:precise_LB_high_proba}}
	
	Let $\delta\leqslant 1/12$. Let $A$ be any $\delta$-correct algorithm over the entire class $\mathbb{D}_{\mathrm{cw}}$. Fix $\bDelta \in \mathbb{D}_{\mathrm{cw}}$ with CW $i^*(\bDelta) = 1$.   Recall the definition 
	\[\chi \coloneqq \inf \left\{ x > 0 : \sup_{\Sigma} \sup_{\pi \in \Pi(\bSigma)} \mathbb{P}_{\bDelta^{(\pi)},A}(N_{\delta} > x) \leqslant \delta \right\}~.
	\]
	We prove Equation~\eqref{eq:lb_H2}, namely
	\[
	\chi \geqslant \frac{1}{8\log(4/3)}  \max_{i=2}^K \frac{K_{i;\leqslant 0}}{\|\Delta_i^-\|^2}  \log\!\left(\frac{1}{6\delta}\right). 
	\]
	
	\paragraph{Sketch of proof.}
	We follow the three-step roadmap of Section~\ref{sec:roadmap_LB}. The arguments are very similar to the proof of the quantile proof of Theorem~\ref{thm:LB_exp_instance_first_regime} in paragraph~\ref{sec:proof_HP_karnin}, except that we work with a local class of permuted instances 
	and average over all permutations of $\bDelta$. 
	
	\emph{(i) Reference and alternative instances.} We fix $\bDelta\in\mathbb{D}_{\mathrm{cw}}$ with $i^*(\bDelta)=1$ and consider the \emph{local class of reference} $(\Delta^{(\pi)})_{\pi\in \Pi}$ obtained 
	by permuting, within each non-CW row, the positions of its negative entries according to $\pi$.   
	For each suboptimal arm $k$, we construct the \emph{alternative instance} $\bDelta^{(\pi,k)}$ by lifting to $0$ all negative entries in row $k$, so that rows $1$ and $k$ become nonnegative and the instance has two weak CWs (hence lies outside $\mathbb{D}_{\mathrm{cw}}$).
	\emph{(ii) Separating event and total variation.} As in Subsection~\ref{sec:proof_HP_karnin}, we 
	truncate $A$ at the worst-case $(1-\delta)$-quantile $\chi$ to obtain a fixed-budget algorithm 
	$\tilde A$, and we use the event $\{N_\delta>\chi\}$ as a separating event. This yields 
	a total-variation lower bound $\mathrm{TV}(\mathbb{P}_{\tilde A}^{(\pi)},\mathbb{P}_{\tilde A}^{(\pi,k)}) 
	\ge 1-3\delta$, hence a KL lower bound of order $\log(1/(6\delta))$ for each $k$. 
	\emph{(iii) KL decomposition and extraction of $K_{k;\leqslant 0}/\|\Delta_k^-\|_2^2$.} We decompose the 
	KL along pairs and use that $\bDelta^{(\pi)}$ and $\bDelta^{(\pi,k)}$ differ only on duels involving 
	$k$. Averaging over permutations $\pi_k\in\Pi_k(\bSigma)$ symmetrizes the contribution of all negative 
	entries in row $k$, and yields a lower bound $\chi \gtrsim K_{k;\leqslant 0}\|\Delta_k^-\|_2^{-2}
	\log(1/\delta)$. Finally, choosing the row $k$ that maximizes $K_{k;\leqslant 0}/\|\Delta_k^-\|_2^2$ gives 
	\eqref{eq:lb_H2}. \emph{(iv) Tie-breaking convention.} In the last step, we a tie-breaking convention to conclude. 
	
	\medskip
	\begin{proof}
		\noindent\underline{Step 1: Reference and alternative instances.} 
		
		\noindent For now, fix a tie-breaking convention $f_{\mathrm{tb}}$ and a sign matrix $\bSigma$ as in~\eqref{def:sign_delta}. The specific choice of $\bSigma$ will be made in the final step of the proof.
		
		Fix a permutation $\pi \in \Pi(\bSigma)$ (see Equation~\eqref{def:Pi}), and consider the corresponding matrix $\bDelta^{(\pi)}$ defined in Equation~\eqref{def:Delta_pi}. We denote by $\mathbb{P}_{A}^{(\pi)}$ the distribution of the observations induced by the interaction between algorithm $A$ and the environment with gap matrix $\bDelta^{(\pi)}$.
		
		\medskip 
		\noindent Fix a suboptimal arm $k \in \{2,\dots,K\}$. The precise choice of $k$ will be made explicit in the last step of the proof. We construct the gap matrix $\bDelta^{(\pi,k)}$ by setting to zero all entries $(k,j)$ with $j \in \Sigma_k^- \coloneqq \{ j \in [K] : \sigma_{k,j} = -1 \}$. Recall that, by the definition of $\bSigma$ (see~\eqref{def:sign_delta}), if $j \in \Sigma_k^-$ then $\Delta^{(\pi)}_{k,j} \leq 0$, that is $\Sigma_k^-$ contains all arms that beat strictly $k$, together with some arms that are tied with $k$. We define 
		\begin{equation}\label{def:Delta_pik}
			\bDelta^{(\pi,k)}_{i,j} \coloneqq
			\begin{cases}
				\Delta^{(\pi)}_{i,j}, & \text{if } i \neq k \text{ and } j \neq k, \\[4pt]
				\Delta^{(\pi)}_{i,j}, & \text{if } (i = k \text{ and } \sigma_{k,j}=1) \text{ or } (j = k \text{ and } \sigma_{k,i}=1), \\[4pt]
				0, & \text{if } (i = k \text{ and } \sigma_{k,j}=-1) \text{ or } (j = k \text{ and } \sigma_{k,i}=-1).
			\end{cases}
		\end{equation}
		The matrices $\bDelta^{(\pi)}$ and $\bDelta^{(\pi,k)}$ differ only in row/column $k$. Moreover, by construction, rows $1$ and $k$ of $\bDelta^{(\pi,k)}$  contain only nonnegative entries, so that $\bDelta^{(\pi,k)} \notin \mathbb{D}_{\mathrm{cw}}$. We denote by $\mathbb{P}_{A}^{(\pi,k)}$ the distribution of the observations induced by the interaction between algorithm $A$ and the environment with gap matrix $\bDelta^{(\pi,k)}$.

		\medskip
		
		\noindent\underline{Step 2: information-theoretic arguments.}
		
		This step is identical to Step~2 in the proof of the  quantile bound~\eqref{eq:LB_hp_karnin}
		(Section~\ref{sec:proof_HP_karnin}). Consider the event $B:=\{N_{\delta} \leqslant \chi\}$, and 
		define the truncated algorithm $\tilde A$ as in that proof: run $A$ up to time $\chi$, returning 
		$\tilde i = \hat i$ if $A$ stops before $\chi$, and $\tilde i = 0$ otherwise. 
		
		By the definition of $\chi$ in~\eqref{def:chi}---a uniform upper bound on the $(1-\delta)$-quantile of $N_{\delta}$ under any $\mathbb P^{(\pi)}_A$---we have 
		\[
		\mathbb{P}_{\tilde A}^{(\pi)}(\tilde i = 0) = \mathbb{P}_{A}^{(\pi)}(N_{\delta} > \chi) \leqslant \delta~.
		\]
		
		Since $\bDelta^{(\pi,k)}$ can be approximated by CW instances admitting $1$ or $k$ as CW (as in 
		\eqref{eq:approximate_tv_karnin}--\eqref{eq:H2_tv2_karnin}), we obtain 
		\[
		\mathbb{P}_{\tilde A}^{(\pi,k)}(\tilde i \neq 0) \leqslant 2\delta~.
		\]
		Writing $B:=\{\tilde i = 0\}$, this 
		yields
		\begin{equation}\label{eq:H2_tv2}
			\mathrm{TV}\bigl(\mathbb{P}_{\tilde A}^{(\pi)}, \mathbb{P}_{\tilde A}^{(\pi,k)}\bigr)
			\;\geqslant\; (1-2\delta) - \delta
			\;=\; 1-3\delta.
		\end{equation}

		\begin{remark}
			The result from Step~2 can be interpreted as a reduction scheme. Consider the signal detection problem of testing
			\(H_0: \mu = \boldsymbol{0}\) versus \(H_1(\pi_k): \mu = \bigl(\Delta_{k,\pi_k(\ell)}\bigr)_{\ell\in \Sigma_k^-}\)
			in a bandit setting. Equation~\eqref{eq:H2_tv2} shows that, for any permutation \(\pi_k\), \(\tilde{A}\) is \(2\delta\)-correct for this signal detection problem, with a budget bounded by \(\chi\) (independently of the permutation). This reduction is the main novelty of our proof technique. The remaining arguments in Step~3 build upon the literature on active signal detection \citep{castro2014adaptive,saad2023active,graf2025clustering}.
		\end{remark}
		
		\noindent\underline{Step 3: computing the KL divergence.}
		
		By the Bretagnolle–Huber inequality (see, e.g., \citealp{lattimore2020bandit}), the conclusion of Step 2~\eqref{eq:H2_tv2} implies
		\begin{equation}\label{eq:H2_BH}
			\mathrm{KL}\bigl(\mathbb{P}_{\tilde{A}}^{(\pi,k)}, \mathbb{P}_{\tilde{A}}^{(\pi)}\bigr)
			\;\geqslant\; \log\!\left(\frac{1}{6\delta}\right).
		\end{equation}
		
		We now compute $\mathrm{KL}\bigl(\mathbb{P}_{\tilde{A}}^{(\pi,k)}, \mathbb{P}_{\tilde{A}}^{(\pi)}\bigr)$. Observe that we take the law under the alternative instance $\bDelta^{(\pi,k)}$ in the left side of the divergence: 
		this ensures that the expectation $\mathbb{E}_{\tilde{A}}^{(\pi,k)}[N_{\{k,j\}}]$ appearing in the 
		KL decomposition (below) does not depend on $\pi_k$, which will allow us to average over 
		permutations $\pi_k\in\Pi_k(\bSigma)$. 
		
		By the standard decomposition of the Kullback–Leibler divergence for adaptive procedures (see, e.g., \citealp{lattimore2020bandit}, Lemma~15.1),
		\begin{align}
			\mathrm{KL}\bigl(\mathbb{P}_{\tilde{A}}^{(\pi,k)}, \mathbb{P}_{\tilde{A}}^{(\pi)}\bigr)
			&= \sum_{i \neq j} \mathbb{E}_{\tilde{A}}^{(\pi,k)}\bigl[{N}_{i,j}\bigr]\,
			\mathrm{KL}\bigl(\mathbb{P}_{i,j}^{(\pi,k)}, \mathbb{P}_{i,j}^{(\pi)}\bigr) \nonumber \\
			&= \sum_{j \in \Sigma_k^-} \mathbb{E}_{\tilde{A}}^{(\pi,k)}\bigl[N_{\{k,j\}}\bigr]\,
			\mathrm{KL}\bigl(\mathbb{P}_{k,j}^{(\pi,k)}, \mathbb{P}_{k,j}^{(\pi)}\bigr) \label{eq:H2_KL1}\enspace,
		\end{align}
		since the two instances differ only on pairs $(k,j)$ or $(j,k)$ with $j \in \Sigma_k^-$, by construction of $\bDelta^{(\pi,k)}$.
		
		Now fix $j \in \Sigma_k^-$. By the definitions of $\bDelta^{(\pi)}$ and $\bDelta^{(\pi,k)}$ (Equations~\eqref{def:Delta_pi} and \eqref{def:Delta_pik}), we have
		\[
		\mathbb{P}_{k,j}^{(\pi,k)} = \mathcal{B}\!\left(\tfrac{1}{2}\right),
		\qquad
		\mathbb{P}_{k,j}^{(\pi)} = \mathcal{B}\!\left(\tfrac{1}{2} + \Delta_{k,\pi_k(j)}\right),
		\]
		where $\Delta_{k,\pi_k(j)} \in [-1/4,0]$. Hence, using the bound on $\kl$ from~\eqref{eq:bound_kl},
		\begin{align*}
			\mathrm{KL}\bigl(\mathbb{P}_{k,j}^{(\pi,k)}, \mathbb{P}_{k,j}^{(\pi)}\bigr)
			\leqslant\; 8 \log\!\left(\frac{4}{3}\right)\Delta_{k,\pi_k(j)}^2 ~,
		\end{align*}

		Plugging these bounds into \eqref{eq:H2_KL1}, we obtain
		\begin{align*}
			\mathrm{KL}\bigl(\mathbb{P}_{\tilde{A}}^{(\pi,k)}, \mathbb{P}_{\tilde{A}}^{(\pi)}\bigr)
			&\leqslant 8\log\!\left(\frac{4}{3}\right)
			\sum_{j \in \Sigma_k^-}
			\mathbb{E}_{\tilde{A}}^{(\pi,k)}[N_{\{k,j\}}]\,\Delta_{k,\pi_k(j)}^2~.
		\end{align*}
		
		A key property of our construction is that $\bDelta^{(\pi,k)}$ \emph{does not actually depend on $\pi_k$}: 
		all entries permuted by $\pi_k$ in row $k$ of $\bDelta^{(\pi)}$ are set to $0$ under 
		$\bDelta^{(\pi,k)}$.  Consequently, $\mathbb{E}_{\tilde{A}}^{(\pi,k)}$ does not depend on  $\pi_k$. Given $\pi'_k\in \Pi_k(\bSigma)$, we denote $\pi'=(\pi_2,\ldots, \pi_{k-1},\pi'_k,\pi_{k+1},\ldots)$ where we only change $\pi_k$. Averaging the previous inequality over $\pi'_k \in \Pi_k(\bSigma)$ while keeping $(\pi_l)_{l\neq k}$ fixed, we get
		\begin{align}
			\frac{1}{|\Pi_k(\bSigma)|}\sum_{\pi'_k \in \Pi_k(\bSigma)} 
			\mathrm{KL}\bigl(\mathbb{P}_{\tilde{A}}^{(\pi,k)}, \mathbb{P}_{\tilde{A}}^{(\pi')}\bigr)
			&\leqslant 8\log\!\left(\frac{4}{3}\right)
			\frac{1}{|\Pi_k(\bSigma)|}
			\sum_{\pi'_k \in \Pi_k(\bSigma)}
			\sum_{j \in \Sigma_k^-}
			\mathbb{E}_{\tilde{A}}^{(\pi,k)}[N_{\{k,j\}}]\,\Delta_{k,\pi'_k(j)}^2 \nonumber\\
			&= 8\log\!\left(\frac{4}{3}\right)
			\frac{1}{|\Sigma_k^-|}
			\left(\sum_{j \in \Sigma_k^-} \mathbb{E}_{\tilde{A}}^{(\pi,k)}[N_{\{k,j\}}]\right)
			\left(\sum_{j \in \Sigma_k^-} \Delta_{k,j}^2\right) \label{eq:KL_verage_pi} ~,
		\end{align}
		where we used Lemma~\ref{lem:sum_tech} to symmetrize over all permutations $\pi'_k$ of $\Sigma_k^-$.
		
		By definition of $\Sigma_k^-$, it holds that $\Sigma_k^- \subset \{j\in [K]\setminus\{k\} : \Delta_{k,j}\leqslant 0 \}$, so that
		\[
		\sum_{j \in \Sigma_k^-} \Delta_{k,j}^2
		= \|\Delta_k^-\|^2  ~.
		\]
		Moreover, by construction, the modified algorithm $\tilde{A}$ has a budget upper bounded by $\chi$, and 
		\[
		\sum_{j \in \Sigma_k^- }
		\mathbb{E}_{\tilde{A}}^{(\pi,k)}[N_{\{k,j\}}]
		\;\leqslant\; \chi ~.
		\]
		From there, we get
		\begin{equation}\label{eq:H2_KL2}
			\frac{1}{|\Pi_k(\bSigma)|}\sum_{\pi'_k \in \Pi_k(\bSigma)}
			\mathrm{KL}\bigl(\mathbb{P}_{\tilde{A}}^{(\pi,k)}, \mathbb{P}_{\tilde{A}}^{(\pi')}\bigr)
			\leqslant 8\log\!\left(\frac{4}{3}\right) \frac{\|\Delta_k^-\|^2}{|\Sigma_k^-|}\, \chi  \enspace.
		\end{equation}
		Finally, combining \eqref{eq:H2_KL2} with the Bretagnolle–Huber bound \eqref{eq:H2_BH}, we obtain
		\begin{equation}\label{eq:H2_KLfinal}
			\chi \geqslant \frac{1}{8\log(4/3)} \frac{|\Sigma_k^-|}{\|\Delta_k^-\|^2} \log\!\left(\frac{1}{6\delta}\right)
			~.
		\end{equation}
		
		\medskip
		
		\noindent\underline{Step 4: choice of convention $\bSigma$ and conclusion.}
		
		It remains to choose an appropriate arm $k$, and a convention $f_{\mathrm{tb}}$ in the definition of the sign matrix $\bSigma$ (see Equation~\eqref{def:sign_delta}). Consider
		\[
		k^* \in \argmax_{k=2}^K \frac{K_{k;\leqslant 0}}{\|\Delta_k^-\|^2}~,
		\]
		as a suboptimal arm for which detecting a negative entry in its row is the most costly. 
		
		Fix a tie-breaking convention $f_{\mathrm{tb}} : [K]^2\to \{-1,0,1\}$ such that $f_{\mathrm{tb}}(k^*,i)=-1$ for any $i\ne k^*$. For this choice, we have
		\[
		\frac{|\Sigma_{k^*}^-|}{\|\Delta_{k^*}^-\|^2}
		= \frac{K_{k^*;\leqslant 0}}{\|\Delta_{k^*}^-\|^2}
		= \max_{i\ne i^*}\frac{K_{i;\leqslant 0}}{\|\Delta_i^-\|^2}~.
		\]
		Plugging this into \eqref{eq:H2_KLfinal} yields
		\[
		\chi \geqslant  \frac{1}{8\log(4/3)}
		\max_{i\ne i^*}\frac{K_{i;\leqslant 0}}{\|\Delta_i^-\|^2}
		\log\!\left(\frac{1}{6\delta}\right)~,
		\]
		which establishes the first inequality~\eqref{eq:lb_H2} on $\chi$.
	\end{proof}
	\subsubsection{Proof of Equation~\eqref{eq:lb_H0} in Theorem~\ref{thm:precise_LB_high_proba}}
	
	Let $\delta\leqslant 1/12$. Let $A$ be any $\delta$-correct algorithm over the entire class $\mathbb{D}_{\mathrm{cw}}$, and fix $\bDelta \in \mathbb{D}_{\mathrm{cw}}$ with CW $i^*(\bDelta) = 1$.  
	In this subsection, we prove Equation~\eqref{eq:lb_H0} from Theorem~\ref{thm:precise_LB_high_proba}, using the same instance construction as in the proof of bound~\eqref{eq:lb_H2}. Recall this bound: 
	\[
	\chi \geqslant \frac{1}{64\log(4/3)}\frac{1}{\log(2K)}\sum_{i = 2}^K \frac{K_{i;\leqslant 0}}{\|\Delta_i^-\|^2} . 
	\]
	
	\paragraph{Sketch of proof.}
	We follow the three-step roadmap of Section~\ref{sec:roadmap_LB}, reusing the instance construction 
	from the proof of \eqref{eq:lb_H2} but with a more refined separating event. The key differences are 
	(ii) a refined event $B_k$ that also controls the number of duels involving arm $k$, and (iii) the 
	use of Pinsker's inequality and Jensen to average over multiple arms simultaneously.
	
	\emph{(i) Reference and alternative instances.} As before, we consider the local class 
	$(\bDelta^{(\pi)})_{\pi\in\Pi(\bSigma)}$ and, for each $k\in\{2,\dots,K\}$, the alternative 
	$\bDelta^{(\pi,k)}$. We define the row-wise 
	hardness $\beta_k^2 = \|\Delta_k^-\|^2/|\Sigma_k^-|$, assumed ordered increasingly. Let $I$ be the 
	index that maximizes $(k-1)/\beta_k^2$, corresponding to the worst-case average hardness over the 
	first $k$ rows.
	\emph{(ii) Separating event and total variation.} Instead of $\{N_\delta\le\chi\}$, we use the 
	event $B_k=\{N_\delta\leqslant \chi,\, N_{k,\cdot}\leqslant 4\chi/(I-1)\}$, where $N_{k,\cdot}$ counts 
	duels between $k$ and opponents in $\Sigma_k^-$. Define a  truncation $\tilde{A}_k$ that 
	outputs $\mathds{1}_{B_k}$, using budget at most $4\chi/(I-1)$ on $k$. Under $\bDelta^{(\pi,k)}$, 
	$\mathbb{P}(\mathds{1}_{B_k}=1)\leqslant 2\delta$. Under $\bDelta^{(\pi)}$, we use a pigeonhole argument and the average probability is larger than $3/4-\delta$. This yields averaged TV larger than $1/2$---the key novelty of this proof technique. \emph{(iii) KL decomposition and extraction of the sum.} By Pinsker's inequality, the averaged TV 
	lower bound implies an averaged KL lower bound. Each KL is upper bounded as previously, and now we 
	average over $k=2,\dots,I$ and use the truncation constraint 
	$N_{k,\cdot}\leqslant 4\chi/(I-1)$ to get $\chi\gtrsim (I-1)/\beta_I^2$. By the definition of $I$, this 
	gives $\chi\gtrsim \sum_{i=2}^K K_{i;\leqslant 0}/\|\Delta_i^-\|_2^2$, up to logarithmic factors. 
	
	\medskip
	\begin{proof}
		\noindent\underline{Step 1: construction of instances.}
		Fix a tie-breaking convention $f_{\mathrm{tb}}$ and a sign matrix $\bSigma$ (see~\eqref{def:sign_delta}). Again, they will be chosen in the last step of the proof. Fix a permutation $\pi \in \Pi(\bSigma)$ and use $\bDelta^{(\pi)}$ as the reference matrix (see~\eqref{def:Delta_pi}).
		Define, for each $k \in \{2,\dots,K\}$, the row-wise hardness 
		\begin{equation}\label{def:beta_k}
			\beta_k^2(\Sigma) \coloneqq
			\frac{\|\Delta_k^-\|^2}{|\Sigma_k^-|}
			\enspace .
		\end{equation}
		Without loss of generality, assume that the arms are ordered so that  $\beta_2^2  \leqslant \beta_3^2  \leqslant \dots \leqslant \beta_K^2$.
		Define
		\begin{equation}\label{eq:def_I}
			I \coloneqq \argmax_{k = 2,\dots,K} \frac{k-1}{\beta_k^2} \enspace,
		\end{equation}
		the index that captures the worst-case average hardness over the first $k$ rows.
		
		As alternative instances, we consider the family 
		$\{\bDelta^{(\pi,k)}\}_{k=2,\dots,I}$. 
		
		\medskip 
		
		\noindent\underline{Step 2: information-theoretic arguments.}
		In this step, we construct an event under which algorithm $A$ should behave differently depending 
		on whether it interacts with $\bDelta^{(\pi)}$ or $\bDelta^{(\pi,k)}$. To capture the sum lower 
		bound~\eqref{eq:lb_H0}, we need a more refined event than in the proof of \eqref{eq:lb_H2}. For $k \in \{2,\dots,I\}$, define
		\begin{equation}\label{def:Sigma_k}
			B_k \coloneqq \{N_{\delta} \leqslant \chi\} \cap \left\{N_{k,\cdot} \leqslant \frac{4\chi}{I-1}\right\},
		\end{equation}
		where
		\[
		N_{k,\cdot} \coloneqq \sum_{i \in \Sigma_k^- } N_{\{k,i\}}
		\]
		denotes the total number of duels involving arm $k$ against opponents in $\Sigma_k^-$.
		
		\begin{remark}
			The event $B_k$ is designed as follows. The bound~\eqref{eq:lb_H2} shows that the quantity $\beta_k^{-2}$ characterizes the budget needed to find a negative entry in row $k$ of $\bDelta^{(\pi)}$, uniformly over all permutations $\pi$. . Identifying the CW $i^*=1$ amounts to solving simultaneously $K-1$ such 
			signal detection problems, one per suboptimal row. By the definition of $I$ in~\eqref{eq:def_I} and Lemma~\ref{lem:tech1}, it is natural to think of the simplified regime where $(\beta_2^{-2},\dots,\beta_I^{-2})$ are of the same order, so that arms $2,\dots,I$ are equally hard to eliminate. In that case, any reasonable algorithm should allocate its samples roughly uniformly across rows $2,\dots,I$, and the event $B_k$ describes this expected behavior for a $\delta$-correct algorithm $A$.
		\end{remark}
		
		We now compute the event $B_k$ with a truncated procedure $\tilde{A}_k$ that uses a total budget at most $\chi$, and at most $4\chi/(I-1)$ comparisons involving arm $k$ against an opponent in $\Sigma_k^-$.
		
		Define the following procedure $\tilde{A}_k$. For $t = 1,\dots,\chi$, run algorithm $A$. If $A$ stops before time $\chi$, compute $N_{k,\cdot}$ and output $\psi_k \coloneqq \mathds{1}_{\{N_{k,\cdot} \leqslant 4\chi/(I-1)\}}$.
		Otherwise, stop at time $t = \chi$ and set $\psi_k = 0$. By construction, the binary decision $\psi_k$ computed by $\tilde{A}_k$ satisfies $\psi_k = \mathds{1}_{B_k}$.
		
		We write $\mathbb{P}_{\tilde{A}_k}^{(\pi)}$ (resp. $\mathbb{P}_{\tilde{A}_k}^{(\pi,k)}$) for the distribution induced by the interaction between algorithm $\tilde{A}_k$ and the environment with gap matrix $\bDelta^{(\pi)}$ (resp. $\bDelta^{(\pi,k)}$). We now lower bound the total variation distance between $\mathbb{P}_{\tilde{A}_k}^{(\pi)}$ and $\mathbb{P}_{\tilde{A}_k}^{(\pi,k)}$ using the event $B_k$.
		
		First, $B_k \subset \{N_{\delta} \leqslant \chi\}$. Since $A$ is $\delta$-correct over $\mathbb{D}_{\mathrm{cw}}$ and $\bDelta^{(\pi,k)}$ can be approximated by instances in $\mathbb{D}_{\mathrm{cw}}$ as in the proof of~\eqref{eq:H2_tv2_karnin}, we obtain, for any $k \ne 1$,
		\begin{equation}\label{eq:H0_tv1}
			\mathbb{P}_{\tilde{A}_k}^{(\pi,k)}(B_k)
			\leqslant \mathbb{P}_{\tilde{A}_k}^{(\pi,k)}(N_{\delta} \leqslant \chi)
			\leqslant 2\delta \enspace.
		\end{equation}
		
		Next, consider $\mathbb{P}_{\tilde{A}_k}^{(\pi)}$ and the complement $B_k^c$. We have
		\[
		B_k^c
		= \{N_{\delta} > \chi\} \;\cup\; \{N_{\delta} \leqslant \chi,\; N_{k,\cdot} > 4\chi/(I-1)\}.
		\]
		Since $A$ is $\delta$-correct and $\bDelta^{(\pi)} \in \mathbb{D}_{\mathrm{cw}}$, the definition of $\chi$ implies
		\begin{equation}\label{eq:H0_tv2}
			\mathbb{P}_{\tilde{A}_k}^{(\pi)}(N_{\delta} > \chi)
			= \mathbb{P}_{A}^{(\pi)}(N_{\delta} > \chi)
			\leqslant \delta \enspace.
		\end{equation}
		
		We now average the second term of $B_k^c$ over $k \in \{2,\dots,I\}$:
		\begin{align*}
			\frac{1}{I-1} \sum_{k=2}^{I}
			\mathbb{P}_{\tilde{A}_k}^{(\pi)}\bigl(N_{\delta} \leqslant \chi,\; N_{k,\cdot} > 4\chi/(I-1)\bigr) 
			&= \mathbb{E}_{A}^{(\pi)}\!\left[
			\frac{1}{I-1}
			\sum_{k=2}^{I}
			\mathds{1}_{\{N_{\delta} \leqslant \chi,\; N_{k,\cdot} > 4\chi/(I-1)\}}
			\right]\!.
		\end{align*}
		
		Since $\sum_{k=2}^{K} N_{k,\cdot} \leqslant N_{\delta}$, on the event $\{N_{\delta} \leqslant \chi\}$ at most $(I-1)/4$ indices $k \in \{2,\dots,I\}$ can satisfy $N_{k,\cdot} > 4\chi/(I-1)$. Hence, 
		\[
		\frac{1}{I-1}
		\sum_{k=2}^{I}
		\mathds{1}_{\{N_{\delta} \leqslant \chi,\; N_{k,\cdot} > 4\chi/(I-1)\}}
		\leqslant \frac{1}{4},
		\]
		which yields
		\begin{equation}\label{eq:H0_tv3}
			\frac{1}{I-1} \sum_{k=2}^{I}
			\mathbb{P}_{\tilde{A}_k}^{(\pi)}\bigl(N_{\delta} \leqslant \chi,\; N_{k,\cdot} > 4\chi/(I-1)\bigr)
			\leqslant \frac{1}{4} \enspace.
		\end{equation}
		
		Combining \eqref{eq:H0_tv1}, \eqref{eq:H0_tv2}, and \eqref{eq:H0_tv3}, we obtain
		\begin{align}
			\frac{1}{I-1} \sum_{k=2}^{I}
			\mathrm{TV}\bigl(\mathbb{P}_{\tilde{A}_k}^{(\pi,k)}, \mathbb{P}_{\tilde{A}_k}^{(\pi)}\bigr)
			&\geqslant \frac{1}{I-1} \sum_{k=2}^{I}
			\Bigl(
			\mathbb{P}_{\tilde{A}_k}^{(\pi,k)}(B_k^c)
			- \mathbb{P}_{\tilde{A}_k}^{(\pi)}(B_k^c)
			\Bigr) \nonumber \\
			&\geqslant 1 - 2\delta - \frac{1}{I-1} \sum_{k=2}^{I} \mathbb{P}_{\tilde{A}_k}^{(\pi)}(B_k^c) \nonumber \\
			&\geqslant (1 - 2\delta) - \left(\delta + \frac{1}{4}\right) \nonumber \\
			&\geqslant \frac{1}{2} \nonumber ~,
		\end{align}
		where the last inequality uses the assumption $\delta \leqslant 1/12$. Finally, we apply a data-processing inequality. In this regime which does not depend on $\delta$, we use Pinsker's inequality which implies that
		\begin{equation}\label{eq:H0_pinsker}
			\frac{1}{2}
			\leqslant \frac{1}{I-1}
			\sum_{k=2}^{I}
			\mathrm{TV}\bigl(\mathbb{P}_{\tilde{A}_k}^{(\pi,k)}, \mathbb{P}_{\tilde{A}_k}^{(\pi)}\bigr)
			\leqslant \frac{1}{I-1}
			\sum_{k=2}^{I}
			\sqrt{\frac{1}{2}\,\mathrm{KL}\bigl(\mathbb{P}_{\tilde{A}_k}^{(\pi,k)}, \mathbb{P}_{\tilde{A}_k}^{(\pi)}\bigr)} \enspace.
		\end{equation}
		
		\begin{remark}
			We can again interpret this result as a reduction argument. Averaging \eqref{eq:H0_tv1}, \eqref{eq:H0_tv2}, and \eqref{eq:H0_tv3} over \(\pi \sim \mathcal{U}\mathrm{nif}(\Pi(\bSigma))\), we obtain that there exists some \(k \in \{2,\dots,I\}\) (independent of \(\pi\)) such that \(\tilde{A}_k\) is \(1/2\)-correct, with budget at most \(4\chi/(I-1)\), for the active signal detection problem
			\[
			H_0: \mu = \boldsymbol{0}
			\quad \text{vs} \quad
			H_1: \mu = \bigl(\Delta_{k,\pi_k(\ell)}\bigr)_{\ell\in\Sigma_k^-}, \quad \pi_k \sim \mathcal{U}\mathrm{nif}(\Pi_k(\bSigma))\,.
			\]
		\end{remark}
		
		\noindent\underline{Step 3: computing the KL divergence.}
		We now conclude by computing the KL divergence above. Fix $k \in \{2,\dots,I\}$. Given $\pi'_k\in \Pi_k(\bSigma)$, we write $\pi'=(\pi_2,\ldots,\pi_{k-1}, \pi'_k,\pi_{k+1},\ldots, )$.
		Using the same computation as in Equation~\eqref{eq:H2_KL2}, we obtain
		\begin{align*}
			\frac{1}{|\Pi_k(\bSigma)|} \sum_{\pi_k \in \Pi_k(\bSigma)}
			\mathrm{KL}\bigl(\mathbb{P}_{\tilde{A}_k}^{(\pi,k)}, \mathbb{P}_{\tilde{A}_k}^{(\pi')}\bigr)
			&\leqslant
			8\log\!\left(\frac{4}{3}\right)
			\sum_{j \in \Sigma_k^- }
			\mathbb{E}_{\tilde{A}_k}^{(\pi,k)}[N_{\{k,j\}}] \,\beta_k^2  \\
			&\leqslant
			8\log\!\left(\frac{4}{3}\right)
			\cdot \frac{4\chi}{I-1} \cdot \beta_I^2  \enspace,
		\end{align*}
		where the last inequality uses the facts that
		\(\sum_{j \in \Sigma_k^- } \mathbb{E}_{\tilde{A}_k}^{(\pi,k)}[N_{\{k,j\}}]
		\leqslant 4\chi/(I-1)\) by construction of $\tilde{A}_k$, and that, by our ordering assumption, \(\beta_k^2  \leqslant \beta_I^2\) for all \(k \in \{2,\dots,I\}\).
		Averaging additionally over all permutations $\pi = (\pi_2,\dots,\pi_K) \in \Pi(\bSigma)$, we obtain
		\begin{equation}\label{eq:H0_KL2}
			\frac{1}{|\Pi(\bSigma)|} \sum_{\pi \in \Pi(\bSigma)}
			\frac{1}{I-1} \sum_{k=2}^{I}
			\mathrm{KL}\bigl(\mathbb{P}_{\tilde{A}_k}^{(\pi,k)}, \mathbb{P}_{\tilde{A}_k}^{(\pi)}\bigr)
			\leqslant
			8\log\!\left(\frac{4}{3}\right)
			\cdot \frac{4\chi}{I-1} \cdot \beta_I^2  \enspace.
		\end{equation}
		Finally, combining Pinsker's inequality~\eqref{eq:H0_pinsker} with Jensen's inequality, we get
		\begin{align*}
			\frac{1}{2}
			&\leqslant
			\frac{1}{|\Pi(\bSigma)|} \sum_{\pi \in \Pi(\bSigma)}
			\frac{1}{I-1} \sum_{k=2}^{I}
			\sqrt{\frac{1}{2}\,
				\mathrm{KL}\bigl(\mathbb{P}_{\tilde{A}_k}^{(\pi,k)}, \mathbb{P}_{\tilde{A}_k}^{(\pi)}\bigr)} \\
			&\leqslant
			\sqrt{
				\frac{1}{2}
				\cdot
				\frac{1}{|\Pi(\bSigma)|} \sum_{\pi \in \Pi(\bSigma)}
				\frac{1}{I-1} \sum_{k=2}^{I}
				\mathrm{KL}\bigl(\mathbb{P}_{\tilde{A}_k}^{(\pi,k)}, \mathbb{P}_{\tilde{A}_k}^{(\pi)}\bigr)
			} \\
			&\leqslant
			\sqrt{
				\frac{1}{2}
				\cdot
				8\log\!\left(\frac{4}{3}\right)
				\cdot \frac{4\chi}{I-1}
				\cdot \beta_I^2 
			} \enspace,
		\end{align*}
		where the last inequality follows from \eqref{eq:H0_KL2}. Rearranging yields
		\begin{equation}\label{eq:H0_KLfinal}
			\chi \;\geqslant\; \frac{1}{64\log(4/3)}  \frac{I-1}{\beta_I^2}
			=  \frac{1}{64\log(4/3)}\max_{i = 2,\dots,K} \frac{i-1}{\beta_i^2} 
			\enspace,
		\end{equation}
		where the second equality follows from the definition of $I$ (see~\eqref{eq:def_I}).
		From Lemma~\ref{lem:tech1}, we have
		\[
		\max_{i = 2,\dots,K} \frac{i-1}{\beta_i^2 }
		\geqslant \frac{1}{\log(2K)} \sum_{i = 2}^{K} \frac{1}{\beta_i^2 }
		= \frac{1}{\log(2K)} \sum_{i = 2}^{K} \frac{|\Sigma_i^-|}{\|\Delta_i^-\|^2}  ~.
		\]
		
		\medskip 
		
		\noindent\underline{Step 4: choice of convention $\bSigma$ and conclusion.}
		
		We claim that there exists a tie-breaking convention $f_{\mathrm{tb}}$ that satisfies
		\begin{equation}\label{eq:tie_breaking}
			\sum_{i= 2}^K \frac{|\Sigma_i^-|}{\|\Delta_i^-\|^2}
			\geqslant \frac{1}{2}\sum_{i = 2}^K \frac{K_{i;\leqslant 0}}{\|\Delta_i^-\|^2} \enspace.
		\end{equation}
		Then, the conclusion~\eqref{eq:lb_H0}  directly follows from  the~\eqref{eq:H0_KLfinal} together with~\eqref{eq:tie_breaking}. 
		
		We finally finish with a technical construction of a tie-breaking that satisfies~\eqref{eq:tie_breaking}.
		Consider any suboptimal arm $i \neq 1$. For any $j\ne i$, it holds that
		\[
		\bigl(j\in \Sigma_i^-\bigr)
		\iff
		\bigl(\Delta_{i,j} < 0\bigr)
		\text{ or }
		\bigl(\Delta_{i,j} = 0,\ f_{\mathrm{tb}}(i,j) = -1\bigr),
		\]
		so that
		\[
		|\Sigma_i^-|
		= \sum_{\substack{j=1 \\ j\neq i}}^K
		\mathds{1}_{\{\Delta_{i,j}<0\}}
		+ \mathds{1}_{\{\Delta_{i,j}=0\}}\mathds{1}_{\{f_{\mathrm{tb}}(i,j)=-1\}} ~.
		\]
		
		Summing over $i=2,\dots,K$ gives
		\begin{align}
			\sum_{i=2}^K \frac{|\Sigma_i^-|}{\|\Delta_i^-\|^2}
			&= \sum_{i=2}^K\sum_{\substack{j=1 \\ j\neq i}}^K
			\frac{1}{\|\Delta_i^-\|^2}
			\left( \mathds{1}_{\{\Delta_{i,j}<0\}}
			+ \mathds{1}_{\{\Delta_{i,j}=0\}}\mathds{1}_{\{f_{\mathrm{tb}}(i,j)=-1\}}
			\right) \nonumber\\
			&= \sum_{i=2}^K\sum_{\substack{j=1 \\ j\neq i}}^K
			\frac{1}{\|\Delta_i^-\|^2} \mathds{1}_{\{\Delta_{i,j}<0\}}
			+ \sum_{i=2}^K\frac{1}{\|\Delta_i^-\|^2}\mathds{1}_{\{\Delta_{i,1}=0\}}\mathds{1}_{\{f_{\mathrm{tb}}(i,1)=-1\}} \nonumber\\
			&\quad
			+ \sum_{2\leqslant i < j \leqslant K}
			\mathds{1}_{\{\Delta_{i,j}=0\}}
			\left(
			\frac{\mathds{1}_{\{f_{\mathrm{tb}}(i,j)=-1\}}}{\|\Delta_i^-\|^2}
			+ \frac{\mathds{1}_{\{f_{\mathrm{tb}}(i,j)=1\}}}{\|\Delta_j^-\|^2}
			\right) \label{eq:choice_f_1}\enspace,
		\end{align}
		where the last equality follows from a simple reorganization of the sum. 
		
		We now choose $f_{\mathrm{tb}}$ to make the second and third terms as large as possible. Let $f_{\mathrm{tb}}$ be the antisymmetric function $f_{\mathrm{tb}}:[K]^2 \to \{-1,0,1\}$, defined for any $i<j$ by
		\[
		f_{\mathrm{tb}}(i,j) \coloneqq
		\begin{cases}
			1, & \text{if } i=1,\ j>1, \\[2pt]
			\mathds{1}_{\{\|\Delta_{i}^-\|>\|\Delta_j^-\|\}}
			-\mathds{1}_{\{\|\Delta_{i}^-\|<\|\Delta_j^-\|\}}, & \text{if } 2\leq i<j \text{ and } \|\Delta_{i}^-\|\neq\|\Delta_j^-\|,\\[2pt]
			1 , & \text{if } 2\leq i<j \text{ and } \|\Delta_{i}^-\|=\|\Delta_j^-\| .
		\end{cases}
		\]
		The convention $f_{\mathrm{tb}}$ for the row of the CW implies that 
		\begin{equation}\label{eq:choice_f_2}
			\sum_{i=2}^K\frac{1}{\|\Delta_i^-\|^2}
			\mathds{1}_{\{\Delta_{i,1}=0\}}\mathds{1}_{\{f_{\mathrm{tb}}(i,1)=-1\}}
			= \sum_{i=2}^K\frac{1}{\|\Delta_i^-\|^2}\mathds{1}_{\{\Delta_{i,1}=0\}} \enspace. 
		\end{equation}
		Moreover, the expression of $f_{\mathrm{tb}}(i,j)$ for $2\leqslant i<j\leqslant K$ is chosen so that 
		\begin{align}
			\sum_{2\leqslant i <j \leqslant K}
			\mathds{1}_{\{\Delta_{i,j}=0\}}
			\left(
			\frac{\mathds{1}_{\{f_{\mathrm{tb}}(i,j)=-1\}}}{\|\Delta_i^-\|^2}
			+ \frac{\mathds{1}_{\{f_{\mathrm{tb}}(i,j)=1\}}}{\|\Delta_j^-\|^2}
			\right)
			&= \sum_{2\leqslant i <j \leqslant K}
			\mathds{1}_{\{\Delta_{i,j}=0\}}
			\left(
			\frac{1}{\|\Delta_i^-\|^2} \vee \frac{1}{\|\Delta_j^-\|^2}
			\right) \nonumber \\
			&\geqslant \frac{1}{2} \sum_{2\leqslant i <j \leqslant K}
			\mathds{1}_{\{\Delta_{i,j}=0\}}
			\left(
			\frac{1}{\|\Delta_i^-\|^2}
			+ \frac{1}{\|\Delta_j^-\|^2}
			\right) \label{eq:choice_f_3} ~.
		\end{align} 
		Finally, gathering Equations~\eqref{eq:choice_f_1}, \eqref{eq:choice_f_2}, and \eqref{eq:choice_f_3}, we get 
		\begin{align*}
			\sum_{i=2}^K \frac{|\Sigma_i^-|}{\|\Delta_i^-\|^2}
			&\geqslant 
			\sum_{i=2}^K\sum_{\substack{j=1 \\ j\neq i}}^K
			\frac{1}{\|\Delta_i^-\|^2}
			\left(
			\mathds{1}_{\{\Delta_{i,j}<0\}}
			+ \mathds{1}_{\{j=1,\ \Delta_{i,1}=0\}}
			+ \frac{1}{2}\mathds{1}_{\{j\ne 1,\ \Delta_{i,j}=0\}}
			\right) \\
			&\geqslant  \frac{1}{2}
			\sum_{i=2}^K \frac{K_{i;\leqslant 0}}{\|\Delta_i^-\|^2}
			\enspace,
		\end{align*}
		which proves Equation~\eqref{eq:tie_breaking}, hence finishing the proof.
	\end{proof}
	
	\subsection{Proof of Corollary~\ref{coro:Lower_bound_minimax}}\label{sec:proof_coro_minimax}
	
	\begin{proof}
		Without loss of generality, assume $i^*=1$. Let $\underline{s}=(s_i)_{i\neq1}$ with $s_i\in[K/8]$ for all $i\neq i^*$, and let $\underline{\Delta}=(\Delta_i)_{i\neq i^*}$ with $\Delta_i\in(0,1/4)$. Let $\epsilon>0$ be small.
		
		For simplicity, assume $K$ is a multiple of $8$, and set $d=K/2$. We construct the $K\times K$ antisymmetric matrix $M^\epsilon=M^\epsilon(\underline{s},\underline{\Delta})$:
		\begin{equation}\label{eq:def_minimax_matrix}
			M^\epsilon=
			\begin{pmatrix}
				A & -D \\
				D^\top & \Lambda
			\end{pmatrix},
		\end{equation}
		where $A$, $D$, and $\Lambda$ are $d\times d$ matrices defined below.
		
		The matrix $A$ is the $d\times d$ antisymmetric matrix with first row $A_{1,\cdot}=(0,\epsilon,\dots,\epsilon)\in\mathbb{R}^d$, and for $i=2,\dots,d$,
		\[
		A_{i,\cdot}=(-\epsilon,\dots,-\epsilon,\underbrace{0}_{\text{$i$-th}},\epsilon,\dots,\epsilon)\in\mathbb{R}^d.
		\]
		The matrix $D$ is the $d\times d$ matrix with nonnegative entries where $D_{1,\cdot}=(\epsilon,\dots,\epsilon)\in\mathbb{R}^d$, and for $i=2,\dots,d$,
		\[
		D_{i,\cdot}=(\underbrace{\Delta_i,\dots,\Delta_i}_{s_i\text{ times}},\epsilon,\dots,\epsilon)\in\mathbb{R}^d,
		\]
		which is possible since $s_i\leqslant d$.

		To construct $\Lambda$, recall $d$ is a multiple of $4$ and $s_i\in\{1,\dots,d/4\}$. Define $\Lambda$ as the block matrix:
		\[
		\Lambda=
		\begin{pmatrix}
			J_\epsilon & -\Lambda^{(0)} & \epsilon\mathbf{1} & \Lambda^{(3)} \\
			\Lambda^{(0)} & J_\epsilon & -\Lambda^{(1)} & \epsilon\mathbf{1} \\
			-\epsilon\mathbf{1} & \Lambda^{(1)} & J_\epsilon & -\Lambda^{(2)} \\
			-\Lambda^{(3)} & -\epsilon\mathbf{1} & \Lambda^{(2)} & J_\epsilon
		\end{pmatrix},
		\]
		where $\mathbf{1}$ is the $(d/4)\times(d/4)$ all-ones matrix, $J_\epsilon$ is the $(d/4)\times(d/4)$ antisymmetric matrix with $\epsilon$ above the diagonal, and for $l\in\{0,\dots,3\}$, $\Lambda^{(l)}$ is the $(d/4)\times(d/4)$ matrix where the $i$-th row is
		\[
		\Lambda^{(l)}_{i,\cdot}=(\underbrace{\Delta_j,\dots,\Delta_j}_{s_j\text{ times}},\epsilon,\dots,\epsilon)\in\mathbb{R}^{d/4}, \quad j=d+\tfrac{d}{4}l+i.
		\]

		The matrix $M^\epsilon=M(\underline{s},\underline{\Delta},\epsilon)$ is clearly antisymmetric. Its first row is $(0,\epsilon,\dots,\epsilon)$, so $i^*=1$ and $M^\epsilon\in\mathbb{D}_{\mathrm{cw}}$.
		
		For each arm $i=2,\dots,K$, row $i$ of $M^\epsilon$ contains exactly $s_i$ entries of magnitude $-\Delta_i$, with all other negative entries equal to $-\epsilon$. For sufficiently small $\epsilon>0$, the optimal sparsity $\bm{s}^*_{M^\epsilon}$ achieving the minimum in~\eqref{eq:sample_comp} equals $\underline{s}$, with associated gaps $(M^\epsilon_{i,(s_i)})_{i\neq i^*}=(-\Delta_i)_{i\neq i^*}$. Moreover, $M^\epsilon$ has no ties since $\Delta_i\neq0$ and $\epsilon>0$.
		
		Consider Corollary~\ref{coro:Lower_bound_minimax}. Let $\delta\leqslant 1/12$. For $\bDelta\in\mathbb{D}_{\mathrm{cw}}$, construction yields $\epsilon>0$ small such that $M^\epsilon(\bm{s}^*_{\bDelta},\bDelta_{(s^*)})\in\mathbb{D}(\bDelta)$ with no ties.
		
		Theorem~\ref{thm:LB_hp_instance} applied to $M^\epsilon$ gives $\tilde{\bDelta}\in\mathbb{D}(M^\epsilon)=\mathbb{D}(\bDelta)$ satisfying
		\[
		\mathbb{P}_{\tilde{\bDelta},A}\!\left(
		N_{\delta}\;\geqslant\;\frac{1}{3}\,
		\max_{i\ne i^*}\frac{K^\epsilon_{i;<0}}{\|(M^\epsilon_i)^-\|^2}\log\!\left(\frac{1}{6\delta}\right)
		\vee\frac{1}{37\log(2K)}\sum_{i\ne i^*}\frac{K^\epsilon_{i;<0}}{\|(M^\epsilon_i)^-\|^2}
		\right)\;\geqslant\; \delta,
		\]
		where $K^\epsilon_{i;<0}$ counts negative entries of row $i$. By construction, $K^\epsilon_{i;<0}\geqslant K/8$ for $i=2,\dots,K$. For small $\epsilon$,
		\[
		s^*_i\Delta^2_{i,(s^*_i)}\leqslant\|(M^\epsilon_i)^-\|^2\leqslant s^*_i\Delta^2_{i,(s^*_i)}+(K-s^*_i)\epsilon^2,
		\leqslant 2s^*_i\Delta^2_{i,(s^*_i)}\]
		yielding
		\[
		\mathbb{P}_{\tilde{\bDelta},A}\!\left(
		N_{\delta}\;\geqslant\;\frac{1}{48}\,
		\max_{i\ne i^*}\frac{K}{s^*_i\Delta^2_{i,(s^*_i)}}\log\!\left(\frac{1}{6\delta}\right)
		\vee\frac{1}{592\log(2K)}\sum_{i\ne i^*}\frac{K}{s^*_i\Delta^2_{i,(s^*_i)}}
		\right)\;\geqslant\; \delta.
		\]
		This scales as $H_{\mathrm{explore}}(\bm{s}^*)$ up to $\log K$ factors, proving the first part of 
		Corollary~\ref{coro:Lower_bound_minimax}.
		
		The $H_{\mathrm{certify}}(\bm{s}^*,\delta)$ term follows from the quantile bound in 
		Theorem~\ref{thm:LB_exp_instance_first_regime}: by construction of $M^\epsilon$, we have 
		$\tilde{\Delta}_{i,(1)}=\tilde{\Delta}_{i,(s_i^*)}$ for all $i\neq i^*$, so
		\[
		H_{\mathrm{certify}}(\bm{s}^*,\delta)
		= \sum_{i\neq i^*}\frac{1}{\Delta_{i,(s_i^*)}}
		= \sum_{i\neq i^*}\frac{1}{\tilde{\Delta}_{i,(1)}},
		\]
		and the lower bound applies directly.
	\end{proof}
	
	\subsection{Lower Bounds Preserving CW Row Structure}\label{sec:additional_LB}
	Consider $\boldsymbol{\Delta} \in \mathbb{D}_{\mathrm{cw}}$. For simplicity, assume that $\boldsymbol{\Delta}$ has no ties.\footnote{If $\boldsymbol{\Delta}$ contains ties, we fix the convention $f_{\mathrm{tb}}\equiv 0$, i.e., we never permute zero entries, hence keeping them uninformative.} Let $\boldsymbol{\Sigma}$ be its sign matrix as in Equation~\eqref{def:sign_delta}, and let $\pi\in\Pi(\boldsymbol{\Sigma})$ (see~\eqref{def:Pi}) with associated matrix $\boldsymbol{\Delta}^{\pi}$ defined in Equation~\eqref{def:Delta_pi}. 
	
	By Lemma~\ref{lemma:Delta_pi}, $\boldsymbol{\Delta}^{\pi}$ has the same gap structure as $\boldsymbol{\Delta}$: it preserves all signs (hence all pairwise preferences) and gap magnitudes up to reordering. However, $\boldsymbol{\Delta}^{\pi}$ may alter the Condorcet winner row, so in general $H_{\mathrm{cw}}(\boldsymbol{\Delta}^{\pi})\neq H_{\mathrm{cw}}(\boldsymbol{\Delta})$, and the construction can even drastically increase it: $H_{\mathrm{cw}}(\boldsymbol{\Delta}^{\pi})\gg H_{\mathrm{cw}}(\boldsymbol{\Delta})$.
	
	In this section, we explain how the lower bound techniques from the proof of Theorem~\ref{thm:LB_hp_instance} can be adapted to also preserve the CW row.
	
	To this end, define $\tilde{\Pi}(\boldsymbol{\Sigma})\subset\Pi(\boldsymbol{\Sigma})$ as the subset of permutations preserving the CW row. For each $i\in\{2,\dots,K\}$, let $\tilde{\Pi}_i(\boldsymbol{\Sigma})$ be permutations of $\Sigma_i^-$ that fix $i^*=1$:
	\begin{equation}\label{def:Pitilde}
		\tilde{\Pi}_i(\boldsymbol{\Sigma})\coloneqq\bigl\{\pi_i:\Sigma_i^-\to\Sigma_i^- \,\big|\, \pi_i\text{ is a bijection and }\pi_i(1)=1\bigr\}\enspace,
	\end{equation}
	and set $\tilde{\Pi}(\boldsymbol{\Sigma})\coloneqq\tilde{\Pi}_2(\boldsymbol{\Sigma})\times\cdots\times\tilde{\Pi}_K(\boldsymbol{\Sigma})$. For $\pi\in\tilde{\Pi}(\boldsymbol{\Sigma})$, construct $\boldsymbol{\Delta}^{(\pi)}$ via Equation~\eqref{def:Delta_pi}. In addition to properties 1, 3, and 4 of Lemma~\ref{lemma:Delta_pi}, we have:
	
	\begin{lemma}
		For any $\pi\in\tilde{\Pi}(\boldsymbol{\Sigma})$, $\boldsymbol{\Delta}^{(\pi)}$ satisfies:
		\begin{enumerate}
			\item[2'] $\Delta^{(\pi)}_{1,\cdot}=\Delta_{1,\cdot}$ (CW row preservation).
		\end{enumerate}
	\end{lemma}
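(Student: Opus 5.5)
The plan is to check directly, from definition~\eqref{def:Delta_pi}, that every entry of row $1$ of $\bDelta^{(\pi)}$ equals the corresponding entry of $\bDelta$ when $\pi\in\tilde{\Pi}(\bSigma)$. Row $1$ is never itself permuted, since the permutations $\pi_i$ are indexed only by $i\in\{2,\dots,K\}$. So any change to $\Delta_{1,\cdot}$ must come from the antisymmetric update applied to columns: the entry $(1,j)$ is redefined through row $j$.

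Fix $j\neq 1$ and classify by the sign of $\sigma_{1,j}$. In the no-ties setting $\Delta_{1,j}>0$, hence $\sigma_{1,j}=1$. If ties are present, the convention $f_{\mathrm{tb}}\equiv 0$ (or any convention giving $\sigma_{1,j}\in\{0,1\}$) applies instead. In the case $\sigma_{1,j}=1$, the second line of~\eqref{def:Delta_pi} gives $\Delta^{(\pi)}_{1,j}=-\Delta_{j,\pi_j(1)}$. Moreover $\sigma_{j,1}=-1$, so $1\in\Sigma_j^-$ and $\pi_j(1)$ is well defined. The defining constraint $\pi_j(1)=1$ of $\tilde{\Pi}_j(\bSigma)$ then yields
\[
\Delta^{(\pi)}_{1,j}=-\Delta_{j,1}=\Delta_{1,j},
\]
using the antisymmetry of $\bDelta$. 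In the remaining case $\sigma_{1,j}=0$, which arises only for a tie $\Delta_{1,j}=0$, the third line of~\eqref{def:Delta_pi} leaves the entry unchanged. Finally, $\Delta^{(\pi)}_{1,1}=0=\Delta_{1,1}$. Together these give $\Delta^{(\pi)}_{1,\cdot}=\Delta_{1,\cdot}$.

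The only point needing care is that the case $\sigma_{1,j}=-1$ cannot occur, since otherwise the first line of~\eqref{def:Delta_pi} would require a permutation $\pi_1$ that does not exist. This follows because $1$ is the CW, so $\Delta_{1,j}\ge 0$ for all $j$, combined with the tie convention. With that checked, the argument is immediate and I expect no real obstacle. One should also note that $\tilde{\Pi}_j(\bSigma)$ is nonempty, as it contains the identity, so the family of instances built from $\tilde{\Pi}(\bSigma)$ is well defined.
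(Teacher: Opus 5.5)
Your proposal is correct and is essentially the argument the paper relies on (it states the lemma without a separate proof). The computation in point 2 of Lemma~\ref{lemma:Delta_pi} gives $\Delta^{(\pi)}_{1,j}=-\Delta_{j,\pi_j(1)}$, and the constraint $\pi_j(1)=1$ turns this into $\Delta_{1,j}$; your tie discussion for row $1$ is unnecessary, since $\bDelta\in\mathbb{D}_{\mathrm{cw}}$ forces $\Delta_{1,j}>0$ strictly and hence $\sigma_{1,j}=1$ for every $j\neq 1$.
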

	
	We can then derive the following theorem, analogous to Theorem~\ref{thm:LB_hp_instance}:
	
	\begin{theorem}\label{thm:precise_LB_high_proba_tilde}
		Let $A$ be a $\delta$-correct algorithm over $\mathbb{D}_{\mathrm{cw}}$, with $\delta\leqslant1/12$. Let $\boldsymbol{\Delta}\in\mathbb{D}_{\mathrm{cw}}$ have no ties. Define
		\begin{equation}\label{def:chitilde}
			\tilde{\chi}\coloneqq\inf\left\{x>0:\sup_{\pi\in\tilde{\Pi}(\boldsymbol{\Sigma})}\mathbb{P}_{\boldsymbol{\Delta}^{(\pi)},A}(N_{\delta}>x)\leqslant\delta\right\}.
		\end{equation}
		Then,
		\begin{align}
			\tilde{\chi}&\geqslant\frac{1}{16\log(4/3)}\max_{i\neq i^*}\left(\frac{1}{\Delta_{i^*,i}^2}\wedge\frac{K_{i;<0}}{\|\Delta_i^-\|^2}\right)\log\!\left(\frac{1}{6\delta}\right),\label{eq:lb_H2_tilde}\\[2pt]
			\tilde{\chi}&\geqslant\frac{1}{128\log(4/3)}\frac{1}{\log(2K)}\sum_{i\neq i^*}\left(\frac{1}{\Delta_{i^*,i}^2}\wedge\frac{K_{i;<0}}{\|\Delta_i^-\|^2}\right).\label{eq:lb_H0_tilde}
		\end{align}
	\end{theorem}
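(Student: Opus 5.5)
We adapt the proof of Theorem~\ref{thm:precise_LB_high_proba} to the restricted family $(\bDelta^{(\pi)})_{\pi\in\tilde\Pi(\bSigma)}$. The only change is that row $k$ now has one \emph{frozen} entry, namely the duel $(k,1)$ with gap $\Delta_{k,1}=-\Delta_{1,k}$. The remaining $K_{k;<0}-1$ negative entries, indexed by $\Sigma_k^-\setminus\{1\}$, are permuted freely.

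For each $k\neq 1$ we use the same alternative $\bDelta^{(\pi,k)}$ as in~\eqref{def:Delta_pik}: all entries $(k,j)$ with $j\in\Sigma_k^-$ are set to $0$, including $j=1$. Then rows $1$ and $k$ are both nonnegative, and $\bDelta^{(\pi,k)}$ does not depend on $\pi_k$. Step~2 carries over verbatim. Truncating at $\tilde\chi$ gives $\mathrm{TV}\ge 1-3\delta$, hence $\mathrm{KL}(\mathbb P^{(\pi,k)}_{\tilde A},\mathbb P^{(\pi)}_{\tilde A})\ge\log(1/(6\delta))$. The refined events $B_k$ and the pigeonhole argument also apply unchanged, giving averaged TV $\ge 1/2$.

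In Step~3 the KL splits into the frozen term and the permuted terms. Writing $n_1=\mathbb E^{(\pi,k)}[N_{\{k,1\}}]$ and $n_{\mathrm{rest}}=\sum_{j\in\Sigma_k^-\setminus\{1\}}\mathbb E^{(\pi,k)}[N_{\{k,j\}}]$, averaging over $\pi_k\in\tilde\Pi_k$ gives
\[
\overline{\mathrm{KL}}\le 8\log(4/3)\Bigl(n_1\Delta_{1,k}^2+n_{\mathrm{rest}}\,\frac{\|\Delta_k^-\|^2-\Delta_{1,k}^2}{K_{k;<0}-1}\Bigr)\le 8\log(4/3)\,(n_1+n_{\mathrm{rest}})\max\Bigl(\Delta_{1,k}^2,\frac{\|\Delta_k^-\|^2-\Delta_{1,k}^2}{K_{k;<0}-1}\Bigr).
\]
The key elementary inequality, and the main point to check, is
\[
\max\Bigl(a,\frac{S-a}{m-1}\Bigr)\le 2\max\Bigl(a,\frac{S}{m}\Bigr),
\]
with $a=\Delta_{1,k}^2$, $S=\|\Delta_k^-\|^2$ and $m=K_{k;<0}$. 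If $m\ge2$ this holds because $\frac{S}{m-1}\le\frac{2S}{m}$. If $m=1$, then $S=a$ and there is no permuted part. Since $\max(a,S/m)^{-1}=\frac{1}{\Delta_{1,k}^2}\wedge\frac{K_{k;<0}}{\|\Delta_k^-\|^2}$, and this is exactly where the extra factor $2$ (from $8$ to $16$) comes from, we set
\[
\tilde\beta_k^{-2}:=\frac{1}{\Delta_{1,k}^2}\wedge\frac{K_{k;<0}}{\|\Delta_k^-\|^2}.
\]
The averaged KL is then at most $16\log(4/3)\,\tilde\beta_k^{2}\,(n_1+n_{\mathrm{rest}})$.

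For~\eqref{eq:lb_H2_tilde} we bound $n_1+n_{\mathrm{rest}}\le\tilde\chi$, combine with the Bretagnolle--Huber bound, and maximize over $k$. For~\eqref{eq:lb_H0_tilde} we follow the proof of~\eqref{eq:lb_H0}. We order the $\tilde\beta_k^2$ increasingly, pick $I=\argmax_k (k-1)/\tilde\beta_k^2$, use the truncation $N_{k,\cdot}\le 4\tilde\chi/(I-1)$, and apply Pinsker and Jensen averaged over $\pi\in\tilde\Pi(\bSigma)$. Finally Lemma~\ref{lem:tech1} converts the max into the sum with a $\log(2K)$ loss. No tie-breaking step is needed because $\bDelta$ has no ties.

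The main obstacle is to verify two things. First, that $\bDelta^{(\pi)}\in\mathbb D_{\mathrm{cw}}$ for all $\pi\in\tilde\Pi$, which follows from properties~1, 2' and 3. Second, that the permutation-averaging lemma (Lemma~\ref{lem:sum_tech}) still applies when the symmetrization is over bijections fixing $1$. This amounts to applying it to $\Sigma_k^-\setminus\{1\}$ separately and keeping the frozen coordinate aside. The constant bookkeeping then gives $16\log(4/3)$ and $128\log(4/3)$.
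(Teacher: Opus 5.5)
Your proposal is correct and follows essentially the same route as the paper. It uses the same alternatives $\bDelta^{(\pi,k)}$ (zeroing all of $\Sigma_k^-$, including the duel with the CW), the same truncation/Bretagnolle--Huber and pigeonhole/Pinsker--Jensen steps, and the same split of the averaged KL into the frozen coordinate $(k,1)$ plus a symmetrization over $\Sigma_k^-\setminus\{1\}$ via Lemma~\ref{lem:sum_tech}. The one cosmetic difference is that you bound $n_1+n_{\mathrm{rest}}$ jointly and use $\max\bigl(a,\tfrac{S-a}{m-1}\bigr)\le 2\max\bigl(a,\tfrac{S}{m}\bigr)$, whereas the paper bounds $n_1$ and $n_{\mathrm{rest}}$ separately and uses $a+\tfrac{S-a}{m-1}\le 2\max\bigl(a,\tfrac{S}{m}\bigr)$; both give the factor $16\log(4/3)$, hence $128\log(4/3)$ in the sum bound.
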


	Similarly to Section~\ref{sec:lb_fc}, we define a subclass of $\mathbb{D}(\boldsymbol{\Delta})$ (see~\eqref{def:minimax_class}) that additionally preserves the CW row $\Delta_{i^*,\cdot}$:
	\begin{equation}\label{def:minimax_class_cw}
		\mathbb{D}_0(\boldsymbol{\Delta})=\{\tilde{\boldsymbol{\Delta}}\in\mathbb{D}(\boldsymbol{\Delta})\text{ s.t. }(\tilde{\Delta}_{i^*,i})_{i\neq i^*}=(\Delta_{i^*,i})_{i\neq i^*}\}\enspace.
	\end{equation}
	
	\begin{corollary}\label{coro:minimaxtilde}
		Let $A$ be a $\delta$-correct algorithm over $\mathbb{D}_{\mathrm{cw}}$, with $\delta\leqslant1/12$. Let $\boldsymbol{\Delta}\in\mathbb{D}_{\mathrm{cw}}$. Then there exists $\tilde{\boldsymbol{\Delta}}\in\mathbb{D}_0(\boldsymbol{\Delta})$ such that, with $\mathbb{P}_{\tilde{\boldsymbol{\Delta}},A}$-probability at least $\delta$, the budget $N_{\delta}$ satisfies
		\begin{equation}
			N_{\delta}\gtrsim\sum_{i\neq i^*}\frac{\log(1/\delta)}{\Delta_{i^*,i}^2\vee\Delta_{i,(s_i^*)}^2}
			+\max_{i\neq i^*}\frac{\log(1/\delta)}{\Delta_{i^*,i}^2\vee\frac{\|\Delta_{i,(s_i^*)}\|^2}{K_{i;<0}}}
			+\sum_{i\neq i^*}\frac{1}{\Delta_{i^*,i}^2\vee\frac{\|\Delta_{i,(s_i^*)}\|^2}{K_{i;<0}}},
		\end{equation}
		where $\gtrsim$ hides logarithmic $K$ factors and numerical constants.
	\end{corollary}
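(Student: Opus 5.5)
The plan is to combine Theorem~\ref{thm:precise_LB_high_proba_tilde} with the certification lower bound of Theorem~\ref{thm:LB_exp_instance_first_regime}, applied to a structured instance that pins the CW row. This mirrors the proof of Corollary~\ref{coro:Lower_bound_minimax}.

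\emph{Construction.} Starting from $\bDelta$ with $i^*=1$, we build a tie-free matrix $M^\epsilon$ in the spirit of \eqref{eq:def_minimax_matrix}, with three properties:
\begin{itemize}
\item its first row equals $\Delta_{1,\cdot}$;
\item each row $i\ne 1$ contains $s_i^*$ entries equal to $\Delta_{i,(s_i^*)}$, with the remaining negative entries equal to $-\epsilon$;
\item $K_{i;<0}(M^\epsilon)\gtrsim K$, obtained from a block structure like $\Lambda$.
\end{itemize}
The one difference from the earlier construction is that the entry $(i,1)$ must equal $-\Delta_{1,i}$. It is therefore one of the negative entries of row $i$, and it must be accounted for when fixing the order statistics. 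For $\epsilon$ small, $M^\epsilon$ has the same CW, the same effective sparsity and the same gaps at $\bm{s}^*$ as $\bDelta$, and the same CW row. Hence $M^\epsilon\in\mathbb{D}_0(\bDelta)$. We also have $\|(M^\epsilon_i)^-\|^2\le 2(s_i^*\Delta^2_{i,(s_i^*)}+\Delta_{1,i}^2)$.

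\emph{Exploration terms.} Applying Theorem~\ref{thm:precise_LB_high_proba_tilde} to $M^\epsilon$, the definition of $\tilde\chi$ in \eqref{def:chitilde} yields some $\pi\in\tilde\Pi(\bSigma)$ such that the $(1-\delta)$-quantile of $N_\delta$ under $\tilde\bDelta:=(M^\epsilon)^{(\pi)}$ is at least $\tilde\chi$. The matrix $\tilde\bDelta$ stays in $\mathbb{D}_0(\bDelta)$ by Lemma~\ref{lemma:Delta_pi}, points 3--4, and Lemma 2'. Next we bound $\frac{1}{\Delta_{1,i}^2}\wedge\frac{K_{i;<0}}{\|\Delta_i^-\|^2}$ from below by $\bigl(\Delta_{1,i}^2\vee \Delta^2_{i,(s_i^*)}s_i^*/K_{i;<0}\bigr)^{-1}$, up to constants. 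Inserting this into \eqref{eq:lb_H2_tilde} and \eqref{eq:lb_H0_tilde} gives the max term and the last sum of the corollary.

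\emph{Certification term.} We run the quantile argument of Theorem~\ref{thm:LB_exp_instance_first_regime} (Section~\ref{sec:proof_HP_karnin}) on $\tilde\bDelta$. For each $k$, zeroing the negative entries of row $k$ costs a KL of at most $\sum_j \mathbb{E}[N_{\{k,j\}}]\Delta_{k,j}^2$. The largest magnitude in row $k$ is now $\max(\Delta_{1,k},|\Delta_{k,(s_k^*)}|)$, which gives $\chi\gtrsim\sum_k\log(1/\delta)/(\Delta_{1,k}^2\vee\Delta_{k,(s_k^*)}^2)$.

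\emph{Combining.} There are two separate instances, one from the quantile argument and one from $\pi$. To combine them we either take the max of the two bounds over the class $\mathbb{D}_0$, or note that the certification bound holds uniformly for every permuted instance. Since the bound is invariant under $\pi$, we apply it to the same $\tilde\bDelta$. This gives the sum of the three terms up to a factor $3$.

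\emph{Main obstacle.} The hardest step is the construction. Fixing the CW row forces the entries $-\Delta_{1,i}$ in column $1$. These may be larger in magnitude than $\Delta_{i,(s_i^*)}$, which would shift the order statistics and the effective sparsity, so membership in $\mathbb{D}(\bDelta)$ has to be checked with care. The natural remedy is to take $\bm{s}^*$ as defined for $\bDelta$ itself, which already accounts for the CW entries. If needed, we would restrict to the case $s_i^*\ge$ the rank of $\Delta_{i,1}$, and otherwise absorb the discrepancy into the $\vee$ with $\Delta_{1,i}^2$ appearing in every term.
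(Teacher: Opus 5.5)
\textbf{Gap: the construction of $M^\epsilon$ is never pinned down.} Your overall route is the paper's. You pin the CW row in a structured tie-free $M^\epsilon$ and take $\tilde\bDelta=(M^\epsilon)^{(\pi)}$ with $\pi\in\tilde\Pi(\bSigma)$ supplied by the definition of $\tilde\chi$. You read off the max term and the last sum from Theorem~\ref{thm:precise_LB_high_proba_tilde}, and get the certification sum from the quantile bound of Theorem~\ref{thm:LB_exp_instance_first_regime} applied to that same $\tilde\bDelta$. Your second combining option is the valid one; a max over the class would not give a single instance. Those steps, and your term-by-term simplifications, are fine.

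The gap is the one you flag yourself: you never commit to a construction that provably lies in $\mathbb{D}_0(\bDelta)$. Your fallbacks do not close it. Restricting to a subcase does not prove the corollary. Absorbing a discrepancy into the $\vee$ changes the bound, not the requirement that $M^\epsilon$ keep $\bm{s}^*$ and $(\Delta_{i,(s_i^*)})_i$. Read literally (CW entry plus $s_i^*$ copies of $\Delta_{i,(s_i^*)}$), your construction fails when $-\Delta_{1,i}$ is among the $s_i^*$ most negative entries of row $i$ and $s_i^*\ge 2$. Ranks $2,\dots,s_i^*+1$ then all equal $\Delta_{i,(s_i^*)}$. So $s_i=s_i^*+1$ has the same certification cost and strictly smaller exploration cost, and the effective sparsity moves.

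\textbf{How to repair it.} Let $r_i$ be the rank of $\Delta_{i,1}=-\Delta_{1,i}$ among the negative entries of row $i$ of $\bDelta$. Besides the CW entry, put $s_i^*-1$ copies of $\Delta_{i,(s_i^*)}$ if $r_i\le s_i^*$, and $s_i^*$ copies if $r_i>s_i^*$. Set all other negative entries to $-\epsilon$. Then:
\begin{itemize}
\item the number of non-$\epsilon$ negative entries is at most $K_{i;<0}(\bDelta)$ (it is $s_i^*$ in the first case, and $s_i^*+1\le r_i\le K_{i;<0}(\bDelta)$ in the second);
\item $M^\epsilon_{i,(s_i^*)}=\Delta_{i,(s_i^*)}$;
\item for $\epsilon$ below the smallest magnitude of a negative entry of $\bDelta$, $|M^\epsilon_{i,(s)}|\le|\Delta_{i,(s)}|$ for every $s\le K_{i;<0}(\bDelta)$.
\end{itemize}
Write $F(\bm s):=H_{\text{certify}}(\bm s,\delta)+H_{\text{explore}}(\bm s,\delta)$, which is nonincreasing in each $|\Delta_{i,(s_i)}|$. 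For every $\bm s$ admissible for $\bDelta$ this gives $F_{M^\epsilon}(\bm s)\ge F_{\bDelta}(\bm s)\ge F_{\bDelta}(\bm s^*)=F_{M^\epsilon}(\bm s^*)$. Any $\bm s$ using a rank that exists only in $M^\epsilon$ hits a $-\epsilon$ entry and costs at least $\log(1/\delta)/\epsilon^2$.

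Hence $\bm s^*$ is still the minimizer, and it stays unique if it was unique for $\bDelta$. So $M^\epsilon\in\mathbb{D}_0(\bDelta)$ with no case restriction. In both cases $\tilde\Delta_{i,(1)}^2=\Delta_{1,i}^2\vee\Delta_{i,(s_i^*)}^2$, which is exactly what your certification step uses. The paper's proof only asserts this membership, so this is the piece you need to supply.
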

	
	\begin{proof}
		The proof follows by taking $M^\epsilon$ as in the proof of Corollary~\ref{coro:Lower_bound_minimax} (see Appendix~\ref{sec:proof_coro_minimax}), except with the first row fixed as $\Delta_{i^*,\cdot}$. This constructs $M^\epsilon\in\mathbb{D}_0(\bDelta)$ where each row $i$ has $\geqslant K_{i;<0}$ negative entries. The corollary then follows from Theorem~\ref{thm:precise_LB_high_proba_tilde} and the quantile bound in Theorem~\ref{thm:LB_exp_instance_first_regime}.
	\end{proof}
	
	\begin{remark}
		This reveals the fundamental trade-off between eliminating suboptimal arms against the CW versus finding better competitors among them. We identify three regimes.
		
		When the CW is the strongest opponent~\eqref{eq:CW_is_the_strongest}, $H_{\mathrm{cw}}$ was already proved from Theorem~\ref{thm:LB_exp_instance_first_regime} to be high-probability optimal, achieved by Algorithm~\eqref{algo:fc} and~\cite{maiti2024near}. Actually, \cite{karnin2016verification} proves that it is even optimal for expectation of the budget, at least in the asymptotic regime of $\delta\to 0$. 
		
		In the \emph{CW-uniformly-poor-opponent} regime,
		\begin{equation}\tag{CW-PO}\label{eq:CW_poor}
			\forall i\neq i^*,\quad\Delta_{i^*,i}^2\leqslant\frac{\|\Delta_i^-\|^2}{K_{i;<0}},
		\end{equation}
		we have $H_{\mathrm{cw}}\geqslant H_{\mathrm{certify}}(\bm{s}^*,\delta)+H_{\mathrm{explore}}(\bm{s}^*,\delta)$. Our bound~\eqref{eq:sample_comp} improves~\cite{maiti2025open}, and Corollary~\ref{coro:minimaxtilde} proves minimax optimality of $H_{\mathrm{certify}}(\bm{s}^*,\delta)+H_{\mathrm{explore}}(\bm{s}^*,\delta)$ over $\mathbb{D}_0(\boldsymbol{\Delta})$.
		
		In the \emph{CW-intermediate-opponent} regime,
		\begin{equation}\tag{CW-IO}\label{eq:CW_intermediate}
			\forall i\neq i^*,\quad\frac{\|\Delta_i^-\|^2}{K_{i;<0}}\leqslant\Delta_{i^*,i}^2\leqslant\Delta_{i,(s_i^*)}^2,
		\end{equation}
		a transition occurs between constant-$\delta$ (where the lower bound matches $H_{\mathrm{cw}}$) and $\delta\to0$ regimes (where it can be much smaller). A finer combinatorial analysis is needed to pinpoint the exact trade-off.
	\end{remark}

	\begin{proof}[Proof of Theorem~\ref{thm:precise_LB_high_proba_tilde}]
		Assume without loss of generality that $i^*=1$. We start with the proof of Equation~\eqref{eq:lb_H2_tilde}, which follows the proof of Equation~\eqref{eq:lb_H2}. From careful inspection, Steps 1, 2, and 3 apply verbatim, replacing $\chi$ by $\tilde{\chi}$ and $\Pi$ by $\tilde{\Pi}$. 
		
		Fix $k\neq i^*$. With the same notation and construction, one constructs $\tilde{A}$ with budget upper bounded by $\tilde{\chi}$ such that
		\begin{align}
			\log\!\left(\frac{1}{6\delta}\right) &\leqslant 8\log\!\left(\frac{4}{3}\right)\frac{1}{|\tilde{\Pi}_k(\boldsymbol{\Sigma})|}\sum_{\pi_k\in\tilde{\Pi}_k(\boldsymbol{\Sigma})}\sum_{j\in\Sigma_k^-}\mathbb{E}_{\tilde{A}}^{(\pi,k)}[N_{\{k,j\}}]\Delta_{k,\pi_k(j)}^2\label{eq:KL_verage_pi_tilde},
		\end{align}
		with the only difference in the subsequent computation. 
		
		For $\pi\in\tilde{\Pi}_k$, we have $\pi_k(1)=1$ and $\pi_k|_{\Sigma_k^-\setminus\{1\}}$ is a bijection, so $|\tilde{\Pi}_k(\boldsymbol{\Sigma})|\simeq\mathfrak{S}_{k_i}$ where $k_i:=|\Sigma_k^-\setminus\{1\}|=K_{k;<0}-1$. Separating the role of $1$ and the rest of $\Sigma_k^-$ in~\eqref{eq:KL_verage_pi_tilde},
		\begin{align*}
			&\frac{1}{|\tilde{\Pi}_k(\boldsymbol{\Sigma})|}\sum_{\pi_k\in\tilde{\Pi}_k(\boldsymbol{\Sigma})}\sum_{j\in\Sigma_k^-}\mathbb{E}_{\tilde{A}}^{(\pi,k)}[N_{\{k,j\}}]\Delta_{k,\pi_k(j)}^2\\
			&=\mathbb{E}_{\tilde{A}}^{(\pi,k)}[N_{\{k,1\}}]\Delta_{k,1}^2+\frac{1}{|\mathfrak{S}_{k_i}|}\sum_{\pi_k\in\tilde{\Pi}_k(\boldsymbol{\Sigma})}\sum_{j\in\Sigma_k^-\setminus\{1\}}\mathbb{E}_{\tilde{A}}^{(\pi,k)}[N_{\{k,j\}}]\Delta_{k,\pi_k(j)}^2,
		\end{align*}
		where $\mathbb{E}_{\tilde{A}}^{(\pi,k)}$ is independent of $\pi_k$. By Lemma~\ref{lem:sum_tech},
		\begin{align*}
			\frac{1}{8\log(4/3)}\log\!\left(\frac{1}{6\delta}\right)
			&\leqslant\mathbb{E}_{\tilde{A}}^{(\pi,k)}[N_{\{k,1\}}]\Delta_{k,1}^2+\frac{1}{K_{k;<0}-1}\sum_{j\in\Sigma_k^-\setminus\{1\}}\mathbb{E}_{\tilde{A}}^{(\pi,k)}[N_{\{k,j\}}]\sum_{j\in\Sigma_k^-\setminus\{1\}}\Delta_{k,j}^2\\
			&\leqslant\Delta_{k,1}^2\tilde{\chi}+\frac{\|\Delta_k^-\|^2-\Delta_{k,1}^2}{K_{k;<0}-1}\tilde{\chi},
		\end{align*}
		using $\sum_{j\in\Sigma_k^-\setminus\{1\}}\mathbb{E}_{\tilde{A}}^{(\pi,k)}[N_{\{k,j\}}]\leqslant\tilde{\chi}$ and $\sum_{j\in\Sigma_k^-\setminus\{1\}}\Delta_{k,j}^2=\|\Delta_k^-\|^2-\Delta_{k,1}^2$. Finally,
		\[
		\Delta_{k,1}^2+\frac{\|\Delta_k^-\|^2-\Delta_{k,1}^2}{K_{k;<0}-1}\leqslant2\left(\Delta_{k,1}^2\vee\frac{\|\Delta_k^-\|^2}{K_{k;<0}}\right).
		\]
		Taking the maximum over $k\neq i^*$ yields
		\[
		\tilde{\chi}\geqslant\frac{1}{16\log(4/3)}\max_{k\neq i^*}\frac{1}{\Delta_{k,1}^2\vee\frac{\|\Delta_k^-\|^2}{K_{k;<0}}}\log\!\left(\frac{1}{6\delta}\right),
		\]
		which is Equation~\eqref{eq:lb_H2_tilde}.
		
		The proof of~\eqref{eq:lb_H0_tilde} follows the proof of~\eqref{eq:lb_H0} step-by-step, highlighting differences below.
		
		\underline{Step 1}: Define $\tilde{\beta}_k=\Delta_{k,1}^2\vee\frac{\|\Delta_k^-\|^2}{K_{k;<0}}$ and $\tilde{I}:=\argmax_{k=2}^K\frac{k-1}{\tilde{\beta}_k}$.
		
		\underline{Step 2}: The same event (using $\tilde{I}$) bounds the probabilities, so~\eqref{eq:H0_pinsker} holds.
		
		\underline{Step 3}: The KL upper bound computation adapts as above, yielding
		\begin{equation}
			\frac{1}{|\tilde{\Pi}|}\sum_{\pi\in\tilde{\Pi}}\frac{1}{I-1}\sum_{k=2}^{I}\mathrm{KL}\bigl(\mathbb{P}_{\tilde{A}_k}^{(\pi,k)},\mathbb{P}_{\tilde{A}_k}^{(\pi)}\bigr)
			\leqslant8\log\!\Bigl(\frac{4}{3}\Bigr)\cdot\frac{4\tilde{\chi}}{I-1}\cdot\tilde{\beta}_I^2.
		\end{equation}
		which conclude from rearranging. 
		
		Step 4 is unnecessary since $\bSigma$ avoids ties by convention.
		
	\end{proof}
	\section{Proofs of Section~\ref{sec:lb_fb}}\label{sec:proof_LB_FB}
	
	In this section, we provide all lower bound proofs for the fixed-budget setting. As discussed at the end of Appendix~\ref{sec:lb_fb}, these proofs closely parallel those in Appendix~\ref{sec:proof_LB_FC}, but the fixed-budget nature requires new arguments. 
	
	For completeness, we revisit all results with minimax-style formulations and provide a nearly self-contained presentation. In Subsection~\ref{sec:proof_LB_exp_FB}, we prove Theorem~\ref{thm:LB_exp_minimax_first_regime}. Theorem~\ref{thm:LB_minimax_H2} follows in Subsection~\ref{sec:proof_LB_HP_FB}.
	
	\paragraph{Roadmap for Fixed-Budget Lower Bounds}
	
	The proofs in this section follow the same three-step change-of-measure pattern introduced for the 
	fixed-confidence case in Subsection~\ref{sec:roadmap_LB}. However, the fixed-budget setting requires 
	three important adaptations, which we now describe in detail. In the fixed-budget setting, we aim to 
	lower bound the worst-case error 
	\[
	\inf_A \sup_{M\in\mathbb{D}} \mathbb{P}_{A,M}(\hat{i}_T \neq i^*),
	\]
	where the infimum is over all algorithms $A$ with fixed budget $T$, and $\mathbb{D}$ is some class of instances.
	
	\textbf{Step 1: Reference and alternative instances.} In the fixed-confidence setting, the reference 
	instance can be any arbitrary gap matrix $\bDelta\in\mathbb{D}_{\mathrm{cw}}$. Here, we instead construct 
	a highly symmetric reference matrix $M\in\mathbb{D}$ that serves as a ``hard instance'' for the minimax 
	bound. For each suboptimal arm $k\neq i^*$, we construct an alternative instance $M^{(k)}$ by setting 
	all negative entries in row $k$ (and corresponding column entries to preserve antisymmetry) to zero. 
	This ensures $M^{(k)}\notin\mathbb{D}_{\mathrm{cw}}$.
	
	\textbf{Step 2: Separating event and total variation bound.} Unlike fixed-confidence algorithms, which 
	use a stopping time $N_{\delta}$ to construct events on which the two distributions disagree, fixed-budget 
	algorithms have no stopping rule. Instead, we exploit the symmetry of the reference matrix $M$, together with the recommendation rule.
	
	\textbf{Step 3: KL decomposition.} This step is conceptually identical to the fixed-confidence proofs. 
	The KL divergence decomposes as 
	\[
	\mathrm{KL}(\mathbb{P}_M,\mathbb{P}_{M^{(k)}}) = \sum_{ k< j}\mathbb{E}_M[N_{\{k,j\}}]\,\mathrm{kl}(\Delta_{k,j},\Delta_{k,j}^{(k)}),
	\]
	where $N_{\{k,j\}}$ counts unordered duels between arms $k$ and $j$. The key difference is that the 
	fixed budget $T$ directly bounds $\sum_{k<j}\mathbb{E}_M[N_{\{k,j\}}]\leqslant T$, whereas fixed-confidence 
	proofs require truncation at the $(1-\delta)$-quantile $\chi$ and reinterpretation. Combining the total 
	variation lower bound from Step 2 with this KL upper bound from Step 3 yields the desired lower bound 
	on $\epsilon_T$.

	\subsection{Proof of Theorem~\ref{thm:LB_exp_minimax_first_regime} }\label{sec:proof_LB_exp_FB}
	
	Let $\underline{\Delta}=(\Delta_i)_{i\in[K]}$ be a $K$-dimensional vector such that $\Delta_1=0$ and $\Delta_i\in (0, 1/4)$ for $i\ne 1$, and assume without loss of generality that $\Delta_2 \leqslant \Delta_3\leqslant \dots \leqslant \Delta_K$. Recall that $\mathbb{D}^{(1)}(\underline{\Delta})$ (see Definition~\ref{def:D1})  denotes the class of gap matrices that admit a Condorcet winner whose row is equal to $\underline{\Delta}$ up to permutation.
	
	Fix an algorithm $A$ with a fixed budget $T\in \mathbb{N}^*$. Define the worst-case error probability of $A$ over the class $\mathbb{D}^{(1)}(\underline{\Delta})$ as
	\begin{equation}\label{def:epsilon_T_H1}
		\epsilon_T \coloneqq \max_{M \in \mathbb{D}^{(1)}(\underline{\Delta})}
		\mathbb{P}_{M, A}\!\left( \hat{i}_T \neq i^*(M) \right),
	\end{equation}
	where $\hat{i}_T$ denotes the recommendation of algorithm $A$ after $T$ queries.
	
	We aim to prove Theorem~\ref{thm:LB_exp_minimax_first_regime}, namely that
	\[
	\epsilon_T \;\ge\; \frac{1}{4}\exp\!\left(
	-\frac{T}{\tfrac{1}{22}\sum_{i=2}^{K} \tfrac{1}{\Delta_i^2}}
	\right).
	\]
	
	\paragraph{Sketch of proof.}
	
	The proof follows the three-step roadmap of Subsection~\ref{sec:roadmap_LB}.
	\emph{(i) Reference and perturbed instances $M^{(1)},M^{(k)}$.} Since we seek a minimax bound, we 
	can construct a specific reference matrix $M^{(1)}\in\mathbb{D}^{(1)}(\underline{\Delta})$ with Condorcet 
	winner $i^*=1$, where arm $1$ is the strongest opponent of every suboptimal arm. For each 
	suboptimal $k\neq 1$, $M^{(k)}$ modifies row $k$'s negative entries of $M^{(1)}$ to make $k$ the CW 
	while preserving $M^{(k)}\in\mathbb{D}^{(1)}(\underline{\Delta})$. \emph{(ii) Separating event and TV bound.} Algorithm $A$ is $\epsilon_T$-correct over 
	$\mathbb{D}^{(1)}(\underline{\Delta})$, so $\mathbb{P}_{M^{(k)}}(\hat{i}_T=k)\ge 1-\epsilon_T$. The 
	event $B_k=\{\hat{i}_T\neq k\}$ thus satisfies $\mathbb{P}_{M^{(1)}}(B_k)\ge 1-\epsilon_T$ and 
	$\mathbb{P}_{M^{(k)}}(B_k)\le\epsilon_T$, yielding 
	$\mathrm{TV}(\mathbb{P}_{M^{(1)}},\mathbb{P}_{M^{(k)}})\ge 1-2\epsilon_T$.
	\emph{(iii) KL decomposition and computation.}  Similar computation as in the proof of Theorem~\ref{thm:LB_exp_instance_first_regime}.
	
	\bigskip
	\begin{proof}[Proof of Theorem~\ref{thm:LB_exp_minimax_first_regime}]
		
		\medskip	
		\noindent \textbf{Step 1: construction of the reference instance $M^{(1)}$.}
		
		\noindent We construct within the class $\mathbb{D}^{(1)}(\underline{\Delta})$ a highly structured instance. The key structural property of this matrix is that the Condorcet winner $k^* = 1$ is the strongest opponent of every other arm.  
		
		Define a gap matrix $M^{(1)} \in \mathbb{D}_{\mathrm{cw}}$ by setting $M^{(1)}_{1,\cdot} =  \underline{\Delta}$, and, for $i, j \neq 1$,
		\[
		M^{(1)}_{i,j} \coloneqq
		\begin{cases}
			\Delta_j, & \text{if } i < j,\\[2pt]
			-\Delta_i, & \text{if } i > j,\\[2pt]
			0, & \text{if } i=j.
		\end{cases}
		\]
		Thus $M^{(1)}$ has the form
		\[
		M^{(1)} = 
		\begin{pmatrix}
			0        & \Delta_2 & \Delta_3 & \Delta_4 & \cdots   & \Delta_{K-1} & \Delta_K \\[2mm]
			-\Delta_2 & 0       & \Delta_3 & \Delta_4 & \cdots   & \Delta_{K-1} & \Delta_K \\[2mm]
			-\Delta_3 & -\Delta_3 & 0      & \Delta_4 & \cdots   & \Delta_{K-1} & \Delta_K \\[2mm]
			-\Delta_4 & -\Delta_4 & -\Delta_4 & 0     & \cdots   & \Delta_{K-1} & \Delta_K \\[2mm]
			\vdots    & \vdots    & \vdots    & \vdots & \ddots   & \vdots       & \vdots   \\[2mm]
			-\Delta_{K-1} & -\Delta_{K-1} & -\Delta_{K-1} & -\Delta_{K-1} & \cdots & 0      & \Delta_K \\[2mm]
			-\Delta_K & -\Delta_K & -\Delta_K & -\Delta_K & \cdots & -\Delta_K   & 0
		\end{pmatrix}.
		\]
		
		By construction, we have $M^{(1)} \in \mathbb{D}^{(1)}(\underline{\Delta})$, and its Condorcet winner is $i^*(M^{(1)})=1$.
		
		For each $k \geqslant 2$, define the matrix $M^{(k)}$ as follows:
		\begin{itemize}
			\item For $i, j \neq k$, set $M_{i,j}^{(k)} = M^{(1)}_{i,j}$.
			\item For $j < k$, set $M^{(k)}_{k,j} = \Delta_{j+1}$ and $M_{j,k}^{(k)} = -\Delta_{j+1}$.
			\item For $j \geqslant k$, set $M^{(k)}_{k,j} = M^{(1)}_{k,j}$ and $M^{(k)}_{j,k} = M^{(1)}_{j,k}$.
		\end{itemize}
		
		The matrix $M^{(k)}$ can be written as
		\[
		M^{(k)} =
		\begin{pmatrix}
			0 & \Delta_2 & \cdots & \Delta_{k-1} & \textcolor{blue}{-\Delta_2} & \Delta_{k+1} & \cdots & \Delta_K \\[2mm]
			-\Delta_2 & 0 & \cdots & \Delta_{k-1} & \textcolor{blue}{-\Delta_3} & \Delta_{k+1} & \cdots & \Delta_K \\[2mm]
			\vdots & \vdots & \ddots & \vdots & \textcolor{blue}{\vdots} & \vdots & & \vdots \\[2mm]
			-\Delta_{k-1} & -\Delta_{k-1} & \cdots & 0 & \textcolor{blue}{-\Delta_{k}} & \Delta_{k+1} & \cdots & \Delta_K \\[2mm]
			\textcolor{blue}{\Delta_2} & \textcolor{blue}{\Delta_3} & \cdots & \textcolor{blue}{\Delta_{k}} & 0 & \Delta_{k+1} & \cdots & \Delta_K \\[2mm]
			-\Delta_{k+1} & -\Delta_{k+1} & \cdots & -\Delta_{k+1} & -\Delta_{k+1} & 0 & \cdots & \Delta_K \\[2mm]
			\vdots & \vdots & & \vdots & \vdots & \vdots & \ddots & \vdots \\[2mm]
			-\Delta_K & -\Delta_K & \cdots & -\Delta_K & -\Delta_{K} & -\Delta_K & \cdots & 0
		\end{pmatrix} ~,
		\]
		where the blue entries indicate the differences with respect to $M^{(1)}$.
		
		\noindent It is straightforward to check that, for each $k$, $M^{(k)} \in \mathbb{D}_{\mathrm{cw}}$. These matrices have three key properties: 
		(i) $M^{(k)}$ does not have the same Condorcet winner as $M^{(1)}$, indeed $i^*(M^{(k)})=k$; 
		(ii) we have $M^{(k)}\in \mathbb{D}^{(1)}(\underline{\Delta})$, indeed the $k$-th row $M^{(k)}_{k,\cdot}$ is equal to $\underline{\Delta}$ up to a permutation; and 
		(iii) the environment with gap matrix $M^{(k)}$ is difficult to distinguish from the one defined by $M^{(1)}$ in terms of KL divergence.
		
		\noindent For $k \geqslant 2$, denote by $\mathbb{P}^{(k)}$ the distribution of the data when the underlying gap matrix is $M^{(k)}$.
		
		\medskip
		\noindent \textbf{Step 2: TV bound.}
		
		\noindent Let $A$ be a $\delta$-correct algorithm over $\mathbb{D}^{(1)}(\underline{\Delta})$, and let $\hat{i}$ denote its output. For any $k \geqslant 1$, when the true gap matrix is $M^{(k)}$ the Condorcet winner is $k$, and $M^{(k)}\in \mathbb{D}^{(1)}(\underline{\Delta})$. Then, the definition of $\epsilon_T$ (see~\eqref{def:epsilon_T_H1}) implies
		\[
		\forall k \in [K],\quad
		\mathbb{P}^{(k)}(\hat{i}\neq k) \leqslant \epsilon_T~.
		\]
		In particular, we have 
		\[
		1 - 2\epsilon_T
		\leqslant \mathbb{P}^{(1)}(\hat{i}\ne k)- \mathbb{P}^{(k)}(\hat{i} \ne k) \leqslant\mathrm{TV}\bigl(\mathbb{P}^{(1)},\mathbb{P}^{(k)}\bigr)~.
		\]
		Then, with Bretagnolle–Huber inequality, we have
		\[
		1 - 2\epsilon_T
		\leqslant \mathrm{TV}\bigl(\mathbb{P}^{(1)},\mathbb{P}^{(k)}\bigr)
		\leqslant 1 - \tfrac{1}{2}\exp\bigl\{-\mathrm{KL}(\mathbb{P}^{(1)},\mathbb{P}^{(k)})\bigr\}.
		\]
		
		\medskip
		\noindent \textbf{Step 3: computing the KL divergence and concluding.}
		
		For $i<j$ in $[K]$, let $N_{\{i,j\}}$ denote the total number of observed duels between $i$ and $j$ under algorithm $A$. Using the divergence decomposition lemma (Lemma 15.1 in \citealp{lattimore2020bandit}), we have
		\begin{align}
			\mathrm{KL}\bigl(\mathbb{P}^{(1)}, \mathbb{P}^{(k)}\bigr)
			&= \sum_{1 \leqslant i<j \leqslant K} \mathbb{E}^{(1)}\!\left[N_{\{i,j\}}\right]\,
			\mathrm{KL}\bigl(\mathbb{P}_{i,j}^{(1)}, \mathbb{P}_{i,j}^{(k)}\bigr)  \nonumber\\
			&= \sum_{i=1}^{k-1} \mathbb{E}^{(1)}[N_{\{k,i\}}]\,
			\mathrm{KL}\bigl(\mathbb{P}_{k,i}^{(1)},\mathbb{P}_{k,i}^{(k)}\bigr)
			~, \label{eq:tv2}
		\end{align}
		since the two instances differ only on pairs involving arm $k$.
		
		For all $i<k$, the corresponding Bernoulli feedback distributions satisfy
		\[
		\mathbb{P}_{k,i}^{(1)} = \mathcal{B}\!\left(\tfrac{1}{2} -\Delta_k\right),
		\qquad
		\mathbb{P}_{k,i}^{(k)} = \mathcal{B}\!\left(\tfrac{1}{2}  +\Delta_{i+1}\right),
		\]
		so that, with the same sequence of inequalities as in \eqref{eq:kl1}, \eqref{eq:kl2}, we obtain
		\begin{align*}
			\mathrm{KL}\bigl(\mathbb{P}_{k,i}^{(1)}, \mathbb{P}_{k,i}^{(k)}\bigr)
			&= \mathrm{kl}\bigl(-\Delta_k+\tfrac12,\ \Delta_{i+1}+\tfrac12\bigr) \\
			&\leqslant \frac{\bigl(-\Delta_k+\tfrac12 - (\Delta_{i+1}+\tfrac12)\bigr)^2}
			{(-\Delta_{i+1}+\tfrac12)(\Delta_{i+1}+\tfrac12)} \\
			&\leqslant \frac{(2\Delta_k)^2}{3/16}
			\;\le\; 22\,\Delta_k^2~,
		\end{align*}
		where we used that $(\Delta_j)_{j \in \{2, \dots, K\}}$ is nondecreasing and $k \geqslant i+1$, which yields $\Delta_k \geqslant \Delta_{i+1}$. Also, we use that $\Delta_{i+1}\in(0,1/4)$ and thus the denominator is bounded below by $3/16$.
		
		Now, from \eqref{eq:tv2} and the last inequality, we obtain
		\[
		\mathrm{KL}\bigl(\mathbb{P}^{(1)}, \mathbb{P}^{(k)}\bigr)
		\leqslant 22\sum_{i=1}^{k-1} \mathbb{E}^{(1)}\!\left[N_{\{k,i\}}\right] \Delta_k^2~.
		\]
		From Step~3, this bound implies
		\[
		\sum_{i=1}^{k-1} \mathbb{E}^{(1)}\!\left[N_{\{k,i\}}\right]
		\geqslant \frac{1}{22 \Delta_k^2}\,\log\!\frac{1}{4\epsilon_T}~.
		\]
		Summing over $k \geqslant 2$, and using that $A$ has a budget $T$, we conclude that 
		\begin{align*}
			T
			&\geqslant \sum_{k = 2}^K \sum_{i =1}^{k-1} \mathbb{E}^{(1)} \left[N_{\{k,i\}}\right] \geqslant \frac{1}{22}\sum_{k = 2}^K \frac{1}{\Delta_k^2} \log\!\frac{1}{4\epsilon_T}~, 
		\end{align*}	
		which, after rearranging, is exactly the claimed lower bound.
	\end{proof}
	
	\subsection{Proof of Theorem~\ref{thm:LB_minimax_H2}}\label{sec:proof_LB_HP_FB}
	
	Before proving Theorem~\ref{thm:LB_minimax_H2}, we extend the notion of Condorcet winner to a broader class of matrices. 
	Let $\bDelta$ be an antisymmetric matrix. We denote as a  weak Condorcet winner, an arm  $i^*\in[K]$ such that
	\begin{equation}\label{def:general_cw}
		\forall{i\ne i^*}, \Delta_{i^*,i} \geqslant 0 \text{ and, } \forall{i\ne i^*}, \min_{j\ne i} \Delta_{i,j} < 0 \enspace.
	\end{equation}
	Note that such arm is unique if it exists. 
	Consider $\mathbb{D}_{\mathrm{wcw}}$ as the class of dueling bandit environments for which there exists such weak Condorcet winner
	\begin{equation}\label{eq:def_acw_exists}
		\mathbb{D}_{\mathrm{wcw}}
		:= \Bigl\{
		\bDelta \in [-\tfrac{1}{4},\tfrac{1}{4}]^{K \times K} :
		\exists!\, i^* \in [K]~\text{such that}~ \forall j,~ \Delta_{i^*,j} \geqslant 0
		\Bigr\}.
	\end{equation}
	
	Observe that $\mathbb{D}_{\mathrm{cw}}\subset\mathbb{D}_{\mathrm{wcw}}$, and that naturally, any Condorcet winner is a weak Condorcet winner. Moreover, for a matrix $\bDelta\in\mathbb{D}_{\mathrm{wcw}}$, the quantities $(\bm{s}_{\bDelta}^*, \Delta_{(\bm s^*)})$ defined in Section~\ref{sec:lb_fc} are still well defined. Indeed, the minimum in Equation~\eqref{eq:sample_comp} for which $\bm{s}_{\bDelta}^*$ is defined as the argmin is still well defined under weak Condorcet winner assumption.
	
	As a variant of the class $\mathbb{D}^{(3)}(\underline{\Delta},\underline{s})$ introduced in~\eqref{eq:definition:D2}, we introduce the class:
	\begin{equation}\label{def:D2}
		\mathbb{D}^{(3)}(\underline{\Delta},\underline{s})
		\coloneqq
		\Bigl\{
		\bDelta \in \mathbb{D}_{\mathrm{wcw}} :
		\exists\, \sigma \in \mathfrak{S}_{K} \ \text{s.t.}\ \bm{s}_{\bDelta}^*=\sigma(\bm{s})  \ \text{and}\  \Delta_{\bm (s^*)}=\sigma(\Delta_{\bm (s^*)})
		\Bigr\}.
	\end{equation}
	
	The introduction of this definition is motivated by the fact that we want to consider in the lower bounds, matrices where the Condorcet winner may have some ties. The following lemma implies that this is feasible. 
	
	\begin{lemma}\label{lem:reduction_weack_cw}
		Let $A$ be an algorithm with a fixed budget $T$. Then, we have 
		\[\max_{\bDelta \in \mathbb{D}^{(3)}(\underline{\Delta}, \underline{s})}
		\mathbb{P}_{A,\bDelta}(\hat{i}_T\ne i^*(\bDelta))= \max_{\bDelta \in \mathbb{D}^{(2)}(\underline{\Delta}, \underline{s})}
		\mathbb{P}_{A,\bDelta}(\hat{i}_T\ne i^*(\bDelta)) \]
	\end{lemma}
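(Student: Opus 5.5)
The equality follows from two inequalities. The direction ``$\geq$'' is immediate once we check that $\mathbb{D}^{(2)}(\underline{\Delta},\underline{s})\subseteq\mathbb{D}^{(3)}(\underline{\Delta},\underline{s})$. This holds because $\mathbb{D}_{\mathrm{cw}}\subset\mathbb{D}_{\mathrm{wcw}}$, and for a matrix with a strict CW the strict CW is also its unique weak CW. Hence $i^*(\bDelta)$ and $(\bm{s}^*_{\bDelta},\Delta_{(\bm s^*)})$ are the same objects in both definitions.

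For ``$\leq$'', fix $\bDelta\in\mathbb{D}^{(3)}(\underline{\Delta},\underline{s})$ with weak CW $i^*$. We approximate it by strict-CW matrices, following the perturbation used in the proof of~\eqref{eq:H2_tv2_karnin}. For $\epsilon>0$ define $\bDelta^{\epsilon}$ by replacing every zero entry $\Delta_{i^*,j}$ ($j\neq i^*$) with $\epsilon$ and setting $\Delta^{\epsilon}_{j,i^*}=-\epsilon$; all other entries are unchanged. Then:
\begin{enumerate}
\item $\bDelta^\epsilon$ is antisymmetric with entries in $[-1/4,1/4]$ for $\epsilon\le 1/4$.
\item Row $i^*$ is now strictly positive.
\item Every other row $i$ keeps a strictly negative entry, by the definition~\eqref{def:general_cw}.
\end{enumerate}
So $\bDelta^\epsilon\in\mathbb{D}_{\mathrm{cw}}$ with $i^*(\bDelta^\epsilon)=i^*$.

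Next we check that $\bDelta^\epsilon$ stays in $\mathbb{D}^{(2)}(\underline{\Delta},\underline{s})$, i.e.\ that $(\bm{s}^*,\Delta_{(\bm s^*)})$ is unchanged. The only entries that change in a suboptimal row $i$ are $\Delta_{i,i^*}$, which moves from $0$ to $-\epsilon$. This adds one negative entry of magnitude $\epsilon$ at the bottom of the order. Consequently $K_{i;<0}$ may increase by one and the new order statistic $\Delta_{i,(K_{i;<0}+1)}=-\epsilon$ appears, while $\Delta_{i,(s)}$ is unchanged for $s\le$ the old $K_{i;<0}$. For any $\bm s$ using the new index, the objective in~\eqref{eq:sample_comp} contains a factor $1/\epsilon^2$, so for $\epsilon$ small these choices are never optimal. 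The minimizer $\bm s^*$ and its gaps are therefore preserved, and since $H_{\mathrm{cw}}(\bDelta)=\infty$ whenever a tie occurs in the CW row, that term plays no role. This is the step I expect to be delicate: if $\bm s^*$ is defined through an argmin with ties, we need a consistent tie-breaking so that $\bm s^*_{\bDelta^\epsilon}=\bm s^*_{\bDelta}$ holds exactly. I would fix the argmin convention beforehand, e.g.\ lexicographic, and note that the finite set of candidates and their values are identical for small $\epsilon$.

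Finally, with a fixed budget $T$ the law of the whole interaction is a finite mixture over at most $2^T$ observation histories. Its probabilities are polynomials in the Bernoulli parameters, so $\mathbb{P}_{A,\bDelta^\epsilon}(\hat i_T\neq i^*)\to\mathbb{P}_{A,\bDelta}(\hat i_T\neq i^*)$ as $\epsilon\to0$; equivalently, the total variation distance tends to zero. It follows that
\[
\mathbb{P}_{A,\bDelta}(\hat i_T\ne i^*)\le \sup_{\bDelta'\in\mathbb{D}^{(2)}}\mathbb{P}_{A,\bDelta'}(\hat i_T\ne i^*(\bDelta')).
\]
Taking the supremum over $\bDelta\in\mathbb{D}^{(3)}$ gives ``$\le$''. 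The maxima in the statement should be read as suprema, or the same continuity argument shows they are attained.
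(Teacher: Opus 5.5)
Your proposal is correct and follows the paper's proof essentially step for step. You lift the zero entries of the weak-CW row to $\epsilon$ (with $-\epsilon$ in the column), check that $\bDelta^{\epsilon}\in\mathbb{D}^{(2)}(\underline{\Delta},\underline{s})$ with the same winner, let $\epsilon\to 0$ using continuity of the fixed-budget error probability in the Bernoulli parameters, and get the reverse inequality from $\mathbb{D}^{(2)}\subset\mathbb{D}^{(3)}$. Your explanation of why $\bm s^*$ and $\Delta_{(\bm s^*)}$ are unchanged for small $\epsilon$ (the only new order statistic is $-\epsilon$, which forces a $1/\epsilon^2$ certification cost), along with the tie-breaking caveat, fills in a point the paper only asserts.
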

	
	\begin{proof}[Proof of Lemma~\ref{lem:reduction_weack_cw}]
		Let $\bDelta\in \mathbb{D}^{(3)}(\underline{\Delta}, \underline{s})$, with a weak Condorcet winner $i^*$. By definition, for any $i\ne i^*$, $\Delta_{i,\cdot}$ contains at least one negative entry. Let $\epsilon>0$. Consider the modified matrix $\bDelta^{\epsilon}$ obtained from $\bDelta$ by lifting to $\epsilon>0$ all off-diagonal null entries of on CW row $i^*$ [and $-\epsilon$ to $(j,i^*)$], so that, for $\epsilon$ small enough, $\bDelta^{\epsilon}$ admits $i^*$ as a (strong) Condorcet winner, and $\bDelta^{\epsilon}\in \mathbb{D}_{\mathrm{cw}}$. Now, for $\epsilon$ small enough, it holds that $\bm{s}_{\bDelta^{\epsilon}}^*=\bm{s}_{\bDelta}^*$ and $\Delta^{\epsilon}_{\bm (s^*)}=\Delta_{\bm (s^*)}$.  Then, 
		\[ 
		\mathbb{P}_{A,\bDelta^{\epsilon}}(\hat{i}_T\ne i^*(\bDelta)) \leqslant  \max_{\bDelta \in \mathbb{D}^{(2)}(\underline{\Delta}, \underline{s})}
		\mathbb{P}_{A,\bDelta}(\hat{i}_T\ne i^*(\bDelta)).
		\]
		Since $A$ has a fixed budget $T$,  one can take the limit $\epsilon\to 0$ in the inequality above. Taking a maximum over $\bDelta\in \mathbb{D}^{(3)}(\underline{\Delta}, \underline{s})$, one therefore obtains
		\[\max_{\bDelta \in \mathbb{D}^{(3)}(\underline{\Delta}, \underline{s})}
		\mathbb{P}_{A,\bDelta}(\hat{i}_T\ne i^*(\bDelta)) \leqslant \max_{\bDelta \in \mathbb{D}^{(2)}(\underline{\Delta}, \underline{s})}
		\mathbb{P}_{A,\bDelta}(\hat{i}_T\ne i^*(\bDelta))\ . \]
		
		We have $ \mathbb{D}^{(2)}(\underline{\Delta}, \underline{s})\subset \mathbb{D}^{(3)}(\underline{\Delta}, \underline{s})$, so the other side of the inequality is clear. 
	\end{proof}
	
	Now, we are ready to prove Theorem~\ref{thm:LB_minimax_H2}. Assume that $K$ is a multiple of $8$, and denote $d=K/2$.
	
	Let $(\underline{\Delta}, \underline{s})$ be such that $\underline{\Delta}=(\Delta_i)_{i\in[K]}$ with $\Delta_1=0$ and $(\Delta_2, \dots, \Delta_K) \in (0, 1/4)^{K-1}$. Let $\underline{s}=(s_1,s_2, \dots, s_K)$ with $s_1=0$, $1\leqslant s_i \leqslant K/4$ for $i=2,\dots,K$.

	Consider the class  $\mathbb{D}^{(3)}(\underline{\Delta},\underline{s})$ as defined in Equation~\eqref{def:D2}. 
	Fix an algorithm $A$ with a fixed budget $T$, and define the maximum error of $A$ across $\mathbb{D}^{(3)}(\underline{\Delta}, \underline{s})$ as 
	\begin{equation}\label{def:epsilon_T_D2}
		\epsilon_T \coloneqq \max_{\bDelta \in \mathbb{D}^{(3)}(\underline{\Delta}, \underline{s})}
		\mathbb{P}_{A,\bDelta}(\hat{i}_T\ne i^*(\bDelta))\enspace. 
	\end{equation}
	
	The proof of Theorem~\ref{thm:LB_minimax_H2} is divided in three lemmas, corresponding to the three terms in the lower bound. 
	
	\begin{lemma}\label{lem:LB_FB_H2}
		We have
		\begin{equation}\label{eq:LB_FB_H2}
			\epsilon_T
			\geqslant \frac{1}{4}\exp\!\left(-16\log(4/3)\,\frac{T}{\max_{i=2}^{d} \frac{K}{s_i\Delta_i^2} }\right) \enspace. 
		\end{equation}
	\end{lemma}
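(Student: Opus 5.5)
The plan is a two-point (really two-family) change of measure built around a single hard row. The fixed-budget algorithm has no stopping rule, so the separating event has to come from a symmetry of a tied instance. Let $i_0\in\argmax_{i=2,\dots,d} K/(s_i\Delta_i^2)$, write $\Delta:=\Delta_{i_0}$ and $s:=s_{i_0}$, and set $\beta^2:=s\Delta^2/|\Sigma|$. Here $\Sigma$ is the set of columns of row $i_0$ that carry a non-positive entry. I will build the instances so that $|\Sigma|\ge K/2$, which gives $\beta^2\le 2s\Delta^2/K$.

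\emph{Step 1: construction.} I start from a block matrix in the spirit of $M^\epsilon$ from the proof of Corollary~\ref{coro:Lower_bound_minimax}, but with $\epsilon=0$. This is allowed because, by Lemma~\ref{lem:reduction_weack_cw}, it suffices to lower bound $\epsilon_T$ over the weak-CW class $\mathbb{D}^{(3)}(\underline{\Delta},\underline{s})$. The rows of arms $i\ne 1,i_0$ receive $s_i$ entries $-\Delta_i$ and zeros elsewhere, placed exactly as in the $\Lambda$-blocks of that proof, so every such row has a strictly negative entry. The construction must also be \emph{invariant under the transposition of labels $1\leftrightarrow i_0$}. Concretely, I would let row $1$ and row $i_0$ have identical (zero) entries against every other arm, with $\Delta_{1,i_0}=0$, and give the other rows equal entries against $1$ and against $i_0$. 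Call this matrix $M^{(0)}$: it has two weak-CW rows, $1$ and $i_0$.

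For a permutation $\pi$ of $\Sigma$, let $M^{\pi}$ be obtained from $M^{(0)}$ by putting $-\Delta$ at the $s$ positions $\pi(\{1,\dots,s\})$ of row $i_0$, adjusting the column to keep antisymmetry. In $M^\pi$ arm $1$ is the weak CW. Let $\tau M^\pi$ be its image under the swap $1\leftrightarrow i_0$; in $\tau M^\pi$ arm $i_0$ is the weak CW. I then check that both instances lie in $\mathbb{D}^{(3)}(\underline{\Delta},\underline{s})$. The point is that all negative entries other than the designated ones are $0$, so $\bm s^*$ and $\Delta_{(\bm s^*)}$ are the prescribed ones up to relabeling.

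\emph{Step 2: separating event via symmetry.} Since $\tau M^{(0)}=M^{(0)}$, we get $\mathbb{P}_{M^{(0)}}(\hat i_T=1)+\mathbb{P}_{M^{(0)}}(\hat i_T=i_0)\le 1$, so for one of $j\in\{1,i_0\}$ we have $\mathbb{P}_{M^{(0)}}(\hat i_T\ne j)\ge 1/2$. Say it is $j=1$; the other case is identical with $\tau M^\pi$ in place of $M^\pi$. Bretagnolle–Huber applied to the event $\{\hat i_T\ne 1\}$ gives
\[
\mathbb{P}_{M^\pi}(\hat i_T\ne 1)+\mathbb{P}_{M^{(0)}}(\hat i_T= 1)\ge \tfrac12 e^{-\mathrm{KL}(\mathbb{P}_{M^{(0)}},\mathbb{P}_{M^\pi})}.
\]
Combined with $\mathbb{P}_{M^{(0)}}(\hat i_T=1)\le 1/2$, this does not immediately isolate $\epsilon_T$. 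So instead I will use Bretagnolle–Huber directly between $M^\pi$ and $\tau M^{\pi'}$ routed through $M^{(0)}$: $\epsilon_T\ge \max$ of the two errors, and
\[
2\epsilon_T\ge \mathbb{P}_{M^\pi}(\hat i_T\ne 1)+\mathbb{P}_{\tau M^{\pi}}(\hat i_T\ne i_0)\ge \tfrac12\exp\big(-\mathrm{KL}(\mathbb{P}_{M^{(0)}},\mathbb{P}_{M^\pi})-\mathrm{KL}(\mathbb{P}_{M^{(0)}},\mathbb{P}_{\tau M^\pi})\big).
\]
This uses the chained form of Bretagnolle–Huber via the KL triangle-type bound $1-\mathrm{TV}(P,Q)\ge \tfrac12 e^{-\mathrm{KL}(R,P)-\mathrm{KL}(R,Q)}$, which follows from $\int\min(p,q)\ge \tfrac12 (\int\sqrt{pq})^2$ and Cauchy–Schwarz with the reference $R$. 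Getting the constant $1/4$ right in this routing is one place I expect to need care.

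\emph{Step 3: KL and averaging.} The instances $M^{(0)}$ and $M^\pi$ differ only on the pairs $\{i_0,\pi(\ell)\}$, $\ell\le s$. By \eqref{eq:bound_kl}, $\mathrm{KL}\le 8\log(4/3)\sum_{\ell\le s}\mathbb{E}_{M^{(0)}}[N_{\{i_0,\pi(\ell)\}}]\Delta^2$. The expectation is under $M^{(0)}$, which does not depend on $\pi$. Averaging over uniform $\pi$, via Lemma~\ref{lem:sum_tech}, gives at most $8\log(4/3)\,\mathbb{E}_{M^{(0)}}[N_{i_0,\cdot}]\,\beta^2$. The same bound with row $1$ holds for $\tau M^\pi$. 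Since the budget is $T$, we have $\mathbb{E}[N_{i_0,\cdot}]+\mathbb{E}[N_{1,\cdot}]\le T$. Convexity of $x\mapsto e^{-x}$ (Jensen over $\pi$) then gives
\[
2\epsilon_T\ge \tfrac12\exp\big(-8\log(4/3)\,T\beta^2\big)\ge\tfrac12\exp\big(-16\log(4/3)\,T s\Delta^2/K\big),
\]
which is \eqref{eq:LB_FB_H2}. The main obstacle is Step~1: realizing the swap symmetry while keeping every other row with a negative entry, and keeping $\bm s^*$ exactly equal to the prescribed $\underline s$ on the tied weak-CW instances. I would resolve this by reusing the four-block cyclic $\Lambda$ construction for arms $d+1,\dots,K$ and placing $1,i_0$ symmetrically in the first block.
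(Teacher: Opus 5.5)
Your argument is correct, and it reaches \eqref{eq:LB_FB_H2} by a different information-theoretic route than the paper.

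\textbf{What is shared.} Both proofs take the duel counts under a tied instance (rows $1$ and $i_0$ both zero). This instance does not depend on where the $-\Delta$ entries of the hard row sit, so averaging over placements via Lemma~\ref{lem:sum_tech} turns $s\Delta^2$ into $s\Delta^2/|\Sigma|$.

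\textbf{The paper's route.} It uses the tied instance $M^{(\pi,k)}$ as an out-of-class alternative. A WLOG case split (with a row-swapped reference $\tilde M$ in the other case) gives $\frac{1}{|\Pi|}\sum_{\pi}\mathbb{P}^{(\pi,k)}(\hat i\neq 1)\geq 1/2$. It then applies the Fano-type inequality of Lemma~\ref{lem:fano-type} against the in-class reference $M^{\pi}$. It also permutes all top rows, which this lemma does not need.

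\textbf{Your route.} You keep both in-class hypotheses $M^\pi$ and $\tau M^\pi$, and use the tied instance only as the reference measure $R$ in $1-\mathrm{TV}(P,Q)\geq \tfrac12 e^{-\mathrm{KL}(R,P)-\mathrm{KL}(R,Q)}$. This inequality is valid; note it comes from Jensen under $R$ applied to $\int\sqrt{pq}$, not from Cauchy--Schwarz.

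\textbf{What each approach buys.} Your route removes the case split. Since the two KL terms involve pairs through $i_0$ and through $1$ respectively, they share a single budget $T$, and you land exactly on $16\log(4/3)\,Ts\Delta^2/K$. The Fano route pays the factor $1/2$ coming from $\mathbb{P}^{(\pi,k)}(B)\geq 1/2$. As written, averaging over $d=K/2$ positions, the paper's chain ends with $d$ where the statement has $K$, i.e.\ a factor $2$ weaker than stated.

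\textbf{Where the symmetry is used.} The inequality $\mathbb{P}_{M^{(0)}}(\hat i_T=1)+\mathbb{P}_{M^{(0)}}(\hat i_T=i_0)\leq 1$ is trivial and plays no role in your final chain. What the swap invariance $\tau M^{(0)}=M^{(0)}$ actually gives you is that $\mathrm{KL}(\mathbb{P}_{M^{(0)}},\mathbb{P}_{\tau M^\pi})$ involves only pairs containing arm $1$. This invariance is automatic once rows $1$ and $i_0$ are zero.

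\textbf{One detail to fix.} The bound $\mathbb{E}[N_{i_0,\cdot}]+\mathbb{E}[N_{1,\cdot}]\leq T$ needs $1\notin\Sigma$; otherwise the pair $\{1,i_0\}$ is counted in both terms. Taking $\Sigma=\{d+1,\dots,K\}$, the paper's $D$-block columns, gives $|\Sigma|=K/2$ and disjoint counts.
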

	
	\begin{lemma}\label{lem:LB_FB_H0}
		If $(\underline{\Delta}, \underline{s})$ are constants on the indices $i\in\{2,\dots,d\}$, that is, $\exists (\mu,s)$ such that, for any $i\in\{1,\dots,d\}$, $\Delta_i=\mu$, and $s_i=s$, then 
		\begin{equation}\label{eq:LB_FB_H0}
			\epsilon_T
			\geqslant \frac{1}{2}-\sqrt{128\log(4/3) \frac{T}{\frac{K^2}{s\mu^2}}} \enspace.   
		\end{equation} 
	\end{lemma}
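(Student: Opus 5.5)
We will build a highly symmetric reference instance inside $\mathbb{D}^{(3)}(\underline{\Delta},\underline{s})$ and reuse the averaged Pinsker argument from the proof of~\eqref{eq:lb_H0}, with the budget $T$ replacing the quantile $\chi$. By Lemma~\ref{lem:reduction_weack_cw} we may work with weak Condorcet winners, so ties are allowed.

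\textbf{The instance $M$.} We take the block matrix $M^\epsilon$ from the proof of Corollary~\ref{coro:Lower_bound_minimax}, with $\epsilon=0$. Row~$1$ is then identically zero, so arm $1$ is a weak CW. Each arm $i\in\{2,\dots,d\}$ (the $A$/$D$ block) has exactly $s$ entries equal to $-\mu$ in columns of the second half. Every other entry of its row is $0$ (against arms of the first half) or $0$ (against the remaining arms of the second half).

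For each $k\in\{2,\dots,d\}$ and each choice $\pi_k$ of the $s$-subset of negative positions, we let $M^{(\pi)}$ denote the matrix with positions permuted as in~\eqref{def:Delta_pi}. We also define $M^{(\pi,k)}$ by zeroing row $k$. Then $M^{(\pi,k)}$ has two nonnegative rows, $1$ and $k$. Since every arm in the first half has an identical zero/negative profile, I choose a symmetrization: swapping arms $1$ and $k$ maps $M^{(\pi,k)}$ to itself, up to a relabelling that fixes the rest of the class. It also maps the instance in which $k$ is the weak CW (with row~$1$ given $k$'s old profile) into $\mathbb{D}^{(3)}$.

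\textbf{Separating statistic.} For $k\le d$, define $B_k=\{\hat i_T=1\}\cap\{N_{k,\cdot}\le 4T/(d-1)\}$. Under $M^{(\pi)}$ the CW is $1$, so $\mathbb{P}(\hat i_T\neq1)\le\epsilon_T$. The pigeonhole argument of~\eqref{eq:H0_tv3} then gives an average over $k$ of $\mathbb{P}^{(\pi)}(B_k^c)\le\epsilon_T+1/4$. Under $M^{(\pi,k)}$, I use the symmetry between arms $1$ and $k$. Averaging over the uniform prior on $\pi$ and on which of $1,k$ carries the hidden negatives, the probability of outputting $1$ versus $k$ must be balanced up to $\epsilon_T$. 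This yields $\mathbb{P}^{(\pi,k)}(\hat i_T=1)\le 1/2+\epsilon_T$. This step, turning the fixed-budget no-stopping-rule issue into a "coin flip" between the two weak winners, is the main obstacle. I would make it rigorous by pairing each $M^{(\pi,k)}$ with its image under the transposition $(1\,k)$ applied to both rows and columns; that image is again some $M^{(\pi',k)}$. The two recommendation probabilities therefore sum to at most $1$ after averaging over the prior.

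\textbf{KL bound.} For each $k$, the argument of~\eqref{eq:H2_KL2} gives averaged $\mathrm{KL}(\mathbb{P}^{(\pi,k)},\mathbb{P}^{(\pi)})\le 8\log(4/3)\,\frac{4T}{d-1}\cdot\frac{s\mu^2}{K}$. Combining with Pinsker's inequality and Jensen's inequality as in~\eqref{eq:H0_pinsker}, we obtain
\[
\tfrac12-\epsilon_T\;\lesssim\;\sqrt{\tfrac12\cdot 8\log(4/3)\cdot\tfrac{8T}{K}\cdot\tfrac{s\mu^2}{K}},
\]
using $d-1\ge K/4$. Tracking constants gives~\eqref{eq:LB_FB_H0}. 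If the constant $128$ does not come out exactly, I would adjust the truncation threshold $4T/(d-1)$ accordingly.
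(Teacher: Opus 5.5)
\textbf{There is a genuine gap} at the step you flag as the main obstacle, $\mathbb{P}^{(\pi,k)}(\hat i_T=1)\le\frac12+\epsilon_T$.

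Invariance of the \emph{environment} under the transposition $(1\,k)$ does not constrain an arbitrary \emph{algorithm}. Rows and columns $1$ and $k$ of $M^{(\pi,k)}$ are both zero, so its image under $(1\,k)$ is $M^{(\pi,k)}$ itself. Your pairing therefore only yields $\mathbb{P}^{(\pi,k)}(\hat i_T=1)+\mathbb{P}^{(\pi,k)}(\hat i_T=k)\le1$, which bounds the smaller of the two. Take an algorithm that scans rows in increasing index order and returns the first one in which no negative entry is detected. It has $\mathbb{P}^{(\pi,k)}(\hat i_T=1)\approx1$, while its $\epsilon_T$ is small for large $T$.

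Nor can $\epsilon_T$ be applied to $M^{(\pi,k)}$ directly. It has two weak CWs, hence lies outside $\mathbb{D}^{(3)}(\underline\Delta,\underline s)$. The continuity argument of the fixed-confidence proofs is also unavailable, because this class prescribes the gap profile of every row.

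Switching, for those $k$ where $\mathbb{P}^{(\pi,k)}(\hat i_T=k)$ is the small one, to the reference with CW $k$ (the $(1\,k)$-image of $M^{(\pi)}$) does not work either. That instance differs from $M^{(\pi,k)}$ in row $1$, and your pigeonhole over $k$ gives no control on the duels of arm $1$.

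One valid repair is to first replace $A$ by its symmetrization over a uniformly random relabeling of the arms. This is legitimate because the class is relabeling-invariant, so the worst-case error does not increase, and equivariance then gives $\mathbb{P}^{(\pi,k)}(\hat i_T=1)\le\frac12$. Even so, your averaged TV lower bound is only $\frac34-\epsilon_T-\frac12=\frac14-\epsilon_T$. (Your own bookkeeping gives $\frac14-2\epsilon_T$, not the $\frac12-\epsilon_T$ you display.) This leads to $\epsilon_T\ge\frac14-\sqrt{128\log(4/3)\,Ts\mu^2/K^2}$. Enlarging the threshold $4T/(d-1)$ only trades the additive pigeonhole loss against a larger KL, so the leading constant $\frac12$ in~\eqref{eq:LB_FB_H0} is out of reach on this route.

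\textbf{The paper reverses the roles of reference and alternatives}, which avoids both issues.
\begin{itemize}
\item Its reference $\bar M$ gives \emph{all} of the first $d$ arms, arm $1$ included, the same profile of $s$ entries $-\mu$, so it has no weak CW.
\item The alternatives $\bar M^{(\pi,k)}$, $k\in[d/2]$, zero row and column $k$. Arm $k$ becomes the unique weak CW and the instance lies in the class.
\item With $B_k=\{\hat i_T=k\}\cup\{N_{\{k,\cdot\}}>16T/K\}$, correctness gives $\mathbb{P}^{(\pi,k)}(B_k)\ge1-\epsilon_T$.
\item Under the reference, the events $\{\hat i_T=k\}$ are disjoint, so their average over $k$ is $O(1/d)$ with no symmetry assumption on $A$.
\item Adding the pigeonhole term gives an average of at most $\frac14+\frac1d\le\frac12$ (after the WLOG choice of the half $[d/2]$). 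Hence the averaged TV is at least $\frac12-\epsilon_T$.
\item Pinsker and Jensen, with the KL averaged over $\pi_k$ bounded by $8\log(4/3)\frac{s\mu^2}{d}\cdot\frac{16T}{K}$, then give exactly~\eqref{eq:LB_FB_H0}.
\end{itemize}

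Two minor slips in your KL step: permuting positions among the $d$ second-half arms yields the factor $s\mu^2/d$ rather than $s\mu^2/K$, and $4T/(d-1)\le16T/K$, not $8T/K$.
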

	
	\begin{lemma}\label{lem:LB_FB_H3}
		With the same assumption as Lemma~\ref{lem:LB_FB_H0}, then 
		\begin{equation}\label{eq:LB_FB_H3}
			\epsilon_T
			\geqslant \frac{1}{4}\exp\!\left(-32\log(4/3)\,\frac{T}{\tfrac{K}{\mu^2} }\right) \enspace. 
		\end{equation} 
	\end{lemma}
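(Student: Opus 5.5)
We plan to follow the fixed-budget roadmap: build one highly symmetric reference instance $M$ with weak Condorcet winner $1$, a family of alternatives $M^{(k)}$, $k\in\{2,\dots,d\}$, in which $k$ becomes the weak Condorcet winner, and combine Bretagnolle--Huber with the KL decomposition. We work in $\mathbb{D}^{(3)}(\underline{\Delta},\underline{s})$, which by Lemma~\ref{lem:reduction_weack_cw} is enough for the maximal error. This lets us set all small entries of the matrix $M^\epsilon$ from the proof of Corollary~\ref{coro:Lower_bound_minimax} to $0$.

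\textbf{The reference instance.} We take the block form \eqref{eq:def_minimax_matrix} with $\epsilon=0$. For $i\in\{2,\dots,d\}$, row $i$ carries exactly $s$ entries equal to $-\mu$, on a set of columns $S_i$ in the second half $\{d+1,\dots,K\}$, and is $0$ elsewhere. Row $1$ is non-negative and the second half is filled as with $\Lambda$. Under $M$, the prescribed sparsity and gap (equal to $(s,\mu)$ on the first half) should be the optimal pair $(\bm s^*,\Delta_{(\bm s^*)})$; this has to be checked as in the proof of Corollary~\ref{coro:Lower_bound_minimax}.

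\textbf{The alternatives.} For each $k\in\{2,\dots,d\}$, $M^{(k)}$ does two things. First, it sets the $s$ entries $(k,j)$, $j\in S_k$, to $0$, together with their antisymmetric counterparts, so that row $k$ becomes non-negative. Second, it makes $1$ lose its status: the $k$-th row of $M$ is transplanted onto row $1$ by placing $-\mu$ on a set $S'_k$ of $s$ columns of the second half, where row $1$ previously had zeros. Thus $M^{(k)}$ is $M$ with arms $1$ and $k$ exchanged, and it lies in the same class $\mathbb{D}^{(3)}(\underline{\Delta},\underline{s})$ with $i^*(M^{(k)})=k$.

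\textbf{The testing and KL step.} By definition of $\epsilon_T$, the event $\{\hat i_T=k\}$ has probability at most $\epsilon_T$ under $M$ and at least $1-\epsilon_T$ under $M^{(k)}$. Bretagnolle--Huber then gives $\mathrm{KL}(\mathbb{P}_M,\mathbb{P}_{M^{(k)}})\ge\log(1/(4\epsilon_T))$. The two instances differ only on pairs $\{k,j\}$, $j\in S_k$, and $\{1,j\}$, $j\in S'_k$, each contributing at most $8\log(4/3)\mu^2$ per duel by \eqref{eq:bound_kl}. Hence
\[
\log\frac{1}{4\epsilon_T}\le 8\log(4/3)\,\mu^2\Bigl(\mathbb{E}_M[N_{k,S_k}]+\mathbb{E}_M[N_{1,S'_k}]\Bigr),
\]
where $N_{k,S}$ is the number of duels between $k$ and arms of $S$. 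Averaging over $k\in\{2,\dots,d\}$, using $\sum_k\mathbb{E}_M[N_{k,S_k}]\le T$ and $d-1\ge K/4$, the row-$k$ part already yields the target $\epsilon_T\ge\frac14\exp(-32\log(4/3)T\mu^2/K)$. This holds provided the row-$1$ part also sums to $O(T)$.

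\textbf{Main obstacle.} The difficulty is controlling $\sum_k \mathbb{E}_M[N_{1,S'_k}]$. If the sets $S'_k$ were pairwise disjoint, this sum would be at most $T$, giving exactly the constant $32\log(4/3)$. But $(d-1)s$ can exceed the $d$ available columns, and a column $j$ used in $m_j$ of the sets $S'_k$ contributes $m_j\,\mathbb{E}_M[N_{1,j}]$.

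We would first try to show that only one negative entry in row $1$ is needed. If $\epsilon$-perturbation and tie arguments let row $1$ of $M^{(k)}$ have a single entry $-\mu$ at a distinct column $j_k$, while keeping the optimal pair unchanged, then the sets are disjoint and the argument closes. Failing that, we would use symmetrisation in the spirit of Lemma~\ref{lem:sum_tech}. We would average over random placements $S'_k$ drawn from the columns of the second half where row $1$ equals $0$. The averaged count is then at most $\frac{s}{d}\,\mathbb{E}_M[N_{1,\cdot}]$, so its sum over $k$ is at most $s\,\mathbb{E}_M[N_{1,\cdot}]$. Under the symmetric reference, arm $1$ is indistinguishable from arms $2,\dots,d$ except through these duels, so we would aim to bound $\mathbb{E}_M[N_{1,\cdot}]$ by $T/s$. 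This last point is the step we are least sure about.
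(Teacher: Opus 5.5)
You have correctly identified where your argument breaks, and it is a genuine gap that cannot be closed inside your framework. Your reference $M$ lies in the class with CW $1$. Every alternative in $\mathbb{D}^{(3)}(\underline{\Delta},\underline{s})$ in which $k$ wins must therefore give row $1$ negative entries, in fact $s$ entries equal to $-\mu$ so that arm $1$ keeps the pair $(s,\mu)$. So $\mathrm{KL}(\mathbb{P}_M,\mathbb{P}_{M^{(k)}})$ unavoidably charges duels of arm $1$.

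Nothing controls $\mathbb{E}_M[N_{1,\cdot}]$. Under $M$, arm $1$ is \emph{not} indistinguishable from arms $2,\dots,d$: its row is the only non-negative one. An algorithm may spend its whole budget on arm $1$ against the second half, so the claim $\mathbb{E}_M[N_{1,\cdot}]\le T/s$ is false in general.

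Even choosing $S'_k$ optimally for each $k$ (allowed, since $\epsilon_T$ is a maximum over the class) only gives $\mathbb{E}_M[N_{1,S'_k}]\le sT/d$. After averaging over $k$ this yields $\log(1/(4\epsilon_T))\lesssim \mu^2T(1+s)/d$, i.e.\ $\epsilon_T\gtrsim\exp(-c\,s\mu^2T/K)$. That is the weaker rate of Lemma~\ref{lem:LB_FB_H2}, off by exactly the factor $s$ this lemma is about.

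Your first fallback also fails. A row $1$ with a single $-\mu$ and the rest $0$ or $-\epsilon$ has optimal sparsity $1$, not $s$, so $M^{(k)}$ leaves $\mathbb{D}^{(3)}(\underline{\Delta},\underline{s})$ unless $s=1$. Moreover, $d-1$ pairwise disjoint $s$-subsets of $d$ columns do not exist for $s\ge 2$.

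The missing idea is to take the reference \emph{outside} the class. The paper uses $\bar M$ from \eqref{eq:def_bar_M}: all of the first $d$ rows are identical, each with $s$ entries $-\mu$ in the second half, so there is no Condorcet winner at all. The alternative $\bar M^{(k)}$ zeroes only row and column $k$. It lies in $\mathbb{D}^{(3)}(\underline{\Delta},\underline{s})$ with $i^*=k$, and it differs from $\bar M$ only on duels of arm $k$, so no row-$1$ cost appears.

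Since $\epsilon_T$-correctness says nothing under $\bar M$, replace your event $\{\hat i_T=k\}$ by a symmetry argument. The output $\hat i$ cannot lie both in $[d/2]$ and in $\{d/2+1,\dots,d\}$, so without loss of generality $\mathbb{P}_{\bar M}(\hat i\in[d/2])\le 1/2$. On the other hand, $\mathbb{P}_{\bar M^{(k)}}(\hat i\in[d/2])\ge 1-\epsilon_T$ for every $k\in[d/2]$.

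With error $1/2$ on one side, Bretagnolle--Huber is vacuous. The Fano-type inequality of Lemma~\ref{lem:fano-type} still gives $\mathrm{KL}(\mathbb{P}_{\bar M},\mathbb{P}_{\bar M^{(k)}})\ge\frac12\log(1/(4\epsilon_T))$. This KL only involves $\mathbb{E}_{\bar M}[N_{\{k,\cdot\}}]$ against the second half. These counts refer to disjoint pairs for distinct $k\in[d/2]$, so under the single measure $\mathbb{P}_{\bar M}$ their sum is at most $T$. This yields $T\gtrsim\frac{K}{\mu^2}\log\frac{1}{4\epsilon_T}$.
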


	\begin{proof}[Proof of Theorem~\ref{thm:LB_minimax_H2}]
		Recall that $\epsilon_T$ (see~\eqref{def:epsilon_T_D2}) is the maximum error over $\mathbb{D}^{(3)}(\underline{\Delta}, \underline{s})$. From Lemma~\ref{lem:reduction_weack_cw}, it is also equal to the maximum error over $\mathbb{D}^{(2)}(\underline{\Delta}, \underline{s})$. Together,  Lemmas~\ref{lem:LB_FB_H2},~\ref{lem:LB_FB_H0}, and~\ref{lem:LB_FB_H3} directly imply Theorem~\ref{thm:LB_minimax_H2}.
	\end{proof}
	
	\begin{proof}[Proof of Lemma~\ref{lem:LB_FB_H2}]
		
		Recalling the definition of $\epsilon_T$ from \eqref{def:epsilon_T_D2}, $ \epsilon_T = \max_{\bDelta \in \mathbb{D}^{(3)}(\underline{\Delta}, \underline{s})}
		\mathbb{P}_{A,\bDelta}(\hat{i}_T\ne i^*(\bDelta))$, we want to prove the following bound, equivalent to~\eqref{eq:LB_FB_H2}
		\[
		T \geqslant \frac{1}{16\log(4/3)} \max_{i=2}^{d} \frac{K}{s_i\Delta_i^2} \log\!\left(\frac{1}{4\epsilon_T}\right) \enspace. 
		\]
		
		\noindent\underline{Step 1: reference matrix $M^{\pi}$.} 
		For reference, we consider the same matrix as in Subsection~\ref{sec:proof_coro_minimax}, taking $\epsilon=0$. For completeness, we recall this construction here. 
		
		We assumed for simplicity that $K$ is a multiple of $8$, and denote $d=K/2$.  
		Consider the $K\times K$ antisymmetric matrix $M$ defined by
		\begin{equation}\label{eq:def_M}
			M
			=
			\begin{pmatrix}
				\mathbf{0} & -D \\
				D^{\top} & \Lambda
			\end{pmatrix}~, 
		\end{equation}
		where $D$ and $\Lambda$ are two $d\times d$ matrices specified below. 
		
		The matrix $D$ is the $d\times d$ matrix with nonnegative entries such that the first row is $D_{1,\cdot}=(0,\dots,0)\in \mathbb{R}^d$, and for any $i=2,\dots,d$, 
		\[
		D_{i,\cdot} = (\underbrace{\Delta_i, \dots, \Delta_i}_{s_i \text{ times}}, 0, \dots, 0) \in \mathbb{R}^d~, 
		\]
		which is possible since $s_i\leqslant d$ for $i=2,\dots,d$. 
		
		To construct $\Lambda$, recall that $d$ is assumed to be a multiple of $4$ and that we assumed $s_i\in\{1,\dots,d/4\}$ for all $i\in\{d+1,\dots,K\}$. Define $\Lambda$ as the following block matrix:
		\[ \Lambda =
		\begin{pmatrix}
			0 & -\Lambda^{(0)} & 0 & \Lambda^{(3)} \\
			\Lambda^{(0)} & 0 & -\Lambda^{(1)} & 0 \\
			0 & \Lambda^{(1)} & 0 & -\Lambda^{(2)} \\
			-\Lambda^{(3)} & 0 & \Lambda^{(2)} & 0
		\end{pmatrix},
		\]
		where, for $l\in\{0,\dots,3\}$, the sub-matrix $\Lambda^{(l)}$ is the $d/4\times d/4$ matrix such that, for $i\in \{1,\dots,d/4\}$, the $i$-th row of $\Lambda^{(l)}$ is
		\begin{equation}\label{def:Lambda}
			\Lambda^{(l)}_{i,\cdot} = (\underbrace{\Delta_j, \dots, \Delta_j}_{s_j \text{ times}}, 0, \dots, 0) \in \mathbb{R}^{d/4}
			\quad \text{with } j=d+\frac{d}{4}l+i ~. 
		\end{equation}
		
		Overall, $M$ is clearly antisymmetric by construction. Moreover, for each arm $i=2,\dots,K$, the $i$-th row of $M$ contains exactly $s_i$ negative entries of magnitude $\Delta_i$. The first row is equal to $0$, so that $M \in \mathbb{D}_{\text{wcw}}$ (see\eqref{def:general_cw}) and the (weak) Condorcet winner is $i^* = 1$. Finally, since in each row the negative entries are constant, we have $s_{M}^*=(s_1,\dots,s_K)$ and $M\in\mathbb{D}^{(3)}(\underline{\Delta}, \underline{s})$. 
		
		We use the same permutation construction as in the proof of Theorem~\ref{thm:LB_hp_instance}. We recall this construction here. Let $\Pi$ be the set of permutations, where $\pi=(\pi_1,\dots,\pi_d)\in \Pi$ if $\pi_i$ is a permutation of $\{1,\dots,d\}$ for any $i\in[d]$.   
		
		From any $\pi\in \Pi$, define $M^{\pi}$ as the matrix obtained by permuting the $d$ first columns of $D$ according to $\pi$ in the following way: 
		\begin{equation}\label{eq:def_M_pi}
			M^{\pi}
			=
			\begin{pmatrix}
				\mathbf{0} & -D^{\pi} \\
				(D^{\pi})^{\top} & \Lambda
			\end{pmatrix}~, 
		\end{equation}
		where, for any $(i,j)\in [d]^2$, 
		\[
		D^{\pi}_{i,j} = D_{i,\pi_i(j)} ~. 
		\] 
		
		By construction, for any $\pi\in \Pi$, we still have $M^{\pi}\in\mathbb{D}^{(3)}(\underline{\Delta}, \underline{s})$.

		\noindent\underline{Alternative instance $M^{(\pi,k)}$.}
		Fix a suboptimal arm $k \in \{2,\dots,d\}$.  
		
		Construct the gap matrix $M^{(\pi,k)}$ by setting to zero all entries in the $k$-th row and the $k$-th column of $M^{\pi}$. By construction, rows $1$ and $k$ of $M^{(\pi,k)}$ only scontain  zero entries, so that $M^{(\pi,k)}$ does not admit a unique Condorcet winner. We denote by $\mathbb{P}_{A}^{(\pi,k)}$ the distribution of the observations induced by the interaction between algorithm $A$ and the environment with gap matrix $M^{(\pi,k)}$.
		
		\noindent\underline{Step 2: information-theoretic arguments.}
		
		Consider the recommendation rule $\hat{i}$ and the budget $T$ of algorithm $A$. By definition of $\epsilon_T$, $A$ can be considered as  $\epsilon_T$-correct over $\mathbb{D}^{(3)}(\underline{\Delta}, \underline{s})$. 
		
		Denote again by $\hat{i}$ the recommendation of algorithm $A$. Observe that we always have $\{\hat{i} \neq 1\}$ or $\{\hat{i} \neq k\}$. Therefore,
		\[
		\frac{1}{|\Pi|} \sum_{\pi\in \Pi}\mathbb{P}_{A}^{(\pi,k)}(\hat{i} \neq 1) \geqslant \frac{1}{2}
		\quad\text{or}\quad
		\frac{1}{|\Pi|} \sum_{\pi\in \Pi}\mathbb{P}_{A}^{(\pi,k)}(\hat{i} \neq k) \geqslant \frac{1}{2}.
		\]
		Without loss of generality, we assume that 
		\begin{equation}\label{eq:H2_minimax_tv1}
			\frac{1}{|\Pi|} \sum_{\pi\in \Pi}\mathbb{P}_{A}^{(\pi,k)}(\hat{i} \neq 1) \geqslant \frac{1}{2} ~.
		\end{equation}
		
		In the other case, we should consider as reference matrix the matrix $\tilde{M}$ obtained by exchanging rows $1$ and $k$ of $M$ so that $i^*(\tilde M)=k$. Observe that we still have $\tilde{M}\in \mathbb{D}^{(3)}(\underline{\Delta}, \underline{s})$. The rest of the proof is the same up to minor modifications.
		
		Consider the event 
		\begin{equation*}
			B \coloneqq \{\hat{i}\ne 1\}~. 
		\end{equation*}
		
		For any $\pi$, we have $M^{\pi}\in\mathbb{D}^{(3)}(\underline{\Delta}, \underline{s})$ and $i^*(M^{\pi})=1$. We can then use the fact that $A$ is $\epsilon_T$-correct over this class, by definition of $\epsilon_T$ (\eqref{def:epsilon_T_D2}), to get
		\begin{equation}\label{eq:H2_minimax_tv3}
			\mathbb{P}_{A}^{\pi}(B)
			=\mathbb{P}_{A}^{\pi}(\hat{i}\ne 1)
			\leqslant \epsilon_T \enspace. 
		\end{equation}
		
		Now we use Lemma~\ref{lem:fano-type}, a Fano-type inequality presented as Proposition~4 in \cite{gerchinovitz2020fano}, to obtain
		\[
		\mathbb{P}_{A}^{(\pi,k)}(B)
		\leqslant \frac{\mathrm{KL}\bigl(\mathbb{P}_{{A}}^{(\pi,k)}, \mathbb{P}_{{A}}^{\pi}\bigr)+\log(2)}
		{-\log\bigl(\mathbb{P}_{{A}}^{\pi}(B)\bigr)}~.
		\]
		Averaging over $\pi\in \Pi$ and using \eqref{eq:H2_minimax_tv1} and \eqref{eq:H2_minimax_tv3}, we get
		\begin{align*}
			\frac{1}{2}
			&\leq \frac{1}{|\Pi|}\sum_{\pi\in \Pi}\mathbb{P}_{{A}}^{(\pi,k)}(B)\leqslant \frac{\frac{1}{|\Pi|}\sum_{\pi\in \Pi}\mathrm{KL}\bigl(\mathbb{P}_{{A}}^{(\pi,k)}, \mathbb{P}_{{A}}^{\pi}\bigr)+\log(2)}
			{\log\bigl(1/\epsilon_T\bigr)} ~.
		\end{align*}
		
		Observe that, for any $\pi\in \Pi$, $M^{\pi}$ and $M^{(\pi,k)}$ differ only in row $k$, and that that $M^{(\pi,k)}$ does not depend on the permutation $\pi_k$. Denote as $\pi^{(-k)}$ the vector of permutations obtained from $\pi=(\pi_1,\dots,\pi_d)\in \Pi$ by removing the $k$-th component-- $\pi^{(-k)}=(\pi_1,\dots,\pi_{k-1},\pi_{k+1},\dots,\pi_d)$. Denote as $\Pi^{(-k)}$ as the family $\{\pi^{(-k)}\}_{\pi\in \Pi}$.  Observe that $M^{(\pi,k)}$ does not depend on $\pi_k$. For a fixed $\pi^{(-k)}\in \Pi^{(-k)}$, we have $M^{(\pi,k)}=M^{(\pi^{(-k)},k)}$. Then, we write the inequality above as 
		
		\begin{align}
			\frac{1}{2} \leqslant \frac{\frac{1}{|\Pi^{(-k)}|}\sum_{\pi^{(-k)}\in \Pi^{(-k)}}\frac{1}{|\mathfrak{S}_d|}\sum_{\pi'_k \in \mathfrak{S}_d}\mathrm{KL}\bigl(\mathbb{P}_{{A}}^{(\pi^{(-k)},k)}, \mathbb{P}_{{A}}^{\pi'}\bigr)+\log(2)}
			{\log\bigl(1/\epsilon_T\bigr)} \label{eq:H2_minimax_fano}~,
		\end{align}
		where inside the sum, we denote as $\pi'$ for the permutation obtained from $\pi^{(k)}$ and $\pi'_k$ by $\pi'=(\pi_1,\dots,\pi_{k-1},\pi'_k,\pi_{k+1},\dots,\pi_d)$.
		
		\noindent\underline{Step 3: computing the KL divergence.}  
		We now bound, for a fixed $\pi^{(-k)}\in \Pi^{(-k)}$,
		\[
		\frac{1}{|\mathfrak{S}_d|}\sum_{\pi'_k \in \mathfrak{S}_d}\mathrm{KL}\bigl(\mathbb{P}_{{A}}^{(\pi^{(-k)},k)}, \mathbb{P}_{{A}}^{\pi'}\bigr).
		\]
		
		This computation has already been carried out in the proof of Theorem~\ref{thm:LB_hp_instance}, see Equation~\eqref{eq:H2_KL2}, and one has 
		\begin{equation}\label{eq:H2_minimax_KL}
			\frac{1}{|\mathfrak{S}_d|}\sum_{\pi_k \in \mathfrak{S}_d}
			\mathrm{KL}\bigl(\mathbb{P}_{A}^{(\pi^{(-k)},k)}, \mathbb{P}_{A}^{\pi'}\bigr)
			\leqslant 8\log\!\left(\frac{4}{3}\right)\frac{\|M_{k,\cdot}\|^2}{d}\,T \enspace,
		\end{equation}
		where, by construction of $M$, we have $\|M_{k,\cdot}\|^2 = s_k\Delta_k^2$.
		
		Averaging over $\Pi^{(-k)}$~\eqref{eq:H2_minimax_KL}, and combining Equation~\eqref{eq:H2_minimax_fano}, we get
		\[
		T \geqslant \frac{1}{16\log(4/3)}\,\frac{d}{s_k \Delta_k^2}\,\log\!\left(\frac{1}{4\epsilon_T}\right) \enspace,
		\]
		which holds for any $k\in \{2,\dots,K\}$. This is exactly the desired bound~\eqref{eq:LB_FB_H2}.
	\end{proof}
	
	\begin{proof}[Proof of Lemma~\ref{lem:LB_FB_H0}]
		
		Assume additionally that there exist $\mu>0$ and $s\in[d]$ such that, for every $i\in\{2,\dots,d\}$, $\Delta_i=\mu$ and $s_i=s$.
		We want to prove Bound~\eqref{eq:LB_FB_H0}, that is
		\[
		\epsilon_T \geqslant \frac{1}{2}-\sqrt{128\log(4/3) \frac{s\mu^2}{K^2}T}\enspace.
		\]
		
		\noindent\underline{Step 1: reference and alternative instances.} 
		
		Consider $\bar M$ as the matrix defined by
		\begin{equation}\label{eq:def_bar_M}
			\bar M
			=
			\begin{pmatrix}
				\mathbf{0} & - \bar D \\
				\bar D^{\top} & \Lambda
			\end{pmatrix}~, 
		\end{equation}
		where $\Lambda$ is as in~\eqref{def:Lambda}, and $\bar D$ is the $d\times d$ matrix such that, for any $i=1,\dots,d$, 
		\[
		\bar D_{i,\cdot} = (\underbrace{\mu, \dots, \mu}_{s \text{ times}}, 0, \dots, 0) \in \mathbb{R}^d~.
		\]  
		
		Observe that the first $d$ rows of $\bar M$ are equal, and that $\bar{M}$ does not admit a Condorcet winner; in particular, $\bar{M}\not\in \mathbb{D}^{(3)}(\underline{\Delta}, \underline{s})$. 
		
		Again, for any $\pi\in \Pi$, define $\bar M^{\pi}$ as the matrix obtained by permuting the $d$ first rows of $\bar M$ according to $\pi_1,\dots,\pi_d$ as in~\eqref{eq:def_M_pi}. For this part of the proof, we denote by $\mathbb{P}_{A}^{\pi}$ the distribution of the observations induced by the interaction between algorithm $A$ and the environment with gap matrix $\bar M^{\pi}$.

		\noindent\underline{Construction of perturbed instances $\bar M^{(\pi,k)}$.}
		Consider any arm $k\in [d]$.  
		
		We construct $\bar M^{(\pi,k)}$ as the matrix obtained from $\bar M^{\pi}$ by setting to zero all entries in the $k$-th row and the $k$-th column. By construction, $\bar M^{(\pi,k)}\in\mathbb{D}^{(3)}(\underline{\Delta}, \underline{s})$ and $i^*(\bar M^{(\pi,k)})=k$. We denote by $\mathbb{P}_{A}^{(\pi,k)}$ the distribution of the observations induced by the interaction between algorithm $A$ and the environment with gap matrix $\bar M^{(\pi,k)}$.
		
		\noindent\underline{Step 2: bound on the total variation distance.}
		
		Consider the recommendation rule $\hat{i}$ and the budget $T$ of algorithm $A$, which is $\epsilon_T$-correct over $\mathbb{D}^{(3)}(\underline{\Delta}, \underline{s})$, by definition of the maximum error $\epsilon_T$. 
		
		Intuitively, under $\bar{M}$ there is no Condorcet winner among the first $d$ arms, so algorithm $A$ cannot systematically decide in favour of a specific subset of them, and it must make a large error on at least half of these arms. Indeed, it always holds that $\{\hat{i} \not\in [|1;d/2|]\}$ or $\{\hat{i} \not\in [|d/2+1;d|]\}$. Therefore,
		\[
		\frac{1}{|\Pi|} \sum_{\pi\in \Pi}\mathbb{P}_{A}^{\pi}(\hat i \not\in [1;d/2]) \geqslant \frac{1}{2}
		\quad\text{or}\quad
		\frac{1}{|\Pi|} \sum_{\pi\in \Pi}\mathbb{P}_{A}^{\pi}(\hat{i} \not\in [d/2+1;d]) \geqslant \frac{1}{2}.
		\]
		
		Without loss of generality\footnote{in the other case, we consider $k\in[d/2+1,d]$ and run the same arguments} we assume that 
		\begin{equation}\label{eq:H0_minimax_tv1}
			\frac{1}{|\Pi|} \sum_{\pi\in \Pi}\mathbb{P}_{A}^{\pi}(\hat{i} \not\in [1;d/2]) \geqslant \frac{1}{2} ~.
		\end{equation}
		
		For any $k\in[d/2]$, consider the event 
		\begin{equation*}
			B_k \coloneqq \bigl\{\hat{i}=k \bigr\}\cup\bigl\{N_{\{k,\cdot\}}>\tfrac{16T}{K}\bigr\}~, 
		\end{equation*}
		$N_{\{k,\cdot\}}$ denotes the number of duels involving arm $k$ and an adversary in $[d+1;K]$ between time $t=1$ and time $T$, that is, 
		\[
		N_{\{k,\cdot\}}
		= |\Bigl\{ t\in [T] \;:\; \exists j\in [d+1;K] \text{ with } \{I_t,J_t\}=\{k,j\}\Bigr\}| \enspace.
		\]
		
		Observe first that, for any fixed $\pi$, $\mathbb{P}_{A}^{\pi}$ does not depend on $k$, so that
		\[
		\frac{1}{d/2} \sum_{k=1}^{d/2}\mathbb{P}_{A}^{\pi}(\hat{i}=k )
		=\frac{1}{d/2}\,\mathbb{P}_{A}^{\pi}(\hat{i} \in [1;d/2])  ~.
		\]
		Averaging over $\pi\in\Pi$ and using \eqref{eq:H0_minimax_tv1}, we obtain
		\begin{equation}\label{eq:H0_minimax_tv2}
			\frac{1}{d/2} \sum_{k=1}^{d/2}\frac{1}{|\Pi|} \sum_{\pi\in \Pi}\mathbb{P}_{A}^{\pi}(\hat{i}=k)
			\leqslant \frac{1}{d} ~.
		\end{equation}
		
		Now, by definition, the family $\bigl(N_{\{k,\cdot\}}\bigr)_{k\in[d/2]}$ counts duels with pairwise disjoint sets of arms, for time-steps between $1$ and $T$, so that 
		\[
		\sum_{k=1}^{d/2} N_{\{k,\cdot\}} \leqslant T \enspace.
		\]
		From this upper bound, a simple counting argument implies that at most a fraction $1/4$ of the arms in $[d/2]$ can satisfy $N_{\{k,\cdot\}}>\frac{16T}{K} = \frac{4T}{d/2}$. Hence,
		\[
		\frac{1}{d/2} \sum_{k=1}^{d/2} \mathds{1}_{\{N_{\{k,\cdot\}}>\frac{16T}{K}\}} \leqslant \frac{1}{4} \enspace.
		\]
		Taking expectation with respect to the probability $\frac{1}{|\Pi|}\sum_{\pi\in \Pi} \mathbb{P}_A^{\pi}$, we obtain 
		\begin{equation}\label{eq:H0_minimax_tv3}
			\frac{1}{d/2} \sum_{k=1}^{d/2} \frac{1}{|\Pi|}\sum_{\pi\in \Pi} \mathbb{P}_A^{\pi}\Bigl(N_{\{k,\cdot\}}>\tfrac{16T}{K}\Bigr)
			\leqslant \frac{1}{4} \enspace.
		\end{equation}
		
		Combining \eqref{eq:H0_minimax_tv2} and \eqref{eq:H0_minimax_tv3}, and using $d\geqslant 4$, we get 
		\begin{equation}\label{eq:H0_minimax_tv4}
			\frac{1}{d/2} \sum_{k=1}^{d/2} \frac{1}{|\Pi|}\sum_{\pi\in \Pi} \mathbb{P}_A^{\pi}(B_k)
			\leqslant \frac{1}{4}+\frac{1}{d}
			\leqslant \frac{1}{2} \enspace.
		\end{equation}
		
		Now, consider $B_k$ under $\mathbb{P}_A^{(\pi,k)}$. Observe that $B_k^c\subset \{\hat{i} \ne k\}$. The environment $\bar M^{(\pi,k)}$ admits $k$ as Condorcet winner and belongs to $\mathbb{D}^{(3)}(\underline{\Delta}, \underline{s})$. Using that $A$ is $\epsilon_T$-correct over this class, by definition of $\epsilon_T$, we obtain, for any $\pi\in\Pi$, 
		\begin{equation}\label{eq:H0_minimax_tv5}
			\mathbb{P}_A^{(\pi,k)}(B_k^c)
			\leqslant  \mathbb{P}_A^{(\pi,k)}(\hat{i} \ne k)
			\leqslant \epsilon_T \,.
		\end{equation}
		
		The event $B_k$ has the additional property that it is measurable by an algorithm which runs $A$ but uses at most $\tfrac{16T}{K}$ duels involving arm $k$. Define the following procedure $\tilde{A}_k$. For $t = 1,\dots,T$, run algorithm $A$. At each time $t$, compute $N_{\{k,\cdot\}}(t)$ as the number of duels involving $k$ before time $t$, if $N_{\{k,\cdot\}}(t)>16T/K$, stop sampling, and return $\psi_k=1$. If the algorithm has not stopped by time $T$ --that is, if $N_{\{k,\cdot\}}=N_{\{k,\cdot\}}(T)\leqslant 16T/K$ -- compute $\hat{i}_T$ and output $\psi_k \coloneqq \mathds{1}_{\hat{i}_T=k}$. 
		
		By construction, the decision $\psi_k$ produced by $\tilde{A}_k$ satisfies $\psi_k = \mathds{1}_{B_k}$. Moreover, for any environment $\nu$, we have $\mathbb{P}_{\tilde A_k,\nu}(B_k)=\mathbb{P}_{A,\nu}(B_k)$. From these observations, we deduce
		\begin{align*}
			&\mathrm{TV}\left(\frac{1}{d/2}\sum_{k=1}^{d/2}\frac{1}{|\Pi|}\sum_{\pi\in \Pi} \mathbb{P}_{\tilde{A}_k}^{(\pi,k)},\,
			\frac{1}{d/2}\sum_{k=1}^{d/2}\frac{1}{|\Pi|}\sum_{\pi\in \Pi} \mathbb{P}_{\tilde{A}_k}^{\pi}\right) \\
			&\qquad \geqslant \frac{1}{d/2}\sum_{k=1}^{d/2}\frac{1}{|\Pi|}\sum_{\pi\in \Pi} \mathbb{P}_{\tilde{A}_k}^{(\pi,k)}(B_k)
			-\frac{1}{d/2}\sum_{k=1}^{d/2}\frac{1}{|\Pi|}\sum_{\pi\in \Pi} \mathbb{P}_{\tilde{A}_k}^{\pi}(B_k) \\
			&\qquad = \frac{1}{d/2}\sum_{k=1}^{d/2}\frac{1}{|\Pi|}\sum_{\pi\in \Pi} \mathbb{P}_{A}^{(\pi,k)}(B_k)
			-\frac{1}{d/2}\sum_{k=1}^{d/2}\frac{1}{|\Pi|}\sum_{\pi\in \Pi} \mathbb{P}_{A}^{\pi}(B_k)~. 
		\end{align*}
		Using \eqref{eq:H0_minimax_tv4} and \eqref{eq:H0_minimax_tv5}, we obtain
		\begin{align}
			\mathrm{TV}\left(\frac{1}{d/2}\sum_{k=1}^{d/2}\frac{1}{|\Pi|}\sum_{\pi\in \Pi} \mathbb{P}_{\tilde{A}}^{(\pi,k)},\,
			\frac{1}{d/2}\sum_{k=1}^{d/2}\frac{1}{|\Pi|}\sum_{\pi\in \Pi} \mathbb{P}_{\tilde{A}}^{\pi}\right)
			\geqslant 1-\epsilon_T-\frac{1}{2}
			= \frac{1}{2}-\epsilon_T  \label{eq:H0_minimax_TV6}
		\end{align}
		
		Finally, using the convexity of the total variation distance together with Pinsker's inequality and \eqref{eq:H0_minimax_TV6}, we get
		\begin{align}
			\frac{1}{2}-\epsilon_T
			&\leqslant \mathrm{TV}\left(\frac{1}{d/2}\sum_{k=1}^{d/2}\frac{1}{|\Pi|}\sum_{\pi\in \Pi} \mathbb{P}_{\tilde{A}_k}^{(\pi,k)},\,
			\frac{1}{d/2}\sum_{k=1}^{d/2}\frac{1}{|\Pi|}\sum_{\pi\in \Pi} \mathbb{P}_{\tilde{A}_k}^{\pi}\right) \nonumber\\
			&\leqslant \sqrt{\frac{1}{2}\frac{1}{d/2}\sum_{k=1}^{d/2}\frac{1}{|\Pi|}\sum_{\pi\in \Pi}
				\mathrm{KL}\bigl(\mathbb{P}_{\tilde{A}_k}^{(\pi,k)},\mathbb{P}_{\tilde{A}_k}^{\pi}\bigr)} \label{eq:H0_minimax_Pinsker}~.
		\end{align}
		
		\noindent\underline{Step 3: computing the KL divergence.}  
		
		An important property of procedure $\tilde A_k$ is that the budget spent on duels with arm $k$ is upper bounded by $16T/K$; precisely, for any environment $\nu$, 
		\begin{equation}\label{eq:H0_minimax_budget}
			\mathbb{E}_{\tilde A_k,\nu}\!\left[ \sum_{i=d/2}^d N_{\{k,i\}}\right]\leqslant \frac{16T}{K} \enspace.
		\end{equation}
		
		We now upper bound 
		\[
		\frac{1}{|\Pi|}\sum_{\pi\in \Pi}\mathrm{KL}\bigl(\mathbb{P}_{\tilde{A}_k}^{(\pi,k)},\mathbb{P}_{\tilde{A}_k}^{\pi}\bigr).
		\]
		As in previous proofs, we fix $\pi_1,\ldots, \pi_{k-1},\pi_{k+1},\ldots$ and we average over the $\pi_k$'s. Hence, for any $k\in[d/2]$, we get 
		\begin{align*}
			\frac{1}{|\mathfrak{S}_d|}\sum_{\pi_k \in \mathfrak{S}_d}
			\mathrm{KL}\bigl(\mathbb{P}_{\tilde{A}_k}^{(\pi,k)}, \mathbb{P}_{\tilde{A}_k}^{\pi}\bigr)
			&\leqslant 8\log\!\left(\frac{4}{3}\right)\,\frac{\|\bar M_{k,\cdot}\|^2}{d}\;
			\mathbb{E}_{\tilde A_k}^{(\pi,k)}\!\left[ \sum_{i=d/2}^d N_{\{k,i\}}\right] \\
			&\leqslant 8\log\!\left(\frac{4}{3}\right)\,\frac{s\mu^2}{d}\,\frac{16T}{K}
			\enspace,
		\end{align*}
		where we use $\|\bar M_{k,\cdot}\|^2=s\mu^2$ and Equation~\eqref{eq:H0_minimax_budget}. 
		
		Gathering Equation~\eqref{eq:H0_minimax_Pinsker} with the bound above, and rearranging (using $d=K/2$), we obtain 
		\[
		\epsilon_T \geqslant \frac{1}{2}-\sqrt{128\log(4/3) \frac{s\mu^2}{K^2}T}\enspace,
		\]
		which is exactly the desired bound \eqref{eq:LB_FB_H0}.
	\end{proof}
	
	\begin{proof}[Proof of Lemma~\ref{lem:LB_FB_H3}]
		
		Consider again the constant case where there exist $\mu>0$ and $s\in[d]$ such that, for every $i\in\{1,\dots,d\}$, $\Delta_i=\mu$ and $s_i=s$. We want to prove the following bound, equivalent to~\eqref{eq:LB_FB_H3}:
		\[
		T \geqslant \frac{1}{32\log(4/3)}\frac{K}{\mu^2}\log\!\left(\frac{1}{4\epsilon_T}\right)\,.
		\]
		
		\noindent\underline{Step 1.} 
		Again, assume that $\underline{\Delta}$ and $\underline{s}$ are constant. Take the matrix $\bar{M}$ defined in \eqref{eq:def_bar_M}.
		
		Fix for now $k\in\{1,\dots,d/2\}$. Consider $\bar M^{(k)}$, the matrix obtained from $\bar{M}$ by setting to zero the $k$-th row and the $k$-th column of $\bar M$. 
		
		Recall that $\bar{M}$ does not admit a Condorcet winner, while $\bar M^{(k)}\in \mathbb{D}^{(3)}(\underline{\Delta},\underline{s})$ with $i^*(\bar M^{(k)})=k$. We denote by $\mathbb{P}_{A}$ (resp.\ $\mathbb{P}^{(k)}_{A}$) the distribution of the observations induced by the interaction between algorithm $A$ and the environment with gap matrix $\bar M$ (resp.\ $\bar M^{(k)}$).
		
		\noindent\underline{Step 2: bound on the total variation distance.}
		
		Consider the event 
		\begin{equation*}
			B\coloneqq \bigl\{\hat{i}\in [d/2] \bigr\}~.
		\end{equation*}

		As in the proof of \eqref{eq:LB_FB_H0}, we can assume without loss of generality\footnote{Otherwise, choose $B = \{\hat{i}\in [d/2;d] \}$, and take $k$ in $[d/2;d]$ everywhere.} that $\mathbb{P}_{A}(\hat{i} \not\in [d/2]) \geqslant \frac{1}{2}$, so that 
		\begin{equation}\label{eq:H3_minimax_tv1}
			\mathbb{P}_{A}(B) \leqslant \mathbb{P}_{A}(\hat{i} \in [d/2]) \leqslant \frac{1}{2}  ~.
		\end{equation}
		
		Now, consider $B$ under $\mathbb{P}_A^{(k)}$. Observe that $B^c\subset \{\hat{i} \ne k\}$. By definition of the maximum error $\epsilon_T$, $A$ is $\epsilon_T$-correct over $\mathbb{D}^{(3)}$, so that 
		\begin{equation}\label{eq:H3_minimax_tv2}
			\mathbb{P}_A^{(k)}(B^c) \leqslant  \mathbb{P}_A^{(k)}(\hat{i} \ne k)\leqslant \epsilon_T \,.
		\end{equation}
		
		Now, by the Fano-type inequality from Lemma~\ref{lem:fano-type}, it holds that 
		\[
		\mathbb{P}_{A}(B^c) \leqslant \frac{\mathrm{KL}(\mathbb{P}_{{A}},\mathbb{P}^{(k)}_{{A}})+\log(2)}{-\log(\mathbb{P}^{(k)}_{{A}}(B^c))}~,
		\]
		and using \eqref{eq:H3_minimax_tv1} and \eqref{eq:H3_minimax_tv2}, we obtain
		\begin{align}
			\frac{1}{2}
			\leqslant \mathbb{P}_{{A}}(B^c)
			\leqslant \frac{\mathrm{KL}(\mathbb{P}_{A},\mathbb{P}^{(k)}_{{A}})+\log(2)}{-\log(\mathbb{P}^{(k)}_{A}(B^c))}
			\leqslant \frac{\mathrm{KL}(\mathbb{P}_{{A}},\mathbb{P}^{(k)}_{A})+\log(2)}{-\log(\epsilon_T)}~.
			\label{eq:H3_minimax_fano}
		\end{align}
		
		\noindent\underline{Step 3: computing the KL divergence.}  
		
		We now upper bound $\mathrm{KL}\left(\mathbb{P}_{{A}},\mathbb{P}_{{A}}^{(k)}\right)$.
		
		From the decomposition of the KL divergence and the definition of $\bar M^{(k)}$, we have 
		\begin{align}
			\mathrm{KL}\left(\mathbb{P}_{{A}}, \mathbb{P}_{{A}}^{(k)}\right) 
			&= \sum_{i=d/2}^{d} \mathbb{E}_{{A}}[N_{\{k,i\}}]\,
			\mathrm{kl}\Bigl(\tfrac{1}{2}+ \bar{M}_{k,i},\tfrac{1}{2}\Bigr) \nonumber \\
			&\leqslant \sum_{i=d/2}^{d} \mathbb{E}_{{A}}[N_{\{k,i\}}]\,
			8\log(4/3)\,\mu^2 \nonumber
			~, \label{eq:H3_minimax_KL}
		\end{align}
		where we use the fact that the row $\bar{M}_{k,\cdot}$ only takes values in $\{0,\mu\}$. 
		
		Combining \eqref{eq:H3_minimax_fano} with the bound above and rearranging, we obtain 
		\begin{align*}
			\frac{1}{16\log(4/3)}\,\frac{1}{\mu^2}\,\log\!\left(\frac{1}{4\epsilon_T}\right)
			&\leqslant \frac{1}{8\log(4/3)}\,\frac{1}{\mu^2}\,
			\mathrm{KL}\left(\mathbb{P}_{{A}}, \mathbb{P}_{{A}}^{(k)}\right) \\
			&\leqslant \sum_{i=d/2}^{d} \mathbb{E}_{{A}}[N_{\{k,i\}}]~. 
		\end{align*}
		Summing over $k\in[d/2]$, we obtain 
		\begin{align*}
			\frac{K}{32\log(4/3)}\,\frac{1}{\mu^2}\,\log\!\left(\frac{1}{4\epsilon_T}\right) 
			&\leqslant \sum_{k=1}^{d/2}\sum_{i=d/2}^{d} \mathbb{E}_{{A}}[N_{\{k,i\}}]
			\;\leqslant T~, 
		\end{align*}
		which is the desired bound \eqref{eq:LB_FB_H3}. 
	\end{proof}

	\section{Technical Results}\label{sec:tech}

	\subsection{Deterministic bounds}
	
	\begin{lemma}[Section 6.1 of \cite{audibert2010best}]\label{lem:tech1}
		Let $x_1, \dots, x_K$ denote a decreasing sequence of positive numbers. We have:
		\[
		\max_{k\in \{1, \dots, K\}} kx_k^2 \le \sum_{i=1}^{K} x_i^2 \le \log(4K)\max_{k\in \{1, \dots, K\}} kx_k^2.
		\]
	\end{lemma}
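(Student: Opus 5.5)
The plan is to prove the two inequalities separately. Both rest on the fact that the sequence is monotone. Write $S_k := \max_{j\le k}$-type comparisons as follows: since $x_1\ge x_2\ge\dots\ge x_K>0$, for every $k$ the first $k$ terms each dominate $x_k$.

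\emph{Lower bound.} Fix $k$. Using $x_i^2\ge x_k^2$ for $i\le k$ and dropping the remaining nonnegative terms, we get
\[
\sum_{i=1}^K x_i^2\ \ge\ \sum_{i=1}^k x_i^2\ \ge\ k x_k^2 .
\]
Taking the maximum over $k$ gives the left inequality. This step is immediate.

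\emph{Upper bound.} Set $M:=\max_{k} k x_k^2$. By definition $x_i^2\le M/i$ for every $i$, so
\[
\sum_{i=1}^K x_i^2\ \le\ M\sum_{i=1}^K \frac1i\ \le\ M\,(1+\log K).
\]
It then remains to compare $1+\log K$ with $\log(4K)=\log 4+\log K$. Since $\log 4\approx 1.386>1$, we have $1+\log K\le \log(4K)$ for all $K\ge1$, which gives the claim.

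The only point needing care is the harmonic-sum bound $\sum_{i\le K}1/i\le 1+\log K$. This follows from comparing $\sum_{i=2}^K 1/i$ with $\int_1^K dt/t$, and is standard. I do not expect a genuine obstacle: the constant $\log(4K)$ leaves slack, so the cruder $H_K\le 1+\log K$ suffices.

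Monotonicity is used only in the lower bound. For the upper bound, the pointwise estimate $x_i^2\le M/i$ holds for any positive sequence, since it uses only $M\ge i x_i^2$.
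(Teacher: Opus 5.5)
Your proof is correct, and it is the standard argument from the reference cited in the statement. The lower bound is $\sum_{i\le k}x_i^2\ge kx_k^2$ by monotonicity; the upper bound is the termwise estimate $x_i^2\le M/i$ combined with $\sum_{i\le K}1/i\le 1+\log K\le\log(4K)$. The paper gives no proof of its own and only cites that reference. The only blemish is the stray, never-used phrase ``$S_k := \max_{j\le k}$-type comparisons'' in your first paragraph, which you should delete.
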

	
	\begin{lemma}\label{lem:T}
		Let $x_1, \dots x_n$ denote a sequence of positive numbers such that $x_1 \le \dots \le x_n$.  
		Then we have for any $p \in (0,1)$
		\[
		\frac{pn}{x_{\left(\ceil*{pn} \right)}} \le \sum_{i=1}^{n}\frac{1}{x_i}~.
		\]
	\end{lemma}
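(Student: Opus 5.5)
The plan is a direct monotonicity argument. Set $m \coloneqq \ceil*{pn}$. First I would check that this index is admissible. Since $p\in(0,1)$ and $n\ge 1$, we have $pn\in(0,n)$, hence $1\le m\le n$. So $x_{(m)}=x_m$ is a well-defined element of the sequence. Here I read $x_{(\ceil{pn})}$ as $x_{\ceil{pn}}$, because the sequence is already sorted in nondecreasing order.

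Next I would use the ordering. For every $i\le m$ we have $0<x_i\le x_m$, hence $1/x_i\ge 1/x_m$. Discarding the nonnegative terms with $i>m$ (all $x_i$ are positive) then gives
\[
\sum_{i=1}^{n}\frac{1}{x_i}\;\ge\;\sum_{i=1}^{m}\frac{1}{x_i}\;\ge\;\frac{m}{x_m}\;\ge\;\frac{pn}{x_m},
\]
where the last inequality is just $m=\ceil*{pn}\ge pn$ together with $x_m>0$. This is exactly the claim.

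There is no real obstacle here. The only point needing care is the boundary check $1\le \ceil*{pn}\le n$, which is where the assumption $p\in(0,1)$ is used. It is also worth noting, for how the lemma is applied earlier, that the same argument works verbatim for a nondecreasing sequence of squared gaps or of the quantities $\Gamma_i$.
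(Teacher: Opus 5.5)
Your proof is correct and is essentially the paper's argument: both bound $pn \le \lceil pn\rceil$, use the monotonicity $1/x_i \ge 1/x_{\lceil pn\rceil}$ for $i \le \lceil pn\rceil$, and then drop the remaining nonnegative terms. Your explicit check that $1 \le \lceil pn\rceil \le n$ (from $p\in(0,1)$) is a small addition that the paper leaves implicit.
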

	\begin{proof}\label{lem:card}
		We have	$\frac{1}{x_n} \le \dots \le \frac{1}{x_1}$. Therefore
		\begin{align*}
			\frac{pn}{x_{\ceil*{pn}}} &\le \frac{\ceil*{pn}}{x_{\ceil*{pn}}} \le \sum_{i=1}^{\ceil*{pn}} \frac{1}{x_i} \le \sum_{i=1}^{n} \frac{1}{x_i}~.
		\end{align*}
	\end{proof}

	\begin{lemma}\label{lem:sum_tech}
		Let $\mathfrak{S}_d$ denote the set of permutations of $\{1, \dots, d\}$. Let $a_1, \dots, a_d$ and $b_1, \dots, b_d$ be two sequences of numbers. We have
		\[
		\frac{1}{d!}\sum_{\sigma \in \mathfrak{S}_d} \sum_{i=1}^{d} a_i b_{\sigma(i)} = \frac{1}{d}\left(\sum_{i=1}^d a_i\right)\left(\sum_{i=1}^d b_i\right)~.
		\]
	\end{lemma}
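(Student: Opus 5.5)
The plan is to expand the double sum and exchange the order of summation so that everything reduces to counting permutations. By linearity, I will write
\[
\sum_{\sigma \in \mathfrak{S}_d} \sum_{i=1}^{d} a_i b_{\sigma(i)} = \sum_{i=1}^d a_i \sum_{\sigma\in\mathfrak{S}_d} b_{\sigma(i)} = \sum_{i=1}^d a_i \sum_{j=1}^d b_j \,\bigl|\{\sigma\in\mathfrak{S}_d:\ \sigma(i)=j\}\bigr|.
\]

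The key combinatorial fact is that, for fixed $i,j\in[d]$, the number of permutations with $\sigma(i)=j$ is $(d-1)!$. Indeed, once $\sigma(i)=j$ is fixed, such permutations correspond bijectively to bijections from $[d]\setminus\{i\}$ onto $[d]\setminus\{j\}$. Substituting this count, the inner sum no longer depends on $i$ and equals $(d-1)!\sum_{j} b_j$.

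Multiplying by $\frac{1}{d!}$ and using $(d-1)!/d! = 1/d$ then gives the claimed identity. There is no real obstacle. The only point requiring care is the counting step, and it is the standard fact that $\sigma(i)$ is uniformly distributed on $[d]$ when $\sigma$ is uniform on $\mathfrak{S}_d$.
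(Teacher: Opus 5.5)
Your proof is correct and follows the paper's argument: exchange the order of summation and use that $\sum_{\sigma\in\mathfrak{S}_d} b_{\sigma(i)} = (d-1)!\sum_{j} b_j$, since each $j$ satisfies $\sigma(i)=j$ for exactly $(d-1)!$ permutations. You also justify that count via the bijection with bijections $[d]\setminus\{i\}\to[d]\setminus\{j\}$, which the paper leaves implicit.
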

	\begin{proof}
		The result is just a consequence of summation manipulation. We have
		\begin{align*}
			\sum_{\sigma}\sum_{i=1}^d a_i b_{\sigma(i)} &= \sum_{i=1}^d a_i\sum_{\sigma} b_{\sigma(i)}
			= \sum_{i=1}^d a_i\frac{d!}{d}\sum_{i=1}^d b_i
			= (d-1)! \left(\sum_{i=1}^d a_i\right)\left(\sum_{i=1}^d b_i\right)~. 
		\end{align*}
	\end{proof}
	
	\noindent Below we present a useful Fano-type inequality presented as Proposition 4 in \cite{gerchinovitz2020fano}
	\begin{lemma}\label{lem:fano-type}
		Let $\mathbb{P}$ and $\mathbb{Q}$ be two probability distributions, and let $A$ be an event such that $\mathbb{Q}(A) \in (0, 1)$. We have
		\[
		\mathbb{P}(A) \le \frac{\KL(\mathbb{P}, \mathbb{Q})+\log(2)}{-\log(\mathbb{Q}(A))}~.
		\]
		More generally, for all probability pairs $\mathbb{P}_i, \mathbb{Q}_i$ and all events $A_i$, where $i \in \{1, \dots, N\}$, with $0 < \frac{1}{N} \sum_{i=1}^N \mathbb{Q}_i(A_i) < 1$, we have
		\[
		\frac{1}{N}\sum_{i=1}^N \mathbb{P}_i(A_i) \le \frac{\frac{1}{N}\sum_{i=1}^N \KL\left(\mathbb{P}_i, \mathbb{Q}_i\right)+\log(2)}{-\log\left(\frac{1}{N}\sum_{i=1}^N \mathbb{Q}_i(A_i)\right)}~.
		\] 
	\end{lemma}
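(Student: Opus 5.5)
The plan is to reduce everything to a scalar inequality on the binary relative entropy $\kl(p,q) = p\log\frac{p}{q} + (1-p)\log\frac{1-p}{1-q}$, using two standard facts: the data-processing inequality and joint convexity of $\kl$.

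\textbf{First step: reduction to the binary case.} I first apply data processing to the measurable map $\omega \mapsto \mathds{1}_A(\omega)$. The pushforwards of $\mathbb{P}$ and $\mathbb{Q}$ are Bernoulli with parameters $p := \mathbb{P}(A)$ and $q := \mathbb{Q}(A)$. Hence
\[
\KL(\mathbb{P},\mathbb{Q}) \ge \kl(p,q).
\]
If $\KL(\mathbb{P},\mathbb{Q}) = +\infty$ the claim is trivial, so I may assume it is finite. The hypothesis $q\in(0,1)$ makes all the logarithms below finite.

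\textbf{Second step: the scalar inequality.} Expanding $\kl$, I write
\[
\kl(p,q) = p\log\frac1q + (1-p)\log\frac{1}{1-q} - h(p),
\]
where $h(p) = -p\log p - (1-p)\log(1-p) \le \log 2$ is the binary entropy. The term $(1-p)\log\frac{1}{1-q}$ is nonnegative, which gives
\[
\kl(p,q) \ge p\log\frac1q - \log 2 .
\]
Combining with the first step yields $p\,(-\log q) \le \KL(\mathbb{P},\mathbb{Q}) + \log 2$. Since $-\log q > 0$, I divide through and obtain the first claim.

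\textbf{Third step: the averaged version.} Set $p_i = \mathbb{P}_i(A_i)$, $q_i = \mathbb{Q}_i(A_i)$, $\bar p = \frac1N\sum_i p_i$ and $\bar q = \frac1N\sum_i q_i$. Data processing applied to each pair gives $\KL(\mathbb{P}_i,\mathbb{Q}_i) \ge \kl(p_i,q_i)$. Joint convexity of $\kl$, which is the log-sum inequality, then gives $\frac1N\sum_i \kl(p_i,q_i) \ge \kl(\bar p,\bar q)$.

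Some individual $q_i$ may equal $0$ or $1$; I handle this with the conventions $0\log\frac{0}{0}=0$ and $\kl=+\infty$ where appropriate, and those cases only strengthen the inequality. Applying the scalar bound of the second step to $(\bar p,\bar q)$, which is legitimate since $\bar q\in(0,1)$, yields the averaged claim.

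Alternatively, the averaged version follows from the single version applied to the mixtures $\frac1N\sum_i \delta_i\otimes\mathbb{P}_i$ and $\frac1N\sum_i \delta_i\otimes\mathbb{Q}_i$ with the event $\{(i,\omega):\omega\in A_i\}$. This works because the KL divergence of these mixtures is exactly the average $\frac1N\sum_i \KL(\mathbb{P}_i,\mathbb{Q}_i)$.

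\textbf{Main obstacle.} There is no deep difficulty here. The only care needed is in the degenerate cases (infinite KL, or $q_i\in\{0,1\}$ for some $i$) and in applying convexity correctly. I would use the mixture formulation to sidestep the degenerate cases cleanly.
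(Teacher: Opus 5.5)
Your argument is correct. The paper gives no proof of its own and imports the statement as Proposition~4 of \cite{gerchinovitz2020fano}, whose proof follows your route: data processing to reduce to the Bernoulli $\kl$, joint convexity of $\kl$ (equivalently, your mixture construction) for the averaged version, and the scalar bound $\kl(p,q)\ge p\log(1/q)-\log 2$.
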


	\begin{lemma}\label{lem:pure_tech}
		For any $x>0$, we have
		\[
		0 < \frac{\frac{1}{2}-e^{-x}}{\log(1-e^{-x})-\log(e^{-x})}< \frac{1}{2x}\, .
		\]
	\end{lemma}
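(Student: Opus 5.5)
Let $y=e^{-x}\in(0,1)$, so the denominator is $\log(1-y)-\log y=\log\frac{1-y}{y}$ and the quantity is $g(y):=\frac{\frac12-y}{\log\frac{1-y}{y}}$. The plan is to treat the two inequalities separately, working entirely in the variable $y$ and translating back via $x=\log(1/y)$.

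\textbf{Positivity.} The numerator $\frac12-y$ and the denominator $\log\frac{1-y}{y}$ have the same sign: both are positive for $y<\frac12$ and both are negative for $y>\frac12$. At $y=\frac12$ (that is, $x=\log 2$) both vanish. Here I would read the expression as its limit, namely $\frac{1}{4}$ by L'Hôpital, since the derivative of $\log\frac{1-y}{y}$ at $\frac12$ is $-4$. So positivity holds, with that single removable point handled by continuity. I would remark that in the paper it is only used for $p_k=y\le 1/18$, so the case $y<\frac12$ is what matters.

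\textbf{Upper bound.} The claim is $g(y)<\frac{1}{2\log(1/y)}$.

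\emph{Case $y<\frac12$.} Both sides are positive, so the claim is equivalent to
\[
2\Bigl(\tfrac12-y\Bigr)\log\tfrac1y<\log\tfrac{1-y}{y},
\qquad\text{i.e.}\qquad
(1-2y)\log\tfrac1y<\log\tfrac1y+\log(1-y).
\]
This rearranges to $-\log(1-y)<2y\log(1/y)$. Using $-\log(1-y)\le \frac{y}{1-y}$, it suffices to show $\frac{1}{1-y}<2\log(1/y)$. This holds for small $y$ but fails near $\frac12$, so I would check the exact function $h(y)=2y\log(1/y)+\log(1-y)$ instead. It satisfies $h(0^+)=0$, $h'(y)=2\log(1/y)-2-\frac{1}{1-y}$, and $h(\frac12)=\log 2-\log 2=0$.

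This step is the main obstacle: the inequality may become tight or even fail somewhere in $(0,\frac12)$. Indeed $h$ has zeros at both endpoints, so positivity in between requires concavity. Computing $h''(y)=-\frac{2}{y}-\frac{1}{(1-y)^2}<0$ shows $h$ is strictly concave, hence $h>0$ on $(0,\frac12)$. That gives the strict inequality for $y\in(0,\frac12)$.

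\emph{Case $y\ge\frac12$.} Here $x\le\log 2$. The same algebra, now dividing by the negative denominator $\log\frac{1-y}{y}$, flips the required inequality to $h(y)<0$ on $(\frac12,1)$ (with the expression again positive). By concavity and $h(\frac12)=0$, this needs $h'(\frac12)\le0$. But $h'(\frac12)=2\log 2-4<0$, so $h<0$ on $(\frac12,1)$ and the bound holds there as well.

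At $y=\frac12$ the required inequality reads $\frac14<\frac{1}{2\log 2}$, which is true.

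Combining the cases proves the lemma for all $x>0$. The only delicate points are the removable singularity at $x=\log 2$ and the sign flip for $y>\frac12$, both of which are handled by the concavity of $h$.
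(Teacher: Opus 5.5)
Your proof is correct and follows the paper's argument essentially step for step. Both use the substitution $y=e^{-x}$, the sign argument with L'H\^opital at $y=\tfrac12$ for positivity, and the reduction of the upper bound to the sign of $2y\log(1/y)+\log(1-y)$ on either side of $\tfrac12$, settled by strict concavity, the zeros at $0^+$ and $\tfrac12$, and $h'(\tfrac12)=2\log 2-4<0$; your explicit check at $y=\tfrac12$ (namely $\tfrac14<\tfrac{1}{2\log 2}$) is a small point the paper leaves implicit.
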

	
	\begin{proof}
		Let $y=e^{-x}\in(0,1)$ and define
		\[
		h(y):=\log\!\Big(\frac{1-y}{y}\Big)=\log(1-y)-\log y, \qquad R(y):=\frac{\frac12-y}{h(y)} \quad (y\neq\tfrac12)~.
		\]
		
		\smallskip
		\noindent Let us start with a proof of the positivity of the middle expression.
		The function $h$ is strictly decreasing on $(0,1)$ and satisfies $h(\tfrac12)=0$, hence
		$\mathrm{sign}(h(y))=\mathrm{sign}(\tfrac12-y)$. Therefore $R(y)>0$ for all $y\neq \tfrac12$.
		At $y=\tfrac12$, both numerator and denominator vanish, by l'Hôpital's rule,
		\[
		\lim_{y\to 1/2} R(y) =\frac{-1}{h'(1/2)} =\frac{-1}{-\big(\frac{1}{1-y}+\frac{1}{y}\big)\big|_{y=1/2}} =\frac14~d.
		\]
		Thus the middle expression is well-defined by continuity and is strictly positive for all $x>0$.
		
		\smallskip
		\noindent Now let us sprove the stated upper bound. Since $x=\log(1/y)>0$, we need to show
		\[
		R(y)<\frac{1}{2\log(1/y)}~.
		\]
		If $y<\tfrac12$ (so $h(y)>0$) we need to prove that
		\[
		2\log(1/y)\Big(\tfrac12-y\Big) < h(y)~.
		\]
		which is equivalent to $2y\log(1/y)+\log(1-y) > 0$, define $g(y) \coloneq 2y\log(1/y)+\log(1-y)$, therefore we need to show that
		\[
		\forall y \in (0, 1/2),\quad g(y)>0~.
		\]
		
		\medskip
		\noindent If $y>\tfrac12$ (so $h(y)<0$), the same manipulation yields the equivalent condition $g(y)<0$.
		Hence it suffices to prove $g(y)>0$ on $(0,\tfrac12)$ and $g(y)<0$ on $(\tfrac12,1)$.
		
		\smallskip
		\noindent A direct computation shows
		\[
		g''(y)=-\frac{2}{y}-\frac{1}{(1-y)^2}<0\qquad(y\in(0,1)),
		\]
		so $g$ is strictly concave. Moreover,
		\[
		\lim_{y\downarrow 0} g(y)=0
		\quad\text{and}\quad
		g(1/2)=2\cdot\frac12\log 2+\log(1/2)=0.
		\]
		By strict concavity, this implies $g(y)>0$ for all $y\in(0,1/2)$.
		Finally,
		\[
		g'(y)=2\log(1/y)-2-\frac{1}{1-y},
		\qquad
		g'(1/2)=2\log 2-4<0.
		\]
		Since $g$ is concave, $g'$ is nonincreasing, so $g'(y)\le g'(1/2)<0$ for all $y\ge 1/2$.
		Thus $g$ is strictly decreasing on $[1/2,1)$, and hence $g(y)<0$ for all $y\in(1/2,1)$.
		
		\noindent Therefore $R(y)<1/(2\log(1/y))=1/(2x)$ for all $x>0$, concluding the proof.
	\end{proof}

	\begin{lemma}\label{lem:pure_tech3}
		Let $d$ be an integer greater than $1$. Let $M \in \mathbb{R}^{d\times d}$ such that $M$ is skew symmetric (i.e. $\forall i,j \in [d]: M_{i,j} = -M_{j,i}$). Then then number of lines of $M$ with at least $\ceil*{(d+1)/4}$ non-positive entries is at least $\ceil*{(d+1)/4}$.
	\end{lemma}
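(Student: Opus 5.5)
The plan is a pure double-counting argument. Write $m:=\lceil (d+1)/4\rceil$ and call a row \emph{good} if it has at least $m$ non-positive entries. Diagonal entries vanish by skew-symmetry, so they count as non-positive. For each off-diagonal pair $\{i,j\}$, $M_{i,j}=-M_{j,i}$ forces at least one of the two entries to be non-positive. The total number $P$ of non-positive entries therefore satisfies
\[
P \;\ge\; d+\binom{d}{2} \;=\; \frac{d(d+1)}{2}.
\]

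Next we bound $P$ from above in terms of the number $g$ of good rows. A good row has at most $d$ non-positive entries, and a bad row has at most $m-1$. Hence
\[
P \le g\,d + (d-g)(m-1).
\]
Now suppose for contradiction that $g\le m-1$. Since $d\ge m-1$, the right-hand side is increasing in $g$, so
\[
P\le (m-1)d+(d-m+1)(m-1) = (m-1)(2d-m+1).
\]
Combining the two bounds, it suffices to check that
\[
(m-1)(2d-m+1) < \frac{d(d+1)}{2}
\]
for every integer $d\ge 2$ with $m=\lceil (d+1)/4\rceil$.

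This numerical inequality is the only real step, and we expect it to be routine. With $m-1<(d+1)/4$ and $2d-m+1\le 2d$, we get $(m-1)(2d-m+1)<\tfrac{(d+1)}{4}\cdot 2d=\tfrac{d(d+1)}{2}$, which is exactly what is needed; the inequality is strict because $m-1<(d+1)/4$ holds strictly. This contradicts the lower bound on $P$, so $g\ge m$, which proves the lemma.

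As a sanity check on the edge cases: $d=2$ gives $m=1$, and every row contains its zero diagonal entry, so the claim is trivial. If one wants the count to exclude the diagonal, as in the use $\Delta^{(k)}_{\alpha,(\lceil |A_k|/4\rceil)}$, the same argument runs with $P\ge \binom d2$ and each row having at most $d-1$ entries. This yields $g\,(d-1)+(d-g)(m-1)\ge d(d-1)/2$, and we would verify the analogous inequality $(m-1)(2d-m)<d(d-1)/2$ for the appropriate $m$, which is the only place where some care with ceilings may be needed.
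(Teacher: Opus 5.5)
Your proof is correct and is the paper's argument: the same lower bound $d+\binom{d}{2}=d(d+1)/2$ on the number of non-positive entries (diagonal included), with the ``simple contradiction argument'' that the paper leaves implicit written out via $(m-1)(2d-m+1)\le 2d(m-1)<d(d+1)/2$. Your side remark is also right: the use for $E_k$ counts only off-diagonal entries, and that version holds with $m=\lceil d/4\rceil$, which gives exactly $|E_k|\ge\lceil |A_k|/4\rceil$. It fails with $m=\lceil (d+1)/4\rceil$, e.g.\ for $d=4$ with one arm losing to all the others and the remaining three forming a cycle.
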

	\begin{proof}
		For $i \in [d]$ define
		\[
		s_i := \abs{\{j \in [d]:~M_{i,j} \le 0\}}~.
		\] 
		Since the matrix $M$ is skew symmetric (i.e. $M_{i,j} = -M_{j,i}$ for any $i,j$), for  every unordered pair $\{i,j\}$ where $i\neq j$, at least one of the two quantities $M_{i,j}$ or $M_{j,i}$ is nonpositive. Therefore, the number of off-diagonal non-positive entries is at least ${d \choose 2} $. We conclude by taking into account the diagonal entries that 
		\[
		\sum_{i=1}^d s_i \ge {d \choose 2}+d = \frac{d(d+1)}{2}~.
		\]
		The conclusion follows by a simple contradiction argument.
	\end{proof}
	
	\begin{lemma}\label{lem:39}
		Let $n \ge 3$ and $M$ be an $n \times n$ skew-symmetric matrix, let $E$ denote the set of rows such that the number of non-positive entries is at least $\ceil{n/4}+1$, then the intersection of $E$ with any subset of $\{1,\dots, n\}$ of cardinality $n-\ceil{n/8}$ is non-empty.
	\end{lemma}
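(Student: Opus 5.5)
Here is the plan: count non-positive entries row by row, then use a pigeonhole argument against the size of the complement of the given subset. Let $s_i$ be the number of non-positive entries in row $i$, counting the diagonal zero. As in the proof of Lemma~\ref{lem:pure_tech3}, skew-symmetry gives
\[
\sum_{i=1}^n s_i \ge \binom{n}{2} + n = \frac{n(n+1)}{2}.
\]
Set $m := \lceil n/4\rceil + 1$, so that $E = \{i : s_i \ge m\}$. For any subset $S$ with $|S| = n - \lceil n/8\rceil$, the complement has size $\lceil n/8\rceil$. It therefore suffices to prove
\[
|E| > \lceil n/8\rceil,
\]
since then $E$ cannot fit inside the complement of $S$.

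To lower bound $|E|$, I split the sum. Rows outside $E$ have $s_i \le m-1 = \lceil n/4\rceil$, and rows in $E$ have $s_i \le n$. This gives
\[
\frac{n(n+1)}{2} \le |E|\, n + (n-|E|)\lceil n/4\rceil,
\]
hence
\[
|E| \ge \frac{n\bigl(\frac{n+1}{2} - \lceil n/4\rceil\bigr)}{n - \lceil n/4\rceil}.
\]
For large $n$ this is about $n/3$, which is well above $n/8$.

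The main obstacle is the small-$n$ bookkeeping with ceilings. For $n \ge 8$ I will use $\lceil n/4\rceil \le (n+3)/4$ and $\lceil n/8\rceil \le (n+7)/8$. The numerator is then at least $n(n-1)/4$ and the denominator at most $n$, so $|E| \ge (n-1)/4$. Comparing with $(n+7)/8$ requires $n > 9$ in this crude form, so I will sharpen it or check the remaining cases directly.

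The remaining range is $3 \le n \le 9$, where $\lceil n/8\rceil \le 2$. For $n \le 8$ we have $\lceil n/8\rceil = 1$, so it suffices to show $|E| \ge 2$. This follows because at most one row of a skew-symmetric matrix can have all off-diagonal entries positive; every other row has at least 2 non-positive entries (the diagonal plus one more). For $n \ge 5$ we have $m \ge 3$, so this crude argument is not enough. There I will use the displayed bound for $|E|$ directly: for $n = 5,\dots,9$ it gives roughly $15/3 = 5$, $21/4$, $28/5$, and so on, each exceeding 1 or 2 as required. Since $|E|$ is an integer, these values can be verified by hand.
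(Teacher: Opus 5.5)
Your argument is correct and is essentially the paper's double count. The paper assumes $S\cap E=\emptyset$ and compares the at least $(n-\lceil n/8\rceil)(n-\lceil n/4\rceil)$ positive entries in the rows of $S$ with the at most $\binom{n}{2}$ permitted by skew-symmetry. This reduces algebraically to $\frac{n(n+1)}{2} > \lceil n/8\rceil\, n + (n-\lceil n/8\rceil)\lceil n/4\rceil$, which is exactly the condition for your bound $|E|\ge \frac{n((n+1)/2-\lceil n/4\rceil)}{n-\lceil n/4\rceil}$ to exceed $\lceil n/8\rceil$.

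One arithmetic slip: your small-$n$ values $15/3,\ 21/4,\ 28/5$ omit the $-n\lceil n/4\rceil$ term in the numerator. The correct bounds for $n=5,\dots,9$ are $5/3,\ 9/4,\ 14/5,\ 10/3,\ 3$. These still give $|E|\ge 2,3,3,4,3>\lceil n/8\rceil$, so the conclusion stands.
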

	\begin{proof}
		Assume $n\ge3$ and set $a:=\lceil n/8\rceil$, $b:=\lceil n/4\rceil$.
		Suppose by contradiction that there exists $S\subseteq[n]$ with $|S|=n-a$ and $S\cap E=\varnothing$.
		Then each row $i\in S$ has at most $b$ non-positive entries, hence at least $n-b$ positive entries.
		Let $P:=|\{(i,j):M_{ij}>0\}|$ be the total number of positive entries. Summing over rows in $S$ gives
		\[
		P\ \ge\ |S|\,(n-b)=(n-a)(n-b).
		\]
		On the other hand, skew-symmetry implies that for each unordered pair $\{i,j\}$ with $i\neq j$,
		at most one of $M_{ij},M_{ji}$ is positive, hence
		\[
		P\ \le\ \binom{n}{2}=\frac{n(n-1)}{2}.
		\]
		Thus $(n-a)(n-b)\le \frac{n(n-1)}{2}$. But for $n\ge5$, using $\lceil x\rceil\le x+1$,
		\[
		n-a\ge \frac{7(n-1)}{8},\qquad n-b\ge \frac{3(n-1)}{4}
		\quad\Rightarrow\quad
		(n-a)(n-b)\ge \frac{21}{32}(n-1)^2>\frac{n(n-1)}{2},
		\]
		a contradiction; and the remaining cases $n=3,4$ are checked directly:
		$(n-a)(n-b)=4>3$ and $9>6$, respectively. Hence no such $S$ exists, i.e.\ every
		$S$ with $|S|=n-\lceil n/8\rceil$ intersects $E$.
	\end{proof}

	\begin{lemma}\label{lem:variations}
		Let $x>10^3$ and $y>8$. Then
		\[
		2\,\frac{\log^2(x)\,\log^2(y)}{\log\!\bigl(xy(\log y)^5\bigr)} \;\ge\; \log(xy).
		\]
	\end{lemma}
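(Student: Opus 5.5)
Since both sides are symmetric under nothing obvious, I would first move everything to one side and work in logarithmic variables. Set $a:=\log x> \log(10^3)\approx 6.9$ and $b:=\log y>\log 8\approx 2.08$. The denominator is $\log(xy(\log y)^5)=a+b+5\log b$, which is positive, so the claim is equivalent to
\[
2a^2b^2 \;\ge\; (a+b)(a+b+5\log b).
\]
The plan is to bound the right-hand side crudely using $\log b\le b$ (valid for all $b>0$), giving $(a+b)(a+b+5\log b)\le (a+b)(a+6b)$, and then to show $2a^2b^2\ge (a+b)(a+6b)=a^2+7ab+6b^2$.

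For the polynomial inequality I would split $2a^2b^2$ into three pieces and dominate each term separately using the lower bounds $a\ge 6.9$ and $b\ge 2.07$. First, $a^2\le \tfrac{1}{4}a^2b^2$ since $b^2\ge 4$. Second, $7ab\le a^2b^2/(ab)\cdot 7\le \tfrac{7}{14}a^2b^2=\tfrac12 a^2b^2$, because $ab\ge 6.9\cdot 2.07>14$. Third, $6b^2\le \tfrac{6}{a^2}a^2b^2\le \tfrac{6}{47}a^2b^2<\tfrac18 a^2b^2$ since $a^2>47$. Summing, the right-hand side is at most $(\tfrac14+\tfrac12+\tfrac18)a^2b^2<a^2b^2\le 2a^2b^2$, which closes the argument with room to spare.

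There is essentially no hard step; the only care needed is checking that the numeric constants ($\log 10^3>6.9$, $\log 8>2.07$) are used with the correct direction of monotonicity, and that the denominator is positive so that clearing it preserves the inequality. If one wanted a tighter constant one could avoid $\log b\le b$, but the crude bound already suffices by a factor of about two.
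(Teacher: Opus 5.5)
Your argument is correct and is essentially the paper's proof. Both pass to $a=\log x$, $b=\log y$ and use $\log b\le b$ to reduce the claim to $2a^2b^2\ge (a+b)(a+6b)$. The paper then factors this as the product of $ab\ge a+b$ and $2ab\ge a+6b$, while you expand and dominate $a^2$, $7ab$, $6b^2$ term by term.

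One arithmetic slip: $6/47\approx 0.128$ is not smaller than $1/8$. Since $\tfrac14+\tfrac12+\tfrac{6}{47}<1$ anyway, the conclusion is unaffected.
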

	
	\begin{proof}
		Since $x>10^3$ and $y>8$, we have $\log x>\log(10^3)>6$ and $\log y>\log 8>2$. Hence
		\[
		\log x\,\log y-(\log x+\log y) =(\log x-1)(\log y-1)-1 >0~,
		\]
		so
		\begin{equation}\label{eq:var_ab1}
			\log x\,\log y  \ge  \log(xy)~.
		\end{equation}
		Moreover,
		\[
		\log\!\bigl(xy(\log y)^5\bigr)=\log(xy)+5\log\log y.
		\]
		Since $y>8$ implies $\log y>2>1$, we have $\log\log y\le \log y$, and thus
		\[
		\log\!\bigl(xy(\log y)^5\bigr)\le \log(xy)+5\log y=\log x+6\log y.
		\]
		Therefore,
		\begin{align*}
			2\log x\,\log y-(\log x+6\log y)&=(2\log y-1)\log x-6\log y\\
			&\ge 6(2\log y-1)-6\log y\\
			&= 	6(\log y-1)>0~,
		\end{align*}
		so
		\begin{equation}\label{eq:var_ab2}
			2\log x\,\log y \;\ge\; \log\!\bigl(xy(\log y)^5\bigr).
		\end{equation}
		Multiplying \eqref{eq:var_ab1} and \eqref{eq:var_ab2} yields
		\[
		2\log^2(x)\,\log^2(y) \;\ge\; \log(xy)\,\log\!\bigl(xy(\log y)^5\bigr).
		\]
		Since $\log\!\bigl(xy(\log y)^5\bigr)>0$, dividing by it gives the claim.
	\end{proof}

	\begin{lemma}\label{lem:absorb}
		Let $K\ge 2$, $H\ge 4$, and $T\ge 8K\log_{8/7}(K)$. Define
		\[
		p_k \coloneqq \frac{1}{18}\wedge (\log T+K)\exp\!\Bigl(-c\,\frac{T}{\log^3(K)\log(T)\,H}\Bigr).
		\]
		Assume $T\ge c_0\,H\log^5(H)$ for some numerical constant $c_0$ large enough (depending only on $c$).
		Then there exists a numerical constant $c'>0$ (e.g.\ $c'=c/2$) such that
		\[
		p_k \le \exp\!\Bigl(-c'\,\frac{T}{\log^3(K)\log(T)\,H}\Bigr).
		\]
	\end{lemma}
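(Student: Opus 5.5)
The plan is a two-case split on whether the cap $1/18$ or the exponential term is active, followed by two lower bounds on
\[
X:=\frac{T}{\log^3(K)\log(T)\,H},
\]
one coming from each hypothesis on $T$. Set $c':=c/2$.

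\textbf{Case 1: $cX/2\le \log 18$.} Then $\exp(-c'X)\ge 1/18\ge p_k$, and there is nothing to prove.

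\textbf{Case 2: $cX/2>\log 18$.} Here I use the second branch of the minimum:
\[
p_k\le(\log T+K)\,e^{-cX}.
\]
It therefore suffices to show
\[
\log(\log T+K)\le cX/2 .
\]
Since $\log(\log T+K)\le \log 2+\max\{\log K,\ \log\log T\}$, this splits into two inequalities, $\log K\lesssim cX$ and $\log\log T\lesssim cX$, each needed with a small enough constant. The additive $\log 2$ is absorbed using the case assumption $cX/2>\log 18$, e.g.\ by proving each of the two inequalities with right-hand side $cX/8$.

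\textbf{The $\log\log T$ term.} I use that $T\mapsto T/\log T$ is increasing. Write
\[
cX=\frac{c\,T/\log T}{H\log^3 K}.
\]
At the smallest admissible value $T=c_0H\log^5H$, one has $\log T\le 2\log(c_0H)$, which gives
\[
X\gtrsim \frac{c_0\log^5 H}{\log(c_0 H)\log^3K}.
\]
For larger $T$, the left side $\log\log T$ grows far more slowly than $T/\log T$. So it is enough to verify the inequality at $T=\max\{c_0H\log^5H,\ 8K\log_{8/7}K\}$ and then invoke monotonicity of the ratio $\frac{T/\log T}{\log\log T}$. This reduces the $\log\log T$ term to the $\log K$ and $\log H$ bounds below.

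\textbf{The $\log K$ term (main obstacle).} This is the hard part, because no relation between $H$ and $K$ is assumed, and this is exactly where $T\ge 8K\log_{8/7}K$ must be used. I split on the size of $H$ relative to $K$.

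\emph{If $H\ge\sqrt K$.} Then $\log H\ge \tfrac12\log K$. The bound $T\ge c_0H\log^5H$, together with monotonicity of $T/\log T$, gives
\[
X\gtrsim \frac{c_0\log^4 H}{\log^3K}\ \ge\ \frac{c_0}{16}\log K .
\]
(Here I use $\log(c_0H\log^5H)\lesssim \log H$ for $H\ge4$ and $c_0$ a fixed constant.)

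\emph{If $H<\sqrt K$.} I use $T\ge 8K\log_{8/7}K$ instead. This gives $T/\log T\gtrsim K$, hence
\[
X\gtrsim \frac{K}{H\log^3K}\ \ge\ \frac{\sqrt K}{\log^3 K}.
\]
This dominates $\log K/c$ once $K$ exceeds a constant depending only on $c$. The finitely many small values of $K$ are covered through $H\ge4$ and the $c_0$-bound, i.e.\ by the previous subcase with constants adjusted.

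\textbf{Choice of $c_0$.} In both subcases, taking $c_0$ large as a function of $c$ yields $\log K\le cX/8$. Combining this with the $\log\log T$ bound and the case assumption gives $\log(\log T+K)\le cX/2$, hence
\[
p_k\le e^{-cX/2}=e^{-c'X},
\]
which is the claim.
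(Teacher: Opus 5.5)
Your proof is correct, but it takes a noticeably more laborious route than the paper's. The paper uses the hypothesis $T\ge 8K\log_{8/7}(K)$ only through two crude consequences, $\log T+K\le T$ and $K\le T$. The first gives $p_k\le (\log T+K)e^{-cX}\le \exp(-cX+\log T)$ directly, so the cap $1/18$ is never used and no case split is needed. The second gives $\log^3(K)\le \log^3(T)$, so the single sufficient condition $\log T\le \tfrac{c}{2}X$ reduces to $T/(\log T)^5\ge 2H/c$, which involves only $T$ and $H$. That inequality then follows from three facts: $T\ge c_0H\log^5(H)$, the monotonicity of $t\mapsto t/(\log t)^5$ on $[e^5,\infty)$, and $\log(c_0H\log^5 H)\le C_0\log H$ with $C_0=6+\log c_0/\log 4$, once $c_0$ is chosen so that $c_0/C_0^5\ge 2/c$. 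In other words, the ``main obstacle'' you identify --- that nothing relates $H$ to $K$ --- disappears once you bound $K$ by $T$ rather than comparing it with $H$. The same move absorbs your $\log\log T$ term, which is simply part of $\log(\log T+K)\le \log T$. In your write-up that term is only sketched, and when the maximum is attained at $8K\log_{8/7}K$ it implicitly needs the same $H\gtrless\sqrt K$ analysis. Your route does extract more from the budget hypothesis ($T/\log T\gtrsim K$ instead of merely $T\ge K$). The price is an extra small-$K$ patch in the subcase $H<\sqrt K$, with $c_0$ depending on $c$ through an auxiliary threshold $K_0(c)$. The net result is the same statement, reached by a longer and less explicit argument.
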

	
	\begin{proof}
		Let us introduce the following notation
		\[
		x \coloneqq \frac{T}{\log^3(K)\log(T)\,H}, \qquad A \coloneqq \log T+K~.
		\]
		Then $p_k\le A e^{-cx}$. Since $T\ge 8K\log_{8/7}(K)$ and $K\ge 2$, we have $A=\log T+K\le T$, so $\log A\le \log T$ and
		\[
		p_k \le \exp(-cx+\log T).
		\]
		Thus it suffices to prove $\log T\le \frac{c}{2}x$, i.e.
		\begin{equation}\label{eq:abs_goal}
			\frac{T}{(\log T)^2} \;\ge\; \frac{2H\log^3(K)}{c}.
		\end{equation}
		Since $T\ge 8K\log_{8/7}(K)>K$, we have $\log^3(K)\le (\log T)^3$. Therefore \eqref{eq:abs_goal} follows from
		\[
		\frac{T}{(\log T)^2} \;\ge\; \frac{2H}{c}(\log T)^3~,
		\]
		which is equivalent to 
		\[
		\frac{T}{(\log T)^5} \;\ge\; \frac{2H}{c}.
		\]
		Let $g(t)\coloneqq t/(\log t)^5$, which is increasing for $t\ge e^5$.
		Choose $c_0$ large enough so that $T\ge c_0H\log^5(H)\ge e^5$ for all $H\ge 4$.
		Then, with $T_0\coloneqq c_0H\log^5(H)$, we have $g(T)\ge g(T_0)$ and
		\[
		g(T_0)=\frac{c_0H\log^5(H)}{\bigl(\log(c_0H\log^5(H))\bigr)^5}.
		\]
		For $H\ge 4$, we have $\log\log H\le \log H$, so
		\begin{align*}
			\log(c_0H\log^5(H)) &= \log c_0+\log H+5\log\log H \le \log c_0+6\log H\\
			&\le \Bigl(6+\frac{\log c_0}{\log 4}\Bigr)\log H \eqqcolon C_0\log H~.
		\end{align*}
		Hence $g(T_0)\ge \frac{c_0}{C_0^5}H$. Taking $c_0$ large enough so that $\frac{c_0}{C_0^5}\ge \frac{2}{c}$
		yields $g(T)\ge g(T_0)\ge \frac{2H}{c}$, proving \eqref{eq:abs_goal}. Therefore
		$p_k\le \exp(-(c/2)x)$, i.e.\ the claim with $c'=c/2$.
	\end{proof}

	\subsection{Concentration inequalities}
	
	\noindent Below is Hoeffding concentration inequality.
	\begin{lemma}{\label{lem:H}}
		Let $X_1, \dots, X_n$ be independent random variables such that $a_i \le X_i \le b_i$ almost surely. Let $S_n = \sum_{i=1}^n X_i$. Then we have for all $t>0$:
		\[
		\mathbb{P}\left( S_n -\mathbb{E}[S_n] \le -t\right) \le \exp\left(-\frac{2t^2}{\sum_{i=1}^{n} (b_i-a_i)^2}\right).
		\]
	\end{lemma}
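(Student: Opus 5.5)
The statement is the one-sided Hoeffding inequality (Lemma~\ref{lem:H}). I will derive it with the Chernoff--Cramér method combined with Hoeffding's lemma.

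First, fix $\lambda>0$. Apply Markov's inequality to $\exp(-\lambda(S_n-\mathbb{E}[S_n]))$ to get
\[
\mathbb{P}(S_n-\mathbb{E}[S_n]\le -t)\le e^{-\lambda t}\,\mathbb{E}\big[e^{-\lambda(S_n-\mathbb{E} S_n)}\big]=e^{-\lambda t}\prod_{i=1}^n\mathbb{E}\big[e^{-\lambda(X_i-\mathbb{E}X_i)}\big].
\]
The factorization uses independence of the $X_i$.

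Second, bound each factor with Hoeffding's lemma. For a centered variable $Y\in[a-\mathbb{E}X,\,b-\mathbb{E}X]$, whose range has width $b-a$, this gives $\mathbb{E}[e^{sY}]\le e^{s^2(b-a)^2/8}$ for every real $s$.

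I will prove the lemma in the standard way, and this is the only step with real content. By convexity of $x\mapsto e^{sx}$ on $[\alpha,\beta]$, the variable $Y$ is dominated by the chord, which yields $\mathbb{E}e^{sY}\le \frac{\beta}{\beta-\alpha}e^{s\alpha}-\frac{\alpha}{\beta-\alpha}e^{s\beta}=e^{\phi(u)}$. Here $u=s(\beta-\alpha)$ and $\phi(u)=-pu+\log(1-p+pe^u)$ with $p=-\alpha/(\beta-\alpha)$. A direct check gives $\phi(0)=\phi'(0)=0$ and $\phi''(u)=q(1-q)\le 1/4$ for some $q\in[0,1]$. Taylor's theorem then gives $\phi(u)\le u^2/8$.

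Third, combine the two steps:
\[
\mathbb{P}(\cdot)\le \exp\Big(-\lambda t+\frac{\lambda^2}{8}\sum_i(b_i-a_i)^2\Big).
\]
Optimizing at $\lambda=4t/\sum_i(b_i-a_i)^2$ gives the stated exponent $-2t^2/\sum_i(b_i-a_i)^2$.

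The main obstacle is only the curvature bound $\phi''\le1/4$. Everything else is routine.
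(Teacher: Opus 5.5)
Your proof is correct and is the standard argument: Chernoff--Cram\'er bound, independence, Hoeffding's lemma via the chord bound and $\phi''\le 1/4$, then optimizing at $\lambda=4t/\sum_i(b_i-a_i)^2$ gives exactly $-2t^2/\sum_i(b_i-a_i)^2$. The paper gives no proof of this lemma and simply cites the classical Hoeffding inequality, which is what you reproved. The one implicit point worth stating is that $\mathbb{E}Y=0$ forces $\alpha\le 0\le\beta$, which gives $p\in[0,1]$ (so the logarithm is defined and $q\in[0,1]$); the degenerate case $a_i=b_i$ is trivial.
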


	Below we restate two results on the concentration of the sum of independent binary random variable from \cite{buldygin2013sub}. First, let us introduce some notation. Let $\xi$ denote a sub-Gaussian random variable, its sub-Gaussian standard is defined by:
	\[
	\tau(\xi) := \inf\left\lbrace a \ge 0: \mathbb{E}\left[\exp\left(\lambda \xi\right)\right] \le \exp\left( \frac{a^2\lambda^2}{2}\right), \lambda \in \mathbb{R}  \right\rbrace.
	\]
	\begin{lemma}[Theorem 2.1 in \cite{buldygin2013sub} ]\label{thm:tech1}
		Let $X$ denote a Bernoulli random variable with parameter $p \in [0,1]$. Then we have
		\[
		\tau^2\left(X-p\right) = \phi(p),
		\] 
		where $\phi(.)$ is the function defined by:
		\[
		\phi(p) =
		\begin{cases}
			0, & p \in \{0,1\}; \\
			\frac{1}{4}, & p = \frac{1}{2}; \\
			\frac{\frac{1}{2}-p}{ \log(1-p) - \log(p)}, & p \in (0,1) \setminus \left\{ \frac{1}{2} \right\}.
		\end{cases}
		\]
	\end{lemma}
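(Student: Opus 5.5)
The plan is to compute the sub-Gaussian standard directly from the log-moment generating function. Let $\psi(\lambda):=\log\mathbb{E}[e^{\lambda(X-p)}]=\log(1-p+pe^{\lambda})-\lambda p$. By definition $\tau^2(X-p)=\sup_{\lambda\neq 0}2\psi(\lambda)/\lambda^2$. The degenerate case $p\in\{0,1\}$ is trivial, since then $X-p=0$ and $\tau=0$. The map $p\mapsto 1-p$, $\lambda\mapsto-\lambda$ leaves everything invariant, so I would assume $p<1/2$. The case $p=1/2$ follows either by continuity of $\phi$ at $1/2$ (value $1/4$, as computed in Lemma~\ref{lem:pure_tech}) or directly from $\psi(\lambda)=\log\cosh(\lambda/2)\le\lambda^2/8$ with equality to second order at $0$.

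\emph{Lower bound.} Set $h:=\log((1-p)/p)>0$ and $\lambda^*:=2h$. Then $1-p+pe^{\lambda^*}=(1-p)/p$, so $\psi(\lambda^*)=h(1-2p)$ and
\[
\frac{2\psi(\lambda^*)}{(\lambda^*)^2}=\frac{1-2p}{2h}=\frac{\frac12-p}{\log(1-p)-\log p}=\phi(p).
\]
This gives $\tau^2\ge\phi(p)$.

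\emph{Upper bound.} Define $g(\lambda):=\phi(p)\lambda^2/2-\psi(\lambda)$; the goal is $g\ge 0$ on $\mathbb{R}$. We have $g(0)=g'(0)=0$. A direct check (using the tangency computation above) gives $g(\lambda^*)=0$ and $g'(\lambda^*)=0$. Next, $g''(\lambda)=\phi(p)-q(1-q)$ with $q=q(\lambda)=pe^\lambda/(1-p+pe^\lambda)$. The function $\lambda\mapsto q(1-q)$ is unimodal, with maximum $1/4$ at $q=1/2$, i.e.\ at $\lambda=h$. It is symmetric about $h$, because $q(h+t)=1-q(h-t)$. Since $\phi(p)<1/4$, the set $\{g''<0\}$ is an interval $J$ symmetric about $h$. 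Moreover $g''(0)=\phi(p)-p(1-p)$. I need $\phi(p)\ge p(1-p)$, which should follow from $\log((1-p)/p)\le(1-2p)/(2p(1-p))$, an elementary inequality (compare derivatives in $p$). Granting it, $0\notin J$, and by symmetry $2h\notin J$, so $J\subset(0,2h)$.

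\emph{Conclusion and main obstacle.} On $(-\infty,0]$, $g$ is convex with $g(0)=g'(0)=0$, hence $g\ge0$. Likewise on $[2h,\infty)$, convexity with $g=g'=0$ at $2h$ gives $g\ge0$. On $[0,2h]$, $g'$ vanishes at both endpoints and is increasing, then decreasing on $J$, then increasing. Hence $g'$ is positive and then negative, with a single sign change, so $g$ rises from $0$ and then decreases back to $0$, giving $g\ge0$. The delicate step I expect is this sign-pattern/zero-counting argument for $g'$ on $[0,2h]$ together with the auxiliary inequality $\phi(p)\ge p(1-p)$. If the direct derivative comparison for the latter is messy, I would first try substituting $p=e^{-x}/(1+e^{-x})$, which turns it into $\tanh(x/2)\ge x/(2\cosh^2(x/2))$, i.e.\ $\sinh x\ge x$.
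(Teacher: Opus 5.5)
The paper gives no proof of this statement. It imports it from Buldygin and Moskvichova (Theorem~2.1 there), and downstream it uses only the upper-bound half, $\log\mathbb{E}[e^{\lambda(X-p)}]\le \phi(p)\lambda^2/2$ for all $\lambda$, through the Chernoff argument in Lemma~\ref{thm:tech2}. Your proposal is a correct, self-contained replacement for that citation.

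\emph{Lower bound.} The evaluation at $\lambda^*=2h$ is exact: $1-p+pe^{\lambda^*}=(1-p)/p$, so $2\psi(\lambda^*)/(\lambda^*)^2=\phi(p)$.

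\emph{Upper bound.} Each step holds.
\begin{itemize}
\item Since $q(2h)=1-p$, one gets $g(2h)=g'(2h)=0$.
\item Because $q(1-q)$ is symmetric and unimodal about $h$, $g''(0)=g''(2h)$.
\item The auxiliary inequality $\phi(p)\ge p(1-p)$ becomes exactly $\sinh x\ge x$ under your substitution $p=1/(1+e^x)$.
\item The resulting sign pattern of $g'$ on $[0,2h]$, together with convexity of $g$ outside $(0,2h)$, gives $g\ge 0$ everywhere.
\end{itemize}

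\emph{Comparison.} Citing the result costs nothing and defers to the literature. Your route costs a page of calculus, but it makes the lemma checkable inside the paper. It also exhibits the extremal exponent $\lambda^*=2\log\frac{1-p}{p}$ and explains why the bound is tight: the tangency point is the mirror image of $0$ about $h$.

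\emph{Two points to tighten.}
\begin{itemize}
\item The claim $\phi(p)<1/4$ is asserted without proof. In your substitution it reads $x>2\tanh(x/2)$, which is immediate. In any case nothing depends on it: if it failed, $g''\ge 0$ everywhere and $g\ge 0$ would follow at once.
\item For $p=1/2$, rely on the direct argument $\psi(\lambda)=\log\cosh(\lambda/2)\le\lambda^2/8$ together with $2\psi(\lambda)/\lambda^2\to 1/4$ as $\lambda\to 0$. The alternative \emph{continuity of $\phi$} argument would need continuity of $p\mapsto\tau^2(X-p)$, which you have not established.
\end{itemize}
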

	
	The lemma below gives a concentration bound on the binomial random variables.
	\begin{lemma}\label{thm:tech2}
		Let $X_j$ for $j\in \{1, \dots, n\}$ denote a sequence of independent Bernoulli random variables with parameter $p \in [0,1]$. Define $S_n = \sum_{j=1}^{n}\left(X_j - p\right)$. Then, we have for all $x>0$
		\[
		\mathbb{P}\left( S_n \ge x\right) \le \exp\left(\frac{-x^2}{2n\phi(p)}\right),
		\]
		where $\phi$ is defined in Lemma~\ref{thm:tech1}. We also have for all $x>0$
		\[
		\mathbb{P}\left( S_n \le -x\right) \le \exp\left(\frac{-x^2}{2n\phi(p)}\right)~.
		\]
	\end{lemma}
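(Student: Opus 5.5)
We derive Lemma~\ref{thm:tech2} from Lemma~\ref{thm:tech1} by the standard Chernoff argument for sub-Gaussian variables. The centered variables $Y_j := X_j - p$ are i.i.d., and by Lemma~\ref{thm:tech1} their sub-Gaussian standard satisfies $\tau^2(Y_j) = \phi(p)$. Since the infimum in the definition of $\tau$ is attained (the moment-generating-function inequality is closed in $a$), we have, for every $\lambda\in\mathbb{R}$,
\[
\mathbb{E}\bigl[e^{\lambda Y_j}\bigr] \le \exp\Bigl(\frac{\phi(p)\lambda^2}{2}\Bigr).
\]
The degenerate cases $p\in\{0,1\}$ are trivial: there $S_n=0$ almost surely, so both tail probabilities vanish for $x>0$, consistent with the convention that the right-hand side is $\exp(-\infty)=0$. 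We may therefore assume $p\in(0,1)$, so that $\phi(p)>0$.

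First, by independence, $\mathbb{E}[e^{\lambda S_n}] = \prod_{j=1}^n \mathbb{E}[e^{\lambda Y_j}] \le \exp\bigl(n\phi(p)\lambda^2/2\bigr)$. Second, for $\lambda>0$, Markov's inequality applied to $e^{\lambda S_n}$ gives
\[
\mathbb{P}(S_n\ge x) \le \exp\Bigl(-\lambda x + \frac{n\phi(p)\lambda^2}{2}\Bigr).
\]
Third, we optimize by choosing $\lambda = x/(n\phi(p))$, which yields the bound $\exp\bigl(-x^2/(2n\phi(p))\bigr)$. For the lower tail, we apply the same argument to $-S_n = \sum_j (-Y_j)$. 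The mgf bound holds for all real $\lambda$, hence also for $-Y_j$ with the same $\phi(p)$. This gives the identical bound for $\mathbb{P}(S_n\le -x)$.

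The only subtle point is making sure the mgf bound holds exactly at level $\phi(p)$ rather than at $\phi(p)+\eta$. If one prefers not to argue that the infimum is attained, one can run the argument with $\phi(p)+\eta$ for arbitrary $\eta>0$ and let $\eta\to 0$, using continuity of the final bound in $\eta$. Everything else is routine.
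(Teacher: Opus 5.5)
Your proposal is correct and takes the same approach as the paper, whose proof is the one-line remark that the bound follows from Chernoff's bound combined with $\tau^2(X-p)=\phi(p)$ from Lemma~\ref{thm:tech1}. You simply spell that argument out, and your handling of the infimum (it is attained, or alternatively use an $\eta\to 0$ limit) and of the degenerate cases $p\in\{0,1\}$ is sound.
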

	\begin{proof}
		This is a direct consequence of Chernoff's bound with Lemma~\ref{thm:tech1}.
	\end{proof}

\end{document}